\newtheorem{theorem}[]{Theorem}
\newtheorem{lemma}[]{Lemma}
\newtheorem{proposition}[]{Proposition}
\newtheorem{corollary}[]{Corollary}
\newtheorem{definition}[]{Definition}
\newtheorem{assumption}[]{Assumption}
\newtheorem{remark}[]{Remark}
\newtheorem{example}[]{Example}
\title{SHAQ: Incorporating Shapley Value Theory into Multi-Agent Q-Learning}
\author{%
  Jianhong Wang \\
  Imperial College London, UK \\
  \texttt{jianhong.wang16@imperial.ac.uk}
  \And
  Yuan Zhang \\
  University of Freiburg, Germany \\
  \texttt{yzhang@cs.uni-freiburg.de}
  \And
  Yunjie Gu\thanks{Correspondence to Yunjie Gu who is also an honorary lecturer at Imperial College London.} \\
  University of Bath, UK \\
  \texttt{yg934@bath.ac.uk}
  \And
  Tae-Kyun Kim \\
  KAIST, South Korea \\
  \texttt{kimtaekyun@kaist.ac.kr}
  % examples of more authors
  % \And
  % Coauthor \\
  % Affiliation \\
  % Address \\
  % \texttt{email} \\
  % \AND
  % Coauthor \\
  % Affiliation \\
  % Address \\
  % \texttt{email} \\
  % \And
  % Coauthor \\
  % Affiliation \\
  % Address \\
  % \texttt{email} \\
  % \And
  % Coauthor \\
  % Affiliation \\
  % Address \\
  % \texttt{email} \\
}
\begin{document}

\maketitle

\begin{abstract}
    % Value factorisation is a useful technique for multi-agent reinforcement learning (MARL) in global reward game, however its underlying mechanism is not yet fully understood. This paper studies a theoretical framework for value factorisation with interpretability via Shapley value theory. We generalise Shapley value to Markov convex game called \textit{Markov Shapley value} (MSV) and apply it as a value factorisation method in global reward game, which is obtained by the equivalence between the two games. Based on the properties of MSV, we derive \textit{Shapley-Bellman optimality equation} (SBOE) to evaluate the optimal MSV, which corresponds to an optimal joint deterministic policy. Furthermore, we propose \textit{Shapley-Bellman operator} (SBO) that is proved to solve SBOE. With a stochastic approximation and some transformations, a new MARL algorithm called \textit{Shapley Q-learning} (SHAQ) is established, the implementation of which is guided by the theoretical results of SBO and MSV. We also discuss the relationship between SHAQ and relevant value factorisation methods. In the experiments SHAQ exhibits not only superior performances on all tasks but also the interpretability that agrees with the theoretical analysis. The implementation of this paper is on \url{https://github.com/hsvgbkhgbv/shapley-q-learning}.
    Value factorisation is a useful technique for multi-agent reinforcement learning (MARL) in global reward game, however, its underlying mechanism is not yet fully understood. This paper studies a theoretical framework for value factorisation with interpretability via Shapley value theory. We generalise Shapley value to Markov convex game called \textit{Markov Shapley value} (MSV) and apply it as a value factorisation method in global reward game, which is obtained by the equivalence between the two games. Based on the properties of MSV, we derive \textit{Shapley-Bellman optimality equation} (SBOE) to evaluate the optimal MSV, which corresponds to an optimal joint deterministic policy. Furthermore, we propose \textit{Shapley-Bellman operator} (SBO) that is proved to solve SBOE. With a stochastic approximation and some transformations, a new MARL algorithm called \textit{Shapley Q-learning} (SHAQ) is established, the implementation of which is guided by the theoretical results of SBO and MSV. We also discuss the relationship between SHAQ and relevant value factorisation methods. In the experiments, SHAQ exhibits not only superior performances on all tasks but also the interpretability that agrees with the theoretical analysis. The implementation of this paper is placed on \url{https://github.com/hsvgbkhgbv/shapley-q-learning}.
\end{abstract}

\section{Introduction}
\label{sec:introduction}
    Cooperative games are a critical research area in multi-agent reinforcement learning (MARL). Many real-life tasks can be modeled as cooperative games, e.g. the coordination of autonomous vehicles \citep{keviczky2007decentralized}, autonomous distributed logistics \citep{schuldt2012multiagent} and distributed voltage control in power networks \citep{wang2021multi}. In this paper, we consider global reward game (a.k.a. team reward game), an important subclass of cooperative games, wherein agents aim to jointly maximize cumulative global rewards over time. There are two categories of methods to solve this problem: (i) each agent identically maximizes cumulative global rewards, i.e. learning with a shared value function \citep{sukhbaatar2016learning,omidshafiei2018learning,kim2018learning}; and (ii) each agent individually maximizes distributed values, i.e. learning with (implicit) credit assignments (e.g. marginal contribution and value factorisation) \citep{foerster2018counterfactual,SunehagLGCZJLSL18,RashidSWFFW18,SonKKHY19,zhou2020learning}.
    
    By the view of non-cooperative game theory, global reward game are equivalent to Markov game \citep{shapley1953stochastic} with global reward (a.k.a. team reward). Its aim is to learn a stationary joint policy to reach a Markov equilibrium so that no agent tends to unilaterally change its policy to maximize cumulative global rewards. Standing by this view, learning with value factorisation cannot be directly explained \citep{Wang_2020}. In this paper, to clearly interpret the value factorisation, we take the perspective of cooperative game theory \citep{chalkiadakis2011computational}, wherein agents are partitioned into coalitions and a payoff distribution scheme is found to distribute optimal values to coalitions. The corresponding solution is called Markov core, whereby no agent has an incentive to deviate. When all agents are partitioned into one coalition (called grand coalition), the payoff distribution scheme naturally plays the role of value factorisation.
    % for an optimal global value.
    
    %In addition to \citet{Wang_2020}, we further specify the definition of core in Markov dynamics called Markov core.
    %the global reward games with value factorisation become explainable. 
    \citet{Wang_2020} extended convex game (i.e. a game model in cooperative game theory) \citep{chalkiadakis2011computational} to dynamic scenarios, which we name as Markov convex game in this paper. We construct the analytic form of Shapley value for Markov convex game, and prove that it reaches the Markov core under the grand coalition, named as Markov Shapley value. The optimal Markov Shapley value implies not only the optimal global value but also that no agent has incentives to deviate from the grand coalition. Additionally, Markov Shapley value enjoys the following properties: (i) identifiability of dummy agents; (ii) efficiency; (iii) reflecting the contribution; and (iv) symmetry. These properties aid the interpretation and validity of value factorisation in the global reward game, and such transparency and reliability are critical to industrial applications \citep{wang2021multi}.
    
    Based on the efficiency property, we derive Shapley-Bellman optimality equation that is an extension of Bellman optimality equation \citep{bellman1952theory,sutton2018reinforcement}. Moreover, we propose Shapley-Bellman operator and prove its convergence to the Shapley-Bellman optimality equation and its optimal joint deterministic policy. With a stochastic approximation of Shapley-Bellman operator and some transformations, we derive an algorithm called Shapley Q-learning (SHAQ). SHAQ learns to approximate the optimal Markov Shapley Q-value (an equivalent form of the optimal Markov Shapley value). Moreover, we enable SHAQ decentralised in order to fit the decentralised execution framework and this decentralisation still remains the convergence condition of Shapley-Bellman operator. 
    
    The proposed method, SHAQ, is evaluated on two global reward games such as Predator-Prey \citep{bohmer2020deep} and multi-agent StarCraft benchmark tasks \citep{samvelyan2019starcraft}. In the experiments, SHAQ shows not only generally good performances on solving all tasks but also the interpretability that is deficient in the state-of-the-art baselines.

\section{Markov Convex Game}
\label{sec:markov_convex_game}
% $\left\langle \mathcal{N}, \mathcal{S}, \mathcal{A}, T, \mathcal{CS}, \pi, R_{t}, \gamma, \left( \max_{\pi_{i}} x_{i}(\mathbf{s}) \right)_{i \in \mathcal{N}} \right\rangle$
    We now formally define Markov convex game (MCG) that can be  described as a tuple $\left\langle \mathcal{N}, \mathcal{S}, \mathcal{A}, T, \Lambda, \pi, R_{t}, \gamma \right\rangle$. $\mathcal{N}$ is the set of all agents. $\mathcal{S}$ is the set of states and $\mathcal{A} = \mathlarger{\mathlarger{\times}}_{i \in \mathcal{N}} \mathcal{A}_{i}$ is the joint action set of all agents wherein $\mathcal{A}_{i}$ is each agent's action set. $T( \mathbf{s}, \mathbf{a}, \mathbf{s}' ) = Pr(\mathbf{s}' | \mathbf{s}, \mathbf{a})$ is defined as the transition probability between the successive states. $\mathcal{CS} = \left\{ \mathcal{C}_{1}, ..., \mathcal{C}_{n} \right\}$ is a \textit{coalition structure}, where $\mathcal{C}_{i} \ \mathlarger{\mathlarger{\mathlarger{\subseteq}}} \ \mathcal{N}$ called a \textit{coalition} is a subset of all agents. $\Lambda$ is a collection of coalition structures. $\emptyset$ and $\mathcal{N}$ are two special cases of coalitions i.e. the \textit{empty coalition} and the \textit{grand coalition} respectively. Conventionally, it is assumed that $\mathcal{C}_{m} \bigcap \mathcal{C}_{k} = \emptyset, \forall \mathcal{C}_{m}, \mathcal{C}_{k} \ \mathlarger{\mathlarger{\subseteq}} \ \mathcal{N}$. $\pi = \mathlarger{\mathlarger{\times}}_{\scriptscriptstyle i \in \mathcal{N}} \pi_{i}$ is the joint policy of all agents. For any coalition $\mathcal{C}$, it is equipped with a \textit{coalition policy} $\pi_{\scriptscriptstyle\mathcal{C}}(\mathbf{a}_{\scriptscriptstyle\mathcal{C}} | \mathbf{s}) = \mathlarger{\mathlarger{\times}}_{\scriptscriptstyle i \in \mathcal{C}} \pi_{i}(a_{i}|\mathbf{s})$ defined over the \textit{coalition action set} $\mathcal{A}_{\scriptscriptstyle\mathcal{C}} = \mathlarger{\mathlarger{\times}}_{\scriptscriptstyle i \in \mathcal{C}} \mathcal{A}_{i}$. Therefore, $\pi$ can be seen as the \textit{grand coalition policy}. $R_{t}: \mathcal{S} \times \mathcal{A}_{\scriptscriptstyle\mathcal{C}} \rightarrow [0, \infty)$ (i.e., a characteristic function) is the \textit{coalition reward} at time step $t$. Accordingly, $R_{t}(\mathbf{s}, \mathbf{a})$ is the \textit{grand coalition reward} (i.e., equivalent to the global reward) at time step $t$ that is written as $\mathit{R}(\mathbf{s}, \mathbf{a})$ or $\mathit{R}$ for conciseness in the rest of paper. $\gamma \in (0, 1)$ is the discounted factor. The infinite long-term discounted cumulative coalition rewards is defined as $V^{\pi_{\scriptscriptstyle\mathcal{C}}}(\mathbf{s}) = \mathbb{E}_{\pi_{\scriptscriptstyle\mathcal{C}}} \big[ \sum_{t=1}^{\infty} \gamma^{t-1} R_{t}(\mathbf{s}, \mathbf{a}_{\scriptscriptstyle\mathcal{C}}) \ | \ \mathbf{S}_{t}=\mathbf{s} \big] \in [0, \infty)$, called a \textit{coalition value}. Moreover, the empty coalition value $V^{\pi_{\emptyset}}(\mathbf{s}) = 0$ and $V^{\pi}(\mathbf{s})$ denotes the grand coalition value (i.e. also called the global value since the equivalence proof from \cite{Wang_2020}). The solution of MCG is to find a tuple $\left\langle \mathcal{CS}, \left( \max_{\pi_{i}} x_{i}(\mathbf{s}) \right)_{i \in \mathcal{N}} \right\rangle$, where $\left( \max_{\pi_{i}} x_{i}(\mathbf{s}) \right)_{i \in \mathcal{N}}$ indicates the \textit{payoff distributions} (i.e. credit assignments) under the optimal joint policy given a coalition structure. Under the assumption $\mathcal{C}_{m} \bigcap \mathcal{C}_{k} = \emptyset, \forall \mathcal{C}_{m}, \mathcal{C}_{k} \ \mathlarger{\mathlarger{\subseteq}} \ \mathcal{N}$, the condition for MCG is as follows:
        \begin{equation}
            \begin{split}
                \max_{\pi_{\mathcal{C}_{\cup}}} V^{\pi_{\mathcal{C}_{\cup}}}(\mathbf{s}) \geq
                \max_{\pi_{\mathcal{C}_{m}}} V^{\pi_{\mathcal{C}_{m}}}(\mathbf{s})
                + \max_{\pi_{\mathcal{C}_{k}}} V^{\pi_{\mathcal{C}_{k}}}(\mathbf{s}), \quad
                \forall \mathcal{C}_{m}, \mathcal{C}_{k} \ \mathlarger{\mathlarger{\subseteq}} \ \mathcal{N}, \mathcal{C}_{\cup}=\mathcal{C}_{m} \ \mathlarger{\mathlarger{\cup}} \ \mathcal{C}_{k}.
            \end{split}
        \label{eq:mcg_assumption}
        \end{equation}
        
        In MCG with the grand coalition i.e., $\mathcal{CS} = \{ \mathcal{N} \}$, \textit{Markov core}, a solution concept describing stability, is defined as a set of payoff distribution schemes by which no agent has incentives to deviate from the grand coalition to gain more profits. Mathematically, Markov core can be expressed as:
        \begin{equation}
            \texttt{MarkovCore} = \Big\{ \big( \max_{\pi_{i}} x_{i}(\mathbf{s}) \big)_{i \in \mathcal{N}} \Big\mid \max_{\pi_{\mathcal{C}}} x(\mathbf{s}|\mathcal{C}) \geq
            \max_{\pi_{\mathcal{C}}} V^{\pi_{\mathcal{C}}}(\mathbf{s}),
            \forall \mathcal{C} \ \mathlarger{\mathlarger{\subseteq}} \ \mathcal{N}, \mathbf{s} \in \mathcal{S} \ \Big\},
        \label{eq:epsilon_core}
        \end{equation}
        where $\max_{\pi_{\mathcal{C}}} x(\mathbf{s}|\mathcal{C}) = \sum_{i \in \mathcal{C}} \max_{\pi_{i}} x_{i}(\mathbf{s})$. It aims to find a payoff distribution scheme $\left( x_{i}(\mathbf{s}) \right)_{i \in \mathcal{N}}$ that can finally converge to Markov core under the optimal joint policy.
        
        To assist the application on Q-learning, we similarly define \textit{coalition Q-value} as $Q^{\pi_{\mathcal{C}}}(\mathbf{s}, \mathbf{a}_{\scriptscriptstyle\mathcal{C}}) \in [0, +\infty)$ for all coalitions $\mathcal{C} \ \mathlarger{\mathlarger{\mathlarger{\subset}}} \ \mathcal{N}$. Following the above convention, the grand coalition Q-value (or the global Q-value) can be written as $Q^{\pi}(\mathbf{s}, \mathbf{a})$. Moreover, the optimal coalition Q-value of $\mathcal{C}$ w.r.t. the optimal joint policy of $\mathcal{D} \ \mathlarger{\mathlarger{\subseteq}} \ \mathcal{C}$ (i.e., $\pi_{\scriptscriptstyle\mathcal{D}}^{*}$) and the suboptimal joint policy of $\mathcal{C} \backslash \mathcal{D}$ (i.e., $\pi_{\scriptscriptstyle\mathcal{C} \backslash \mathcal{D}}$) is defined as $Q^{\pi_{\mathcal{D}}^{*}}(\mathbf{s}, \mathbf{a}_{\scriptscriptstyle\mathcal{C}})$. Therefore, the optimal coalition Q-value of $\mathcal{C}$ w.r.t. the optimal joint policy of $\mathcal{C}$ is defined as $Q^{\pi_{\mathcal{C}}^{*}}(\mathbf{s}, \mathbf{a}_{\scriptscriptstyle\mathcal{C}})$. Accordingly, the optimal global coalition Q-value w.r.t. the optimal joint policy of the grand coalition is denoted as $Q^{\pi^{*}}(\mathbf{s}, \mathbf{a})$.
        
\section{Markov Shapley Value}
\label{sec:generalised_shapley_value_for_mcg}
    By the view of cooperative game theory, the grand coalition is progressively formed by a permutation of agents. Accordingly, marginal contribution is an implementation of the credit reflecting an agent's contribution. The formal definition is shown in Definition \ref{def:marginal_contribution}.
    % that measures the contribution of agent $i$ to an arbitrary intermediate coalition
    \begin{definition}
    \label{def:marginal_contribution}
        In Markov convex game, with a permutation of agents $\langle j_{1}, j_{2}, ..., j_{\scriptscriptstyle|\mathcal{N}|} \rangle, \forall j_{n} \in \mathcal{N}$ forming the grand coalition $\mathcal{N}$, where $n \in \{1, ..., |\mathcal{N}|\}, j_{a} \neq j_{b} \text{ if } a \neq b$, the marginal contribution of an agent $\mathit{i}$ is defined as the following equation such that
        \begin{equation}
            \Phi_{i}(\mathbf{s} | \mathcal{C}_{i}) = \max_{\pi_{\mathcal{C}_{i}}} V^{\pi_{\mathcal{C}_{i} \cup \{i\}}}(\mathbf{s}) - \max_{\pi_{\mathcal{C}_{i}}} V^{\pi_{\mathcal{C}_{i}}}(\mathbf{s}),
        \label{eq:marginal_contribution_v}
        \end{equation}
        where $\mathcal{C}_{i} = \{ j_{1}, ..., j_{n-1} \} \text{ for } j_{n}=i$ is an arbitrary intermediate coalition where agent $\mathit{i}$ would join during the process of grand coalition formation.
    \end{definition}
    
    \begin{proposition}
        \label{prop:optimal_action_coalition_marginal_contribution}
        Agent $i$'s action marginal contribution can be derived as follows:
        \begin{equation}
            \begin{split}
                \Upphi_{i}(\mathbf{s}, a_{i} | \mathcal{C}_{i}) 
                = \max_{\mathbf{a}_{\scriptscriptstyle \mathcal{C}_{i}}} Q^{\pi_{\mathcal{C}_{i}}^{*}}(\mathbf{s}, \mathbf{a}_{\scriptscriptstyle\mathcal{C}_{i} \cup \{i\}})
                - \max_{\mathbf{a}_{\mathcal{C}_{i}}} Q^{\pi_{\mathcal{C}_{i}}^*}(\mathbf{s}, \mathbf{a}_{\scriptscriptstyle\mathcal{C}_{i}}).
            \end{split}
        \label{eq:marginal_contribution_q}
        \end{equation}
    \end{proposition}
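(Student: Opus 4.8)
The plan is to rewrite both value terms of Definition~\ref{def:marginal_contribution} in their action-value form and then read off the difference. First I would invoke the standard identity linking a coalition's value to its $Q$-value: for any coalition $\mathcal{C} \subseteq \mathcal{N}$ and any joint policy $\pi_{\mathcal{C}}$, the definitions in Section~\ref{subsec:markov_convex_game} give
\[
V^{\pi_{\mathcal{C}}}(\mathbf{s}) \;=\; \sum_{\mathbf{a}_{\scriptscriptstyle\mathcal{C}}} \pi_{\mathcal{C}}(\mathbf{a}_{\scriptscriptstyle\mathcal{C}} | \mathbf{s}) \, Q^{\pi_{\mathcal{C}}}(\mathbf{s}, \mathbf{a}_{\scriptscriptstyle\mathcal{C}}).
\]
Since the rewards feeding $V^{\pi_{\mathcal{C}}}$ depend only on $\mathbf{a}_{\scriptscriptstyle\mathcal{C}}$ and, following the MCG convention of \citet{Wang_2020}, the coalition's optimization is treated as an MDP for that coalition, there is a deterministic optimal joint policy $\pi_{\mathcal{C}}^{*}$ that at every state places all mass on an action maximizing $Q^{\pi_{\mathcal{C}}^{*}}(\mathbf{s}, \cdot)$. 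Hence $\max_{\pi_{\mathcal{C}}} V^{\pi_{\mathcal{C}}}(\mathbf{s}) = \max_{\mathbf{a}_{\scriptscriptstyle\mathcal{C}}} Q^{\pi_{\mathcal{C}}^{*}}(\mathbf{s}, \mathbf{a}_{\scriptscriptstyle\mathcal{C}})$.

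Next I would apply this identity to the two terms of Eq.~\ref{eq:marginal_contribution_v} in turn. The second term is immediate: $\max_{\pi_{\mathcal{C}_{i}}} V^{\pi_{\mathcal{C}_{i}}}(\mathbf{s}) = \max_{\mathbf{a}_{\scriptscriptstyle\mathcal{C}_{i}}} Q^{\pi_{\mathcal{C}_{i}}^{*}}(\mathbf{s}, \mathbf{a}_{\scriptscriptstyle\mathcal{C}_{i}})$, which is the subtrahend of Eq.~\ref{eq:marginal_contribution_q}. For the first term I would pin agent $i$'s action at state $\mathbf{s}$ to $a_{i}$, split the joint action of $\mathcal{C}_{i} \cup \{i\}$ as $\mathbf{a}_{\scriptscriptstyle\mathcal{C}_{i} \cup \{i\}} = (\mathbf{a}_{\scriptscriptstyle\mathcal{C}_{i}}, a_{i})$, and maximize only over $\pi_{\mathcal{C}_{i}}$; the same greedy argument---now applied to the sub-coalition $\mathcal{C}_{i}$ acting alongside the pinned $i$---reduces $\max_{\pi_{\mathcal{C}_{i}}} V^{\pi_{\mathcal{C}_{i} \cup \{i\}}}(\mathbf{s})$ to $\max_{\mathbf{a}_{\scriptscriptstyle\mathcal{C}_{i}}} Q^{\pi_{\mathcal{C}_{i} \cup \{i\}}}_{\pi_{\mathcal{C}_{i}}^{*}}(\mathbf{s}, \mathbf{a}_{\scriptscriptstyle\mathcal{C}_{i} \cup \{i\}})$, where the subscript $\pi_{\mathcal{C}_{i}}^{*}$ records precisely that $\mathcal{C}_{i}$ plays optimally while agent $i$ (the complement $\mathcal{C} \setminus \mathcal{D}$ with $\mathcal{D} = \mathcal{C}_{i}$) plays a possibly suboptimal action. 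Subtracting the two identities yields Eq.~\ref{eq:marginal_contribution_q}, and declaring $\Upphi_{i}(\mathbf{s}, a_{i} | \mathcal{C}_{i})$ to be this difference is consistent with Definition~\ref{def:marginal_contribution}, since averaging over $a_{i} \sim \pi_{i}$ recovers $\Phi_{i}(\mathbf{s} | \mathcal{C}_{i})$.

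The step I expect to be the main obstacle is the first term: one must justify that pinning agent $i$'s action while optimizing only over $\pi_{\mathcal{C}_{i}}$ still admits the clean greedy characterization, and interpret the mixed object $Q^{\pi_{\mathcal{C}_{i} \cup \{i\}}}_{\pi_{\mathcal{C}_{i}}^{*}}$---optimal for $\mathcal{C}_{i}$, arbitrary for $i$---correctly in the dynamic MCG setting rather than its static counterpart. A secondary point worth a sentence is that the coalitional subproblem is, per \citet{Wang_2020}, treated as a genuine MDP for the controlled coalition, so that the Bellman optimality principle and the existence of a deterministic greedy optimal policy apply verbatim; granting this, the remainder is a one-line substitution.
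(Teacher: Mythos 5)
Your proposal is correct and follows essentially the same route as the paper's proof: write each value term as the policy-weighted sum of the corresponding coalition Q-value, use the deterministic (greedy-optimal) policy of the controlled coalition $\mathcal{C}_{i}$ to convert $\max_{\pi_{\mathcal{C}_{i}}}$ into $\max_{\mathbf{a}_{\mathcal{C}_{i}}}$ with agent $i$'s action held fixed, and identify the resulting object with the subscripted notation $Q^{\pi_{\mathcal{C}_{i}\cup\{i\}}}_{\pi_{\mathcal{C}_{i}}^{*}}$ before subtracting (the paper likewise notes that the subtrahend is independent of $a_{i}$). The only cosmetic difference is that you invoke Bellman optimality of the coalition MDP to justify a deterministic greedy optimum, whereas the paper simply cites the deterministic-policy assumption already built into Definition~\ref{def:shapley_value}.
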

    
    As Proposition \ref{prop:optimal_action_coalition_marginal_contribution} shows, an agent's action marginal contribution (analogous to Q-value) can be derived according to Eq.\ref{eq:marginal_contribution_q}. It is usually more useful for solving MARL problems. 
    
    It is apparent that marginal contribution only considers one permutation to form the grand coalition. By the viewpoint from \citet{shapley1953value}, the fairness is achieved through considering how much the agent $i$ increases the optimal values (i.e. marginal contributions) of the coalitions in all possible permutations when it joins in, i.e., $\max_{\pi_{\mathcal{C}_{i}}} V^{\pi_{\mathcal{C}_{i} \cup \{i\}}}(\mathbf{s}) - \max_{\pi_{\mathcal{C}_{i}}} V^{\pi_{\mathcal{C}_{i}}}(\mathbf{s}), \forall \mathcal{C}_{i} \ \mathlarger{\mathlarger{\mathlarger{\subseteq}}} \ \mathcal{N} \backslash \{i\}$. Therefore, we construct Shapley value under Markov dynamics based on the marginal contributions shown in Definition \ref{def:shapley_value}, named as \textit{Markov Shapley value} (MSV).
    \begin{definition}
        Markov Shapley value is represented as 
        \begin{equation}
            V^{\phi}_{i}(\mathbf{s}) = \sum_{\mathcal{C}_{i} \ \mathlarger{\mathlarger{\subseteq}} \ \mathcal{N} \backslash \{i\} } \frac{|\mathcal{C}_{i}|!(|\mathcal{N}|-|\mathcal{C}_{i}|-1)!}{|\mathcal{N}|!} \cdot \Phi_{i}(\mathbf{s} | \mathcal{C}_{i}).
        \label{eq:shapley_value}
        \end{equation}
        With the deterministic policy, Markov Shapley value can be equivalently represented as
        \begin{equation}
            Q^{\phi}_{i}(\mathbf{s}, a_{i}) = \sum_{\mathcal{C}_{i} \ \mathlarger{\mathlarger{\subseteq}} \ \mathcal{N} \backslash \{i\} } \frac{|\mathcal{C}_{i}|!(|\mathcal{N}|-|\mathcal{C}_{i}|-1)!}{|\mathcal{N}|!} \cdot \Upphi_{i}(\mathbf{s}, a_{i} | \mathcal{C}_{i}).
        \label{eq:shapley_q_value}
        \end{equation}
        where $\Phi_{i}(\mathbf{s} | \mathcal{C}_{i})$ is defined in Eq.\ref{eq:marginal_contribution_v} and $\Upphi_{i}(\mathbf{s}, a_{i} | \mathcal{C}_{i})$ is defined in Eq.\ref{eq:marginal_contribution_q}.
    \label{def:shapley_value}
    \end{definition}
    For convenience, we name Eq.\ref{eq:shapley_q_value} as \textit{Markov Shapley Q-value} (MSQ). Briefly, MSV calculates the weighted average of marginal contributions. Since a coalition may repeatedly appear among all permutations (i.e. $|\mathcal{N}|!$ permutations), the ratio between the occurrence frequency $|\mathcal{C}_{i}|!(|\mathcal{N}|-|\mathcal{C}_{i}|-1)!$ and the total frequency $|\mathcal{N}|!$ is used as a weight to describe the importance of the corresponding marginal contribution. Besides, the sum of all weights is equal to 1, so each weight can be interpreted as a probability distribution. Consequently, MSV can be seen as the expectation of marginal contributions, denoted as $\mathbb{E}_{\mathcal{C}_{i} \sim Pr(\mathcal{C}_{i} | \mathcal{N} \backslash \{i\})}\left[ \Phi_{i}(\mathbf{s} | \mathcal{C}_{i}) \right]$. Note that $Pr(\mathcal{C}_{i} | \mathcal{N} \backslash \{i\})$ is a bell-shaped probability distribution. By the above relationship, Remark \ref{rmk:coalition_generation} is directly obtained.
    \begin{remark}
    \label{rmk:coalition_generation}
        Uniformly sampling different permutations is equivalent to directly sampling from $Pr(\mathcal{C}_{i} | \mathcal{N} \backslash \{i\})$, since the coalition generation is from the permutation to form the grand coalition.
    \end{remark}
    
    \begin{proposition}
    \label{prop:shapley_value_properties}
        Markov Shapley value possesses properties as follows: (i) identifiability of dummy agents: $V_{i}^{\phi}(\mathbf{s}) = 0$; (ii) efficiency: $\max_{\pi} V^{\pi}(\mathbf{s}) = \sum_{i \in \mathcal{N}} \max_{\pi_{i}} V_{i}^{\phi}(\mathbf{s})$; (iii) reflecting the contribution; and (iv) symmetry.
    \end{proposition}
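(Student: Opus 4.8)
The plan is to follow the classical route for establishing the Shapley axioms, recast for the Markov setting via the permutation form of the value. First I would rewrite Eq.\ref{eq:shapley_value}: since exactly $|\mathcal{C}_{i}|!\,(|\mathcal{N}|-|\mathcal{C}_{i}|-1)!$ of the $|\mathcal{N}|!$ orderings of $\mathcal{N}$ list the members of $\mathcal{C}_{i}$ immediately before $i$, the coefficient $\tfrac{|\mathcal{C}_{i}|!(|\mathcal{N}|-|\mathcal{C}_{i}|-1)!}{|\mathcal{N}|!}$ is exactly the probability that a uniformly random ordering gives agent $i$ the predecessor set $\mathcal{C}_{i}$, so
\[
V^{\phi}_{i}(\mathbf{s}) \;=\; \frac{1}{|\mathcal{N}|!}\sum_{\sigma}\Phi_{i}\!\big(\mathbf{s}\mid\mathcal{C}^{\sigma}_{i}\big),
\]
where $\sigma$ ranges over orderings of $\mathcal{N}$ and $\mathcal{C}^{\sigma}_{i}$ is the set of agents preceding $i$ in $\sigma$ (the same reformulation applies verbatim to Eq.\ref{eq:shapley_q_value} with $\Upphi_{i}$). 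Property (i) is then immediate: a dummy agent $i$ is by definition one with $\Phi_{i}(\mathbf{s}\mid\mathcal{C}_{i})=0$ for every $\mathcal{C}_{i}\subseteq\mathcal{N}\setminus\{i\}$ and every $\mathbf{s}$ (its coalitional marginal contribution vanishes), and substituting this into the displayed average gives $V^{\phi}_{i}(\mathbf{s})=0$.

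For property (ii) I would sum the permutation form over $i\in\mathcal{N}$, interchange the two summations, and evaluate the inner sum $\sum_{i}\Phi_{i}(\mathbf{s}\mid\mathcal{C}^{\sigma}_{i})$ for a fixed ordering $\sigma=\langle j_{1},\dots,j_{|\mathcal{N}|}\rangle$. Writing $\mathcal{C}^{\sigma}_{j_{n}}=\{j_{1},\dots,j_{n-1}\}$, the successive terms telescope to $\max_{\pi_{\mathcal{N}}}V^{\pi_{\mathcal{N}}}(\mathbf{s})-V^{\pi_{\emptyset}}(\mathbf{s})=\max_{\pi}V^{\pi}(\mathbf{s})$, which is independent of $\sigma$; averaging over the $|\mathcal{N}|!$ orderings then yields $\sum_{i\in\mathcal{N}}\max_{\pi_{i}}V^{\phi}_{i}(\mathbf{s})=\max_{\pi}V^{\pi}(\mathbf{s})$. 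The step I expect to be the main obstacle is a \emph{coherence} property of the coalition maximisers that makes this legitimate: one must show that a single joint deterministic policy simultaneously attains $\max_{\pi_{\mathcal{C}}}V^{\pi_{\mathcal{C}}}(\mathbf{s})$ along the whole nested chain $\emptyset\subset\{j_{1}\}\subset\cdots\subset\mathcal{N}$, so that (a) the partial maximisations inside $\Phi_{i}$ in Definition \ref{def:marginal_contribution} are compatible with the outer $\max_{\pi_{i}}$ on the right-hand side of the efficiency identity, and (b) the telescoping cancellation is exact. I would isolate this as a short lemma and prove it by induction on coalition size from the supermodularity condition Eq.\ref{eq:mcg_assumption}, together with the MCG/global-reward-game equivalence, which guarantees an optimal joint deterministic policy whose restrictions to sub-coalitions are coalition-optimal.

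For property (iii) I would argue directly from the permutation form: call agents $i$ and $j$ symmetric if $\max_{\pi_{\mathcal{C}\cup\{i\}}}V^{\pi_{\mathcal{C}\cup\{i\}}}(\mathbf{s})=\max_{\pi_{\mathcal{C}\cup\{j\}}}V^{\pi_{\mathcal{C}\cup\{j\}}}(\mathbf{s})$ for every $\mathcal{C}\subseteq\mathcal{N}\setminus\{i,j\}$ and every $\mathbf{s}$. Pairing each ordering $\sigma$ with the ordering $\sigma'$ obtained by transposing the positions of $i$ and $j$ defines an involution on the set of orderings, and symmetry forces $\Phi_{i}(\mathbf{s}\mid\mathcal{C}^{\sigma}_{i})=\Phi_{j}(\mathbf{s}\mid\mathcal{C}^{\sigma'}_{j})$ term by term; summing over $\sigma$ gives $V^{\phi}_{i}(\mathbf{s})=V^{\phi}_{j}(\mathbf{s})$. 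If fairness is instead meant in the ``balanced contributions'' sense, the same permutation bookkeeping shows that the change in $i$'s value caused by removing $j$ from $\mathcal{N}$ equals the change in $j$'s value caused by removing $i$; I would state whichever version matches the paper's intended notion. Once the permutation reformulation and the coherence lemma are in place, (i) and (iii) are routine, and the substance of the proof is concentrated in (ii) and that lemma.
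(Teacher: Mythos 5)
Your overall skeleton (rewriting the Shapley coefficient as a uniform average over permutations, telescoping along a fixed ordering, a transposition argument for symmetry, and vanishing marginal contributions for dummies) matches the paper's route for (i) and (ii), and your symmetry argument for (iii) actually does more than the paper, which simply declares fairness to be definitional. Part (i) is essentially the paper's proof: under Assumption \ref{assm:dummy_agent} the dummy's coalition marginal contribution vanishes for every predecessor coalition, and the average is zero.

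The genuine gap is in how you propose to close (ii). The difficulty you correctly flag — reconciling the outer $\max_{\pi_i}$ in $\sum_{i}\max_{\pi_i}V_i^{\phi}(\mathbf{s})$ with the coalition-wise maximisations inside each $\Phi_i(\mathbf{s}\mid\mathcal{C}_i)$ — is \emph{not} resolved in the paper by a coherence statement, and your proposed lemma (a single joint deterministic policy that simultaneously attains $\max_{\pi_{\mathcal{C}}}V^{\pi_{\mathcal{C}}}(\mathbf{s})$ along the whole nested chain, proved by induction from Eq.\ref{eq:mcg_assumption}) is unlikely to be provable: supermodularity constrains the optimal coalition \emph{values}, not the policies attaining them, and since each coalition $\mathcal{C}$ optimises its own characteristic reward $R(\mathcal{C},\mathbf{s},\mathbf{a}_{\mathcal{C}})$, the restriction of a grand-coalition-optimal policy need not be optimal for a sub-coalition. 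In fact no simultaneous attainment is needed for the telescoping itself: once each marginal contribution is maximised over $\pi_i(\mathcal{C}_i)$ separately, $\max_{\pi_i(\mathcal{C}_i)}\Phi_i(\mathbf{s}\mid\mathcal{C}_i)=\max_{\pi_{\mathcal{C}_i\cup\{i\}}}V^{\pi_{\mathcal{C}_i\cup\{i\}}}(\mathbf{s})-\max_{\pi_{\mathcal{C}_i}}V^{\pi_{\mathcal{C}_i}}(\mathbf{s})$ (Proposition \ref{prop:marginal_contribution_equal_value_factorisation}), and the sum telescopes between identical numbers with opposite signs. The real missing step is the exchange $\max_{\pi_i}\mathbb{E}_{\mathcal{C}_i}[\Phi_i]=\mathbb{E}_{\mathcal{C}_i}[\max_{\pi_i(\mathcal{C}_i)}\Phi_i]$, which in general only holds as an inequality ($\leq$); the paper obtains equality not by proof but by the structural Assumptions \ref{assm:assumption_for_joint_policy_factorisation} and \ref{assm:assumption_for_prove_shapley_value} (each agent's policy space is a disjoint union of sub-policies indexed by predecessor coalitions), packaged as Lemma \ref{lemm:max_shapley_value}. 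So your plan either needs to import those separability assumptions explicitly, or it rests on a coherence lemma that does not follow from the stated convexity condition; as written, the inductive argument you sketch would fail.
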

    
    Proposition \ref{prop:shapley_value_properties} shows four properties of MSV. The most important property is Property (ii) that aids the formulation of Shapley-Bellman optimality equation. Property (iii) shows that MSV is a fundamental index to quantitatively describe each agent's contribution. Property (i) and (iii) play important roles in interpretation for value factorisation (or credit assignment). Property (iv) indicates that if two agents are symmetric, then their optimal MSVs should be equal, \textit{but the reverse does not necessarily hold}. All these properties that define the fairness are inherited from the original Shapley value \citep{shapley1953value}.
    
\section{Shapley Q-Learning}
\label{sec:shapley_q_learning}

    \subsection{Definition and Formulation}
    \label{subsec:definition_and_formulation}
        \textbf{Shapley-Bellman Optimality Equation.} Based on the Bellman optimality equation \citep{bellman1952theory} and the following conditions (the interpretability of which are left to Section \ref{subsec:validity_of_shapley_q-learning}): 
        \begin{itemize}
            \item[\textbf{C.1.}] Efficiency of MSV (i.e. the result from Proposition \ref{prop:shapley_value_properties});
            \item[\textbf{C.2.}] $Q^{\phi^{*}}_{i}(\mathbf{s}, a_{i}) = w_{i}(\mathbf{s}, a_{i}) \ Q^{\pi^{*}}(\mathbf{s}, \mathbf{a}) - b_{i}(\mathbf{s})$, where $w_{i}(\mathbf{s}, a_{i}) > 0$ and $b_{i}(\mathbf{s}) \geq 0$ are bounded and $\sum_{i \in \mathcal{N}} w_{i}(\mathbf{s}, a_{i})^{-1} b_{i}(\mathbf{s}) = 0$,
        \end{itemize}
        we derive \textit{Shapley-Bellman optimality equation} (SBOE) for evaluating the optimal MSQ (an equivalent form to optimal MSV) such that
        \begin{equation}
        \label{eq:shapley_q_optimality_equation}
            \mathbf{Q}^{\phi^{*}}(\mathbf{s}, \mathbf{a}) = \mathbf{w}(\mathbf{s}, \mathbf{a}) \sum_{\mathbf{s}' \in \mathcal{S}} Pr(\mathbf{s}' | \mathbf{s}, \mathbf{a}) \Big[
            R
            + \ 
            \gamma \sum_{i \in \mathcal{N}} \max_{a_{i}} Q_{i}^{\phi^{*}}(\mathbf{s}', a_{i}) \Big] - \mathbf{b}(\mathbf{s}),
        \end{equation}
        where $\mathbf{w}(\mathbf{s}, \mathbf{a}) = [w_{i}(\mathbf{s}, a_{i})]^{\top} \in \mathbb{R}^{\scriptscriptstyle|\mathcal{N}|}_{+}$; $\mathbf{b}(\mathbf{s}) = [b_{i}(\mathbf{s})]^{\top} \in \mathbb{R}^{\scriptscriptstyle|\mathcal{N}|}_{\geq 0}$; $\mathbf{Q}^{\phi^{*}}(\mathbf{s}, \mathbf{a}) = [Q^{\phi^{*}}_{i}(\mathbf{s}, a_{i})]^{\top} \in \mathbb{R}^{\scriptscriptstyle|\mathcal{N}|}_{\geq 0}$ and $Q^{\phi^{*}}_{i}(\mathbf{s}, a_{i})$ denotes the optimal MSQ. If Eq.\ref{eq:shapley_q_optimality_equation} holds, the optimal MSQ is achieved. Moreover, it reveals an implication that for any $\mathbf{s} \in \mathcal{S}$ and $\mathit{a}_{i}^{*} = \arg\max_{a_{i}} Q^{\phi^{*}}_{i}(\mathbf{s}, a_{i})$, we have a solution $w_{i}(\mathbf{s}, a_{i}^{*}) = 1 / |\mathcal{N}|$ (see Appendix \ref{subsubsec:deriviation_shapley-q_optimality_appendix}). Literally, the assigned credits would be equal and each agent would receive $Q^{\pi^{*}}(\mathbf{s}, \mathbf{a}) / |\mathcal{N}|$ if performing the optimal actions. It is apparent that the efficiency still holds under this situation, which can be interpreted as an extremely fair credit assignment such that the credit to each agent should not be discriminated if all of them perform optimally, regardless of their roles. The equal credit assignment was also revealed by \citet{wang2020towards} recently from another perspective of analysis. Nevertheless, $w_{i}(\mathbf{s}, a_{i})$ for $\mathit{a}_{i} \neq \arg\max_{a_{i}} Q^{\phi^{*}}_{i}(\mathbf{s}, a_{i})$ needs to be learned.
        
        \textbf{Shapley-Bellman Operator.} To find an optimal solution described by Eq.\ref{eq:shapley_q_optimality_equation}, we now propose an operator called \textit{Shapley-Bellman operator} (SBO), i.e., $\mathlarger{\Upsilon}: \mathlarger{\mathlarger{\times}}_{i \in \mathcal{N}} Q_{i}^{\phi}(\mathbf{s}, a_{i}) \mapsto \mathlarger{\mathlarger{\times}}_{i \in \mathcal{N}} Q_{i}^{\phi}(\mathbf{s}, a_{i})$, which is defined as follows:
        \begin{equation}
            \mathlarger{\Upsilon} \left( \mathlarger{\mathlarger{\times}}_{i \in \mathcal{N}} Q_{i}^{\phi}(\mathbf{s}, a_{i}) \right) = \mathbf{w}(\mathbf{s}, \mathbf{a}) \sum_{\mathbf{s}' \in \mathcal{S}} Pr(\mathbf{s}' | \mathbf{s}, \mathbf{a}) \Big[
            R + \ \gamma \sum_{i \in \mathcal{N}} \max_{a_{i}} Q_{i}^{\phi}(\mathbf{s}', a_{i}) \Big] - \mathbf{b}(\mathbf{s}),
        \label{eq:shapley_q_operator}
        \end{equation}
        where $w_{i}(\mathbf{s}, a_{i}) = 1 / |\mathcal{N}|$ when $\mathit{a}_{i} = \arg\max_{a_{i}} Q^{\phi}_{i}(\mathbf{s}, a_{i})$. We prove that the optimal joint deterministic policy can be achieved by recursively running SBO in Theorem \ref{thm:shapley_q_optimal}.
        
        \begin{theorem}
            \label{thm:shapley_q_optimal}
            Shapley-Bellman operator is able to converge to the optimal Markov Shapley Q-value and the corresponding optimal joint deterministic policy when $\max_{\mathbf{s}} \big\{ \sum_{i \in \mathcal{N}} \max_{a_{i}} w_{i}(\mathbf{s}, a_{i}) \big\} < \frac{1}{\gamma}$.
        \end{theorem}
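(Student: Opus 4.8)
The plan is to prove the theorem by a Banach fixed-point argument followed by an identification of the fixed point. First I would view a profile of Markov Shapley Q-values as a bounded vector-valued map $\mathbf{Q}^{\phi}=[\,Q_{1}^{\phi},\dots,Q_{|\mathcal{N}|}^{\phi}\,]^{\top}$ and equip the space of such profiles with the agent-aggregated sup-norm $\|\mathbf{Q}^{\phi}\| := \max_{\mathbf{s}\in\mathcal{S}}\sum_{i\in\mathcal{N}}\max_{a_{i}}|Q_{i}^{\phi}(\mathbf{s},a_{i})|$. Since $R$ is bounded, $\gamma<1$, and $\mathbf{w},\mathbf{b}$ are bounded and nonnegative, $\mathlarger{\Upsilon}$ maps a sufficiently large closed ball of this (complete) space into itself, so it suffices to show $\mathlarger{\Upsilon}$ is a contraction there; Banach's theorem then yields a unique fixed point together with geometric convergence of the iterates $\mathlarger{\Upsilon}^{k}\mathbf{Q}^{\phi}$ from any starting point.

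\emph{Contraction.} For two profiles $\mathbf{Q}^{\phi},\tilde{\mathbf{Q}}^{\phi}$ I would subtract Eq.\ref{eq:shapley_q_operator} componentwise. The reward $R$ and the offset $\mathbf{b}(\mathbf{s})$ cancel, leaving (in the regime where the two profiles share the same greedy actions, cf. the obstacle below)
\[
[\mathlarger{\Upsilon}\mathbf{Q}^{\phi}]_{i}(\mathbf{s},a_{i})-[\mathlarger{\Upsilon}\tilde{\mathbf{Q}}^{\phi}]_{i}(\mathbf{s},a_{i}) = \gamma\,w_{i}(\mathbf{s},a_{i})\sum_{\mathbf{s}'\in\mathcal{S}}Pr(\mathbf{s}'|\mathbf{s},\mathbf{a})\sum_{j\in\mathcal{N}}\Big(\max_{a_{j}}Q_{j}^{\phi}(\mathbf{s}',a_{j})-\max_{a_{j}}\tilde{Q}_{j}^{\phi}(\mathbf{s}',a_{j})\Big).
\]
Using $|\max_{x}f-\max_{x}g|\le\max_{x}|f-g|$ on each $j$-th summand, $\sum_{j}\max_{a_{j}}|Q_{j}^{\phi}(\mathbf{s}',a_{j})-\tilde{Q}_{j}^{\phi}(\mathbf{s}',a_{j})|\le\|\mathbf{Q}^{\phi}-\tilde{\mathbf{Q}}^{\phi}\|$, and $\sum_{\mathbf{s}'}Pr(\mathbf{s}'|\mathbf{s},\mathbf{a})=1$, then taking $\max_{a_{i}}$, summing over $i$, and taking $\max_{\mathbf{s}}$, I get
\[
\|\mathlarger{\Upsilon}\mathbf{Q}^{\phi}-\mathlarger{\Upsilon}\tilde{\mathbf{Q}}^{\phi}\| \le \gamma\,\max_{\mathbf{s}}\Big\{\sum_{i\in\mathcal{N}}\max_{a_{i}}w_{i}(\mathbf{s},a_{i})\Big\}\,\|\mathbf{Q}^{\phi}-\tilde{\mathbf{Q}}^{\phi}\|,
\]
and the hypothesis $\max_{\mathbf{s}}\{\sum_{i}\max_{a_{i}}w_{i}(\mathbf{s},a_{i})\}<1/\gamma$ makes the modulus strictly less than $1$.

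\emph{Identification.} A fixed point $\mathbf{Q}^{\phi}_{\ast}$ of $\mathlarger{\Upsilon}$ solves the Shapley--Bellman optimality equation \ref{eq:shapley_q_optimality_equation}. Let $a_{i}^{*}=\arg\max_{a_{i}}[\mathbf{Q}^{\phi}_{\ast}]_{i}(\mathbf{s},a_{i})$; the rule built into $\mathlarger{\Upsilon}$ gives $w_{i}(\mathbf{s},a_{i}^{*})=1/|\mathcal{N}|$, hence $\sum_{i}w_{i}(\mathbf{s},a_{i}^{*})=1$, while (C.2) together with $\sum_{i}w_{i}^{-1}b_{i}=0$, $w_{i}>0$, $b_{i}\ge0$ forces $b_{i}\equiv0$. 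Summing Eq.\ref{eq:shapley_q_optimality_equation} over $i$ at the greedy joint action $\mathbf{a}^{*}$ and invoking the efficiency property of Proposition \ref{prop:shapley_value_properties} (the Markov Shapley credits sum to the global value) collapses it to the classical Bellman optimality equation for the global-reward game regarded as a Markov game over the joint action. By uniqueness of that solution, $\mathbf{a}^{*}$ is the optimal joint deterministic policy and, via (C.2), $\mathbf{Q}^{\phi}_{\ast}=\mathbf{Q}^{\phi^{*}}$ is the optimal Markov Shapley Q-value; hence $\mathlarger{\Upsilon}^{k}\mathbf{Q}^{\phi}\to\mathbf{Q}^{\phi^{*}}$ and the associated greedy policy is optimal.

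\emph{Main obstacle.} I expect the contraction estimate to be the crux. The weights $w_{i}$ depend on the current iterate through its argmax, so the componentwise subtraction is not literally $w_{i}\cdot(\text{difference of bracketed terms})$; one must either restrict $\mathlarger{\Upsilon}$ to an invariant ball around $\mathbf{Q}^{\phi^{*}}$ small enough that the greedy actions (and hence the active weights) are locally constant, or otherwise control the weight mismatch. Moreover, obtaining the sharp modulus $\gamma\max_{\mathbf{s}}\{\sum_{i}\max_{a_{i}}w_{i}\}$ — rather than a crude $\gamma|\mathcal{N}|\max_{i,\mathbf{s},a_{i}}w_{i}$ that would not match the stated hypothesis — relies on the norm aggregating over agents \emph{inside} the supremum over states. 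Once the contraction constant is pinned down, the identification step is standard dynamic-programming bookkeeping.
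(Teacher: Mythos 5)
Your proposal follows essentially the same route as the paper: a contraction estimate in the agent-aggregated sup-norm (the paper's $\|\cdot\|_{1}$ on $\mathbb{R}^{|\mathcal{N}|\times|\mathcal{S}||\mathcal{A}|}$ coincides with your norm since the joint action factorises, so $\max_{\mathbf{s},\mathbf{a}}\sum_{i}|Q_{i}^{\phi}(\mathbf{s},a_{i})|=\max_{\mathbf{s}}\sum_{i}\max_{a_{i}}|Q_{i}^{\phi}(\mathbf{s},a_{i})|$), then Banach's fixed-point theorem, then identification of the fixed point with the Shapley-Bellman optimality equation and the optimal joint policy via the efficiency property. The obstacle you flag---that $w_{i}$ depends on the current iterate through its argmax, so the componentwise subtraction is not literally $w_{i}$ times a common bracket---is genuine, but the paper's own proof of Lemma 4 silently treats $\mathbf{w}(\mathbf{s},\mathbf{a})$ as the same fixed function for both iterates, so your proposal is, if anything, more explicit about this point than the published argument.
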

        
        \textbf{Shapley Q-Learning.} For easy implementation, we conduct transformation for the stochastic approximation of SBO and derive \textit{Shapley Q-learning} (SHAQ) whose TD error is shown as follows:
        \begin{equation}
            \begin{split}
                \Delta(\mathbf{s}, \mathbf{a}, \mathbf{s}') = R \ + \ \gamma \sum_{i \in \mathcal{N}} \max_{a_{i}} Q_{i}^{\phi}(\mathbf{s}', a_{i})
                - \sum_{i \in \mathcal{N}} \delta_{i}(\mathbf{s}, a_{i}) \ Q^{\phi}_{i}(\mathbf{s}, a_{i}),
            \end{split}
        \label{eq:td_error_shapley_q_learning}
        \end{equation}
        where 
        \begin{equation}
            \delta_{i}(\mathbf{s}, a_{i}) = \begin{cases} 
                                                  1 & a_{i} = \arg\max_{a_{i}} Q^{\phi}_{i}(\mathbf{s}, a_{i}), \\
                                                  \alpha_{i}(\mathbf{s}, a_{i}) & a_{i} \neq \arg\max_{a_{i}} Q^{\phi}_{i}(\mathbf{s}, a_{i}).
                                             \end{cases}
        \label{eq:delta}
        \end{equation}
        Actually, the closed-form expression of $\delta_{i}(\mathbf{s}, a_{i})$ is written as $|\mathcal{N}|^{-1} w_{i}(\mathbf{s}, a_{i})^{-1}$. If inserting the condition that $w_{i}(\mathbf{s}, a_{i}) = 1 / |\mathcal{N}|$ when $\mathit{a}_{i} = \arg\max_{a_{i}} Q^{\phi}_{i}(\mathbf{s}, a_{i})$ as well as defining $\delta_{i}(\mathbf{s}, a_{i})$ as $\alpha_{i}(\mathbf{s}, a_{i})$ when $\mathit{a}_{i} \neq \arg\max_{a_{i}} Q^{\phi}_{i}(\mathbf{s}, a_{i})$, Eq.\ref{eq:delta} is obtained. The term $\mathbf{b}(\mathbf{s})$ is cancelled in Eq.\ref{eq:td_error_shapley_q_learning} thanks to the condition such that $\sum_{i \in \mathcal{N}} w_{i}(\mathbf{s}, a_{i})^{-1} b_{i}(\mathbf{s}) = 0$. Note that the condition to $w_{i}(\mathbf{s}, a_{i})$ in Theorem \ref{thm:shapley_q_optimal} should hold for the convergence of SHAQ in implementation (see Appendix \ref{subsubsec:derivation_of_shapley_q-learning}).
        % Here $\alpha_{i}(\mathbf{s}, a_{i}) = \frac{1}{|\mathcal{N}| \ w_{i}(\mathbf{s}, a_{i})}$ for $a_{i} \neq \arg\max_{a_{i}} Q^{\phi}_{i}(\mathbf{s}, a_{i})$.
    
    \subsection{Validity and Interpretability}
    \label{subsec:validity_of_shapley_q-learning}
    
        In this section, we show the validity of SBOE and the interpretability of SHAQ, i.e., providing the reasons why SBOE is valid to be formulated and SHAQ is an interpretable value factorisation method for the global reward game.
        \begin{theorem}
        \label{thm:shapley_value_core}
            The optimal Markov Shapley value is a solution in the Markov core under Markov convex game with the grand coalition.
        \end{theorem}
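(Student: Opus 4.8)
The plan is to reduce the statement to Shapley's classical theorem \citep{shapley1971cores} that the Shapley value of a convex cooperative game lies in its core, and to reprove that theorem in the present Markov setting. Fix an arbitrary state $\mathbf{s}\in\mathcal{S}$ and define the coalition-value function $v_{\mathbf{s}}(\mathcal{C}) := \max_{\pi_{\mathcal{C}}} V^{\pi_{\mathcal{C}}}(\mathbf{s})$ for $\mathcal{C}\subseteq\mathcal{N}$, with $v_{\mathbf{s}}(\emptyset)=0$. The MCG condition Eq.\ref{eq:mcg_assumption} states exactly that $v_{\mathbf{s}}$ is supermodular, i.e.\ convex. Since the payoff-distribution scheme in question is $\big(\max_{\pi_{i}} V^{\phi}_{i}(\mathbf{s})\big)_{i\in\mathcal{N}}$, by Eq.\ref{eq:epsilon_core} it suffices to prove, for every $\mathbf{s}$, that $\sum_{i\in\mathcal{C}}\max_{\pi_{i}} V^{\phi}_{i}(\mathbf{s}) \ge v_{\mathbf{s}}(\mathcal{C})$ for all $\mathcal{C}\subseteq\mathcal{N}$.

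The first step is to show that $\max_{\pi_{i}} V^{\phi}_{i}(\mathbf{s})$ equals the ordinary Shapley value $\mathrm{Sh}_{i}(v_{\mathbf{s}}) := \sum_{\mathcal{C}_{i}\subseteq\mathcal{N}\setminus\{i\}} \tfrac{|\mathcal{C}_{i}|!\,(|\mathcal{N}|-|\mathcal{C}_{i}|-1)!}{|\mathcal{N}|!}\big(v_{\mathbf{s}}(\mathcal{C}_{i}\cup\{i\})-v_{\mathbf{s}}(\mathcal{C}_{i})\big)$. First I would note that the second term of $\Phi_{i}(\mathbf{s}\mid\mathcal{C}_{i})$ in Definition \ref{def:marginal_contribution} is independent of $\pi_{i}$ and that joint policies factorise into products of individual policies, so that $\max_{\pi_{i}}$ and $\max_{\pi_{\mathcal{C}_{i}}}$ commute and together range over all policies of $\mathcal{C}_{i}\cup\{i\}$; hence $\max_{\pi_{i}}\Phi_{i}(\mathbf{s}\mid\mathcal{C}_{i}) = v_{\mathbf{s}}(\mathcal{C}_{i}\cup\{i\})-v_{\mathbf{s}}(\mathcal{C}_{i})$. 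Because the Shapley coefficients are non-negative, pushing $\max_{\pi_{i}}$ through the convex combination in Eq.\ref{eq:shapley_value} gives the one-sided bound $\max_{\pi_{i}} V^{\phi}_{i}(\mathbf{s}) \le \mathrm{Sh}_{i}(v_{\mathbf{s}})$. Summing over $i$, the left-hand side equals $\max_{\pi} V^{\pi}(\mathbf{s}) = v_{\mathbf{s}}(\mathcal{N})$ by the efficiency property of Proposition \ref{prop:shapley_value_properties}, while the right-hand side equals $v_{\mathbf{s}}(\mathcal{N})$ by efficiency of the ordinary Shapley value; as the non-negative gaps $\mathrm{Sh}_{i}(v_{\mathbf{s}})-\max_{\pi_{i}} V^{\phi}_{i}(\mathbf{s})$ sum to zero, each vanishes, which proves the claim.

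The second step reproves Shapley's core theorem for the convex game $v_{\mathbf{s}}$ via the representation of the Shapley value as an average of marginal-contribution vectors: for a permutation $\sigma$ of $\mathcal{N}$ let $P^{\sigma}_{i}$ be the set of agents preceding $i$ and set $m^{\sigma}_{i} := v_{\mathbf{s}}(P^{\sigma}_{i}\cup\{i\})-v_{\mathbf{s}}(P^{\sigma}_{i})$, so that $\mathrm{Sh}_{i}(v_{\mathbf{s}}) = \tfrac{1}{|\mathcal{N}|!}\sum_{\sigma} m^{\sigma}_{i}$. Each vector $m^{\sigma}$ lies in the core of $v_{\mathbf{s}}$: summing over all $i\in\mathcal{N}$ telescopes to $v_{\mathbf{s}}(\mathcal{N})$; and for any $\mathcal{C}$, ordering its members along $\sigma$ as $i_{1},\dots,i_{k}$ and writing $\mathcal{C}_{<l} := \{i_{1},\dots,i_{l-1}\}\subseteq P^{\sigma}_{i_{l}}$, the ``increasing marginal contributions'' consequence of convexity — obtained by applying Eq.\ref{eq:mcg_assumption} with $\mathcal{C}_{m}=P^{\sigma}_{i_{l}}$ and $\mathcal{C}_{k}=\mathcal{C}_{<l}\cup\{i_{l}\}$ (union $P^{\sigma}_{i_{l}}\cup\{i_{l}\}$, intersection $\mathcal{C}_{<l}$) — yields $m^{\sigma}_{i_{l}} \ge v_{\mathbf{s}}(\mathcal{C}_{<l}\cup\{i_{l}\})-v_{\mathbf{s}}(\mathcal{C}_{<l})$, and summing over $l$ telescopes the right-hand side to $\sum_{l} m^{\sigma}_{i_{l}} \ge v_{\mathbf{s}}(\mathcal{C})$. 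Since, for each $\mathbf{s}$, the Markov core of Eq.\ref{eq:epsilon_core} is the convex polyhedron defined by the inequalities $\sum_{i\in\mathcal{C}} x_{i}\ge v_{\mathbf{s}}(\mathcal{C})$, $\mathcal{C}\subseteq\mathcal{N}$, it contains the convex combination $\big(\mathrm{Sh}_{i}(v_{\mathbf{s}})\big)_{i\in\mathcal{N}} = \big(\max_{\pi_{i}} V^{\phi}_{i}(\mathbf{s})\big)_{i\in\mathcal{N}}$ of the $m^{\sigma}$; as $\mathbf{s}$ was arbitrary, we are done.

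The main obstacle is the first step: the natural estimate only gives $\max_{\pi_{i}} V^{\phi}_{i}(\mathbf{s}) \le \mathrm{Sh}_{i}(v_{\mathbf{s}})$, so one genuinely needs the efficiency property (Proposition \ref{prop:shapley_value_properties}) to upgrade this to equality and thereby obtain the correct per-agent values entering the core inequalities; carefully justifying $\max_{\pi_{i}}\Phi_{i}(\mathbf{s}\mid\mathcal{C}_{i}) = v_{\mathbf{s}}(\mathcal{C}_{i}\cup\{i\})-v_{\mathbf{s}}(\mathcal{C}_{i})$ (product structure of joint policies and attainment of the maxima over stationary policies) is the other point requiring care. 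Deriving the increasing-marginal-contributions inequality from the pairwise supermodularity in Eq.\ref{eq:mcg_assumption}, as well as the telescoping identities, are routine.
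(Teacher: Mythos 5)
Your proposal is correct and follows essentially the same route as the paper's own proof: your step showing that each permutation's marginal-contribution vector satisfies the core inequalities (supermodularity from Eq.\ref{eq:mcg_assumption} applied along the permutation, then telescoping) is precisely the content of the paper's Lemma \ref{lemm:condition_coalition_marginal_contribution}, your observation that the core constraints define a convex set is its Lemma \ref{lemm:markov_core_convex_set}, and the conclusion is drawn in both cases by writing the (optimal) Markov Shapley value as a convex combination of these core points. The only genuine divergence is in how the per-agent quantity $\max_{\pi_{i}} V_{i}^{\phi}(\mathbf{s})$ is identified with the Shapley value of the per-state game $v_{\mathbf{s}}$: the paper gets the equality $\max_{\pi_{i}} V_{i}^{\phi}(\mathbf{s}) = \sum_{\mathcal{C}_{i}} Pr(\mathcal{C}_{i}|\mathcal{N}\backslash\{i\}) \max_{\pi_{i}(\mathcal{C}_{i})}\Phi_{i}(\mathbf{s}|\mathcal{C}_{i})$ directly from its separable sub-policy assumption (Assumption \ref{assm:assumption_for_prove_shapley_value} via Lemma \ref{lemm:max_shapley_value}), whereas you prove only the one-sided Jensen-type bound and then upgrade it to equality by squeezing with the efficiency property of Proposition \ref{prop:shapley_value_properties}(ii) together with classical efficiency of the static Shapley value. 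Your squeeze is valid and arguably tidier at that step, but it is not assumption-free in substance, since the paper's proof of Proposition \ref{prop:shapley_value_properties}(ii) itself invokes the same separability assumptions; so the two arguments ultimately rest on identical hypotheses and differ only in where that assumption is consumed.
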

        
        \begin{remark}
            For an arbitrary state $\mathbf{s} \in \mathcal{S}$, by C.2 it is not difficult to check that even if an arbitrary agent $i$ is dummy (i.e., $Q^{\phi^{*}}_{i}(\mathbf{s}, a_{i}) = 0$ for some $i \in \mathcal{N}$), $Q^{\pi^{*}}(\mathbf{s}, \mathbf{a})$ and $Q^{\phi^{*}}_{j}(\mathbf{s}, a_{j}), \forall j \neq i$ would not be zero if $b_{i}(\mathbf{s}) \neq 0$. If the extreme case happens that for an arbitrary state $\mathbf{s} \in \mathcal{S}$ all agents are dummies, since $\sum_{i \in \mathcal{N}} w_{i}(\mathbf{s}, a_{i})^{-1} b_{i}(\mathbf{s}) = 0$ we are allowed to set $b_{i}(\mathbf{s}) = 0, \forall i \in \mathcal{N}$ so that $Q^{\pi^{*}}(\mathbf{s}, \mathbf{a}) = 0$ and efficiency such that $\max_{\mathbf{a}} Q^{\pi^{*}}(\mathbf{s}, \mathbf{a}) = \sum_{i \in \mathcal{N}} \max_{a_{i}} Q_{i}^{\phi^{*}}(\mathbf{s}, a_{i})$ is still valid.
        \label{rmk:dummy_agents}
        \end{remark}
        
        First, we give a proof for showing that the optimal MSV is a solution in Markov core under the grand coalition, as Theorem \ref{thm:shapley_value_core} shows. Since a solution in Markov core implies the optimal global value (see Remark \ref{rmk:example_core} in Appendix \ref{subsubsec:insight_into_the_core_appendix}), we can conclude that \textit{the optimal MSV can lead to the optimal global value} (a.k.a. social welfare), which links Condition C.1 to Markov core. As a result, \textit{solving SBOE is equivalent to solving Markov core under the grand coalition and SHAQ is actually a learning algorithm that reliably converges to Markov core}. As per the definition in Section \ref{sec:markov_convex_game}, we can say that SHAQ leads to the result that no agents have incentives to deviate from the grand coalition, which provides an interpretation of value factorisation for global reward game. Condition C.2 is a condition that \textit{maintains the validity of the relationship between the optimal MSQ and the optimal global Q-value even if there exist dummy agents} (see Remark \ref{rmk:dummy_agents}), so that the definition of SBOE is valid for MCG and MSQ in almost every case, which preserves the completeness of the theory.

    \subsection{Implementations}
    \label{sec:implementation_of_shapley_q-learning}
    
        We now describe a practical implementation of SHAQ for Dec-POMDP \citep{oliehoek2012decentralized} (i.e. the global reward game but with partial observations). First, the global state is replaced by the history of each agent to guarantee the optimal deterministic joint policy \citep{oliehoek2012decentralized}. Accordingly, Markov Shapley Q-value is denoted as $Q_{i}^{\phi}(\tau_{i}, a_{i})$, wherein $\tau_{i}$ is a history of partial observations of agent $i$. Since the paradigm of centralised training decentralised execution (CTDE) \citep{oliehoek2008optimal} is applied, the global state (i.e. $\mathbf{s}$) for $\hat{\alpha}_{i}(\mathbf{s}, a_{i})$ can be obtained during training.
        
        \begin{proposition}
        \label{prop:shapley_value_approximate}
            Suppose any action marginal contribution can be factorised to the form such that $\Upphi_{i}(\mathbf{s}, a_{i} | \mathcal{C}_{i}) = \sigma(\mathbf{s}, \mathbf{a}_{ \scriptscriptstyle\mathcal{C}_{i} \cup \{i\} }) \ \hat{Q}_{i}(\mathbf{s}, a_{i})$. With the condition such that
            \begin{equation*}
                \mathbb{E}_{\mathcal{C}_{i} \sim Pr(\mathcal{C}_{i} | \mathcal{N} \backslash \{i\})} \left[ \sigma(\mathbf{s}, \mathbf{a}_{ \scriptscriptstyle\mathcal{C}_{i} \cup \{i\} }) \right] =
                \begin{cases} 
                     1 & \ \ a_{i} = \arg\max_{a_{i}} Q^{\phi}_{i}(\mathbf{s}, a_{i}), \\
                     K \in (0, 1) & \ \ a_{i} \neq \arg\max_{a_{i}} Q^{\phi}_{i}(\mathbf{s}, a_{i}),
                \end{cases}
            \end{equation*}
            we have
            \begin{equation}
                \begin{cases} 
                     Q_{i}^{\phi}(\mathbf{s}, a_{i}) = \hat{Q}_{i}(\mathbf{s}, a_{i}) & \ \ a_{i} = \arg\max_{a_{i}} \hat{Q}_{i}(\mathbf{s}, a_{i}), \\
                     \alpha_{i}(\mathbf{s}, a_{i}) \ Q^{\phi}_{i}(\mathbf{s}, a_{i}) = \hat{\alpha}_{i}(\mathbf{s}, a_{i}) \ \hat{Q}_{i}(\mathbf{s}, a_{i}) & \ \ a_{i} \neq \arg\max_{a_{i}} \hat{Q}_{i}(\mathbf{s}, a_{i}),
                \end{cases}
            \label{eq:shapley_q_approximate}
            \end{equation}
            where $\hat{\alpha}_{i}(\mathbf{s}, a_{i}) = \mathbb{E}_{\mathcal{C}_{i} \sim Pr(\mathcal{C}_{i} | \mathcal{N} \backslash \{i\})} \left[ \hat{\psi}_{i}(\mathbf{s}, a_{i}; \mathbf{a}_{ \scriptscriptstyle\mathcal{C}_{i} }) \right]$ and $\hat{\psi}_{i}(\mathbf{s}, a_{i}; \mathbf{a}_{ \scriptscriptstyle\mathcal{C}_{i} }) := \alpha_{i}(\mathbf{s}, a_{i}) \ \sigma(\mathbf{s}, \mathbf{a}_{ \scriptscriptstyle\mathcal{C}_{i} \cup \{i\} })$.
        \end{proposition}

        Compatible with the decentralised execution, we use only one parametric function $\hat{Q}_{i}(\tau_{i}, a_{i})$ to directly approximate $Q_{i}^{\phi}(\tau_{i}, a_{i})$. By inserting Eq.\ref{eq:shapley_q_approximate} into Eq.\ref{eq:td_error_shapley_q_learning}, $\delta_{i}(\mathbf{s}, a_{i})$ is transformed into the form as follows:
        \begin{equation}
            \hat{\delta}_{i}(\mathbf{s}, a_{i}) = \begin{cases} 
                                                  1 & a_{i} = \arg\max_{a_{i}} \hat{Q}_{i}(\mathbf{s}, a_{i}), \\
                                                  \hat{\alpha}_{i}(\mathbf{s}, a_{i}) & a_{i} \neq \arg\max_{a_{i}} \hat{Q}_{i}(\mathbf{s}, a_{i}),
                                             \end{cases}
        \label{eq:delta_new}
        \end{equation}
        where $\hat{\alpha}_{i}(\mathbf{s}, a_{i}) = \mathbb{E}_{\mathcal{C}_{i} \sim Pr(\mathcal{C}_{i} | \mathcal{N} \backslash \{i\})} \left[ \hat{\psi}_{i}(\mathbf{s}, a_{i}; \mathbf{a}_{ \scriptscriptstyle\mathcal{C}_{i} }) \right]$. To solve partial observability, $\hat{Q}_{i}(\tau_{i}, a_{i})$ is empirically represented as recurrent neural network (RNN) with GRUs \citep{chung2014empirical}. $\hat{\psi}_{i}(\mathbf{s}, a_{i}; \mathbf{a}_{ \scriptscriptstyle\mathcal{C}_{i} })$ is directly approximated by a parametric function $\mathlarger{F}_{\mathbf{s}} + 1$ and thus $\hat{\alpha}_{i}(\mathbf{s}, a_{i})$ can be expressed as follows:
        \begin{equation}
            \hat{\alpha}_{i}(\mathbf{s}, a_{i}) = \frac{1}{M} \sum_{k = 1}^{M} \mathlarger{F}_{\mathbf{s}} \left( \hat{Q}_{\mathcal{C}_{i}^{k}}(\tau_{\mathcal{C}_{i}^{k}}, \mathbf{a}_{\mathcal{C}_{i}^{k}}), \ \hat{Q}_{i}(\tau_{i}, a_{i}) \right) + 1,
        \label{eq:alpha_deep_representation}
        \end{equation}
        
        % , similar to the architecture of QMIX \citep{RashidSWFFW18}
        where $\hat{Q}_{\mathcal{C}_{i}^{k}}(\tau_{\mathcal{C}_{i}^{k}}, \mathbf{a}_{\mathcal{C}_{i}^{k}}) = \frac{1}{|\mathcal{C}_{i}^{k}|}\sum_{j \in \mathcal{C}_{i}^{k}} \hat{Q}_{j}(\tau_{j}, a_{j})$ and $\mathcal{C}_{i}^{k}$ is sampled $\mathit{M}$ times from $\mathit{Pr}(\mathcal{C}_{i} | \mathcal{N} \backslash \{i\})$ (i.e., implemented as Remark \ref{rmk:coalition_generation} suggests) to approximate $\mathbb{E}_{\mathcal{C}_{i} \sim \mathit{Pr}(\mathcal{C}_{i} | \mathcal{N} \backslash \{i\})}[ \hat{\psi}_{i}(\mathbf{s}, a_{i}; \mathbf{a}_{ \scriptscriptstyle\mathcal{C}_{i} }) ]$ using Monte Carlo approximation; and $\mathlarger{F}_{\mathbf{s}}$ is a monotonic function, followed by an absolute activation function, whose weights are generated from hyper-networks w.r.t. the global state. We show that Eq.\ref{eq:alpha_deep_representation} satisfies the condition to $w_{i}(\mathbf{s}, a_{i})$ in Theorem \ref{thm:shapley_q_optimal}  (see Appendix \ref{subsubsec:implementation_of_alpha_appendix}), so it is a reliable implementation.
        
        By using the framework of fitted Q-learning \citep{ernst2005tree} to solve large number of states (i.e., could be usually infinite) and plugging in the above designed modules, the practical least-square-error loss function derived from Eq.\ref{eq:td_error_shapley_q_learning} is therefore stated as follows:
        \begin{equation}
            \min_{\theta, \lambda} \mathbb{E}_{\mathbf{s}, \mathbf{\tau}, \mathbf{a}, R, \mathbf{\tau}'} \bigg[ \ \Big( \ R \ + \ 
            \gamma \sum_{i \in \mathcal{N}} \max_{a_{i}} \hat{Q}_{i}(\tau_{i}', a_{i}; \theta^{-})
            - 
            \sum_{i \in \mathcal{N}} \hat{\delta}_{i}(\mathbf{s}, a_{i}; \lambda) \ \hat{Q}_{i}(\tau_{i}, a_{i}; \theta) \ \Big)^{2} \ \bigg],
        \label{eq:deep_shapley_q_learning_loss}
        \end{equation}
        where all agents share the parameters of $\hat{Q}_{i}(\mathbf{s}, a_{i}; \theta)$ and $\hat{\alpha}_{i}(\mathbf{s}, a_{i}; \lambda)$ respectively; and $\hat{Q}_{i}(\mathbf{s}', a_{i}; \theta^{-})$ works as the target where $\theta^{-}$ is periodically updated. The general training procedure follows the paradigm of DQN \citep{mnih2013playing}, with a replay buffer to store the online collection of agents' episodes. To depict an overview of the algorithm, the pseudo code is shown in Appendix \ref{sec:algorithm_shapley_q}.
    
\section{Related Work}
\label{sec:related_work}
    \textbf{Value Factorisation in MARL.} To deal with the instability during training in global reward game by independent learners \citep{claus1998dynamics}, the centralised training and decentralised execution (CTDE) \citep{oliehoek2008optimal} was proposed and it became a general paradigm for MARL. Based on CTDE, MADDPG \citep{lowe2017multi} learns a global Q-value that can be regarded as assigning the same credits to all agents during training \citep{Wang_2020}, which may cause the unfair credit assignment \citep{wolpert2002optimal}. To avoid this problem, VDN \citep{SunehagLGCZJLSL18} was proposed to learn the factorised Q-value, assuming that any global Q-value equals to the sum of decentralised Q-values. Nevertheless, this factorisation may limit the representation of the global Q-value. To mitigate this issue, QMIX \citep{RashidSWFFW18} and QTRAN \citep{SonKKHY19} were proposed to represent the global Q-value with a richer class w.r.t. decentralised Q-values, based on the assumption (called Individual-Global-Max) of convergence to the optimal joint deterministic policy. Markov Shapley value proposed in this paper belongs to the family of value factorisation, based on the game-theoretical framework called MCG that enjoys the interpretability. From the conventional cooperative games (e.g., network flow game \citep{kalai1982generalized}, induced subgraph game \citep{deng1994complexity} that can be used for modelling social networks, and facility location game \citep{deng1999algorithmic}), it is insightful that the coalition introduced in this paper exists. In many scenarios, however, the information of coalition might be unknown. Therefore, the latent coalition is assumed, and we only need to concentrate on the observable information, e.g., the global reward.
    
    \textbf{Relationship to VDN.} By setting $\delta_{i}(\mathbf{s}, a_{i}) = 1$ for all state-action pairs, SHAQ degrades to VDN \citep{SunehagLGCZJLSL18}. Although VDN tried to tackle the problem of dummy agents, \citet{SunehagLGCZJLSL18} did not give a theoretical guarantee on identifying it. The Markov Shapley value theory proposed in this paper well addresses this issue from both theoretical and empirical aspects. These aspects show that VDN is a subclass of SHAQ. The theoretical framework proposed in this paper answers to why VDN works well in most scenarios but performs poorly in some scenarios (i.e., $\delta_{i}(\mathbf{s}, a_{i})=1$ in Eq.\ref{eq:td_error_shapley_q_learning} was incorrectly defined over the suboptimal actions).
    
    \textbf{Relationship to COMA.} Compared with COMA \citep{foerster2018counterfactual}, each agent $i$'s credit assignment $\bar{Q}_{i}(\mathbf{s}, a_{i})$ is mathematically expressed as follows:
    \begin{equation*}
        \begin{split}
            \bar{Q}_{i}(\mathbf{s}, a_{i}) = \bar{Q}^{\pi}(\mathbf{s}, \mathbf{a}) - \bar{Q}^{\pi_{-i}}(\mathbf{s}, \mathbf{a}_{- i}), \\
            \bar{Q}^{\pi_{-i}}(\mathbf{s}, \mathbf{a}_{-i}) = \sum_{a_{i}} \pi_{i}(a_{i}|\mathbf{s}) \bar{Q}^{\pi} \left(\mathbf{s}, (\mathbf{a}_{-i}, a_{i}) \right),
        \end{split}
    \end{equation*}
    where subscript $-i$ indicates the agents excluding $i$. $\bar{Q}_{i}(\mathbf{s}, a_{i})$ can be seen as the action marginal contribution between the grand coalition Q-value and the coalition Q-value excluding the agent $i$, under \textit{some permutation to form the grand coalition} wherein agent $i$ is located at the \textit{last position}. The efficiency is obviously violated (i.e., the sum of optimal action marginal contributions defined here is unlikely to be equal to the optimal grand coalition Q-value). In contrast to COMA, SHAQ considers all permutations to form the grand coalition to preserve the efficiency.
    
    \textbf{Relationship to Independent Learning.} Independent learning (e.g. IQL \citep{claus1998dynamics}) can be also seen as a special credit assignment, however, the credit assigned to each agent is still with no intuitive interpretation. Mathematically, suppose that $\bar{Q}_{i}(\mathbf{s}, a_{i})$ is the independent Q-value of agent $i$, we can rewrite it in the form consisting of action marginal contributions such that
    \begin{equation*}
        \bar{Q}_{i}(\mathbf{s}, a_{i}) = \mathbb{E}_{\mathcal{C}_{i} \sim Pr(\mathcal{C}_{i} | \mathcal{N} \backslash \{i\})} \left[ \bar{\Upphi}_{i}(\mathbf{s}, a_{i} | \mathcal{C}_{i}) \right].
    \end{equation*}
    It is intuitive to see that the independent Q-value is a direct approximation of MSQ, ignoring coalition formation, while SHAQ considers coalition formation in approximation. This gives an explanation for why independent learning works well in some cooperative tasks \citep{PapoudakisC0A21}. Nevertheless, it encounters the same issue as in COMA, the loss of properties led by the coalition formation.
    
    \textbf{Relationship to SQDDPG.} We now discuss the relationship between SQDDPG \citep{Wang_2020} and SHAQ. In terms of algorithms, SQDDPG belongs to policy gradient methods (i.e. an approximation of policy iteration) while SHAQ belongs to value based methods (i.e. an approximation of value iteration). Since policy iteration (with one-step policy evaluation) is equivalent to value iteration \citep{bertsekas2019reinforcement} (at least under a finite state space and a finite action space), the theory behind SHAQ directly \textit{fills the gap in SQDDPG on theoretical guarantees of convergence to optimal joint policy}. Specifically, the learning procedure of SQDDPG iteratively performs the following two stages:
    \begin{equation*}
        \begin{split}
            &\textbf{Stage 1:} \quad \min_{\theta} \mathbb{E}_{\mathbf{s}, \mathbf{a}, R, \mathbf{s}'} \bigg[ \ \Big( \ R \ + \ 
            \gamma \sum_{i \in \mathcal{N}} \hat{Q}_{i}^{\phi}(\mathbf{s}', a_{i}'; \theta^{-})
            - 
            \sum_{i \in \mathcal{N}} \hat{Q}_{i}^{\phi}(\mathbf{s}, a_{i}; \theta) \ \Big)^{2} \ \bigg]. \\
            &\textbf{Stage 2:} \quad \pi_{i}(\mathbf{s}) \in \arg\max_{a_{i}} \hat{Q}_{i}^{\phi}(\mathbf{s}, a_{i}; \theta).
        \end{split}
    \end{equation*}
    
    It can be observed that both SQDDPG and SHAQ ideally converge to the same optimal MSQs w.r.t. the optimal actions such that 
    \begin{equation*}
        \mathbb{E}_{\mathbf{s}, \mathbf{s}'} \bigg[ \ \Big( \ \max_{\mathbf{a}} R(\mathbf{s}, \mathbf{a}) \ + \ 
            \gamma \sum_{i \in \mathcal{N}} \max_{a_{i}'} \hat{Q}_{i}^{\phi^{*}}(\mathbf{s}', a_{i}')
            - 
            \sum_{i \in \mathcal{N}} \max_{a_{i}} \hat{Q}_{i}^{\phi^{*}}(\mathbf{s}, a_{i}) \ \Big)^{2} \ \bigg] = 0.
    \end{equation*}
    
    However, about suboptimal actions, \textit{SQDDPG does not provide any theoretical guarantee}, whereas SHAQ does with specific implementations as shown in Eq.\ref{eq:alpha_deep_representation} to match the theoretical results shown in this paper. Note that this is critical to reliable interpretations of the optimal MSQ w.r.t. suboptimal actions (e.g., for detecting adversarial attacks on controllers if deployed in industry \citep{fawzi2014secure}).
    
\section{Experiments}
\label{sec:experiments}
    In this section, we show the experimental results of SHAQ on Predator-Prey \citep{bohmer2020deep} and various tasks in StarCraft Multi-Agent Challenge (SMAC) \footnote{The version that we use in this paper is SC2.4.6.2.69232 rather than the newer SC2.4.10. As reported from \cite{rashid2020weighted}, the performance is not comparable across versions.}. The baselines that we select for comparison are COMA \cite{foerster2018counterfactual}, VDN \citep{SunehagLGCZJLSL18}, QMIX \citep{RashidSWFFW18}, MASAC \citep{iqbal2019actor}, QTRAN \citep{SonKKHY19}, QPLEX \citep{wang2020qplex} and W-QMIX (including CW-QMIX and OW-QMIX) \citep{rashid2020weighted}. The implementation details of our algorithm are shown in Appendix \ref{subsec:implementation_details_shapley_q_learning}, whereas the implementation of baselines are from \cite{rashid2020weighted} \footnote{The source code of baseline implementation is from \url{https://github.com/oxwhirl/wqmix}.}. We also compare SHAQ with SQDDPG \citep{Wang_2020} \footnote{The code of SQDDPG is implemented based on \url{https://github.com/hsvgbkhgbv/SQDDPG}.}, which is shown in Appendix \ref{subsec:comparison_with_sqddpg}. For all experiments, we use the $\epsilon$-greedy exploration strategy, where $\epsilon$ is annealed from 1 to 0.05. The annealing time steps vary among different experiments. For Predator-Prey, we apply 1 million time steps for annealing, following the setup from \cite{wang2020qplex}. For the easy and hard maps in SMAC, we apply 50k time steps for annealing, the same as that in \cite{samvelyan2019starcraft}; while for the super-hard maps in SMAC, we apply 1 million time steps for annealing to obtain more explorations so that more state-action pairs can be visited. About the replay buffer size, we set as 5000 for all algorithms that is the same as that in \cite{rashid2020weighted}. To fairly evaluate all algorithms, we run each experiment with 5 random seeds. All graphs showing experimental results are plotted with the median and 25\%-75\% quartile shading. About the interpretability of algorithms, we evaluate the algorithms with both both $\epsilon$-greedy policy (i.e., $\epsilon=0.8$) for obtaining mixed optimal and suboptimal actions and greedy policy for obtaining pure optimal actions. The ablation study of SHAQ is shown in Appendix \ref{subsec:ablation_study}. 
    % demonstrating the learned MSQs of both $\epsilon$-greedy policy (for obtaining mixed optimal and suboptimal actions) and greedy policy (for obtaining pure optimal actions).
    
    \subsection{Predator-Prey}
    \label{subsec:predator-prey}
        
        We firstly run the experiments on a partially-observable task called Predator-Prey \citep{bohmer2020deep}, wherein 8 predators that are feasible to be controlled aim to capture 8 preys with random policies in a 10x10 grid world. Each agent's observation is a 5x5 sub-grid centering around it. If a prey is captured by coordination of 2 agents, predators will be rewarded by 10. On the other hand, each unsuccessful attempt by only 1 agent will be punished by a negative reward p. In this experiment, we study the behaviors of each algorithm under different values of p (that describes different levels of coordination). As \cite{rashid2020weighted} reported, only QTRAN and W-QMIX can solve this task, while \cite{wang2020qplex} found that the failure was primarily due to the lack of explorations. As a result, we apply the identical epsilon annealing schedule (i.e. 1 million time steps) adopted in \cite{wang2020qplex}. 
        
        \begin{figure*}[ht!]
            \centering
            \begin{subfigure}[b]{0.32\textwidth}
                \centering
                \includegraphics[width=\textwidth]{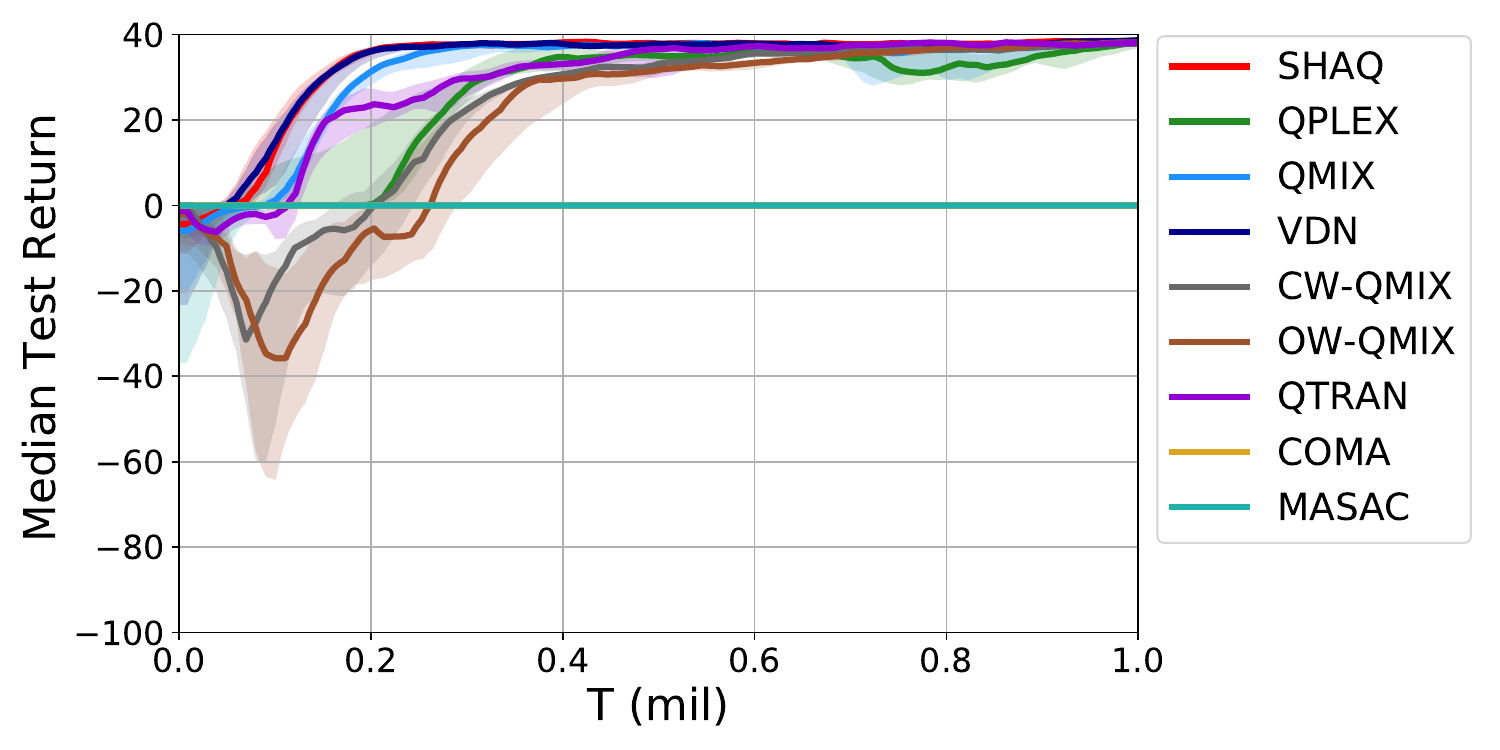}
                \caption{p=-0.5.}
            \end{subfigure}
            ~
            \begin{subfigure}[b]{0.32\textwidth}
                \centering
                \includegraphics[width=\textwidth]{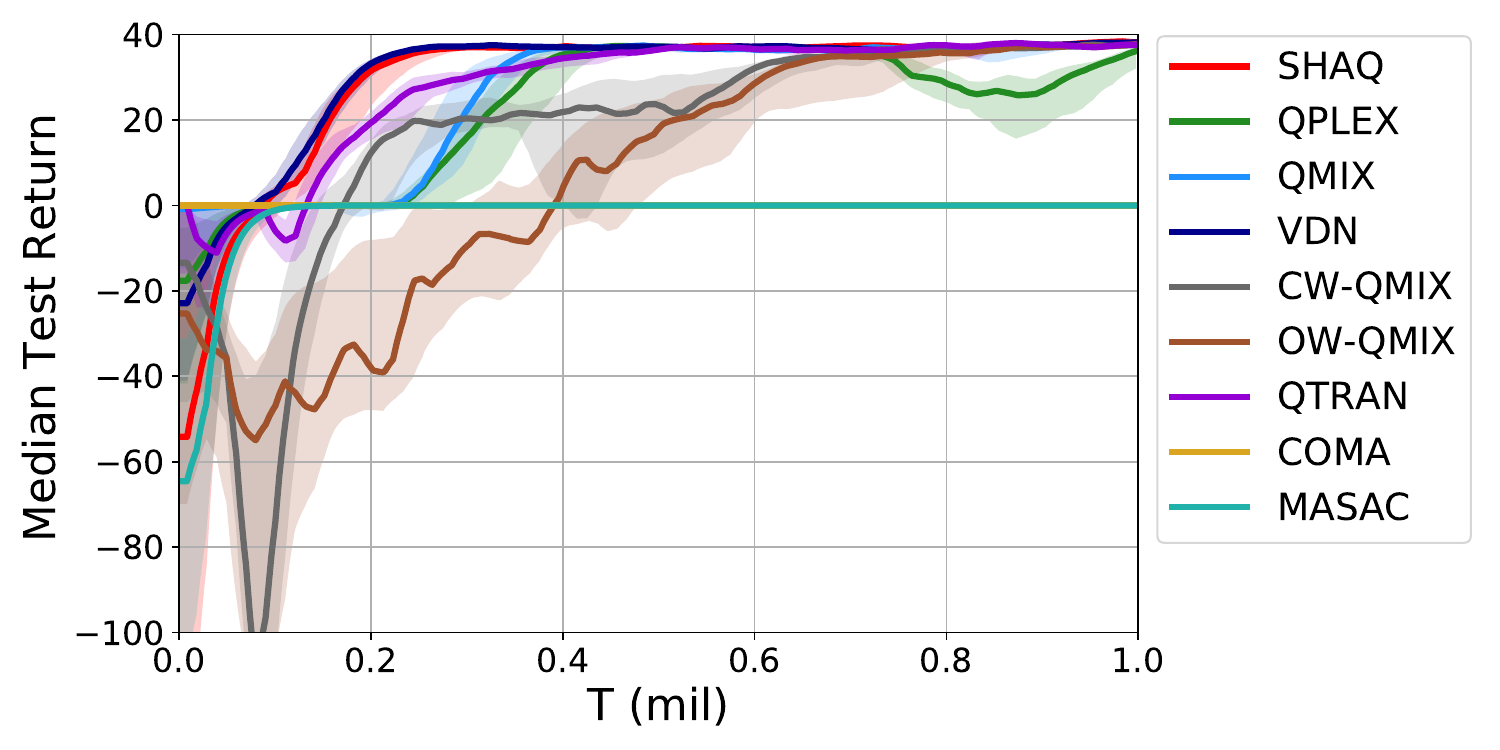}
                \caption{p=-1.}
            \end{subfigure}
            ~
            \begin{subfigure}[b]{0.32\textwidth}
                \centering
                \includegraphics[width=\textwidth]{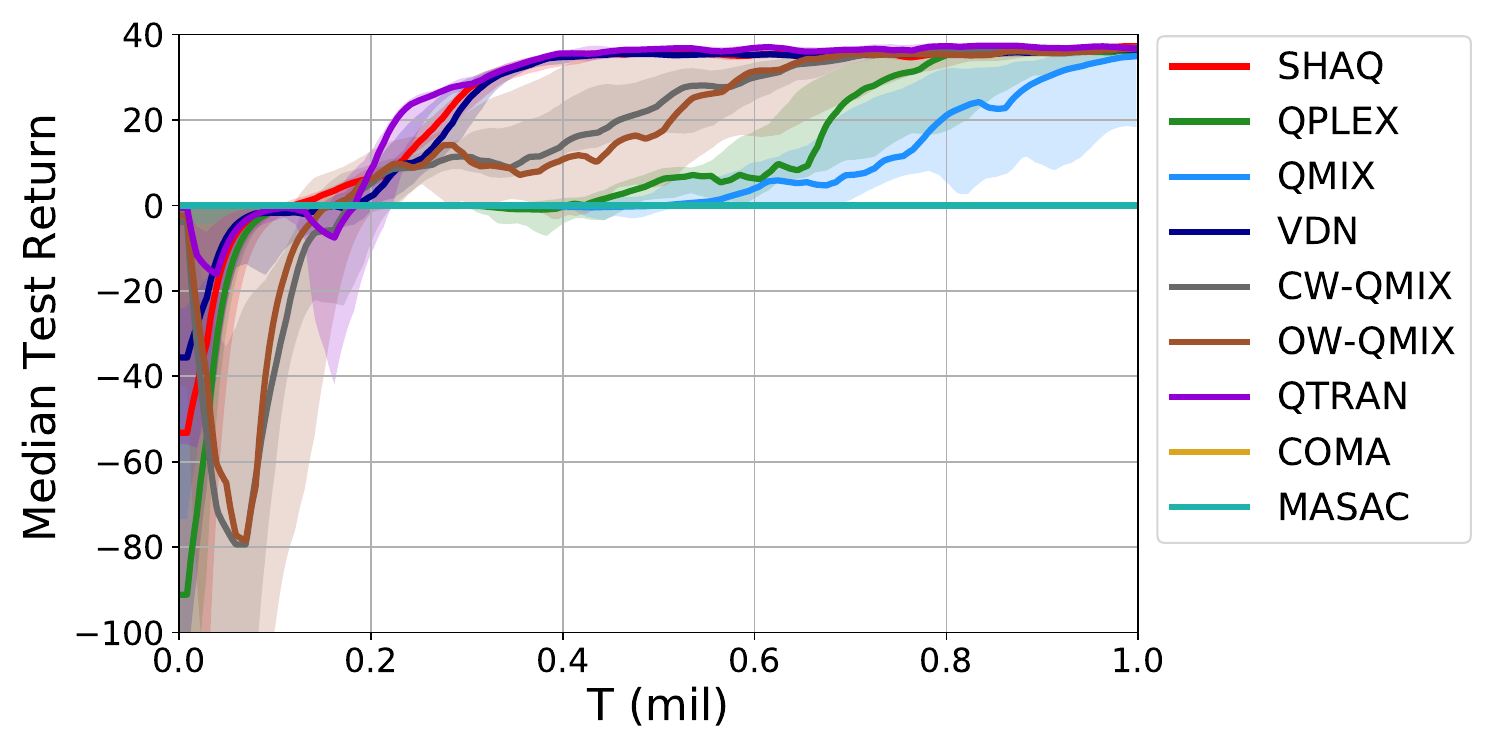}
                \caption{p=-2.}
            \end{subfigure}
            \caption{Median test return for Predator-Prey with different values of p.}
        \label{fig:predator_prey}
        \end{figure*}
        
        \textbf{Performance Analysis.} As Figure \ref{fig:predator_prey} shows, SHAQ can always solve the tasks with different values of p. With the epsilon annealing strategy from \cite{wang2020qplex}, W-QMIX does not perform as well as reported in \cite{rashid2020weighted}. The reason could be its poor robustness to the increased explorations \citep{rashid2020weighted} for this environment (see the evidential experimental results in Appendix \ref{subsec:extra_experimental_results_for_predator_prey}). The good performance of VDN validates our analysis in Section \ref{sec:related_work}, whereas the performance of QTRAN is surprisingly almost invariant to the value of p. The performances of QPLEX and QMIX become obviously worse when p=-2. The failure of MASAC and COMA could be due to that relative overgeneralisation\footnote{Relative overgeneralisation is a common game theoretic pathology that the suboptimal actions are preferred when matched with arbitrary actions from the collaborating agents \citep{wei2016lenient}.} prevents policy gradient methods from better coordination \citep{wei2018multiagent}.
        
        \begin{figure*}[ht!]
            \centering
            \begin{subfigure}[b]{0.3\textwidth}
            	\centering
        	    \includegraphics[width=\textwidth]{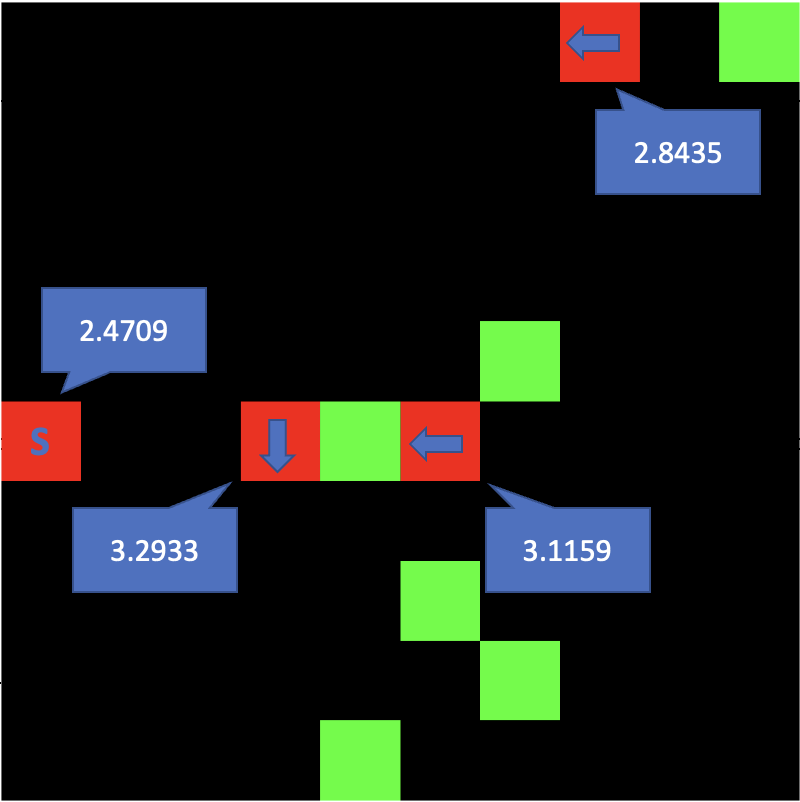}
        	    \caption{SHAQ: $\epsilon$-greedy.}
        	\label{fig:shaq_epsilon_greedy_pp}
            \end{subfigure}
            ~
            \begin{subfigure}[b]{0.3\textwidth}
                \centering                \includegraphics[width=\textwidth]{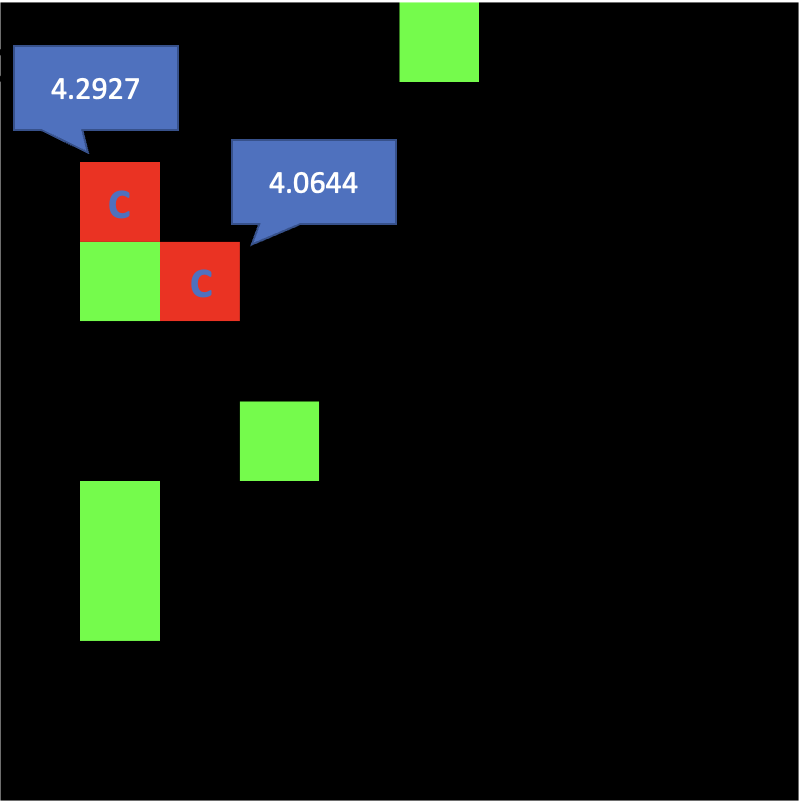}
                \caption{SHAQ: greedy.}
            \label{fig:shaq_greedy_pp}
            \end{subfigure}
            \caption{Visualisation of the evaluation for SHAQ on Predator-Prey: each red square is a controllable agent, whereas each green square indicates a prey. Each agent's factorised Q-value is reported in the bubble in blue and the symbols within the squares indicate the action of each agent (i.e., arrows imply the movement direction, ``S'' implies staying and ``C'' implies capturing a prey that is valid only when the agent is around a prey).}
            % The epsilon in Figure (a) is chosen as 0.8, so it is highly likely that a random action is executed (i.e. $\epsilon$-greedy policy). While the actions performed in Figure (b) are optimal (i.e. greedy policy).
        \label{fig:study_on_factorised_q_values_pp}
        \end{figure*}
        
        % As \citep{bohmer2020deep} illustrated, if both agents are around and capture the prey, they will be rewarded as 10. Otherwise, if any single agent attempts to capture the prey, they will be punished as p (that is set to -1 in this demonstration). 
        \textbf{Interpretability of SHAQ.} To verify that SHAQ possesses the interpretability, we show its credit assignment on Predator-Prey. As we see from Figure \ref{fig:shaq_greedy_pp}, all agents are around and capture a prey, so both of them perform the optimal actions and deserve almost the equal optimal credit assignment as $4.2927$ and $4.0644$, which verifies our theoretical claim. From Figure \ref{fig:shaq_epsilon_greedy_pp}, it can be seen that two agents are far away from preys, so they receive low credits as $2.4709$ and $2.8435$. On the other hand, the other two agents are around a prey, but they do not perform the optimal action ``capture'', so they receive less credits than the two agents in Figure \ref{fig:shaq_greedy_pp}. Nevertheless, they are around a prey, so they perform better than those agents that are far away from preys and receive comparatively greater credits as $3.2933$ and $3.1159$. The coherent credit assignments in both Figure \ref{fig:shaq_epsilon_greedy_pp} and \ref{fig:shaq_greedy_pp} implies that the assigned credits reflect agents' contributions (verifying (iii) in Proposition \ref{prop:shapley_value_properties})
        , i.e., each agent receives the credit that is consistent with its decision.
        
    \subsection{StarCraft Multi-Agent Challenge}
    \label{subsec:smac}
    
        We next evaluate SHAQ on the more challenging SMAC tasks, the environmental settings of which are the same as that in \cite{samvelyan2019starcraft}. To broadly compare the performance of SHAQ with baselines, we select 4 easy maps: 8m, 3s5z, 1c3s5z and 10m\_vs\_11m; 3 hard maps: 5m\_vs\_6m, 3s\_vs\_5z and 2c\_vs\_64zg; and 4 super-hard maps: 3s5z\_vs\_3s6z, Corridor, MMM2 and 6h\_vs\_8z. All training is through online data collection. Due to the limited space, we only show partial results in the main part of paper and leave the rest in Appendix \ref{subsec:experimental_results_on_extra_smac_maps}. 
        
        % First, we compare performances between SHAQ and baselines. 
        \textbf{Performance Analysis.} It shows in Figure \ref{fig:easy_hard_smac} that SHAQ outperforms all baselines on all maps, except for 6h\_vs\_8z. On 6h\_vs\_8z, SHAQ can beat all baselines except for CW-QMIX. VDN performs well on 4 maps but bad on the other 2 maps, which still verifies our analysis in Section \ref{sec:related_work}. QMIX and QPLEX perform well on the most of maps, except for 3s\_vs\_5z, 2c\_vs\_64zg and 6h\_vs\_8z. As for COMA, MADDPG and MASAC, their poor performances could be due to the weak adaptability to challenging tasks. Although QTRAN can theoretically represent the complete class of the global Q-value \citep{SonKKHY19}, its complicated learning paradigm could impede the convergence to the value function for challenging tasks and therefore result in the poor performance. Although W-QMIX performs well on some maps, owing to lacking a law on hyperparameter tuning \cite{rashid2020weighted} it is difficult to be adapted for all scenarios (see Appendix \ref{subsec:extra_weighted_qmix_variant_hyperparameters}).
        
        \begin{figure*}[ht!]
            \centering
            \begin{subfigure}[b]{0.32\linewidth}
                \centering
                \includegraphics[width=\textwidth]{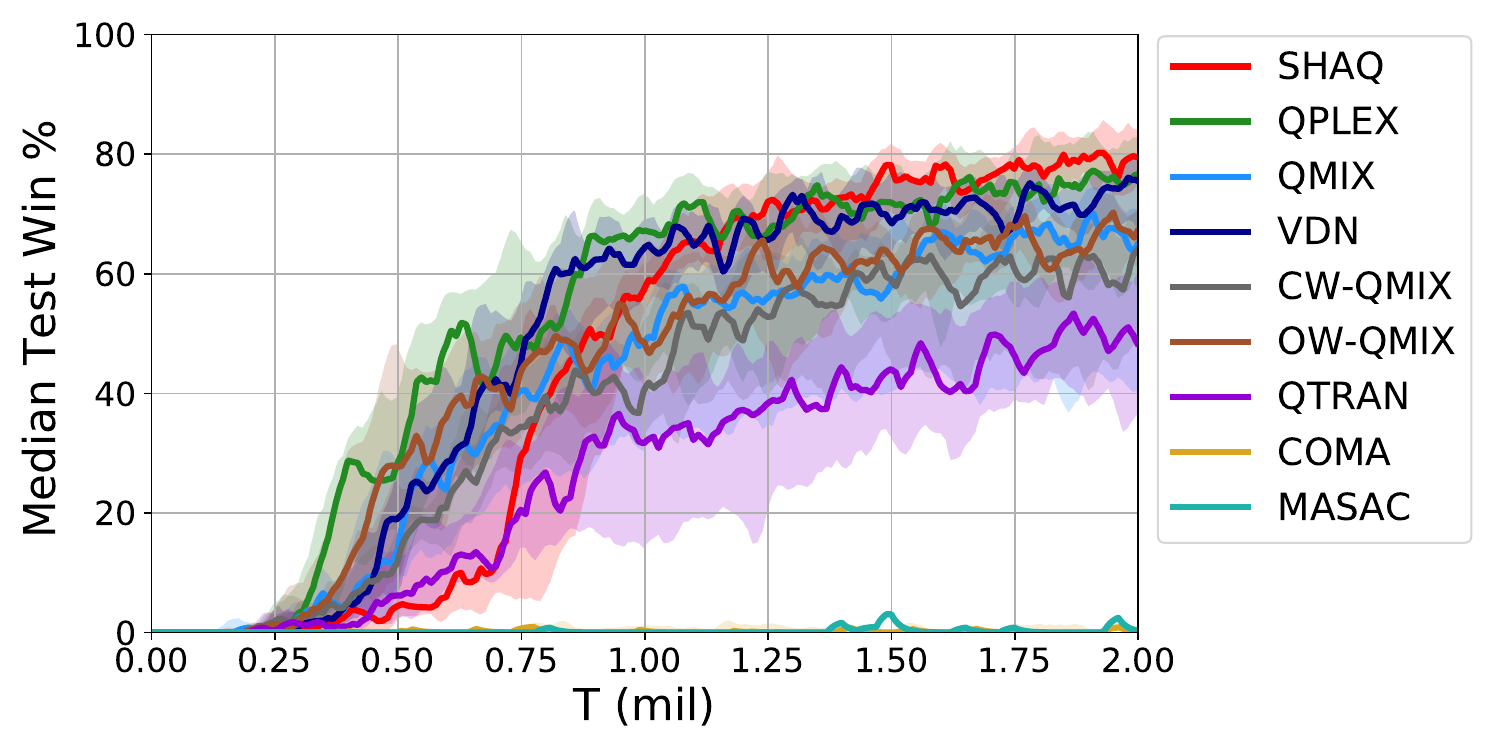}
                \caption{5m\_vs\_6m.}
            \end{subfigure}
            % \quad
            ~
            \begin{subfigure}[b]{0.32\linewidth}
                \centering
                \includegraphics[width=\textwidth]{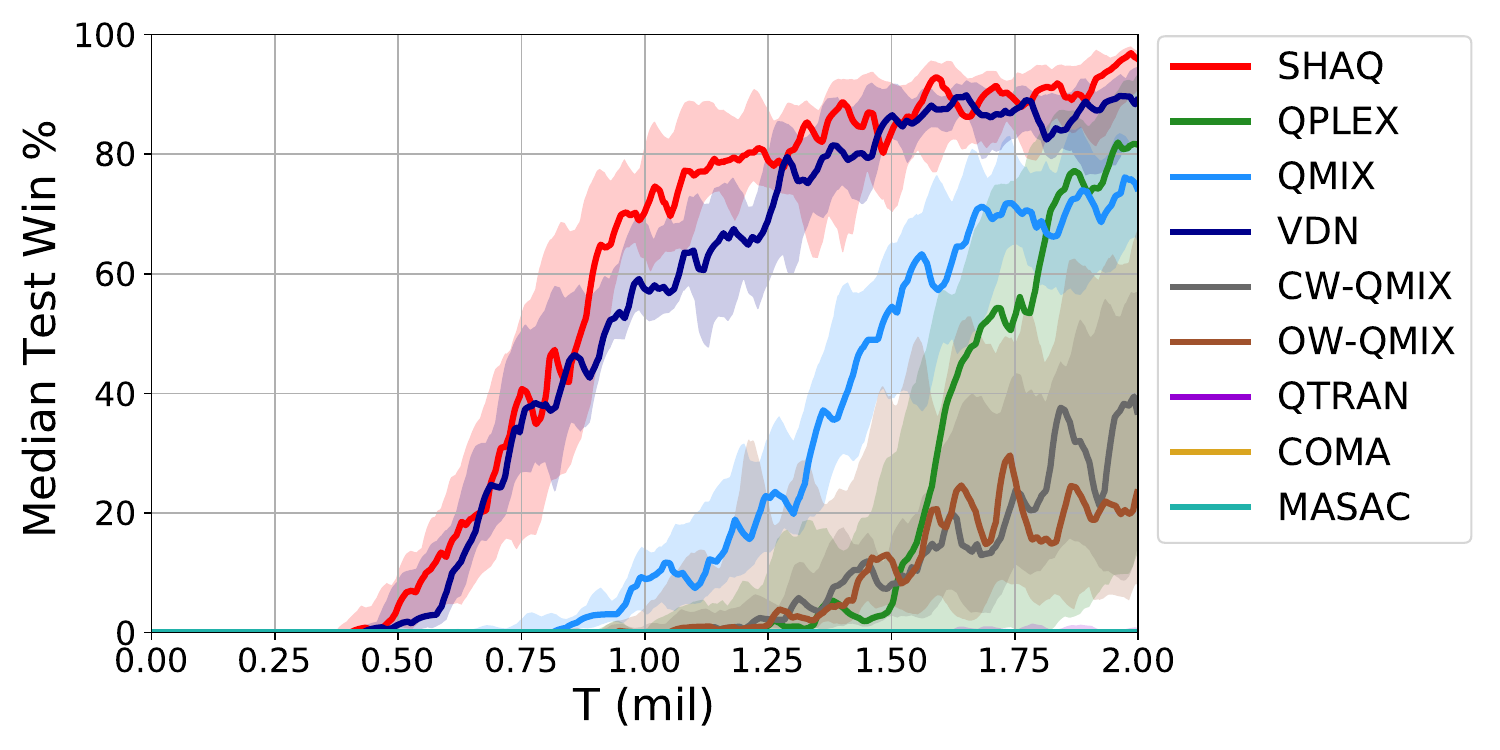}
                \caption{3s\_vs\_5z.}
            \end{subfigure}
            % \quad
            ~
            \begin{subfigure}[b]{0.32\linewidth}
                \centering
                \includegraphics[width=\textwidth]{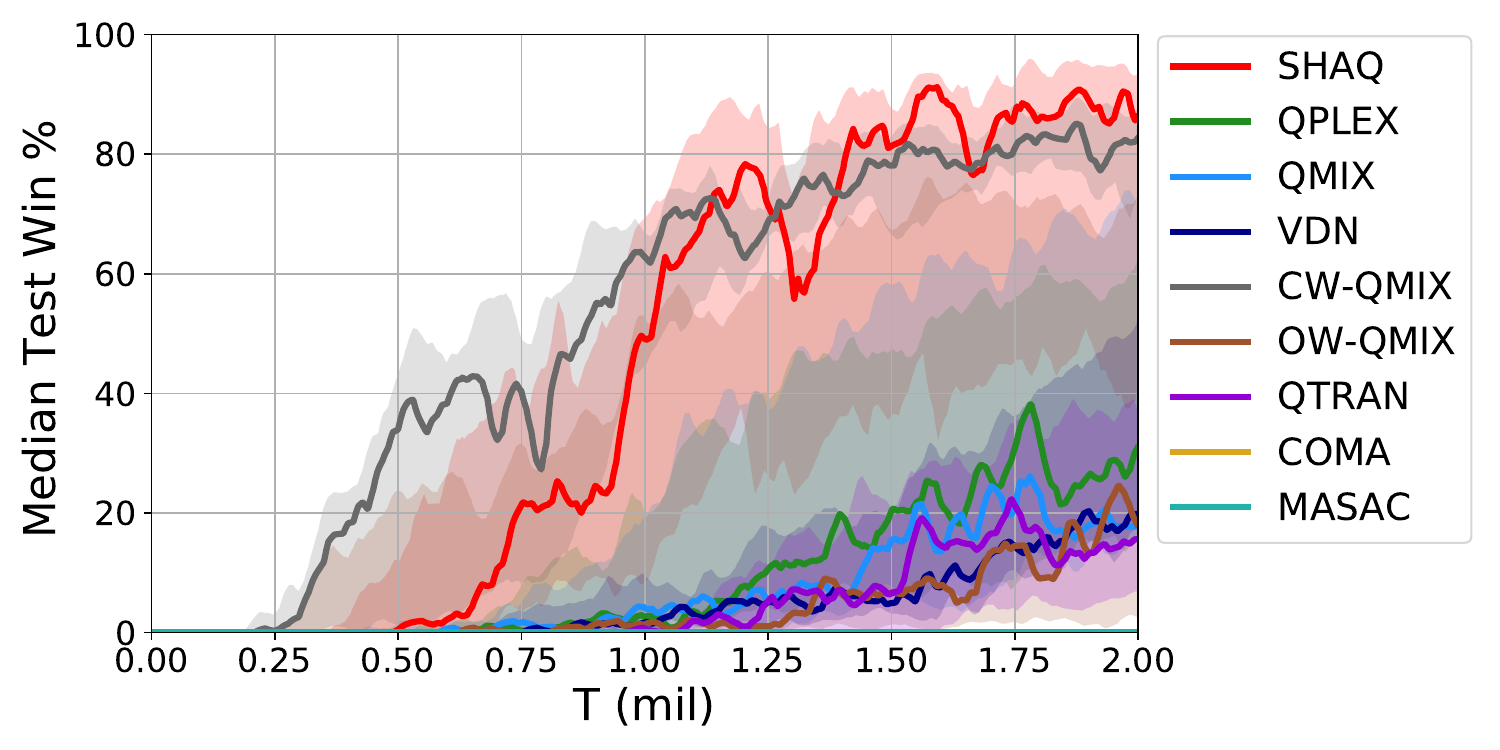}
                \caption{2c\_vs\_64zg.}
            \end{subfigure}
            % \quad
            ~
            \begin{subfigure}[b]{0.32\linewidth}
                \centering
                \includegraphics[width=\textwidth]{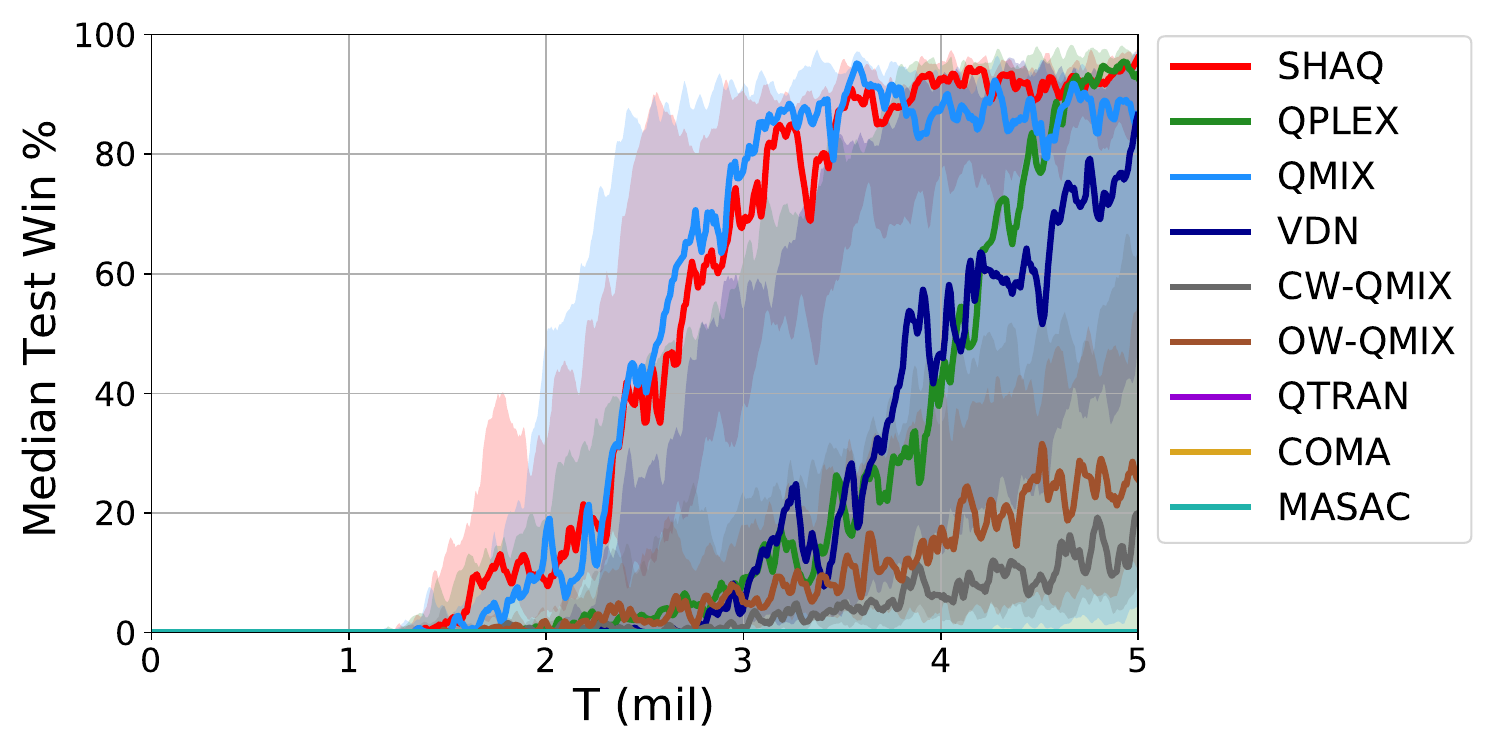}
                \caption{3s5z\_vs\_3s6z.}
            \end{subfigure}
            % \quad
            ~
            \begin{subfigure}[b]{0.32\linewidth}
                \centering
                \includegraphics[width=\textwidth]{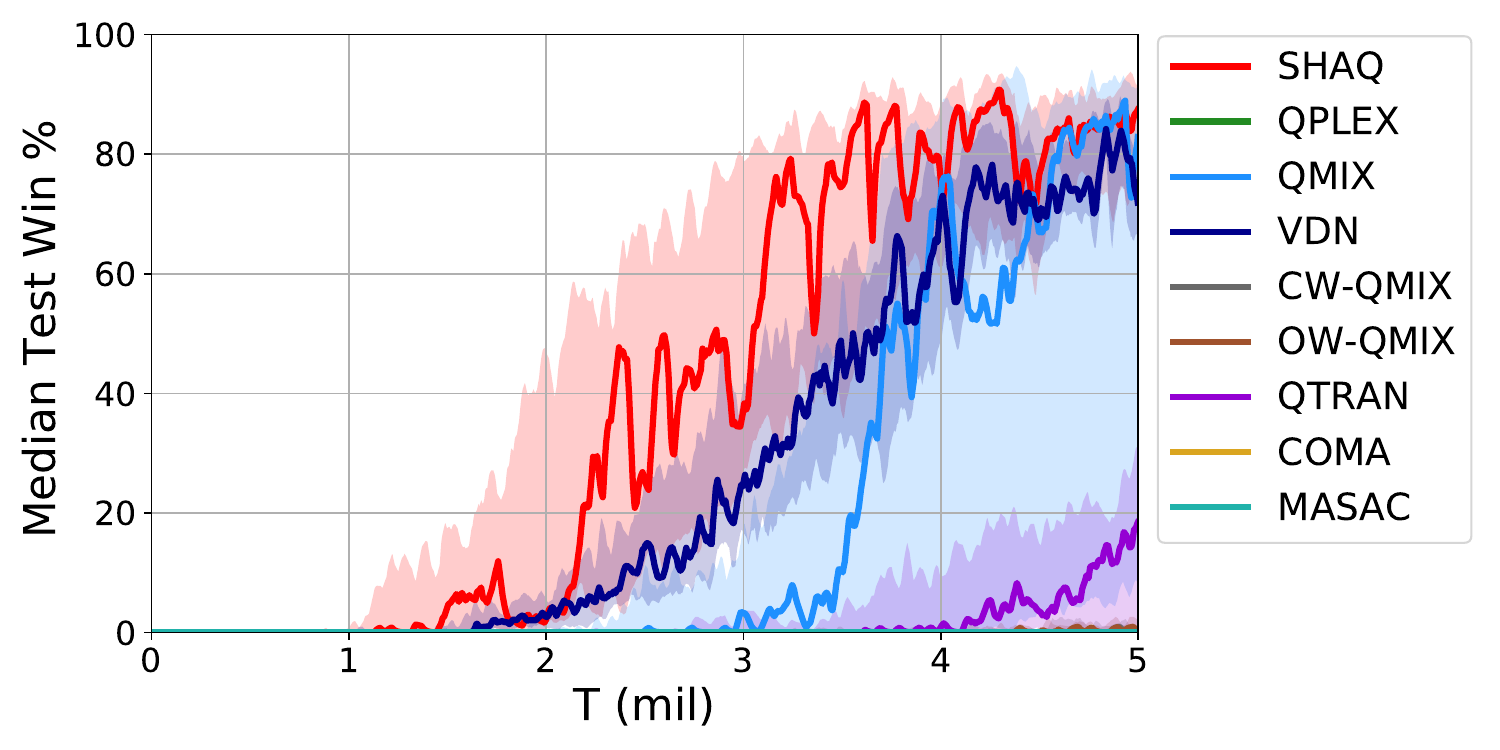}
                \caption{Corridor.}
            \end{subfigure}
            % \quad
            ~
            \begin{subfigure}[b]{0.32\linewidth}
                \centering
                \includegraphics[width=\textwidth]{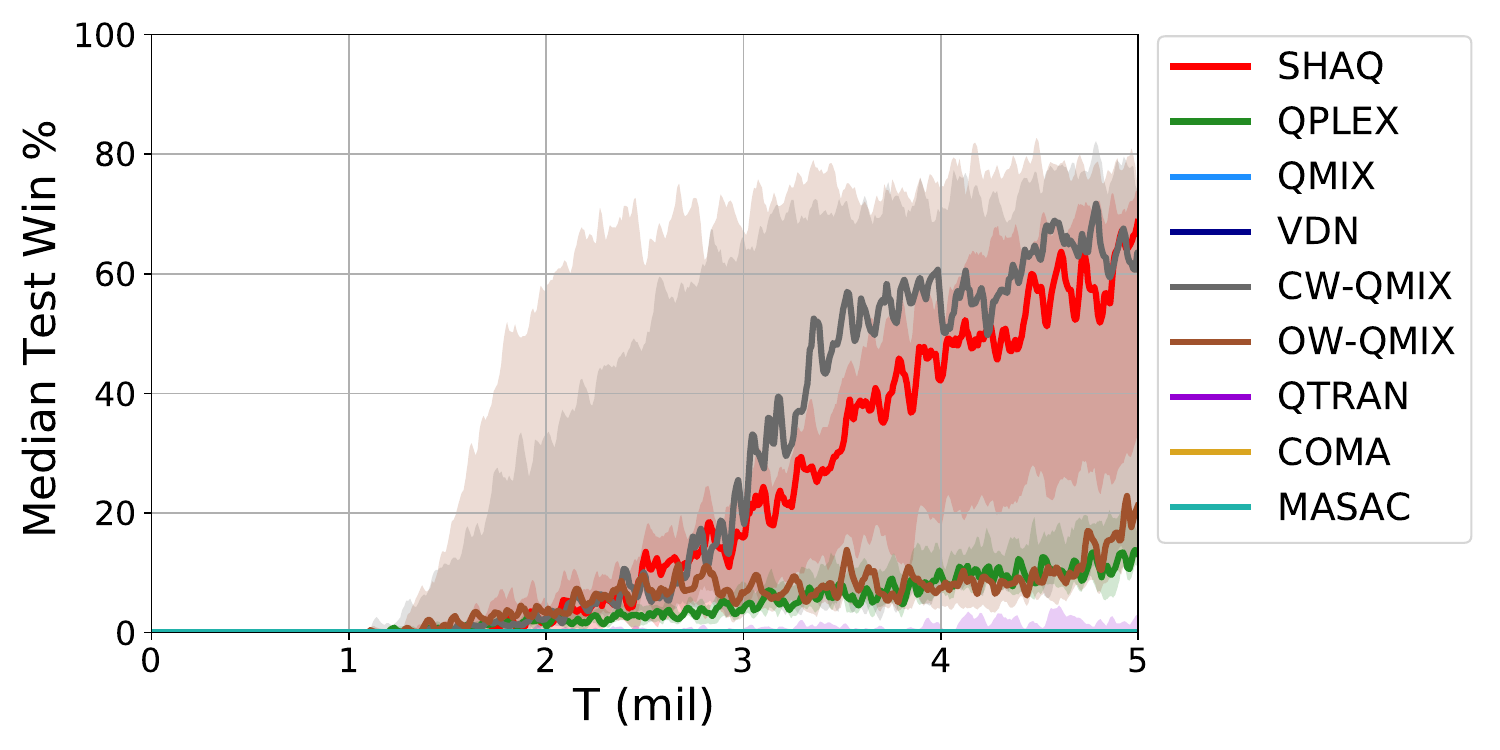}
                \caption{6h\_vs\_8z.}
            \end{subfigure}
            \caption{Median test win \% for hard (a-c), and super-hard (d-f) maps of SMAC.}
        \label{fig:easy_hard_smac}
        \end{figure*}
        
        \textbf{Interpretability of SHAQ.} To further show the interpretability of SHAQ, we also conduct a test on 3m (i.e. a simple task in SMAC). As seen from Figure \ref{fig:shaq_epsilon_greedy}, Agent 3 faces the direction opposite to enemies, meanwhile, the enemies are out of its attacking range. It can be understood as that Agent 3 does not contribute to the team and thus it is almost a dummy agent. Its MSQ is 0.84 (around 0) that correctly catch the manner of a dummy agent (verifying (i) in Proposition \ref{prop:shapley_value_properties}). In contrast, Agent 1 and Agent 2 are attacking enemies, while Agent 1 suffers from more attacks (with lower health) than Agent 2. As a result, Agent 1 contributes more than Agent 2 and therefore its MSQ is greater, which implies that the credits reflect agents' contributions (verifying (iii) in Proposition \ref{prop:shapley_value_properties}). On the other hand, we can see from Figure \ref{fig:shaq_greedy} that with the optimal policies all agents receive almost identical MSQs (verifying the theoretical results in Section \ref{subsec:definition_and_formulation}). The above results well verify the theoretical analysis that we deliver before. 
        
        \begin{figure*}[ht!]
            \centering
            \begin{subfigure}[b]{0.23\textwidth}
            	\centering
        	    \includegraphics[width=\textwidth]{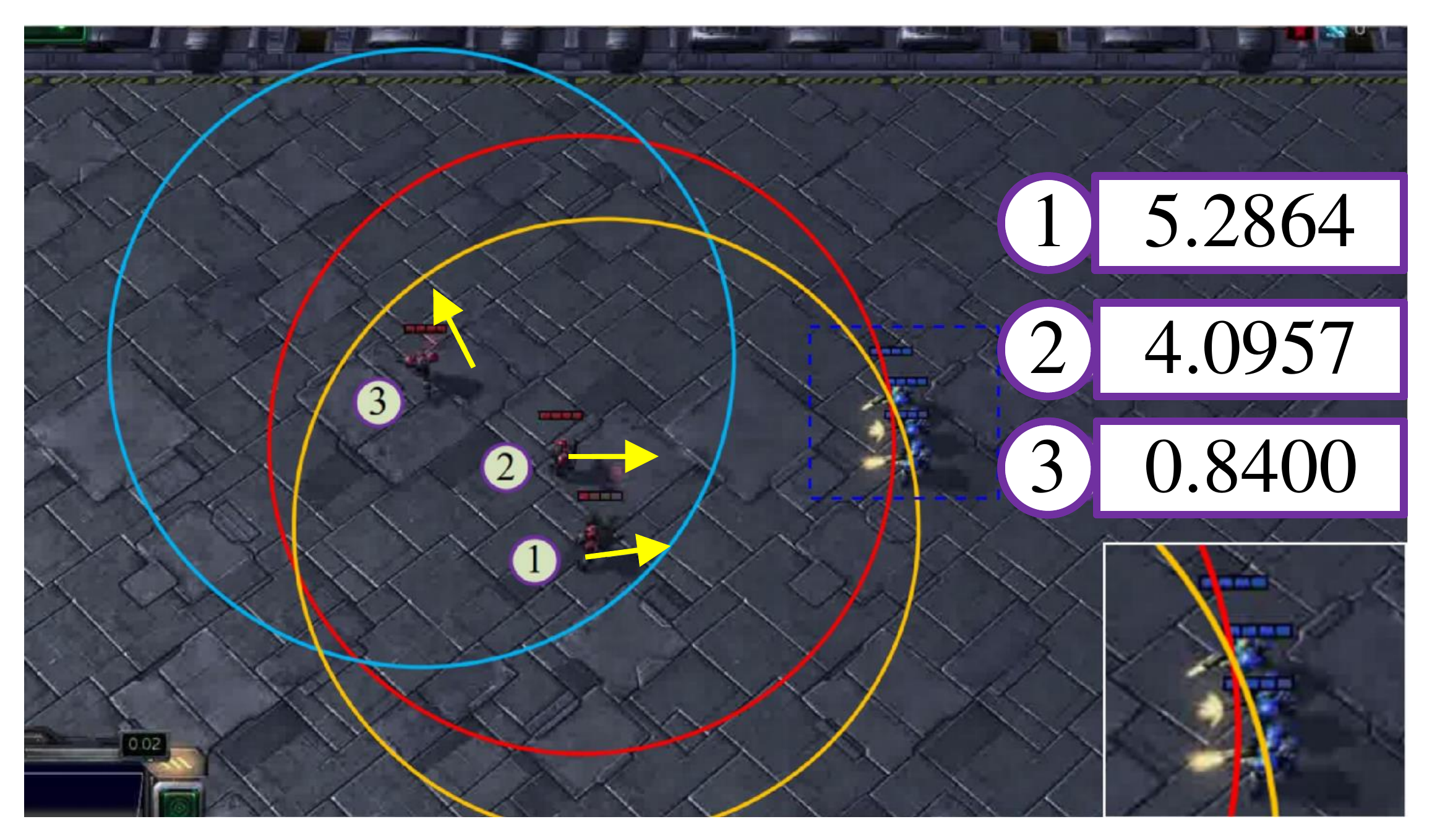}
        	    \caption{SHAQ: $\epsilon$-greedy.}
        	\label{fig:shaq_epsilon_greedy}
            \end{subfigure}
            ~
            \begin{subfigure}[b]{0.23\textwidth}
                \centering                \includegraphics[width=\textwidth]{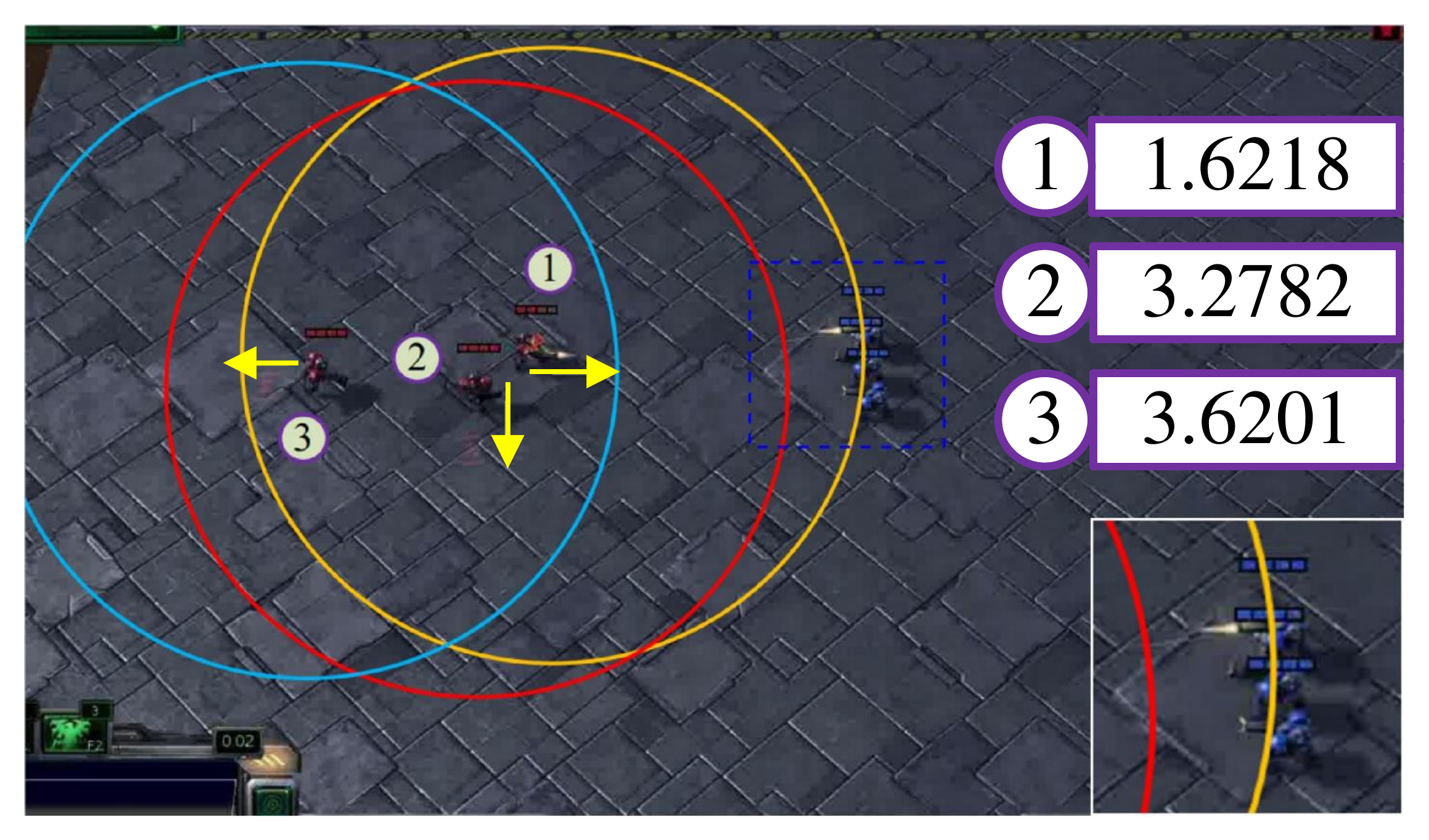}
                \caption{VDN: $\epsilon$-greedy.}
            \label{fig:vdn_epsilon_greedy}
            \end{subfigure}
            ~
            \begin{subfigure}[b]{0.23\textwidth}
                \centering                \includegraphics[width=\textwidth]{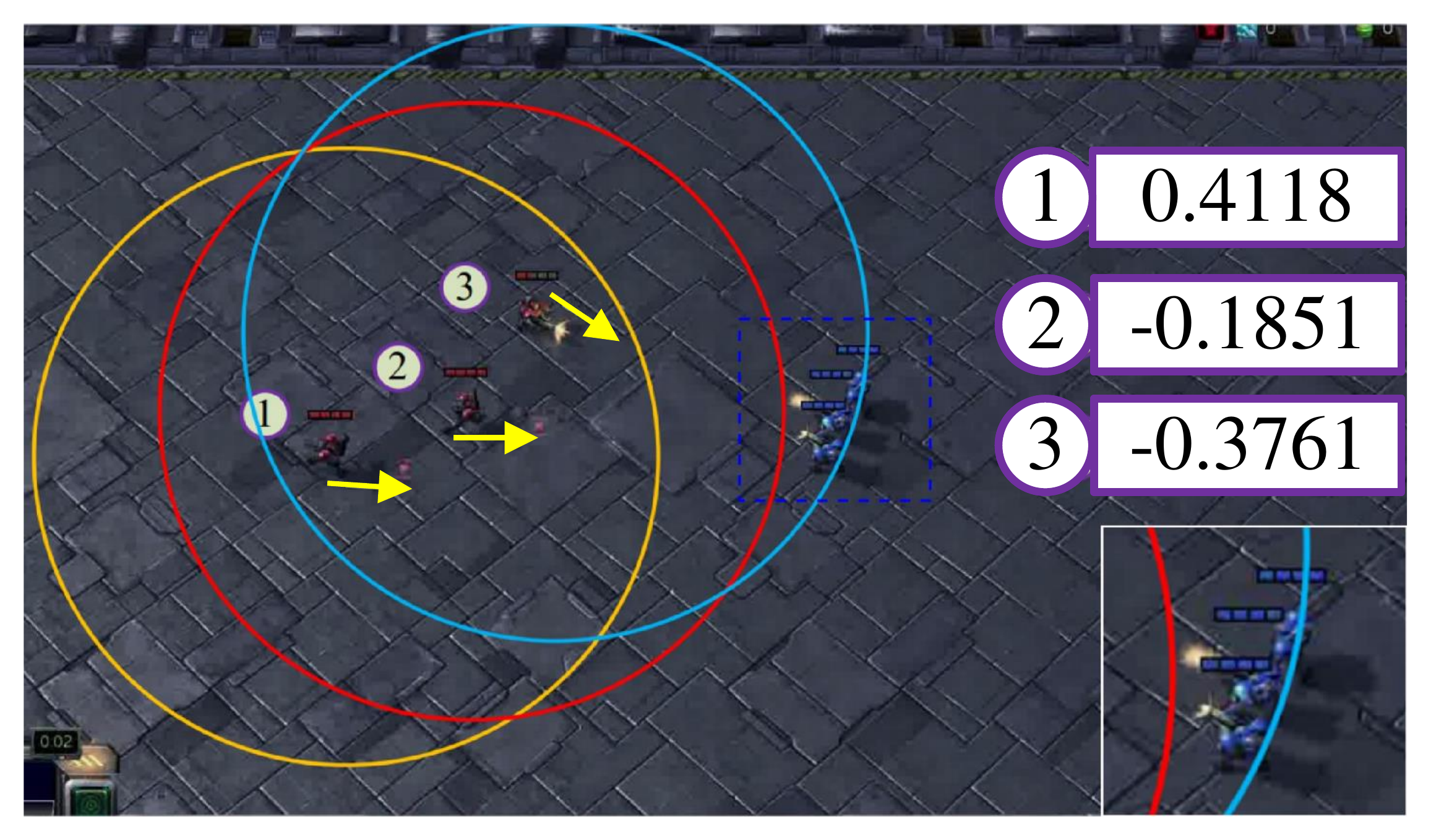}
                \caption{QMIX: $\epsilon$-greedy.}
            \label{fig:qmix_epsilon_greedy}
            \end{subfigure}
            ~
            \begin{subfigure}[b]{0.23\textwidth}
        	    \centering        	    \includegraphics[width=\textwidth]{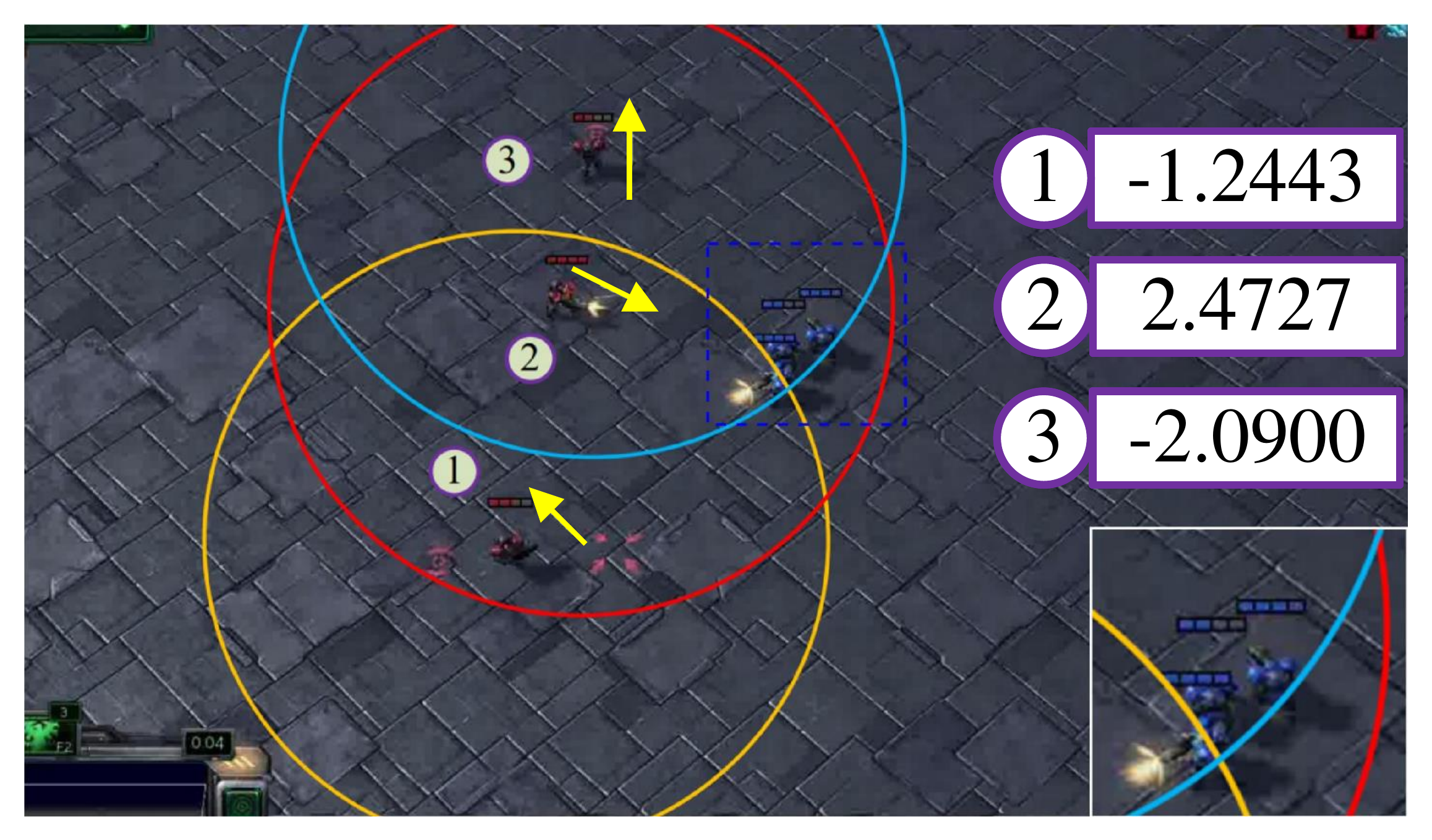}
        	    \caption{QPLEX: $\epsilon$-greedy.}
        	\label{fig:qplex_epsilon_greedy}
            \end{subfigure}
            
            \begin{subfigure}[b]{0.23\textwidth}
        	    \centering        	    \includegraphics[width=\textwidth]{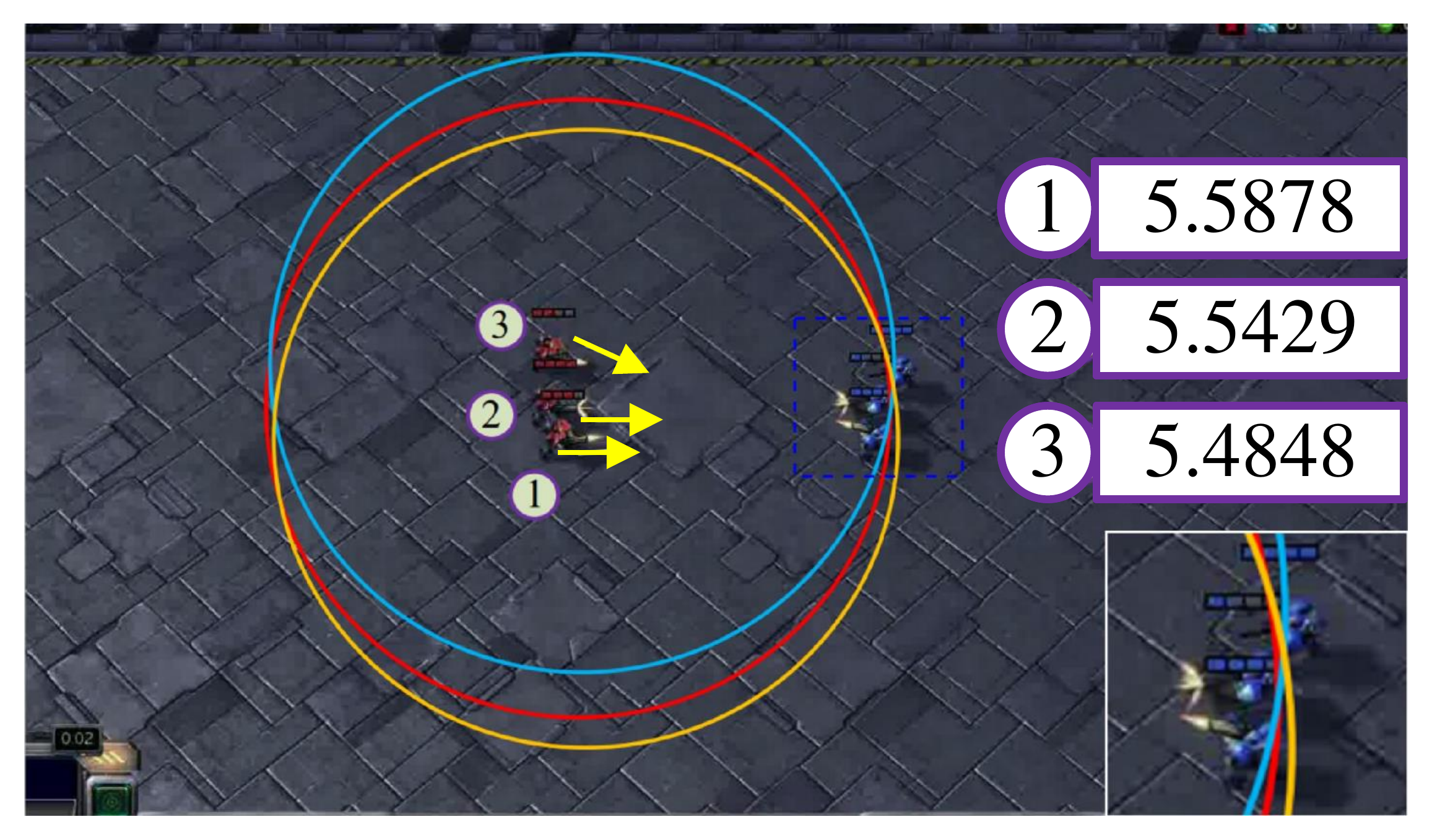}
        	    \caption{SHAQ: greedy.}
        	\label{fig:shaq_greedy}
            \end{subfigure}
            ~
            \begin{subfigure}[b]{0.23\textwidth}
            	\centering        	    \includegraphics[width=\textwidth]{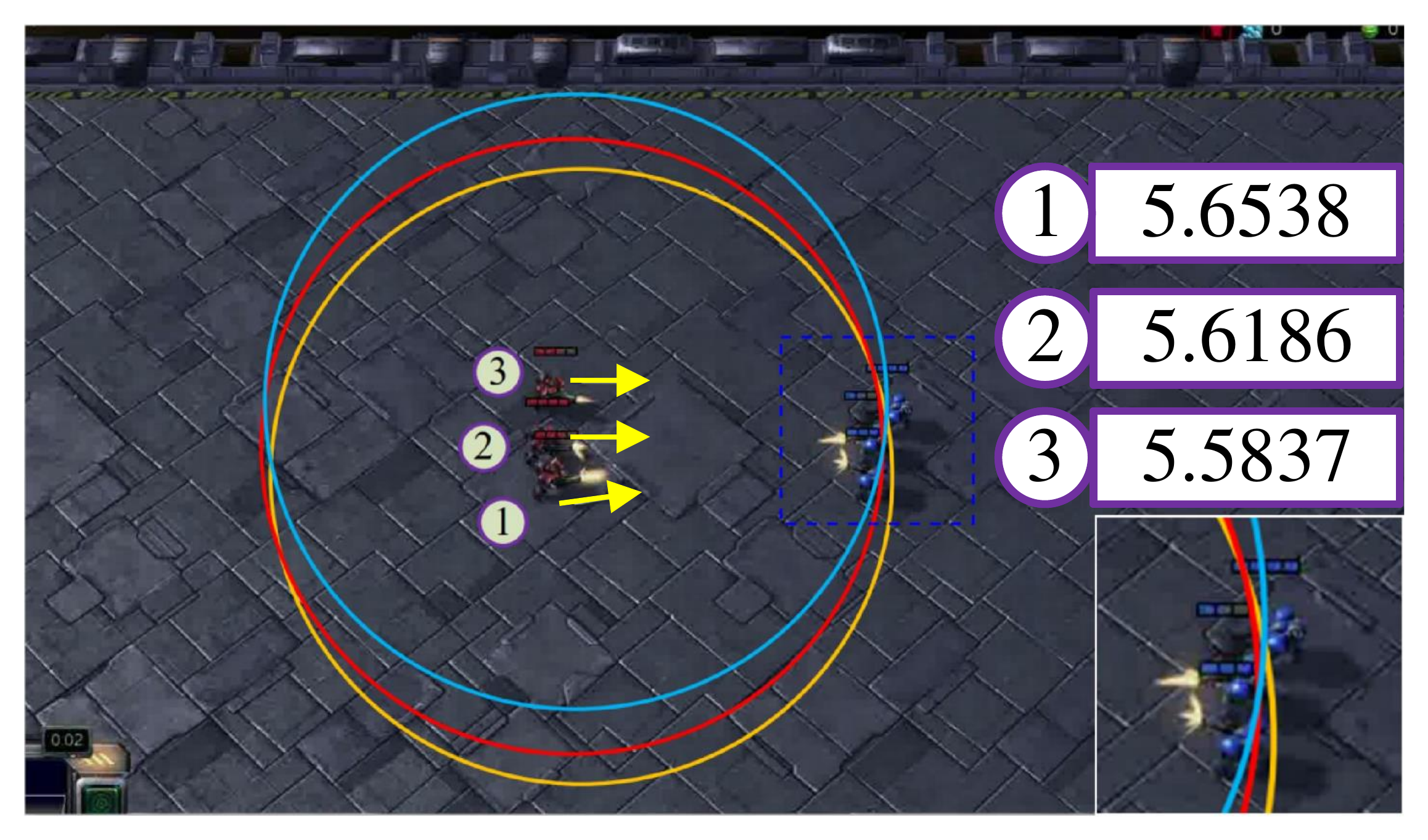}
        	    \caption{VDN: greedy.}
        	\label{fig:vdn_greedy}
            \end{subfigure}
            ~
            \begin{subfigure}[b]{0.23\textwidth}
        	    \centering        	    \includegraphics[width=\textwidth]{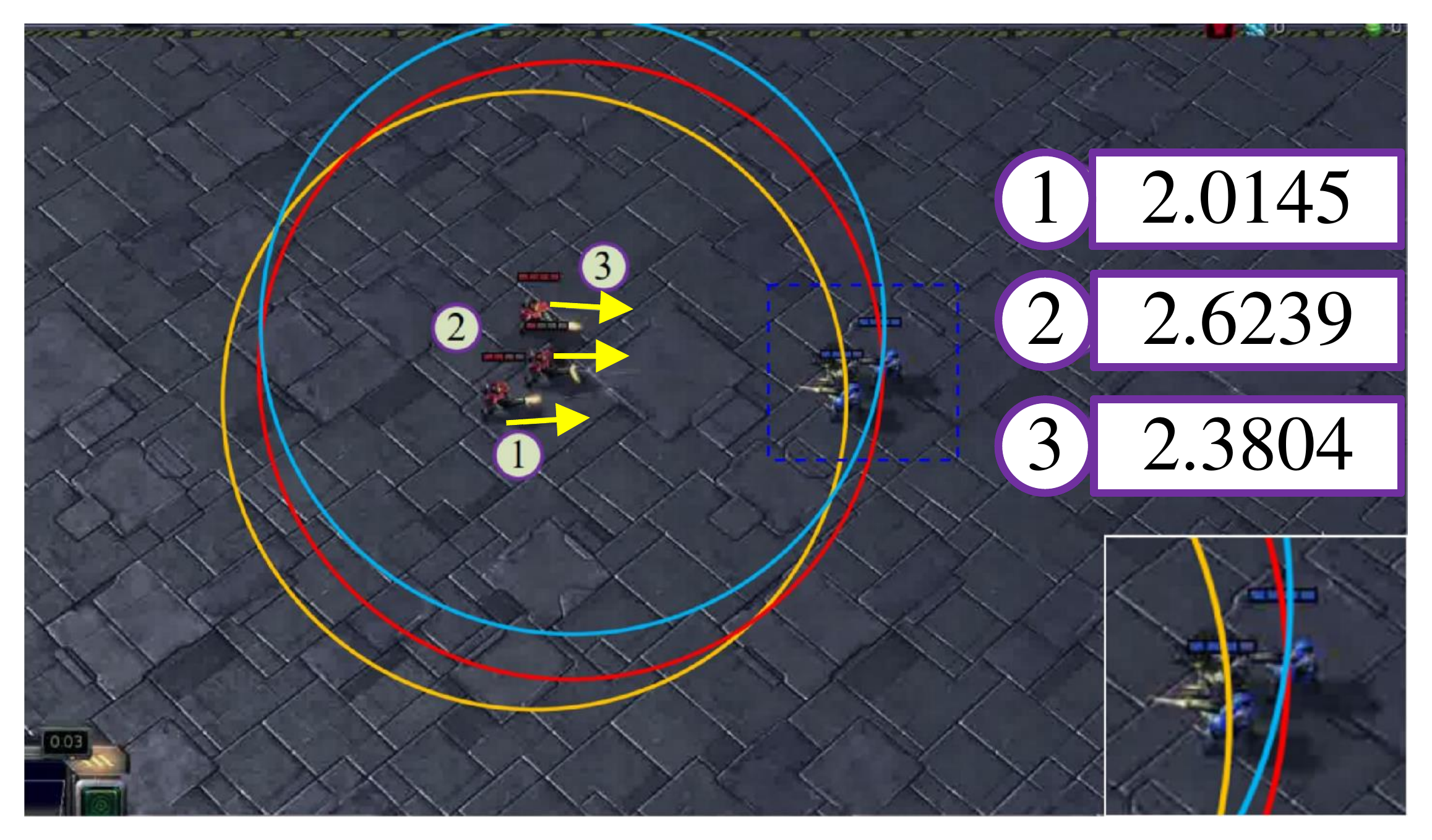}
        	    \caption{QMIX: greedy.}
        	\label{fig:qmix_greedy}
            \end{subfigure}
            ~
            \begin{subfigure}[b]{0.23\textwidth}
        	    \centering        	    \includegraphics[width=\textwidth]{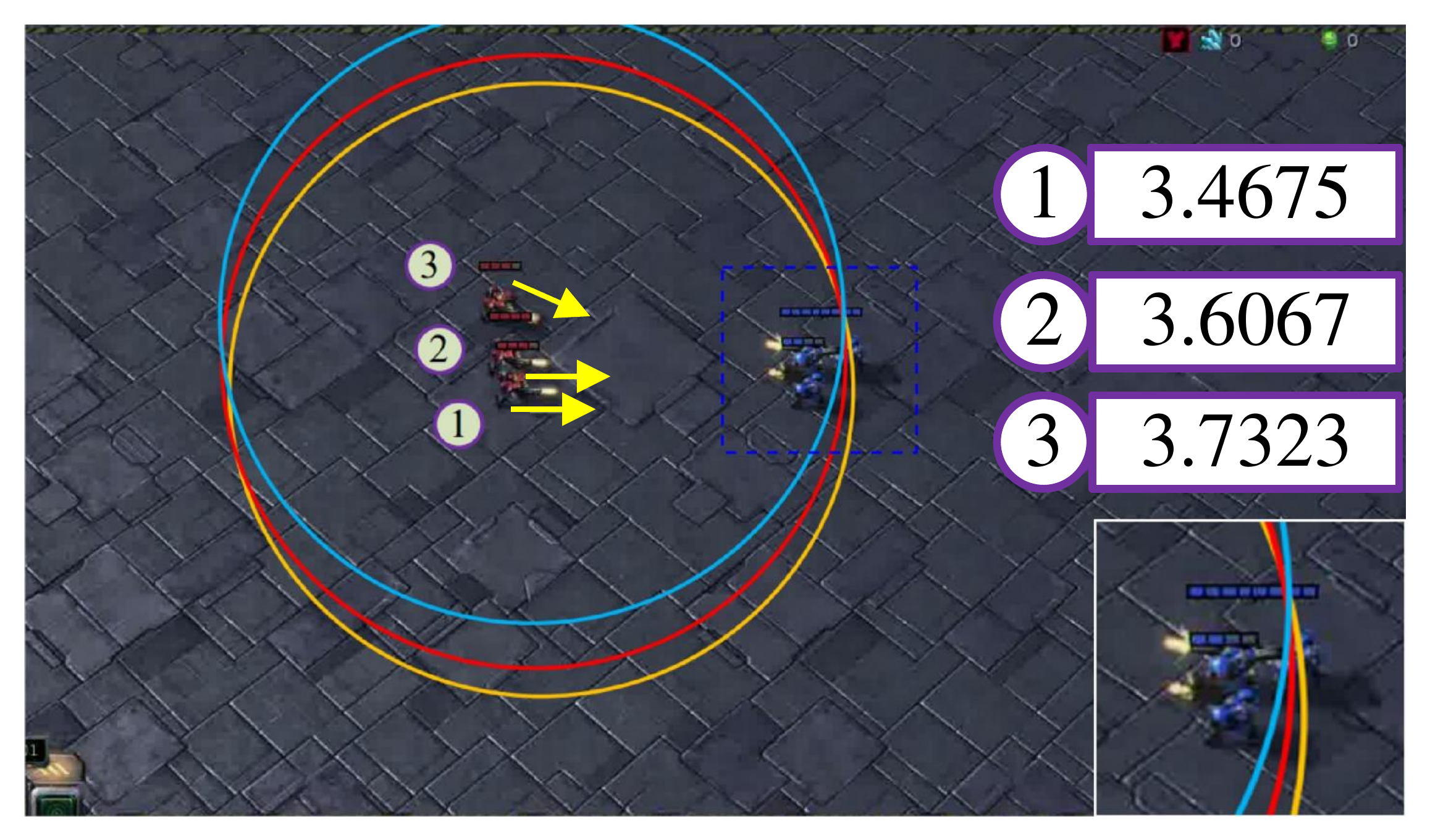}
        	    \caption{QPLEX: greedy.}
        	\label{fig:qplex_greedy}
            \end{subfigure}
            \caption{Visualisation of the test for SHAQ and baselines on 3m in SMAC: each colored circle is the centered attacking range of a controllable agent (in red), and each agent's factorised Q-value is reported on the right. We mark the direction that each agent face by an arrow for clearness.}
        \label{fig:study_on_factorised_q_values}
        \end{figure*}
        
        To justify that the MSQs learned by SHAQ are non-trivial, we also show the results of VDN, QMIX and QPLEX. It is surprising that the Q-values of these baselines are also almost identical among agents for the optimal actions (however, the property disappears in more complicated scenarios as shown in Appendix \ref{subsec:more_visualisation} while the property of SHAQ is still valid). Since VDN is a subclass of SHAQ and possesses the same form of loss function for optimal actions, it is reasonable that it obtains the similar results to SHAQ. As for the suboptimal actions, VDN does not possess an explicit interpretation as SHAQ due to the incorrect definition of $\delta_{i}(\mathbf{s}, a_{i})=1$ over suboptimal actions (verifying the statement in Section \ref{sec:related_work}). The values of QMIX and QPLEX are difficult to be explained.
        % The explanation for the results of QMIX and QPLEX deserves to be investigated in the future work. 
        % To enable the conclusion to be more convincing, we also visualise the results of SHAQ on a more complicated scenario, 3s5z\_vs\_3s6z (see Appendix \ref{subsec:more_visualisation}).
        % Similarly, QMIX and QPLEX cannot show explicit interpretations over suboptimal actions either.
        
\section{Conclusion}
\label{sec:conclusion}
    \textbf{Summary.} This paper generalises Shapley value to Markov convex game, called Markov Shapley value. Markov Shapley value inherits a number of properties: (i) identifiability of dummy agents; (ii) efficiency; (iii) reflecting the contribution and (iv) symmetry. Based on Property (ii), we derive Shapley-Bellman optimality equation, Shapley-Bellman operator and SHAQ. We prove that solving Shapley-Bellman optimality equation is equivalent to solving the Markov core (i.e., no agent has incentives to deviate from the grand coalition). Markov convex game with the grand coalition is equivalent to global reward game \citep{Wang_2020}, wherein Markov Shapley value plays the role of value factorisation. Since SHAQ is a stochastic approximation of Shapley-Bellman operator that is proved to solve Shapley-Bellman optimality equation, global reward game with value factorisation becomes valid standing by the cooperative game theoretical framework (i.e. solving Markov core). Property (i) and (iii) in Proposition \ref{prop:shapley_value_properties} are demonstrated in the experiments showing the interpretability of SHAQ.
    
    \textbf{Limitation and Future Work.} The value of Markov convex game is not limited to solving problems with the grand coalition, though in this paper we design SHAQ that only focuses on the scenario with the grand coalition. By removing the condition of supermodularity (see Eq.\ref{eq:mcg_assumption}), this framework can be used to study more general coalition games where different coalitions of agents as units may compete/cooperate with each other. Since the grand coalition and Markov Shapley value is not a solution in Markov core yet, the learning process becomes more complicated to converge to Markov core. A possible research direction in future is to investigate dynamically forming the coalition structure and conducting credit assignments simultaneously.
    
    % At the same time, coalition structure formation is necessary to be plugged in the learning process which can link to the present research on coalition formation in MARL from the theoretical side.
    
    % In the future work, we will extend the work when the MCG does not hold and derive an algorithm that can still achieve the Markov core. Another thread is investigating the invariant representation of $\hat{\psi}_{i}$ such as using graph neural networks, so that both interpretability and performance can be improved further.
    % Moreover, the interpretability of SHAQ will be investigated to aid the control problems in power networks that relies on interpretability heavily \citep{wang2021multi}.
    % which is shown important in industry \citep{wang2021multi}
    % We evaluate the performance of SHAQ on Predator-Prey \citep{bohmer2020deep} and the challenging multi-agent StarCraft benchmark tasks \citep{samvelyan2019starcraft}. Several theoretical claims (especially property (i) and (iii)) are verified, while SHAQ shows the generally good performances compared with the state-of-the-art baselines. 
    
% In the unusual situation where you want a paper to appear in the
% references without citing it in the main text, use \nocite
% \nocite{langley00}
\section*{Acknowledgements}
    This work is sponsored by the Engineering and Physical Sciences Research Council of UK (EPSRC) under awards EP/S000909/1. Tae-Kyun Kim is partly sponsored by KAIA grant (22CTAP-C163793-02, MOLIT), NST grant (CRC 21011, MSIT), KOCCA grant (R2022020028, MCST) and the Samsung Display corporation. Yuan Zhang is sponsored by the European Union’s Horizon 2020 research and innovation program under the Marie Skłodowska-Curie grant agreement No. 953348 (ELO-X).
    
% \clearpage
\bibliography{shapley_value}
\bibliographystyle{unsrtnat}

%%%%%%%%%%%%%%%%%%%%%%%%%%%%%%%%%%%%%%%%%%%%%%%%%%%%%%%%%%%%%%%%%%%%%%%%%%%%%%%
%%%%%%%%%%%%%%%%%%%%%%%%%%%%%%%%%%%%%%%%%%%%%%%%%%%%%%%%%%%%%%%%%%%%%%%%%%%%%%%
% CHECKLIST
%%%%%%%%%%%%%%%%%%%%%%%%%%%%%%%%%%%%%%%%%%%%%%%%%%%%%%%%%%%%%%%%%%%%%%%%%%%%%%%
%%%%%%%%%%%%%%%%%%%%%%%%%%%%%%%%%%%%%%%%%%%%%%%%%%%%%%%%%%%%%%%%%%%%%%%%%%%%%%%
\section*{Checklist}

%%% BEGIN INSTRUCTIONS %%%
% The checklist follows the references.  Please
% read the checklist guidelines carefully for information on how to answer these
% questions.  For each question, change the default \answerTODO{} to \answerYes{},
% \answerNo{}, or \answerNA{}.  You are strongly encouraged to include a {\bf
% justification to your answer}, either by referencing the appropriate section of
% your paper or providing a brief inline description.  For example:
% \begin{itemize}
%   \item Did you include the license to the code and datasets? \answerYes{See Section~\ref{gen_inst}.}
%   \item Did you include the license to the code and datasets? \answerNo{The code and the data are proprietary.}
%   \item Did you include the license to the code and datasets? \answerNA{}
% \end{itemize}
% Please do not modify the questions and only use the provided macros for your
% answers.  Note that the Checklist section does not count towards the page
% limit.  In your paper, please delete this instructions block and only keep the
% Checklist section heading above along with the questions/answers below.
%%% END INSTRUCTIONS %%%

\begin{enumerate}

\item For all authors...
\begin{enumerate}
  \item Do the main claims made in the abstract and introduction accurately reflect the paper's contributions and scope?
    \answerYes{}
  \item Did you describe the limitations of your work?
    \answerYes{See Section 7.}
  \item Did you discuss any potential negative societal impacts of your work?
    \answerYes{See Appendix F.}
  \item Have you read the ethics review guidelines and ensured that your paper conforms to them?
    \answerYes{}
\end{enumerate}

\item If you are including theoretical results...
\begin{enumerate}
  \item Did you state the full set of assumptions of all theoretical results?
    \answerYes{See Appendix E.1.}
        \item Did you include complete proofs of all theoretical results?
    \answerYes{See Appendix E.}
\end{enumerate}

\item If you ran experiments...
\begin{enumerate}
  \item Did you include the code, data, and instructions needed to reproduce the main experimental results (either in the supplemental material or as a URL)?
    \answerYes{See supplementary material.}
  \item Did you specify all the training details (e.g., data splits, hyperparameters, how they were chosen)?
    \answerYes{See Appendix B.}
        \item Did you report error bars (e.g., with respect to the random seed after running experiments multiple times)?
    \answerYes{See Section 6.}
        \item Did you include the total amount of compute and the type of resources used (e.g., type of GPUs, internal cluster, or cloud provider)?
    \answerYes{See Appendix B.1.}
\end{enumerate}

\item If you are using existing assets (e.g., code, data, models) or curating/releasing new assets...
\begin{enumerate}
  \item If your work uses existing assets, did you cite the creators?
    \answerYes{}
  \item Did you mention the license of the assets?
    \answerNA{}
  \item Did you include any new assets either in the supplemental material or as a URL?
    \answerYes{}
  \item Did you discuss whether and how consent was obtained from people whose data you're using/curating?
    \answerNA{}
  \item Did you discuss whether the data you are using/curating contains personally identifiable information or offensive content?
    \answerNA{}
\end{enumerate}

\item If you used crowdsourcing or conducted research with human subjects...
\begin{enumerate}
  \item Did you include the full text of instructions given to participants and screenshots, if applicable?
    \answerNA{}
  \item Did you describe any potential participant risks, with links to Institutional Review Board (IRB) approvals, if applicable?
    \answerNA{}
  \item Did you include the estimated hourly wage paid to participants and the total amount spent on participant compensation?
    \answerNA{}
\end{enumerate}

\end{enumerate}

%%%%%%%%%%%%%%%%%%%%%%%%%%%%%%%%%%%%%%%%%%%%%%%%%%%%%%%%%%%%%%%%%%%%%%%%%%%%%%%
%%%%%%%%%%%%%%%%%%%%%%%%%%%%%%%%%%%%%%%%%%%%%%%%%%%%%%%%%%%%%%%%%%%%%%%%%%%%%%%
% APPENDIX
%%%%%%%%%%%%%%%%%%%%%%%%%%%%%%%%%%%%%%%%%%%%%%%%%%%%%%%%%%%%%%%%%%%%%%%%%%%%%%%
%%%%%%%%%%%%%%%%%%%%%%%%%%%%%%%%%%%%%%%%%%%%%%%%%%%%%%%%%%%%%%%%%%%%%%%%%%%%%%%
\newpage
\appendix
\onecolumn

\section{Algorithm of Shapley Q-learning}
\label{sec:algorithm_shapley_q}

    In this section, we present the pseudo code of Shapley Q-learning in Algorithm \ref{alg:shapley_q}. 
    The general paradigm can be divided into such parts: (1) collecting samples through $\epsilon$-greedy strategy and store the collected samples to a replay buffer for training; (2) sampling a batch of episodes of samples from the replay buffer; (3) calculating $\hat{Q}_{i}(\tau_{i}^{t+1}, a_{i}^{t+1}; \theta^{-})$, $\hat{\alpha}_{i}(\mathbf{s}^{k}, a_{i}^{k}; \lambda)$ and $\hat{Q}_{i}(\tau_{i}^{t}, a_{i}^{t}; \theta)$; and (4) constructing a loss of Shapley Q-learning and updating parameters to minimise the loss.
    \begin{algorithm}[ht!]
        \caption{Shapley Q-learning}
        \label{alg:shapley_q}
        \scalebox{0.8}{
        \begin{minipage}{1.0\linewidth}
        \begin{algorithmic}[1]
            \STATE Initialise a set of agents $\mathcal{N}$ and set $N = |\mathcal{N}|$
            \STATE Initialise $\hat{Q}_{i}(\tau_{i}, a_{i}; \theta)$ with the shared parameters among agents
            \STATE Initialise $\hat{\alpha}_{i}(\mathbf{s}, a_{i}; \lambda)$ with the shared parameters among agents
            \STATE Initialise $\hat{Q}_{i}(\tau_{i}, a_{i}; \theta^{-})$ by copying $\hat{Q}_{i}(\tau_{i}, a_{i}; \theta)$ with the shared parameters among agents
            \STATE Initialise a replay buffer $\mathcal{B}$
            \REPEAT
                \STATE Initialise a container $\mathcal{E}$ for storing an episode
                \STATE Observe an initial global state $\mathbf{s}^{1}$ and each agent's partial observation $\mathit{o}_{i}^{1}$ from an environment
                \FOR{t=1:T}
                    \STATE Get $\tau_{i}^{t} = (o_{i}^{m})_{m=1:t}$ for each agent
                    \STATE For each agent $\mathit{i}$, select an action 
                    \begin{equation*}
                        \mathit{a}_{i}^{t} = \begin{cases}
                                            \text{a random action} & \text{with probability }\epsilon \\
                                            \arg\max_{a_{i}} \hat{Q}^{*}_{i}(\tau_{i}^{t}, a_{i}; \theta) & \text{otherwise}
                                         \end{cases}
                    \end{equation*}
                    \STATE Execute $\mathit{a}_{i}^{t}$ of each agent to get the global reward $\mathit{R}^{t}$, $\mathbf{s}^{t+1}$ and each agent's $\mathit{o}_{i}^{t+1}$
                    \STATE Store $\big\langle \mathbf{s}^{t}, (o_{i}^{t})_{i=1:N}, (a_{i}^{t})_{i=1:N}, R^{t}, \mathbf{s}^{t+1}, (o_{i}^{t+1})_{i=1:N} \big\rangle$ to $\mathcal{E}$
                \ENDFOR
                \STATE Store $\mathcal{E}$ to $\mathcal{B}$
                \STATE Sample a batch of episodes with batch size B from $\mathcal{B}$
                    \FOR{each sampled episode}
                        \FOR{k=1:T}
                            \STATE Get each transition $\big\langle \mathbf{s}^{k}, (o_{i}^{k})_{i=1:N}, (a_{i}^{k})_{i=1:N}, R^{k}, \mathbf{s}^{k+1}, (o_{i}^{k+1})_{i=1:N} \big\rangle$
                            \STATE For each agent $\mathit{i}$, get $\tau_{i}^{k} = (o_{i}^{m})_{m=1:k}$
                            \STATE For each agent $\mathit{i}$, calculate $\hat{Q}_{i}(\tau_{i}^{k}, a_{i}^{k}; \theta)$
                            \STATE For each agent $\mathit{i}$, calculate $\alpha_{i}(\mathbf{s}^{k}, a_{i}^{k}; \lambda)$ by Algorithm \ref{alg:getting_alpha}
                            \STATE For each agent $\mathit{i}$, calculate $\delta_{i}(\mathbf{s}^{k}, a_{i}^{k}; \lambda)$ as follows:
                                \begin{equation*}
                                    \hat{\delta}_{i}(\mathbf{s}^{k}, a_{i}^{k}; \lambda) = \begin{cases} 
                                                                          1 & a_{i}^{k} = \arg\max_{a_{i}} \hat{Q}_{i}(\mathbf{s}^{k}, a_{i}; \theta) \\
                                                                          \hat{\alpha}_{i}(\mathbf{s}^{k}, a_{i}^{k}; \lambda) & a_{i}^{k} \neq \arg\max_{a_{i}} \hat{Q}_{i}(\mathbf{s}^{k}, a_{i}; \theta) \ \ (\text{via Algorithm \ref{alg:getting_alpha}})
                                                                     \end{cases}
                                % \label{eq:delta}
                                \end{equation*}
                            \STATE For each agent $\mathit{i}$, get $\tau_{i}^{k+1} = (o_{i}^{m})_{m=1:k+1}$
                            \STATE For each agent $\mathit{i}$, get $\mathit{a}_{i}^{k+1}$ by $\arg\max_{a_{i}} \hat{Q}_{i}(\tau_{i}^{k+1}, a_{i}; \theta)$
                            \STATE For each agent $\mathit{i}$, calculate $\hat{Q}_{i}(\tau_{i}^{k+1}, a_{i}^{k+1}; \theta^{-})$
                        \ENDFOR
                    \ENDFOR
                    \STATE Construct a loss as follows:
                        \begin{equation*}
                            \begin{split}
                                \min_{\theta, \lambda} \frac{1}{B} \sum_{k=1}^{B} \Big[ \ \big( \ R^{k} \ + \ 
                                \gamma \sum_{i \in \mathcal{N}} \max_{a_{i}^{k}} \hat{Q}_{i}(\tau_{i}^{k+1}, a_{i}^{k+1}; \theta^{-}) - 
                                \sum_{i \in \mathcal{N}} \hat{\delta}_{i}(\mathbf{s}^{k}, a_{i}^{k}; \lambda) \ \hat{Q}_{i}(\tau_{i}^{k}, a_{i}^{k}; \theta) \ \big)^{2} \ \Big]
                            \end{split}
                        % \label{eq:deep_shapley_q_learning_loss}
                        \end{equation*}
                    \STATE Update $\theta$ and $\lambda$ through the above loss
                    \STATE Periodically update $\theta^{-}$ by copying $\theta$
           \UNTIL{$\hat{Q}_{i}(\tau_{i}, a_{i}; \theta)$ converges}
        \end{algorithmic}
        \end{minipage}
        }
    \end{algorithm}
    
    \begin{algorithm}[ht!]
        \caption{Calculating $\hat{\alpha}_{i}(\mathbf{s}, a_{i})$}
        \label{alg:getting_alpha}
        \scalebox{0.8}{
        \begin{minipage}{1.0\linewidth}
        \begin{algorithmic}[1]
           \STATE {\bfseries Input:} $\mathbf{s}$, $\big( \hat{Q}_{i}(\tau_{i}, a_{i}; \theta) \big)_{i=1:N}$, $\mathit{M}$
           \STATE {\bfseries Output: $\big( \hat{\alpha}_{i}(\mathbf{s}, a_{i}) \big)_{i=1:N}$}
           \FOR{each agent $\mathit{i}$}
                \STATE Sample $\mathit{M}$ preceding coalitions $\mathcal{C}_{i}^{k} \sim \mathit{Pr}(\mathcal{C}_{i} | \mathcal{N} \backslash \{i\})$
                \FOR{k=1:M}
                    \STATE Get $\hat{Q}_{\mathcal{C}_{i}^{k}}(\tau_{\mathcal{C}_{i}^{k}}, \mathbf{a}_{\mathcal{C}_{i}^{k}}) = \frac{1}{|\mathcal{C}_{i}^{k}|}\sum_{j \in \mathcal{C}_{i}^{k}} \hat{Q}_{j}(\tau_{j}, a_{j})$
                \ENDFOR
                \STATE Get $\hat{\alpha}_{i}(\mathbf{s}, a_{i}) = \mathlarger{\frac{1}{M}}\mathlarger{\sum}_{k = 1}^{M} \mathlarger{F}_{\mathbf{s}} \Big( \hat{Q}_{\mathcal{C}_{i}^{k}}(\tau_{\mathcal{C}_{i}^{k}}, \mathbf{a}_{\mathcal{C}_{i}^{k}}), \ \hat{Q}_{i}(\tau_{i}, a_{i}) \Big) + 1$
            \ENDFOR
        \end{algorithmic}
        \end{minipage}
        }
    \end{algorithm}
    
    \textbf{Implementation of Sampling from $\mathit{Pr}(\mathcal{C}_{i} | \mathcal{N} \backslash \{i\})$ (Line 4 in Algorithm \ref{alg:getting_alpha}). } As introduced in Remark \ref{rmk:coalition_generation}, the analytic form of  $\mathit{Pr}(\mathcal{C}_{i} | \mathcal{N} \backslash \{i\})$ is $\frac{|\mathcal{C}_{i}|!(|\mathcal{N}|-|\mathcal{C}_{i}|-1)!}{|\mathcal{N}|!}$ that is actually the occurrence frequency of correlated coalition $\mathcal{C}_{i}$. Since each coalition is formed by different permutations, it can be instead sampled from permutations directly with uniform distribution where $\frac{1}{|\mathcal{N}|!}$ is as the probability distribution over each permutation. It is not difficult to find that these two sampling strategy induce the same probability distribution for obtaining $\mathcal{C}_{i}$, so they are equivalent. In practice, we sample multiple permutations (saying $M$) from the uniform distribution in parallel. From each sampled permutation, we extract the the relevant $\mathcal{C}_{i}$ for each agent $i$. Afterwards, to each agent $i$, $M$ coalitions are obtained to calculate $\hat{\alpha}_{i}(\mathbf{s}, a_{i})$.

\section{Experimental Setups}
\label{sec:experimental_setups}

    \subsection{Implementation Details of Shapley Q-learning}
    \label{subsec:implementation_details_shapley_q_learning}
    
        We now provide the additional implementation details that are omitted from the main part of paper. First, $\mathit{F}_{s}(\cdot, \cdot)$ is a 3-layer network (consecutively with two affine transformation and an activation of absolute), where the hidden-layer dimension is 32. The parameters of each affine transformation are generated by hyper-networks \citep{ha2017hyper} with input as the global state, whose details are shown in Table \ref{tab:SHAQ_hypernet}. The architecture of each agent's Q-value is a RNN with GRUs cell \citep{chung2014empirical}, whose hidden-layer dimension is 64. The input dimension is state dimension and the output dimension is action dimension.
        
        \begin{table}[ht!]
        	\caption{Table of specifications for $\mathit{F}_{s}(\cdot, \cdot)$.}
        	\begin{center}
        		\begin{small}
        			\begin{sc}
        			  \scalebox{0.95}{
        				\begin{tabular}{ll}
        					\toprule
        					\textbf{Network} & \textbf{Structure} \\
        					\midrule
        					1st weight matrix & [ linear(state\_dim, 64), ReLU, linear(64, 32*2), absolute ]\\
        					1st bias & [ linear(state\_dim, 64) ]\\
        				    2nd weight matrix & [ linear(state\_dim, 64), ReLU, linear(64, 32), absolute ]\\
        				    2nd bias & [ linear(state\_dim, 32), ReLU, linear(32, 1) ] \\
        					\bottomrule 
        				\end{tabular}
        				}
        			\end{sc}
        		\end{small}
        	\end{center}
        \label{tab:SHAQ_hypernet}
        \end{table}
        
        Taking the lessons of training two coupling modules from GANs \citep{goodfellow2014generative}, we take separate learning rates for $\hat{\alpha}_{i}(\mathbf{s}, a_{i})$ and $\hat{Q}_{i}(\mathbf{s}, a_{i})$. The learning rate for $\hat{Q}_{i}(\mathbf{s}, a_{i})$ is fixed at 0.0005 for all tasks. Nevertheless, the learning rate for $\hat{\alpha}_{i}(\mathbf{s}, a_{i})$ is dependent on the number of controllable agents. We use RMSProp optimizer for training in all tasks. All models are implemented in PyTorch 1.4.0 and each experiment is run on Nvidia GeForce RTX 2080Ti for 4 to 26 hours with a single process of environment.
    
    \subsection{Hyperparameters of Baselines}
    \label{subsec:hyperparameters_of_baselines}
        
        The hyperparameters of all baselines except for SQDDPG \citep{Wang_2020} are consistent with \citet{rashid2020weighted} and \citet{wang2020qplex}. The hyperparamers of SQDDPG are shown as follows: (1) The policy network is consistent with the other baselines, while the critic network is with 3 hidden layers and each layer is with 64 neurons. (2) The policy network is updated every 2 time steps, while the critic network is updated each time step. (3) The multiplier of the entropy of policy is 0.005. The rest of settings are identical with other baselines.
        
    \subsection{Predator-Prey for Modelling Relative Overgeneralisation}
    \label{subsec:predator_prey}
    
        We give the experimental setups of Predator-Prey \citep{bohmer2020deep} in Table \ref{tab:prey_and_predator_hyperparameters}.
        
        \begin{table}[ht!]
        	\caption{Table of experimental setups of Predator-Prey.}
        	\begin{center}
        		\begin{small}
        			\begin{sc}
        			  \scalebox{0.9}{
        				\begin{tabular}{lcll}
        					\toprule
        					\textbf{Hyperparameters} & \textbf{Value} & \textbf{Description}  \\
        					\midrule
        					batch size & 32 & The number of episodes for each update\\
        					discount factor $\gamma$ & 0.99 & The importance of future rewards  \\
        					replay buffer size & 5,000 & The maximum number of episodes to store in memory\\
        					episode length & 200 & Maximum time steps per episode \\
        					test episode & 16 & The number of episodes for evaluating the performance   \\
        					test interval  & 10,000 & The time step frequency for evaluating the performance \\
        					epsilon start & 1.0 & The start epsilon $\epsilon$ value for exploration \\
        					epsilon finish & 0.05 & The final epsilon $\epsilon$ value for exploration \\
        					exploration step & 1,000,000 & The number of steps for linearly annealing $\epsilon$  \\
        					max training step & 1,000,000 & The number of training steps \\
        				    target update interval & 200  &  The update frequency for target network \\
        					learning rate  & 0.0001 & The learning rate for $\delta_{i}(\mathbf{s}, a_{i})$  \\
        					$\alpha$ for W-QMIX variants & 0.1 & The weight for CW-QMIX and OW-QMIX  \\
        					sample size & 10 & The sample size for coalition sampling   \\
        					\bottomrule 
        				\end{tabular}
        				}
        			\end{sc}
        		\end{small}
        	\end{center}
        	\label{tab:prey_and_predator_hyperparameters}
        \end{table}
        
    \subsection{StarCraft Multi-Agent Challenge}
    \label{subsec:starcraft_multiagent_benchmark_tasks}
        The StarCraft Multi-Agent Challenge (SMAC) \citep{samvelyan2019starcraft} is a popular testbed for multi-agent reinforcement learning (MARL) algorithms. The main difficulties are (1) challenging dynamics, (2) partial observability and (3) high-dimensional observation space. During training, both the global state of the environment and each agent's local observation are able to be obtained; however, during execution, only each agent's local observation can be observed. For this reason, SMAC fits the centralised training and decentralised execution (CTDE) paradigm. In each micromanagement task, the ally units are controlled by agents and the enemy units are controlled by the built-in game AI. The agents need to learn a strategy to solve some challenging combat scenarios and defeat their opponents with maximum win rate.
        
        In this paper, we evaluate the proposed SHAQ on 11 typical combat scenarios in SMAC that can be classified into three categories: easy (8m, 3s5z, 1c3s5z and 10m\_vs\_11m), hard (5m\_vs\_6m, 3s\_vs\_5z and 2c\_vs\_64zg), and super-hard (3s5z\_vs\_3s6z, Corridor, MMM2 and 6h\_vs\_8z). More details of these tasks are provided in Table \ref{tab:smac_benchmarks}. The specific experimental setups for SMAC are shown in Table \ref{tab:smac_hyperparameters} and \ref{tab:relation_learning_rate_and_agents}.
    
        \begin{table}[t]
        	\caption{Introduction of maps and characters in SMAC.}
        	\begin{center}
        		\begin{small}
        			\begin{sc}
        			 \scalebox{0.8}{
        				\begin{tabular}{cllc}
        					\toprule
        					\textbf{Map Name} & \textbf{Ally Units} & \textbf{Enemy Units} & \textbf{Categories}  \\
        					\midrule
        					3s5z    &   3 Stalkers $\&$ 5 Zealots  &  3 Stalkers $\&$ 5 Zealots    &   easy \\
        					1c3s5z  &   1 Colossi $\&$ 3 Stalkers $\&$ 5 Zealots &    1 Colossi $\&$ 3 Stalkers $\&$ 5 Zealots   &   easy  \\
        					8m               &   8 Marines    & 8 Marines    & easy \\
        					10m\_vs\_11m     &   10 Marines   & 11 Marines   &  easy \\
        					5m\_vs\_6m       &   5 Marines  &  6 Marines   &   hard \\
        					3s\_vs\_5z       &   3 Stalkers   & 5 Zealots    &  hard\\
        					2c\_vs\_64zg     &   2 Colossi   &  64 Zerglings  & hard\\
        					3s5z\_vs\_3s6z   &  3 Stalkers $\&$ 5 Zealots  &  3 Stalkers $\&$ 6 Zealots   & super-hard \\
        					MMM2 & 1 Medivac, 2 Marauders $\&$  7 Marines     & 1 Medivac, 3 Marauders $\&$  8 Marines    &  super-hard  \\
        					6h\_vs\_8z &  6 Hydralisks    &  8 Zerglings   & super-hard  \\
        					Corridor & 6 Zealots & 24 Zerglings & super-hard \\
        					\bottomrule
        				\end{tabular}
        				}
        			\end{sc}
        		\end{small}
        	\end{center}
        	\label{tab:smac_benchmarks}
        \end{table}
    
        \begin{table}[ht!]
        	\caption{Table of experimental setups for SMAC.}
        	\begin{center}
        		\begin{small}
        			\begin{sc}
        			    \scalebox{0.7}{
        				\begin{tabular}{lcccl}
        					\toprule
        					\textbf{Hyperparameters} & \textbf{Easy} &\textbf{Hard} &\textbf{Super Hard} & \textbf{Description}  \\
        					\midrule
        					    batch size &    32   &   32   &   32   &   The number of episodes for each update \\
        						discount factor $\gamma$ &  0.99    &    0.99    &  0.99    &   The importance of future rewards  \\
        						replay buffer size & 5,000 & 5,000 & 5,000 & The maximum number of episodes to store in memory\\
        					    max training step  & 2,000,000    & 2,000,000  &  5,000,000    & The number of training steps\\
        					    test episode &   32   &   32   &   32  &  The number of episodes for evaluation  \\
        					    test interval  & 10,000 & 10,000 & 10,000  & The time step frequency for evaluating the performance \\
        					    epsilon start & 1.0  &  1.0  &  1.0  &  The start epsilon $\epsilon$ value for exploration \\
        					    epsilon finish  &  0.05  &   0.05   &    0.05   &   The final epsilon $\epsilon$  value for exploration \\
        					    exploration step &  50,000   &  50,000  &   1,000,000   & The number of steps for linearly annealing $\epsilon$ \\
        					    target update interval &    200   &   200   &   200   &  The update frequency for target network \\ 
        					  	$\alpha$ for OW-QMIX  &    0.5   &   0.5    &   0.5   &   The weight for OW-QMIX\\
        					  	$\alpha$ for CW-QMIX  &    0.75  &   0.75   &   0.75  &   The weight for CW-QMIX\\
        					  	sample size           &    10    &    10    &   10    &   The sample size for coalition sampling   \\
        					\bottomrule
        				\end{tabular}
        				}
        			\end{sc}
        		\end{small}
        	\end{center}
        	\label{tab:smac_hyperparameters}
        \end{table}
        
        \begin{table}[ht]
        	\caption{The learning rate for training $\hat{\alpha}_{i}(\mathbf{s}, a_{i})$ of SHAQ for various maps in SMAC.}
        	\begin{center}
        		\begin{small}
        			\begin{sc}
        				\scalebox{0.95}{
        					\begin{tabular}{ccc}
        						\toprule
        						\textbf{Map Name} & \textbf{Number of Agents} & \textbf{Learning Rate for $\hat{\alpha}_{i}(\mathbf{s}, a_{i})$} \\
        						\midrule
        						    2c\_vs\_64zg	&  2   & 0.002  \\
        						    3s\_vs\_5z 	    &  3   & 0.001  \\
        							5m\_vs\_6m		&  5   & 0.0005 \\
        							6h\_vs\_8z		&  6   & 0.0005 \\
        							Corridor        &  6   & 0.0005 \\
        							8m              &  8   & 0.0003 \\
        							3s5z			&  8   & 0.0003 \\
        							3s5z\_vs\_3s6z	&  8   & 0.0003 \\
        						    1c3s5z			&  9   & 0.0002 \\
        							10m\_vs\_11m    & 10   & 0.0001 \\
        						    MMM2            & 10   & 0.0001 \\
        						\bottomrule
        					\end{tabular}
        				}
        			\end{sc}
        		\end{small}
        	\end{center}
        	\label{tab:relation_learning_rate_and_agents}
        \end{table}

\section{Extra Experimental Results}
\label{subsec:extra_experimental_results}
    
    \subsection{Experimental Results on Extra SMAC Maps}
    \label{subsec:experimental_results_on_extra_smac_maps}
        To thoroughly compare the performance of SHAQ with baselines, we also run experiments on 5 extra maps in SMAC as Figure \ref{fig:extra_results_smac} shows. 8m, 3s5z, 1c3s5z and 10m\_vs\_11m are an easy maps and MMM2 is a super-hard map. The strategy of epsilon annealing is consistent with the previous experiments for SMAC. It is obvious that SHAQ also performs generally well on these 5 maps.
        \begin{figure*}[ht!]
            \centering
            \begin{subfigure}[b]{0.32\linewidth}
                \centering
                \includegraphics[width=\textwidth]{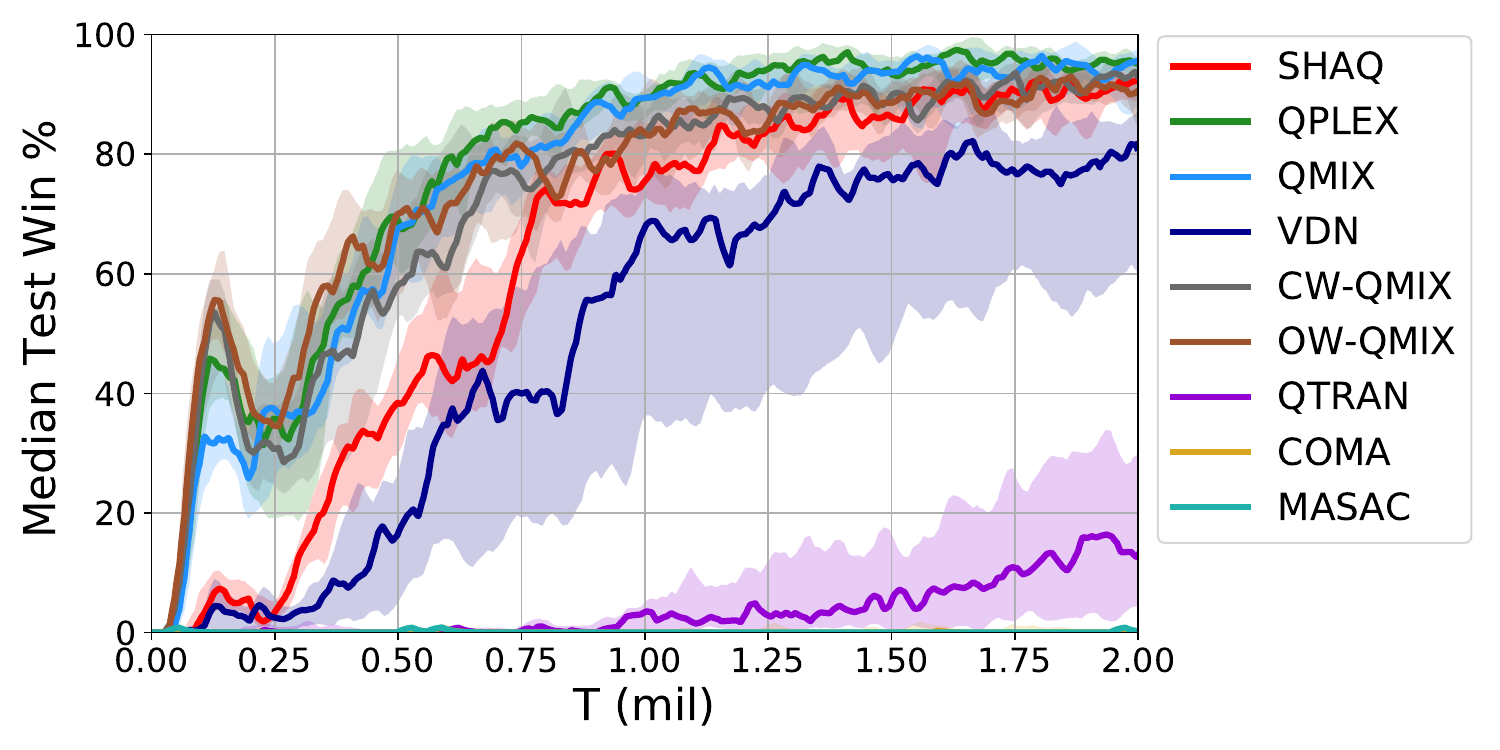}
                \caption{3s5z.}
            \end{subfigure}
            ~
            \begin{subfigure}[b]{0.32\linewidth}
                \centering
                \includegraphics[width=\textwidth]{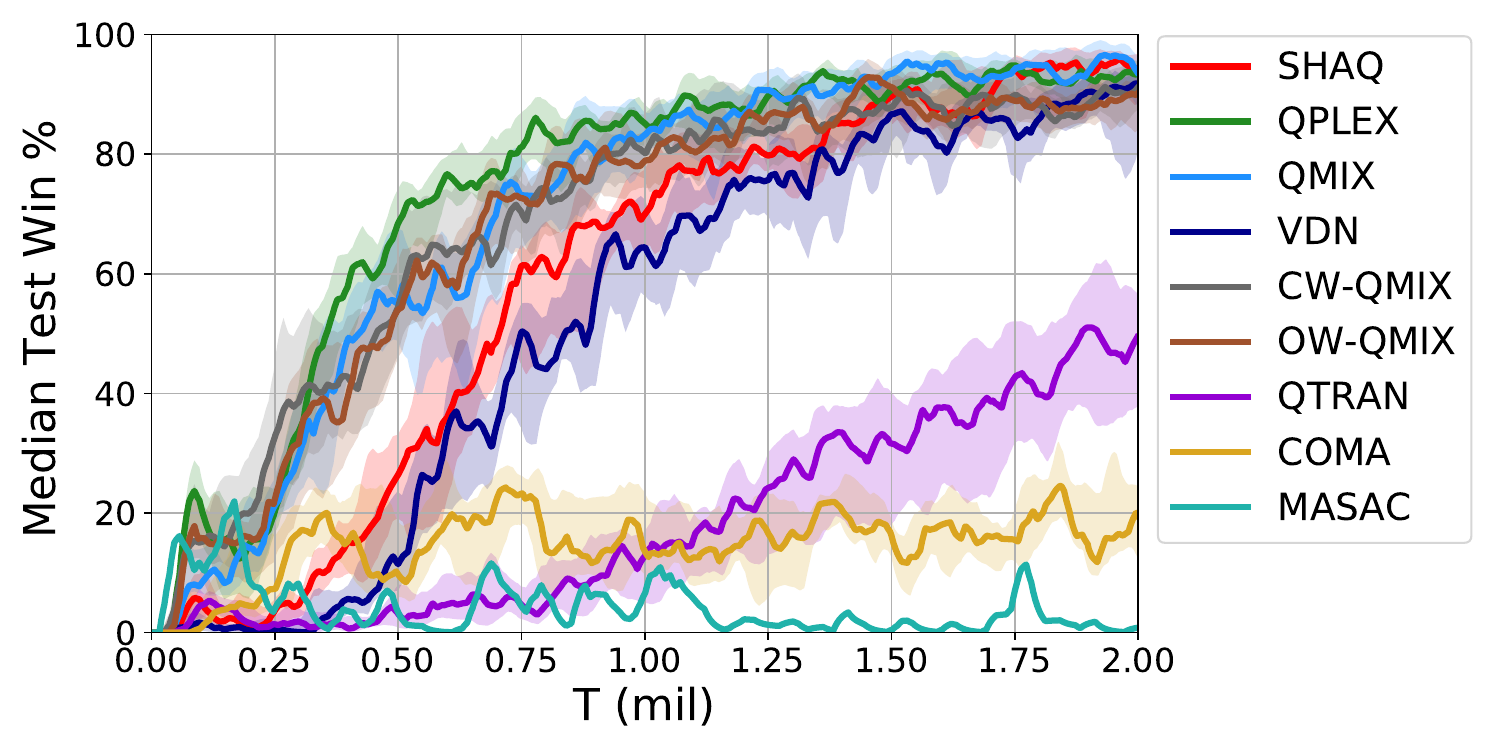}
                \caption{1c3s5z.}
            \end{subfigure}
            ~
            \begin{subfigure}[b]{0.32\linewidth}
                \centering
                \includegraphics[width=\textwidth]{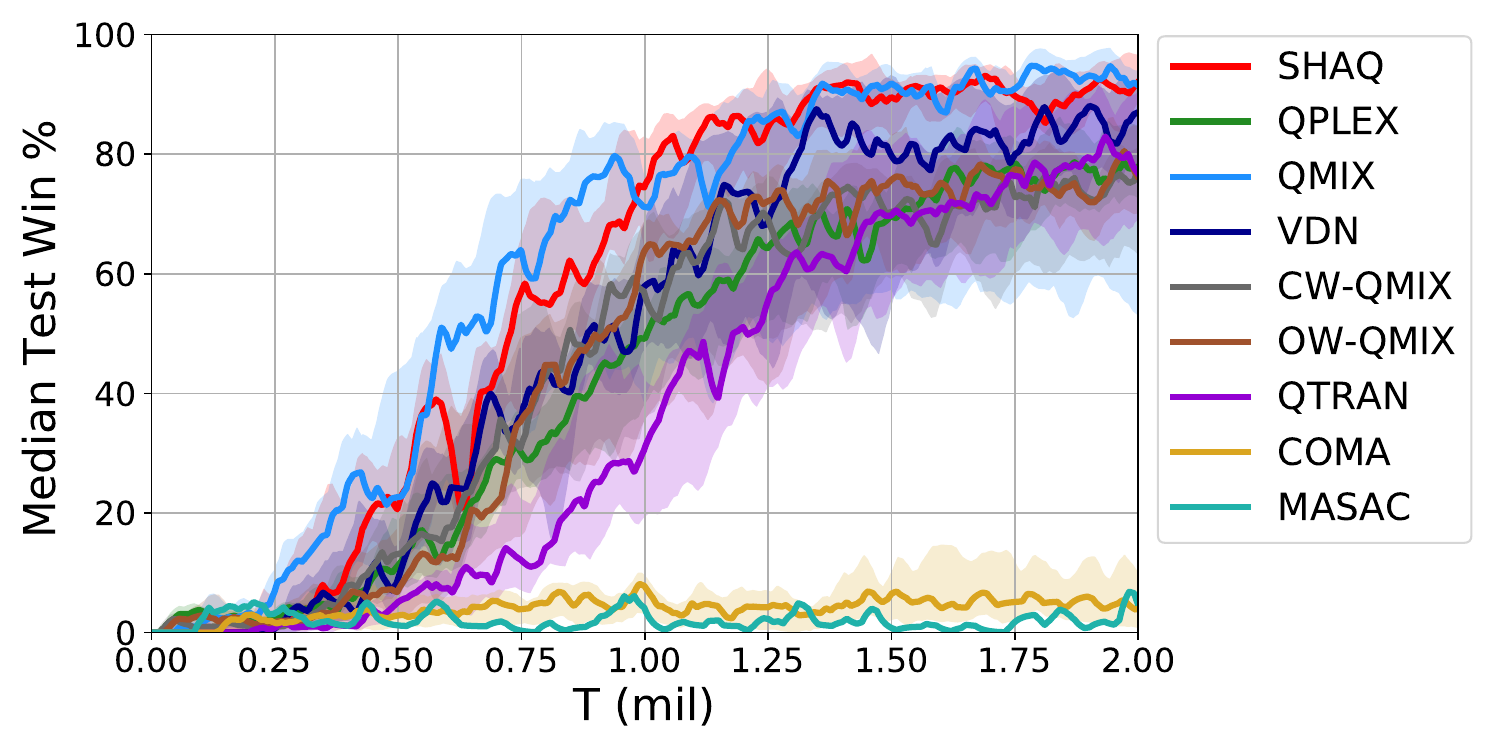}
                \caption{10m\_vs\_11m.}
            \end{subfigure}
            ~
            \begin{subfigure}[b]{0.32\textwidth}
                \centering                \includegraphics[width=\textwidth]{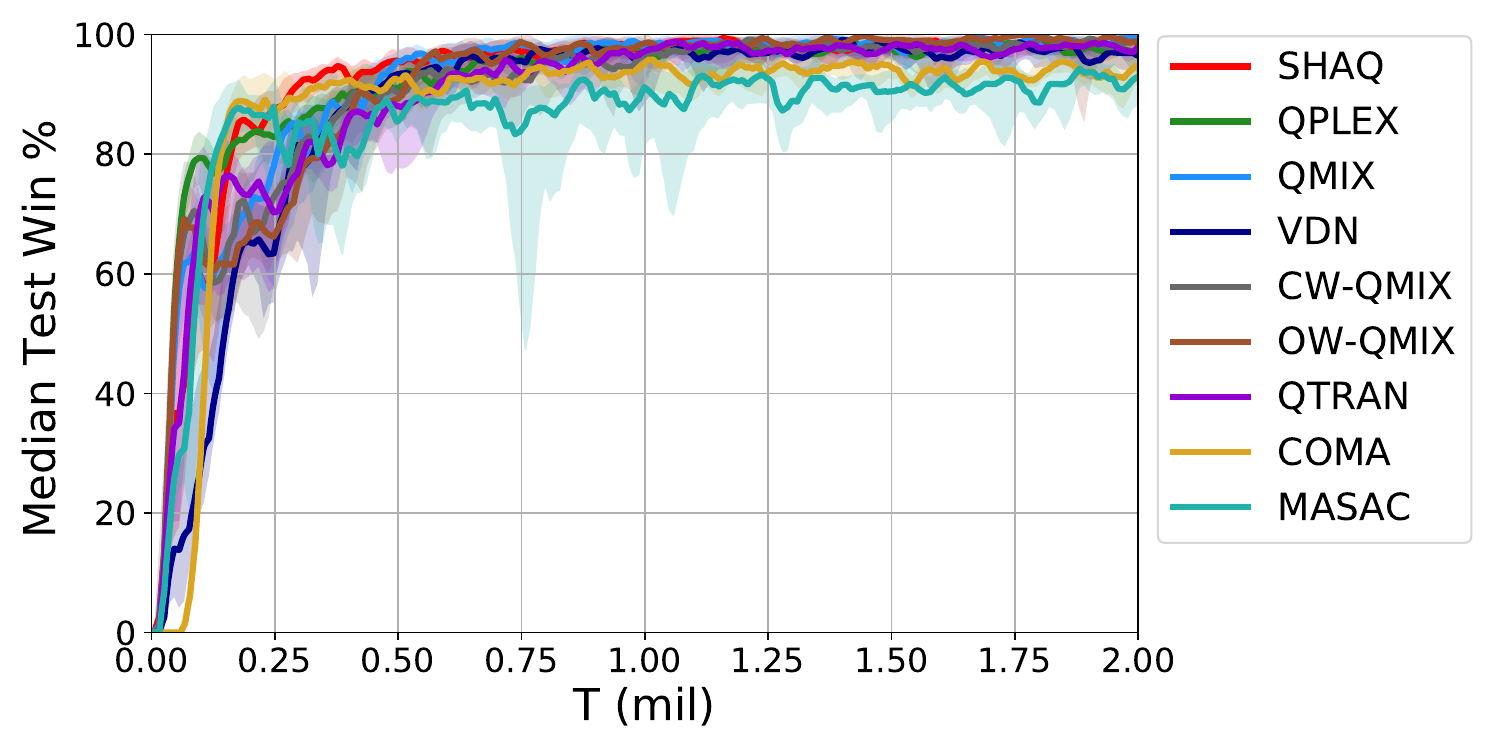}
                \caption{8m.}
            \end{subfigure}
            ~
            \begin{subfigure}[b]{0.32\textwidth}
                % \centering                 \includegraphics[width=\textwidth]{figures/corridor_comparison_results.pdf}
                % \caption{Corridor.}
                \includegraphics[width=\textwidth]{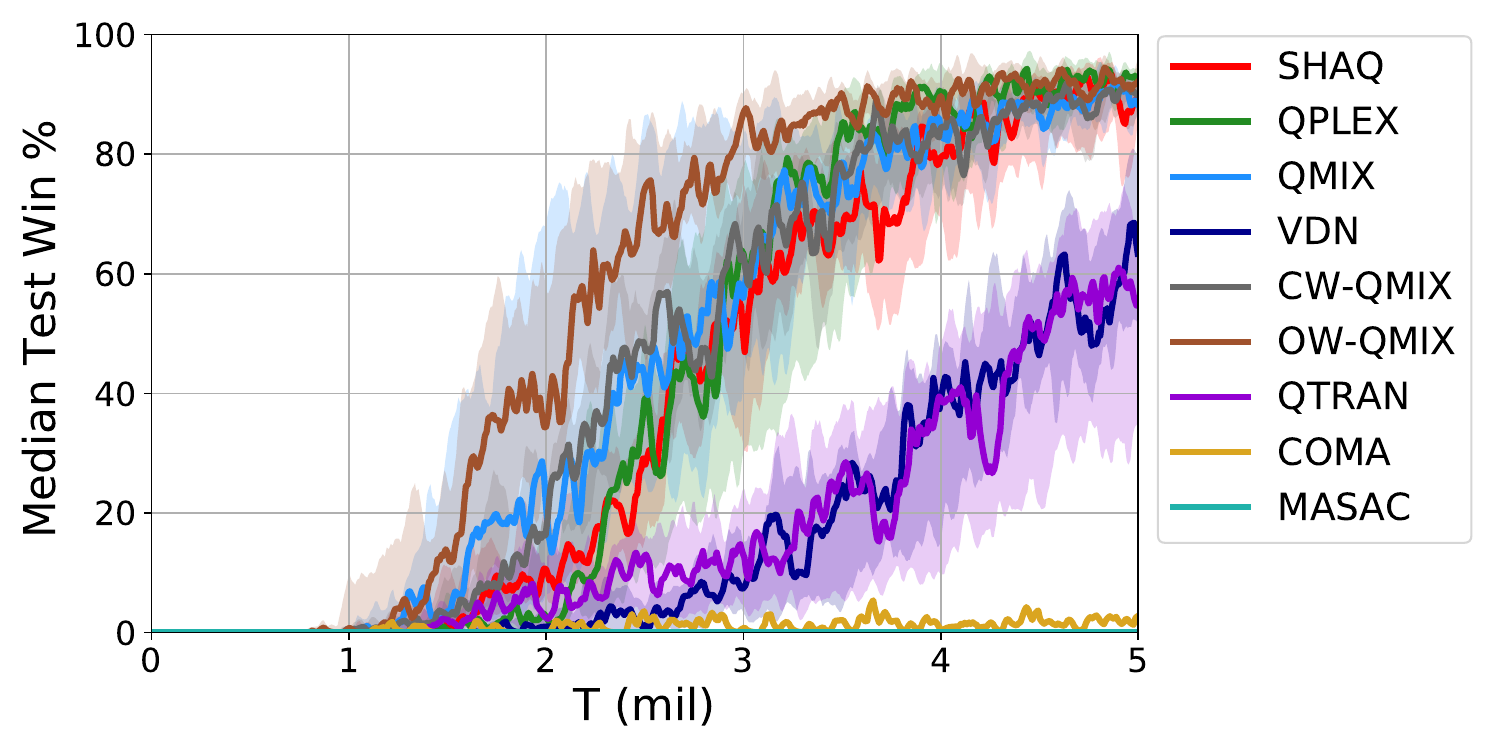}
                \caption{MMM2.}
            \end{subfigure}
            \caption{Median test win \% for 5 extra maps in SMAC.}
        \label{fig:extra_results_smac}
        \end{figure*}
        
    \subsection{Extra Experimental Results on W-QMIX with $\alpha=0.1$}
    \label{subsec:extra_weighted_qmix_variant_hyperparameters}
        To show the significance of tuning $\mathbf{\alpha}$ for W-QMIX, we also run W-QMIX with $\alpha=0.1$ in addition to the best $\alpha$ reported in \cite{rashid2020weighted}. We can observe from Figure \ref{fig:easy_hard_smac_wqmix} that the performances of W-QMIX are not comparatively identical for each choice of $\alpha$. As a result, W-QMIX suffers from the separate tuning of $\alpha$ for each scenario. Unfortunately, \citet{rashid2020weighted} did not provide an empirical law for selecting $\alpha$, while SHAQ enjoys an empirical law to select $\hat{\alpha}_{i}(\mathbf{s}, a_{i})$ as Figure \ref{fig:manual_approximate_alpha} shows.
        
        \begin{figure*}[ht]
            \centering
            \begin{subfigure}[b]{0.32\textwidth}
                \centering
                \includegraphics[width=\textwidth]{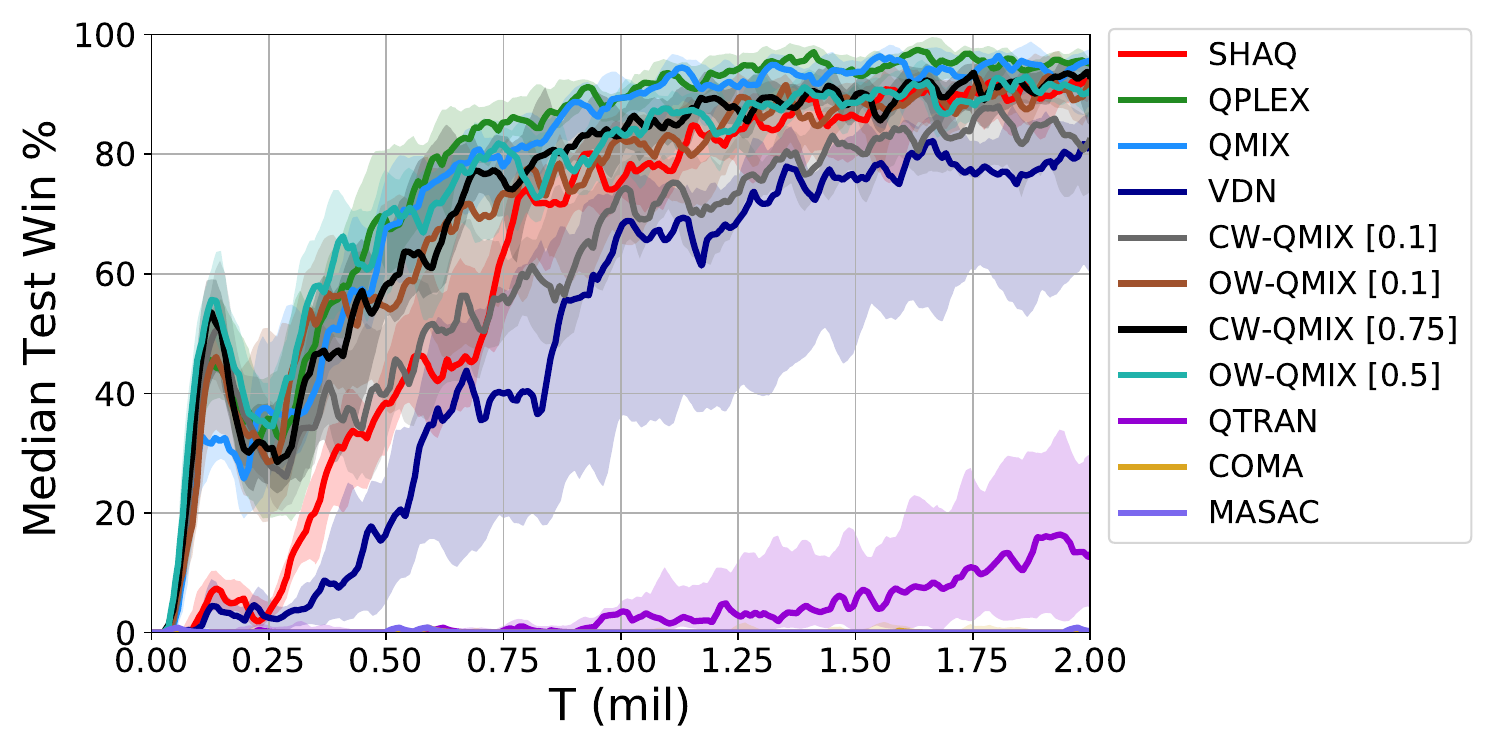}
                \caption{3s5z.}
            \end{subfigure}
            ~
            \begin{subfigure}[b]{0.32\textwidth}
                \centering
                \includegraphics[width=\textwidth]{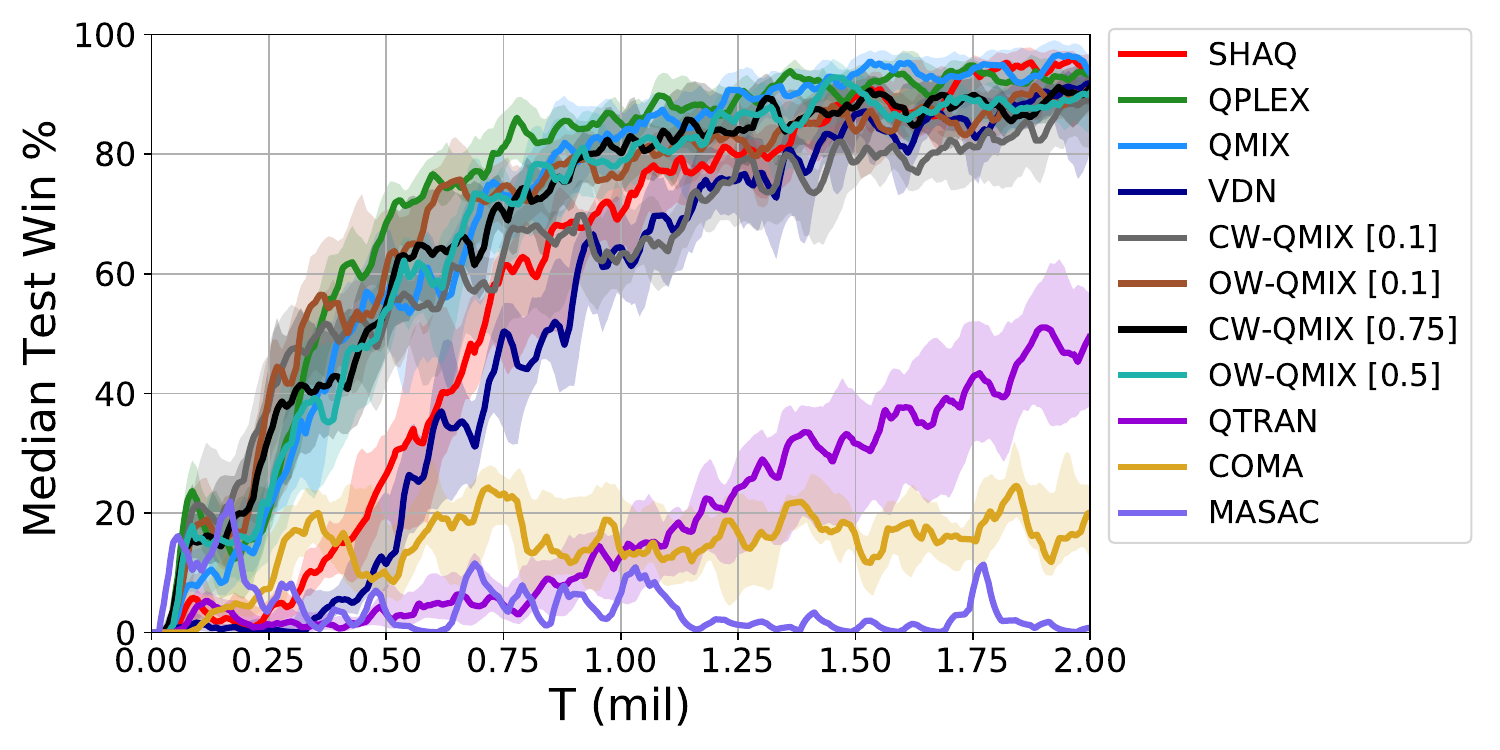}
                \caption{1c3s5z.}
            \end{subfigure}
            ~
            \begin{subfigure}[b]{0.32\textwidth}
                \centering
                \includegraphics[width=\textwidth]{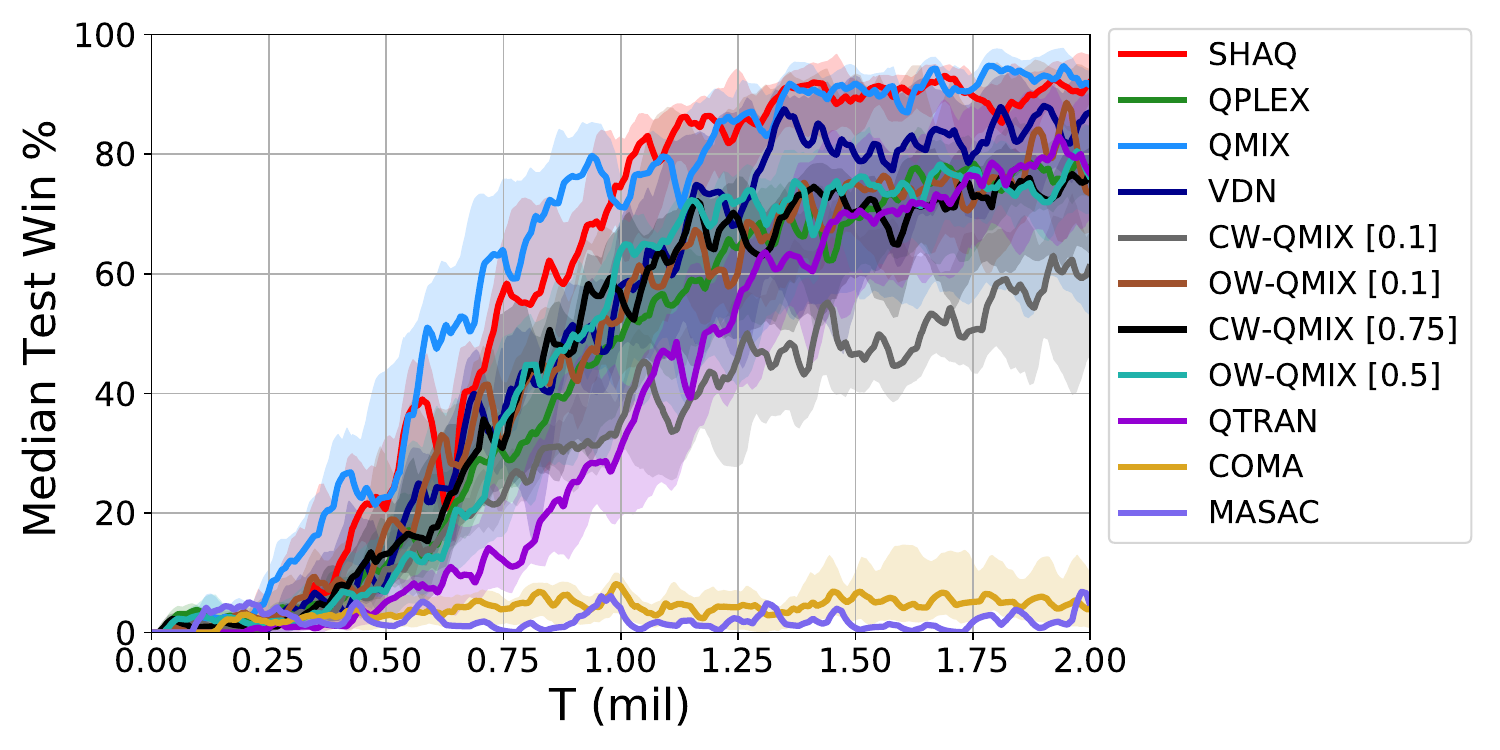}
                \caption{10m\_vs\_11m.}
            \end{subfigure}
            ~
            \begin{subfigure}[b]{0.32\textwidth}
                \centering
                \includegraphics[width=\textwidth]{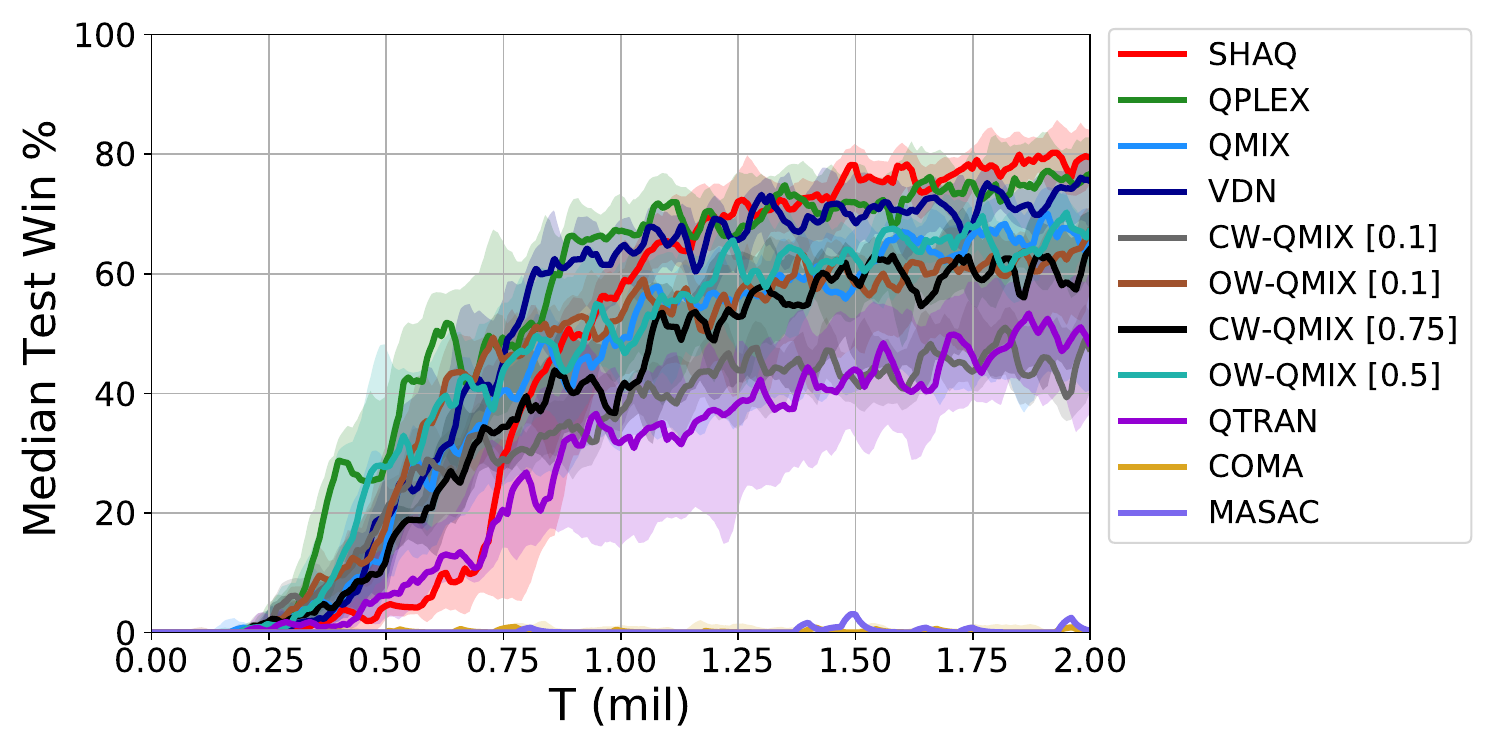}
                \caption{5m\_vs\_6m.}
            \end{subfigure}
            ~
            \begin{subfigure}[b]{0.32\textwidth}
                \centering
                \includegraphics[width=\textwidth]{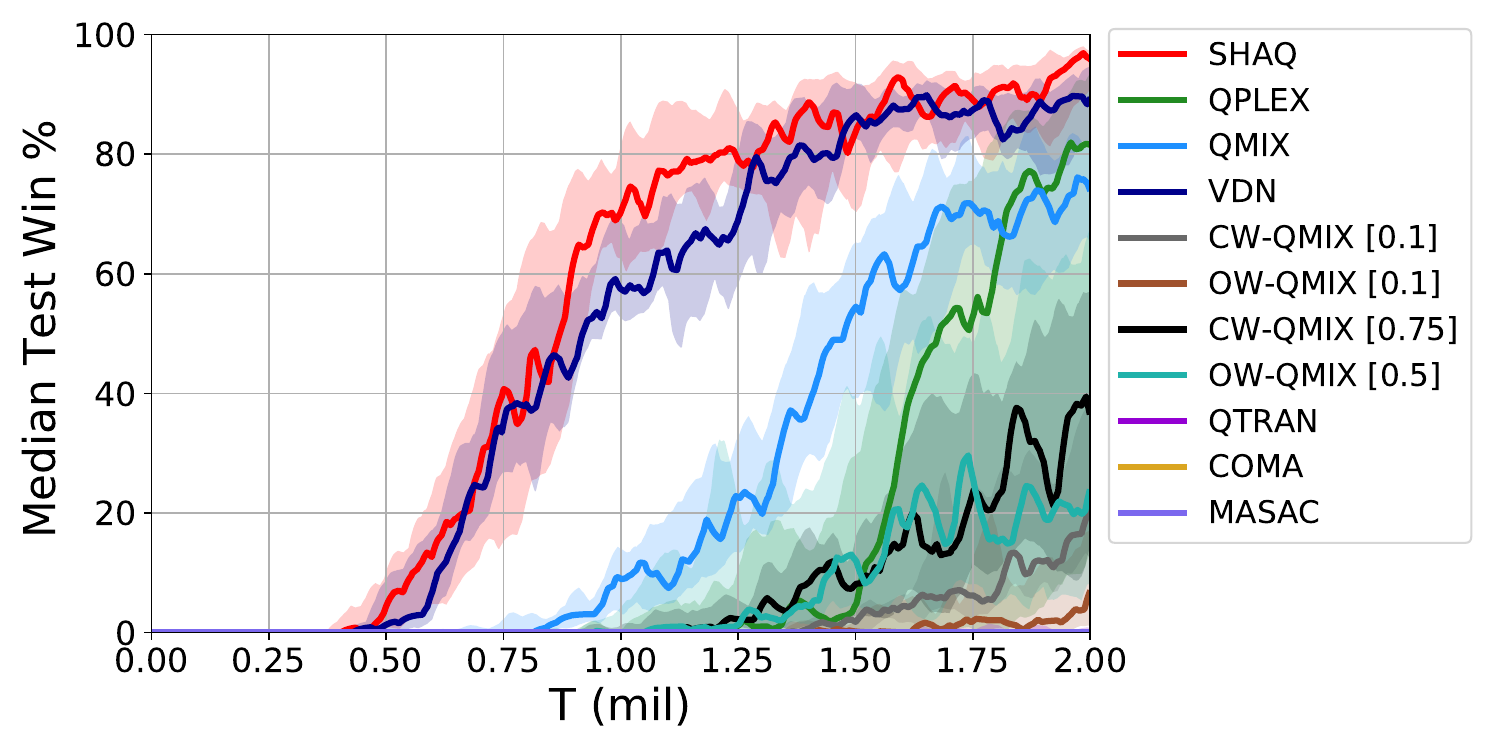}
                \caption{3s\_vs\_5z.}
            \end{subfigure}
            ~
            \begin{subfigure}[b]{0.32\textwidth}
                \centering
                \includegraphics[width=\textwidth]{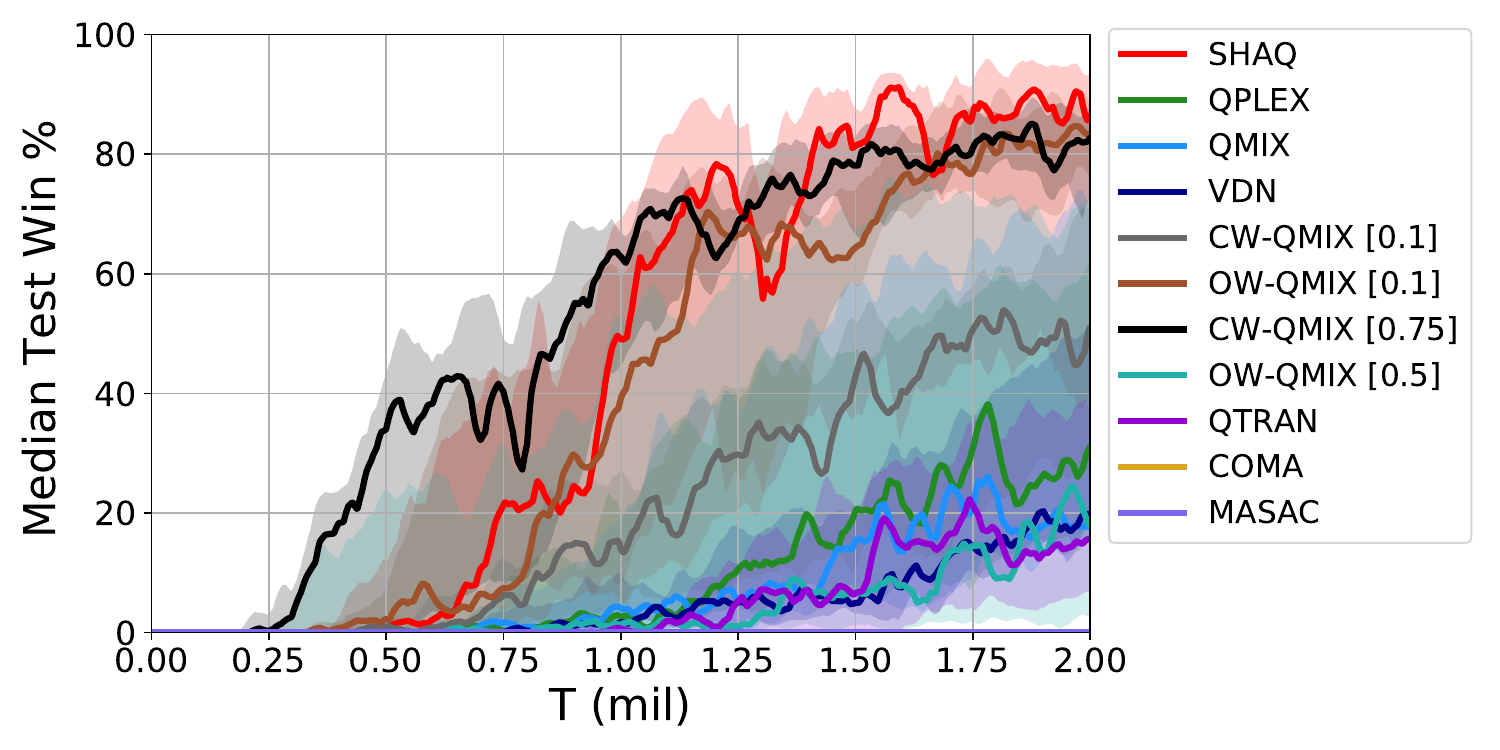}
                \caption{2c\_vs\_64zg.}
            \end{subfigure}
            \caption{Median test win \% for easy (1st row) and hard (2nd row) maps of SMAC for W-QMIX with different $\alpha$.}
            \label{fig:easy_hard_smac_wqmix}
        \end{figure*}
    
    \subsection{Comparison with SQDDPG}
    \label{subsec:comparison_with_sqddpg}
        To emphasize the improvement of SHAQ from SQDDPG \citep{Wang_2020}, we exclusively compare these two algorithms on 3 maps in SMAC. As Figure \ref{fig:sqddpg_shaq_smac} shows, the performance of SHAQ surpasses that of SQDDPG on all 3 maps, while SQDDPG can only learn on the simplest map 3m. The most possible reason for the failure of SQDDPG to complicated tasks is its sample complexity inefficiency for permutations of agents as discussed in Section \ref{sec:related_work} that leads to the difficulty in learning. Apparently, the implementation of coalition invariance of SHAQ mitigates this weakness so that it is able to solve more challenging tasks. We also show the results for SQDDPG on Predator-Prey with the same setups (i.e., the epsilon annealing steps are 1 mil), as Figure \ref{fig:sqddpg_wqmix_pp_a} shows. It is apparent that SHAQ can still outperform SQDDPG.
        
        \begin{figure*}[ht!]
            \centering
            \begin{subfigure}[b]{0.45\linewidth}
                \centering
                \includegraphics[width=\textwidth]{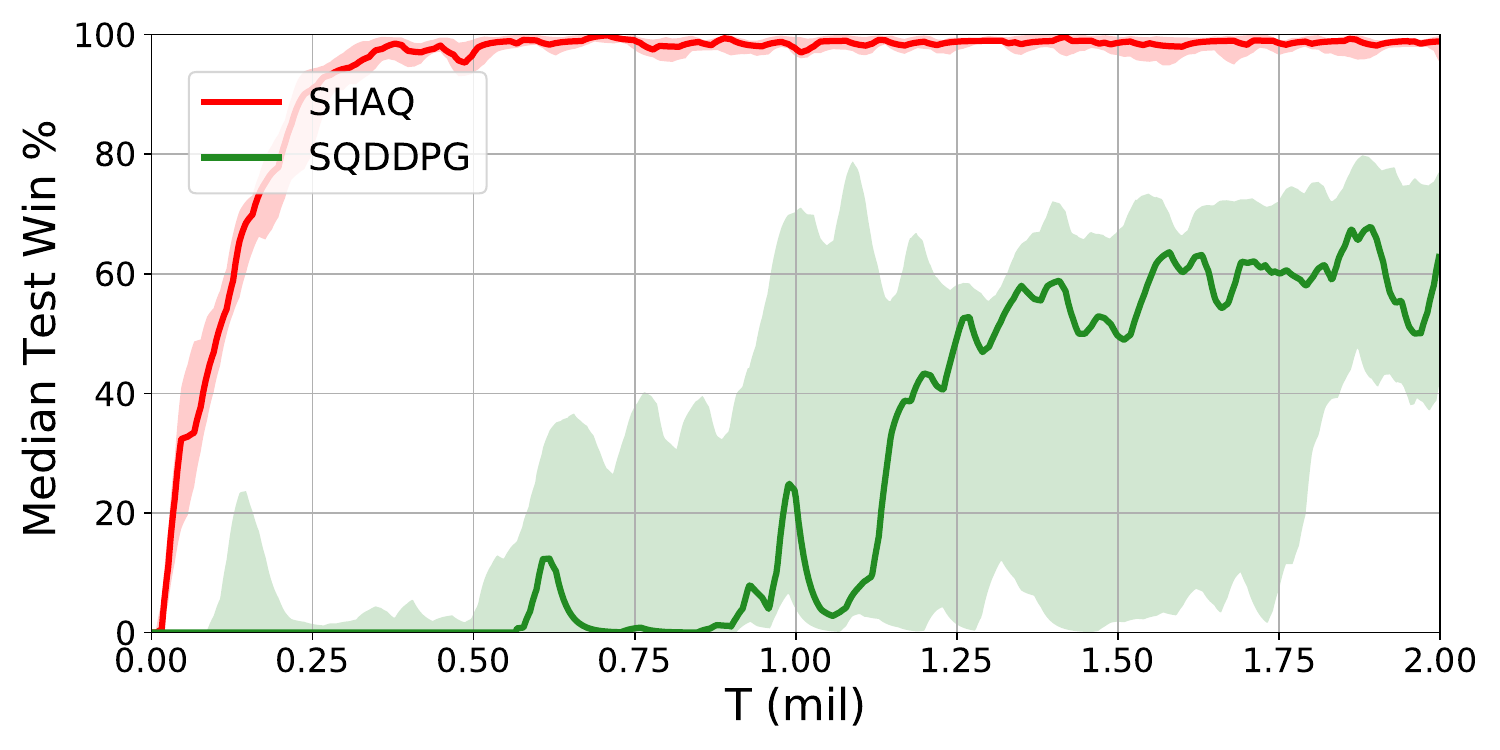}
                \caption{3m.}
            \end{subfigure}
            \quad
            \begin{subfigure}[b]{0.45\linewidth}
                \centering
                \includegraphics[width=\textwidth]{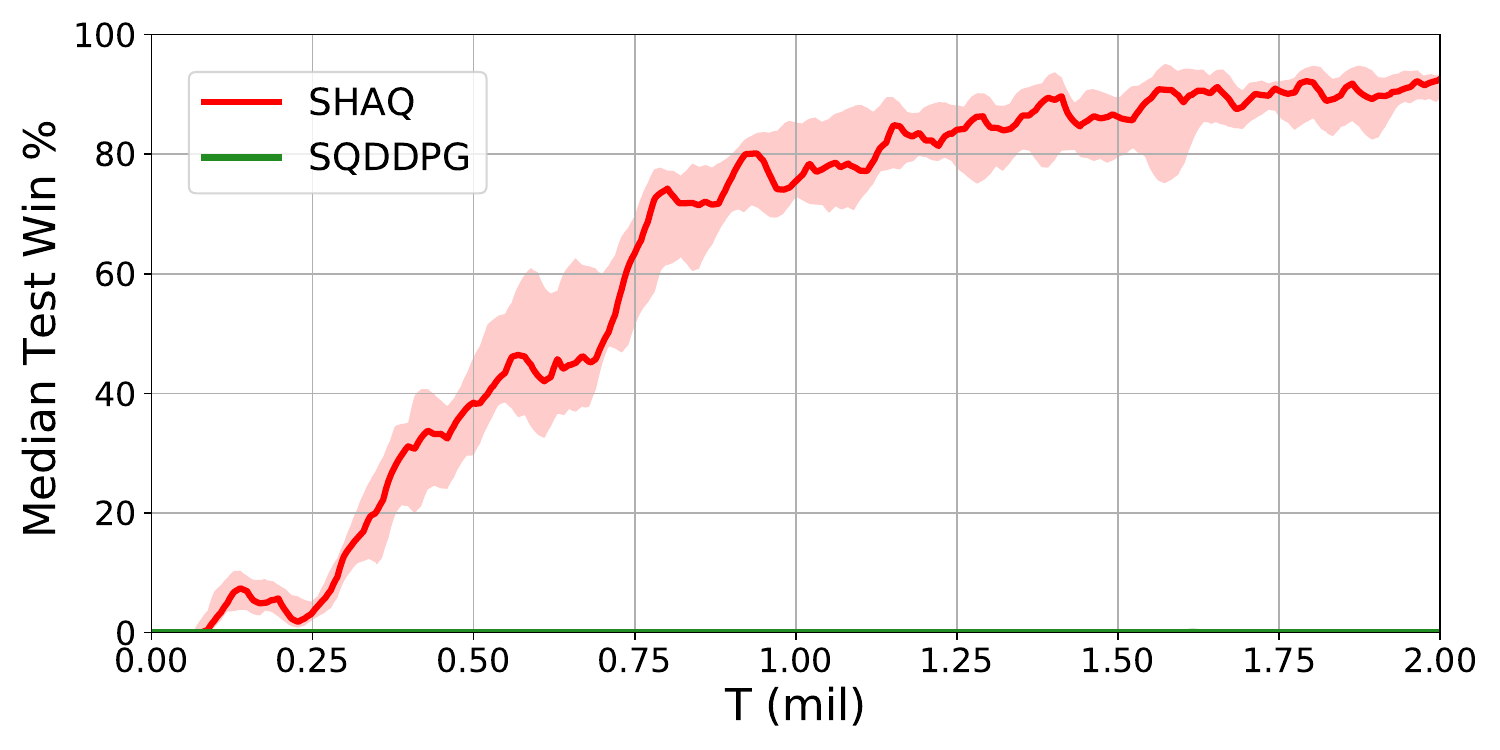}
                \caption{3s5z.}
            \end{subfigure}
            \quad
            \begin{subfigure}[b]{0.45\linewidth}
                \centering
                \includegraphics[width=\textwidth]{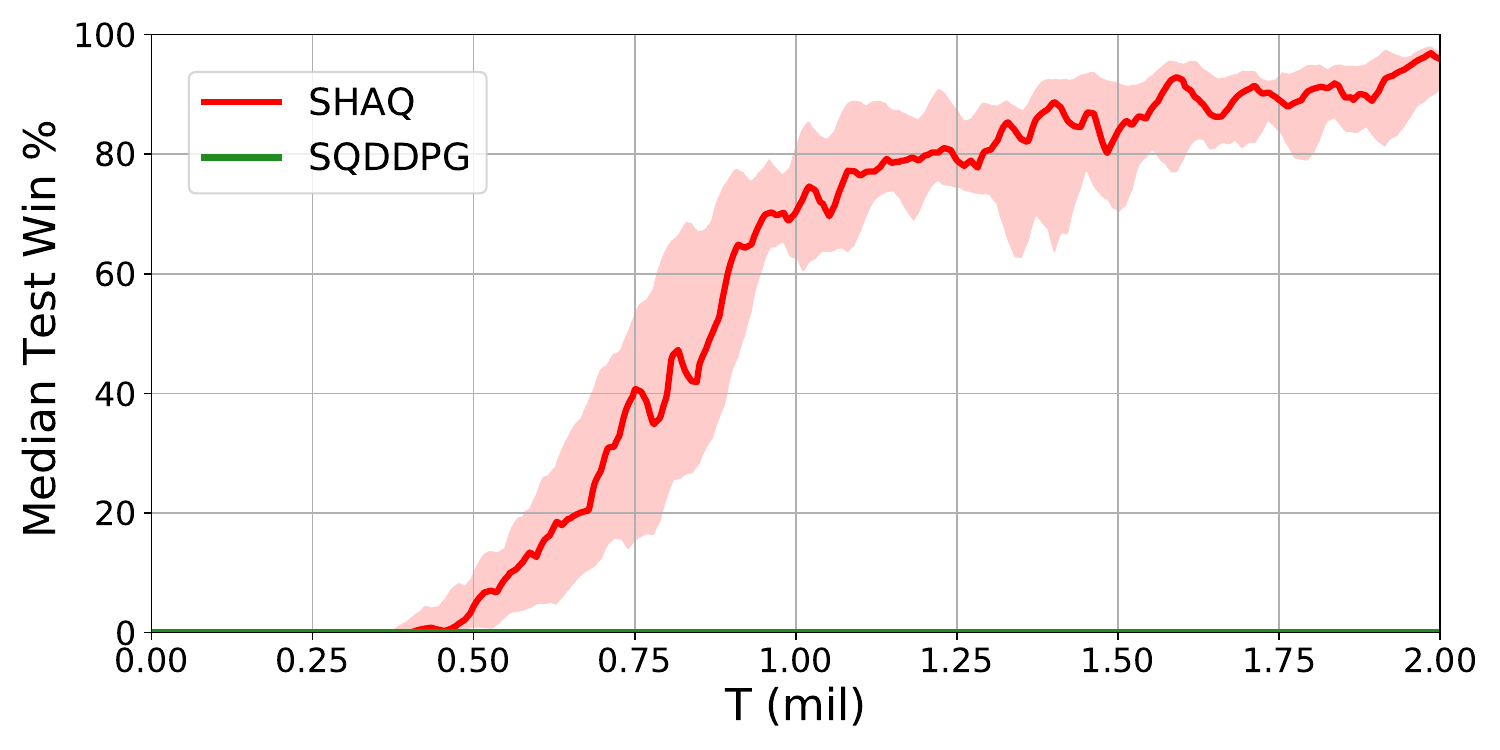}
                \caption{3s\_vs\_5z.}
            \end{subfigure}
            \caption{Median test win \% for 3 maps of SMAC to compare SHAQ with SQDDPG.}
        \label{fig:sqddpg_shaq_smac}
        \end{figure*}
    
    \subsection{Ablation Study}
    \label{subsec:ablation_study}
    
        \begin{figure*}[ht!]
        \centering
            \begin{subfigure}[b]{0.45\textwidth}
                \centering                \includegraphics[width=\textwidth]{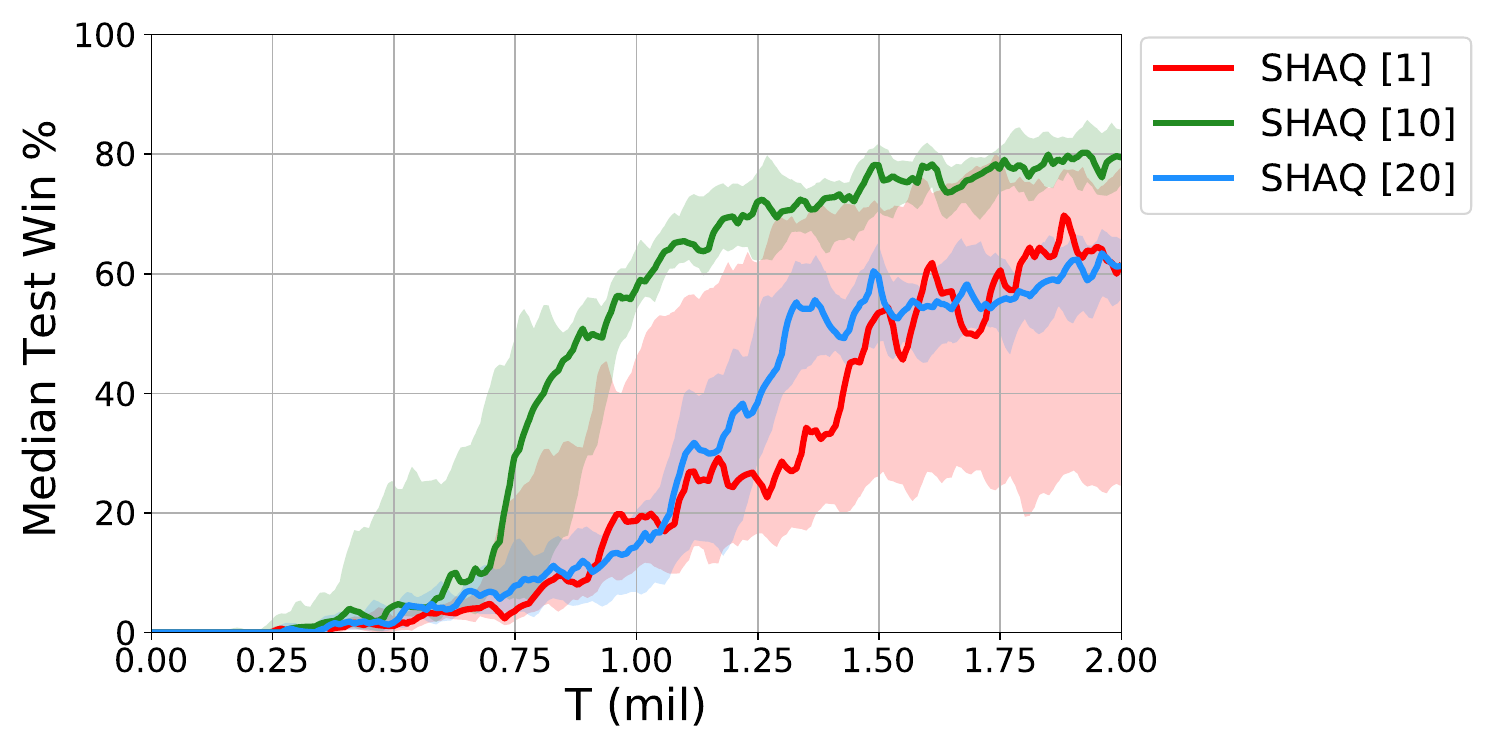}
                \caption{Comparison among different values of M on 5m\_vs\_6m. The $[\cdot]$ indicates the value of M.}
            \label{fig:ablation_5m_vs_6m}
            \end{subfigure}
            \quad
            \begin{subfigure}[b]{0.45\textwidth}
                \centering                 \includegraphics[width=\textwidth]{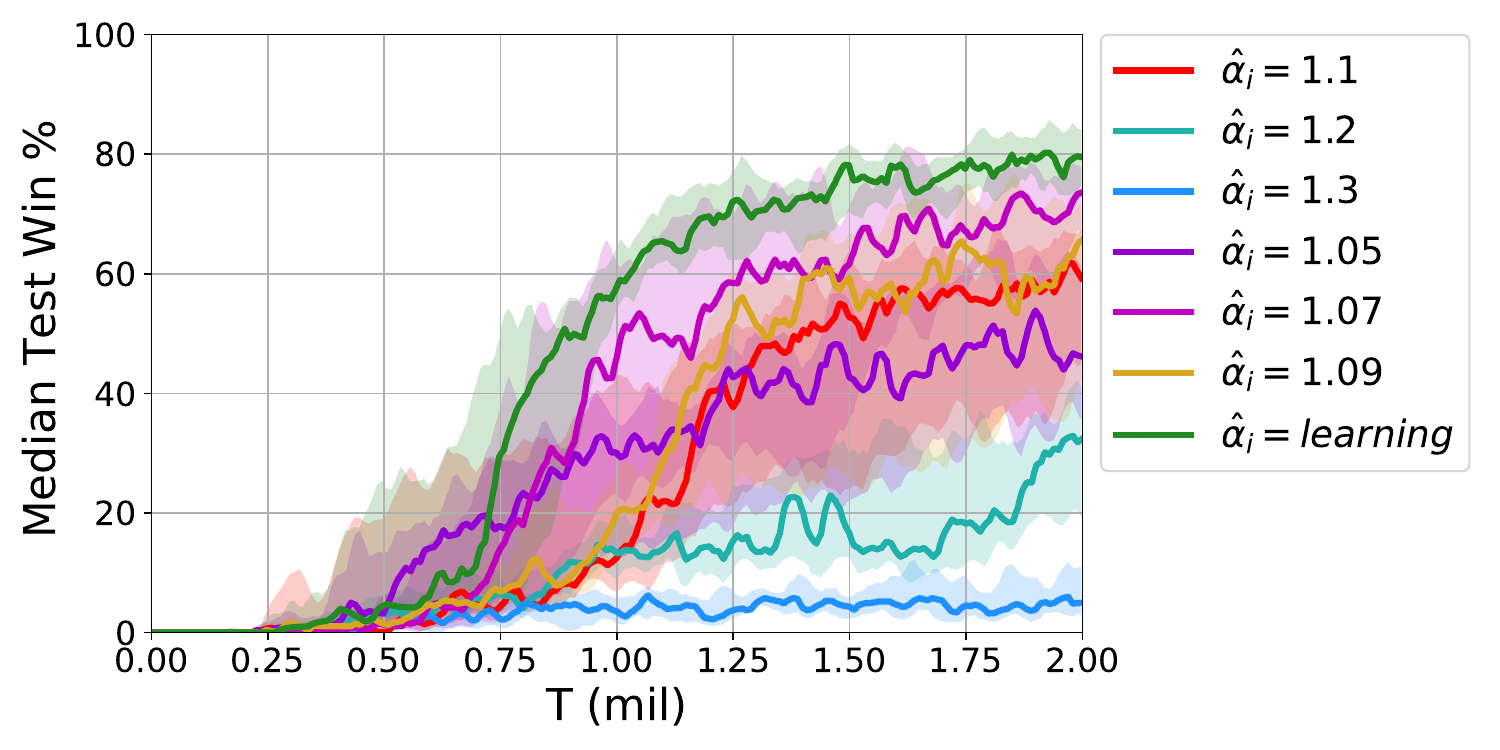}
                \caption{Comparison between the manually preset and the learned $\hat{\alpha}_{i}(s, a_{i})$ on 5m\_vs\_6m.}
            \label{fig:manual_approximate_alpha}
            \end{subfigure}
            \quad
            \begin{subfigure}[b]{0.45\textwidth}
                \centering                \includegraphics[width=\textwidth]{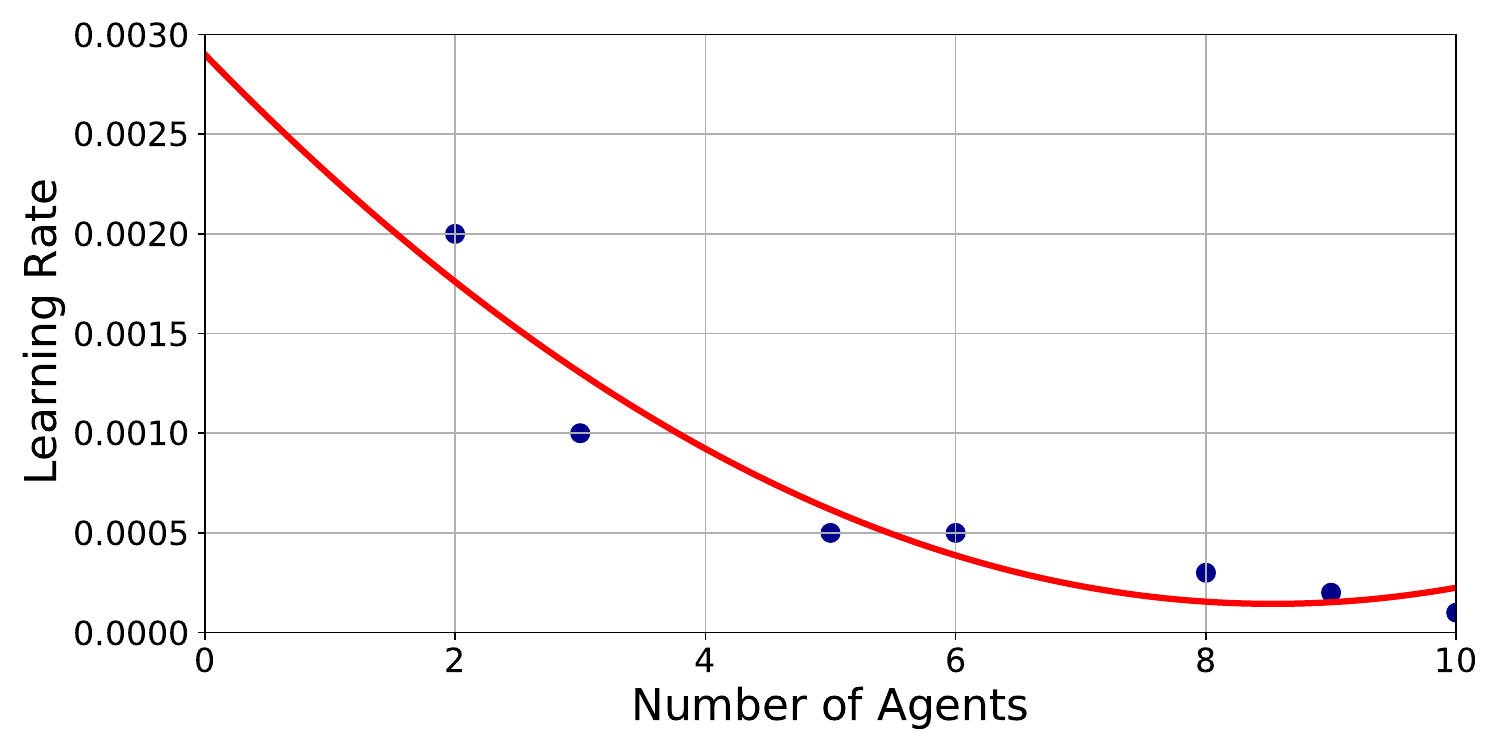}
                \caption{Relationship between learning rate of $\hat{\alpha}_{i}(s, a_{i})$ and the number of agents (in red curve).}
            \label{fig:correlation_num_agents_to_mixer_lr}
            \end{subfigure}
            \caption{The figures of 3 ablation studies of SHAQ on SMAC.}
        \label{fig:ablation_study}
        \end{figure*}
        
        We also conduct ablation study of SHAQ, such as the sample size M for approximating $\hat{\alpha}_{i}(\mathbf{s}, a_{i})$, the empirical selection law on the learning rate of $\hat{\alpha}_{i}(\mathbf{s}, a_{i})$, and the demonstration of the necessity of learning $\hat{\alpha}_{i}(\mathbf{s}, a_{i})$ rather than manual setting. These results show that SHAQ is an easy-to-use algorithm that is potential to be applied to other scenarios with less efforts on tuning hyperparameters.
        
        \textbf{Sample Size M for Approximating $\hat{\alpha}(\mathbf{s}, a_{i})$.} To study the impact of sample size M on the performance of SHAQ, we conduct an ablation study as Figure \ref{fig:ablation_5m_vs_6m} shows. We observe that the small M is able to achieve fast convergence rate but with high variance, while the large M is with low variance but comparatively slow convergence rate. The observations are consistent with the conclusions from stochastic optimisation \citep{Byrd2012,NIPS2015_effc299a}. As a result, we select M = 10 in practice, to trade off between convergence rate and variance. 
        
        \textbf{An Empirical Law on Selecting the Learning Rate of $\hat{\alpha}_{i}(s, a_{i})$.} To provide an empirical law on selecting the learning rate of $\hat{\alpha}_{i}(s, a_{i})$, we statistically fit a curve of the learning rate w.r.t. the number of controllable agents by the experimental results on SMAC that is shown in Figure \ref{fig:correlation_num_agents_to_mixer_lr}. It is seen that the learning rate of $\hat{\alpha}_{i}(s, a_{i})$ is generally negatively related to the number of agents. In other words, as the number of agents grows the learning rate of $\hat{\alpha}_{i}(s, a_{i})$ is recommended to be smaller. For example, if the number of agents is more than 10, the learning rate of $\hat{\alpha}_{i}(s, a_{i})$ is recommended to be 0.0001 as the guidance from Figure \ref{fig:correlation_num_agents_to_mixer_lr}.
        
        \textbf{The Necessity of Learning $\hat{\alpha}_{i}(\mathbf{s}, a_{i})$.} Some readers may be concerned about the necessity of learning $\hat{\alpha}_{i}(\mathbf{s}, a_{i})$. To answer this question, we study the necessity of learning $\hat{\alpha}_{i}(\mathbf{s}, a_{i})$ on 5m\_vs\_6m. Since the learned $\hat{\alpha}_{i}(\mathbf{s}, a_{i})$ finally converges to $1.1029$, we grid search the fixed values of $\hat{\alpha}_{i}(\mathbf{s}, a_{i})$ around this number. As Figure \ref{fig:manual_approximate_alpha} shows, $\hat{\alpha}_{i}(\mathbf{s}, a_{i})$ with manually preset fixed value cannot work as well as the learned $\hat{\alpha}_{i}(\mathbf{s}, a_{i})$. Therefore, we demonstrate the necessity of learning $\hat{\alpha}_{i}(\mathbf{s}, a_{i})$ here.
            
    \subsection{More Visualisation for Interpretability of SHAQ}
    \label{subsec:more_visualisation}
        \begin{figure*}[ht!]
            \centering
            \begin{subfigure}[b]{0.23\textwidth}
            	\centering
        	    \includegraphics[width=\textwidth]{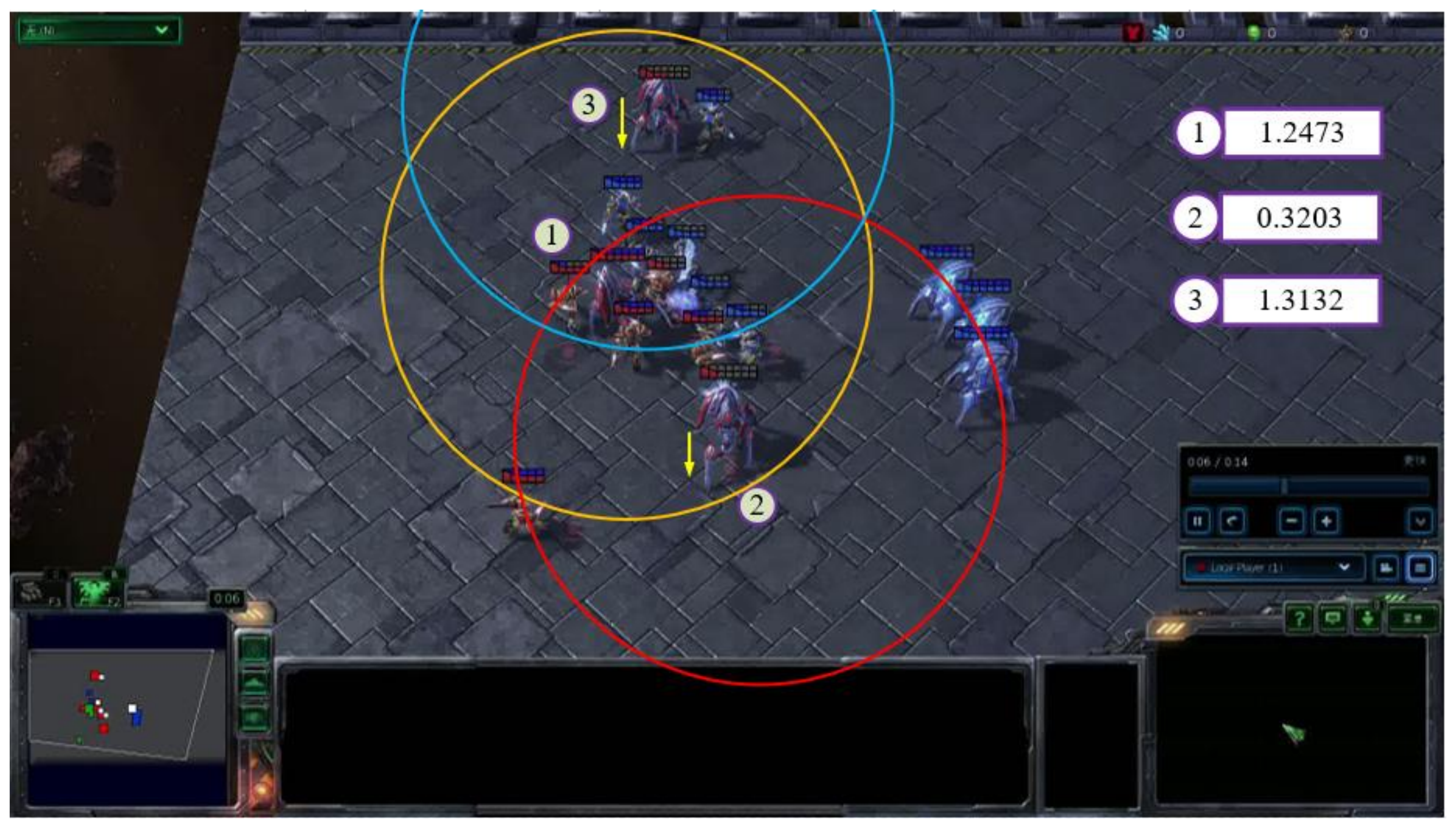}
        	    \caption{SHAQ: $\epsilon$-greedy.}
        	\label{fig:shaq_epsilon_greedy_appendix}
            \end{subfigure}
            ~
            \begin{subfigure}[b]{0.23\textwidth}
                \centering                \includegraphics[width=\textwidth]{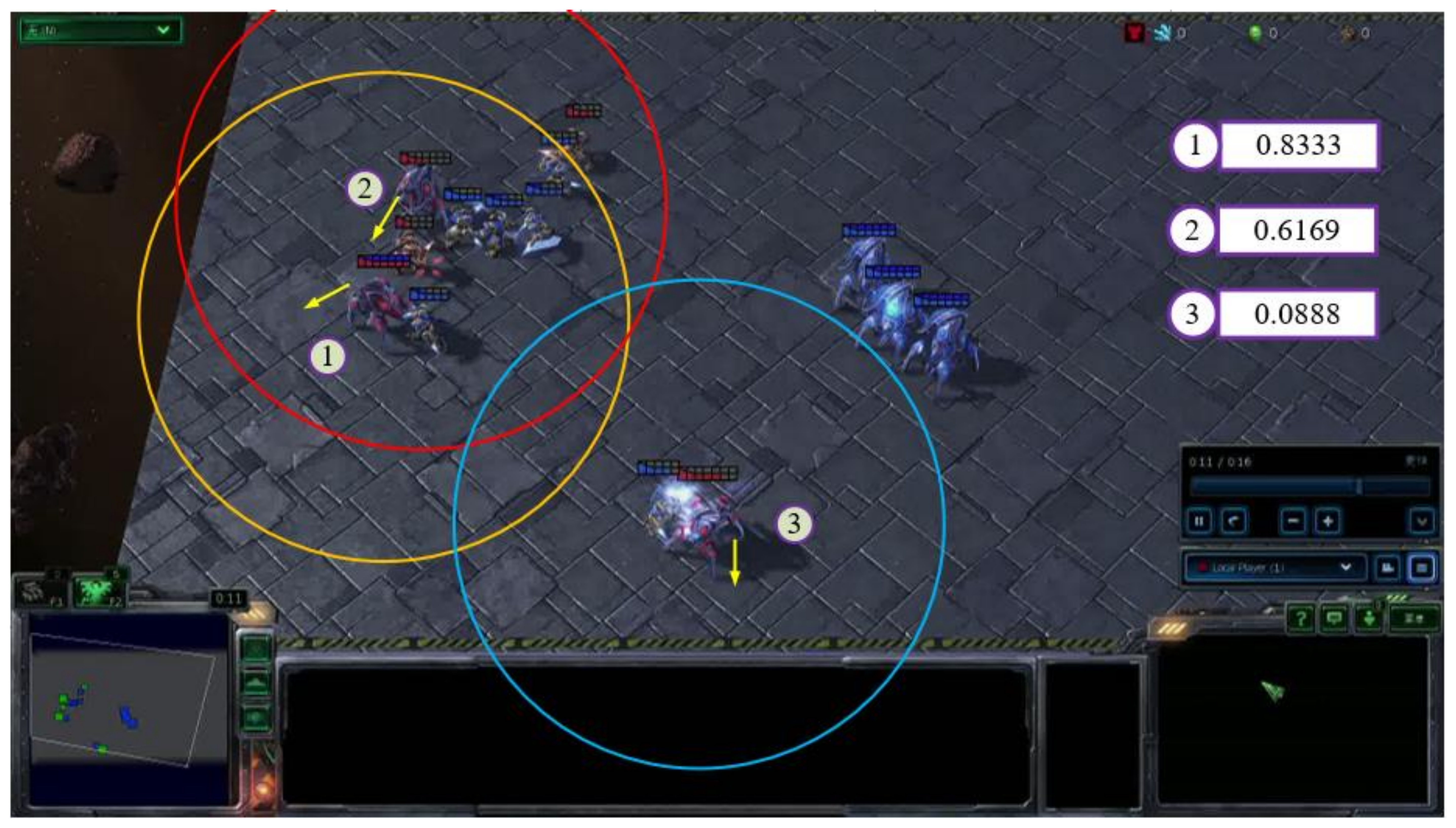}
                \caption{VDN: $\epsilon$-greedy.}
            \label{fig:vdn_epsilon_greedy_appendix}
            \end{subfigure}
            ~
            \begin{subfigure}[b]{0.23\textwidth}
                \centering                \includegraphics[width=\textwidth]{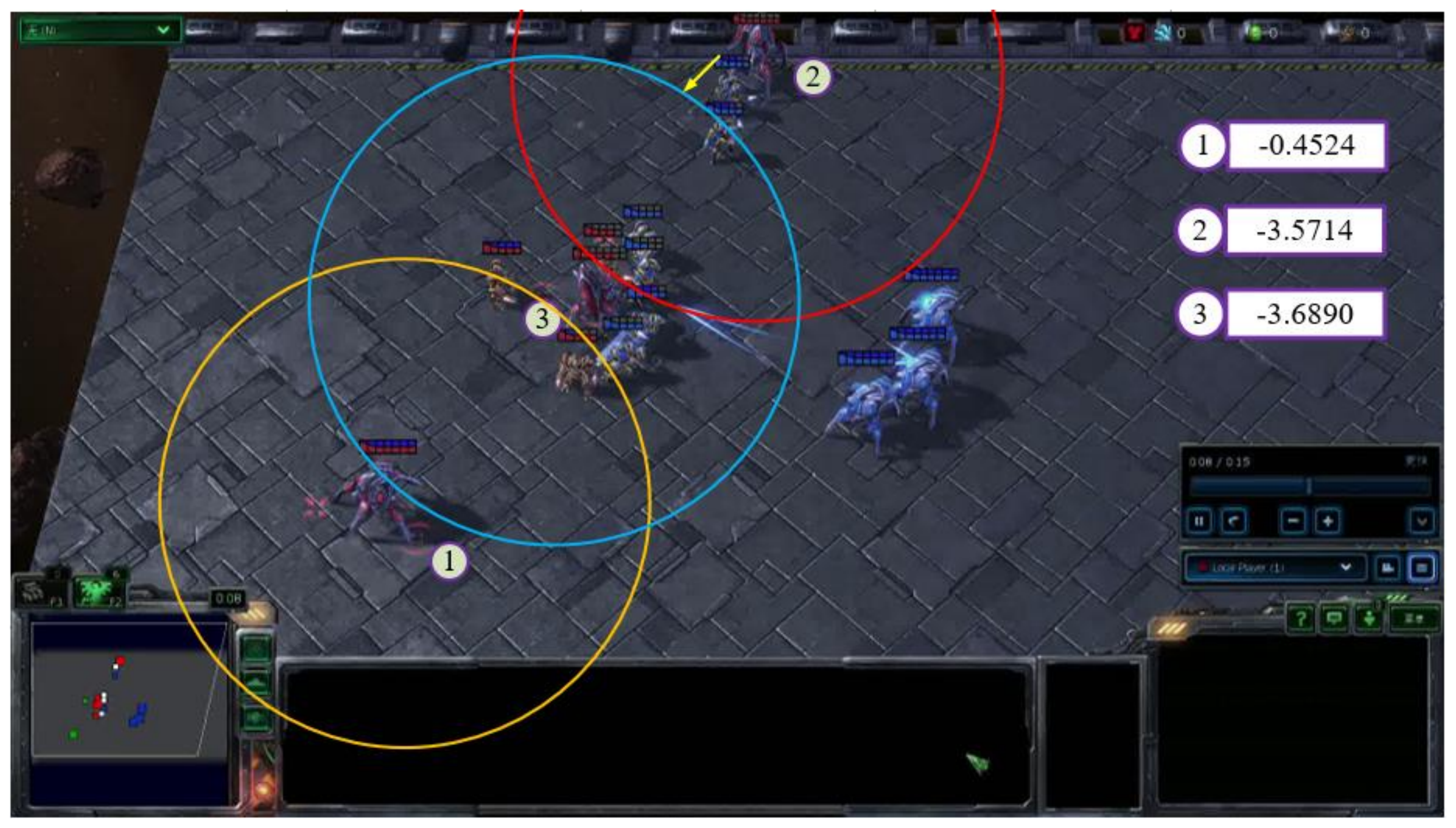}
                \caption{QMIX: $\epsilon$-greedy.}
            \label{fig:qmix_epsilon_greedy_appendix}
            \end{subfigure}
            ~
            \begin{subfigure}[b]{0.23\textwidth}
        	    \centering        	    \includegraphics[width=\textwidth]{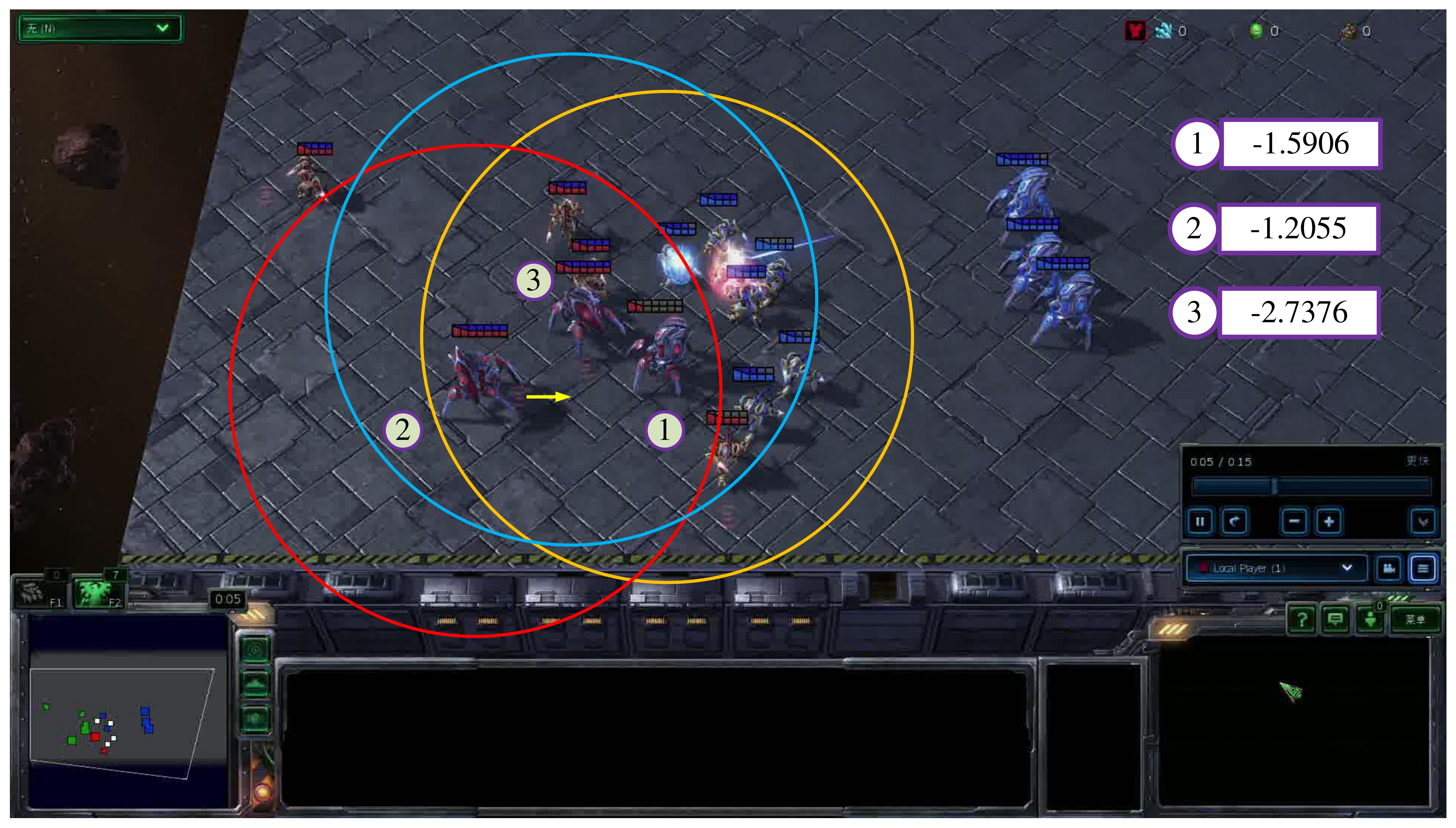}
        	    \caption{QPLEX: $\epsilon$-greedy.}
        	\label{fig:qplex_epsilon_greedy_appendix}
            \end{subfigure}
            
            \begin{subfigure}[b]{0.23\textwidth}
        	    \centering        	    \includegraphics[width=\textwidth]{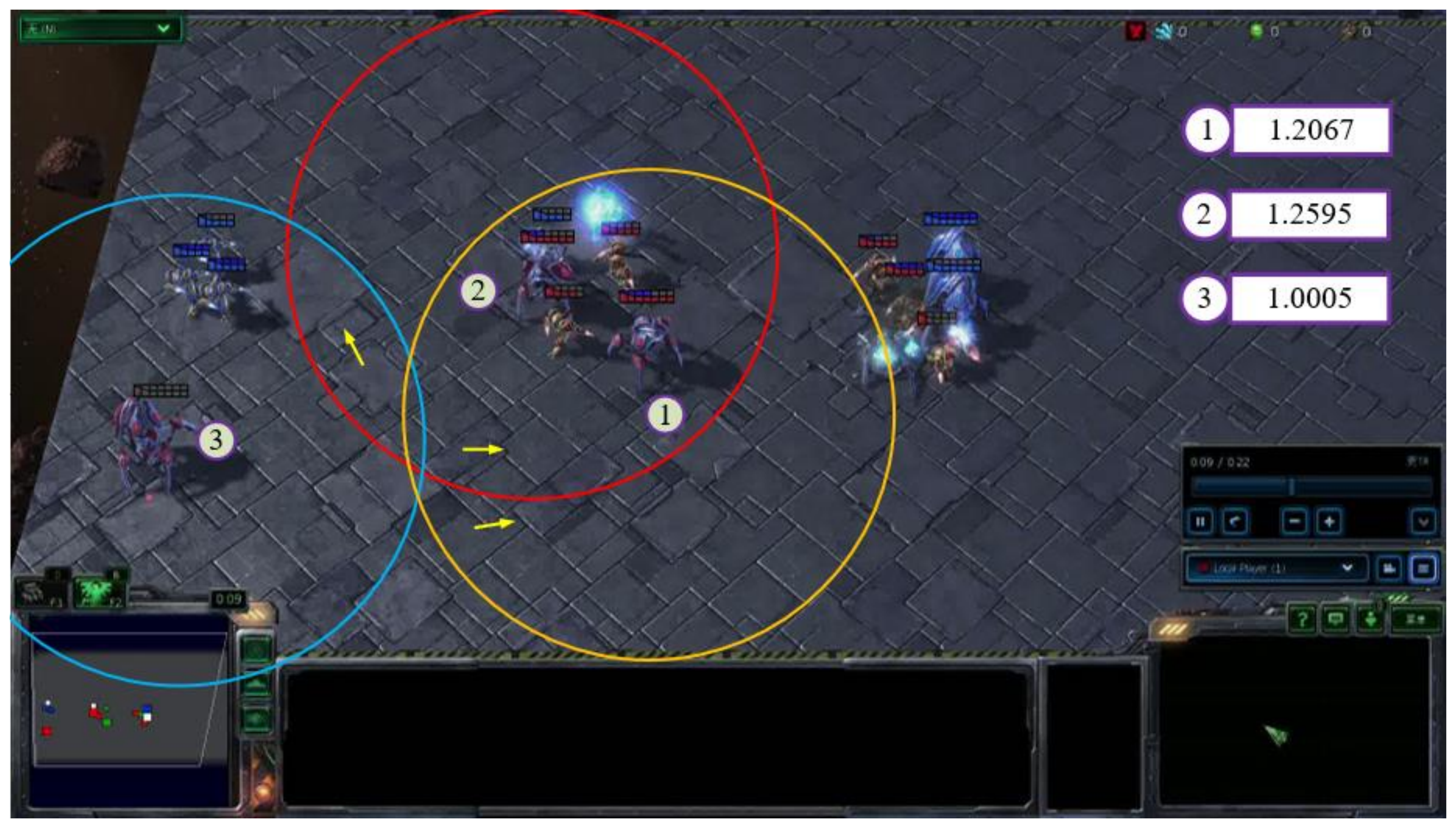}
        	    \caption{SHAQ: greedy.}
        	\label{fig:shaq_greedy_appendix}
            \end{subfigure}
            ~
            \begin{subfigure}[b]{0.23\textwidth}
            	\centering        	    \includegraphics[width=\textwidth]{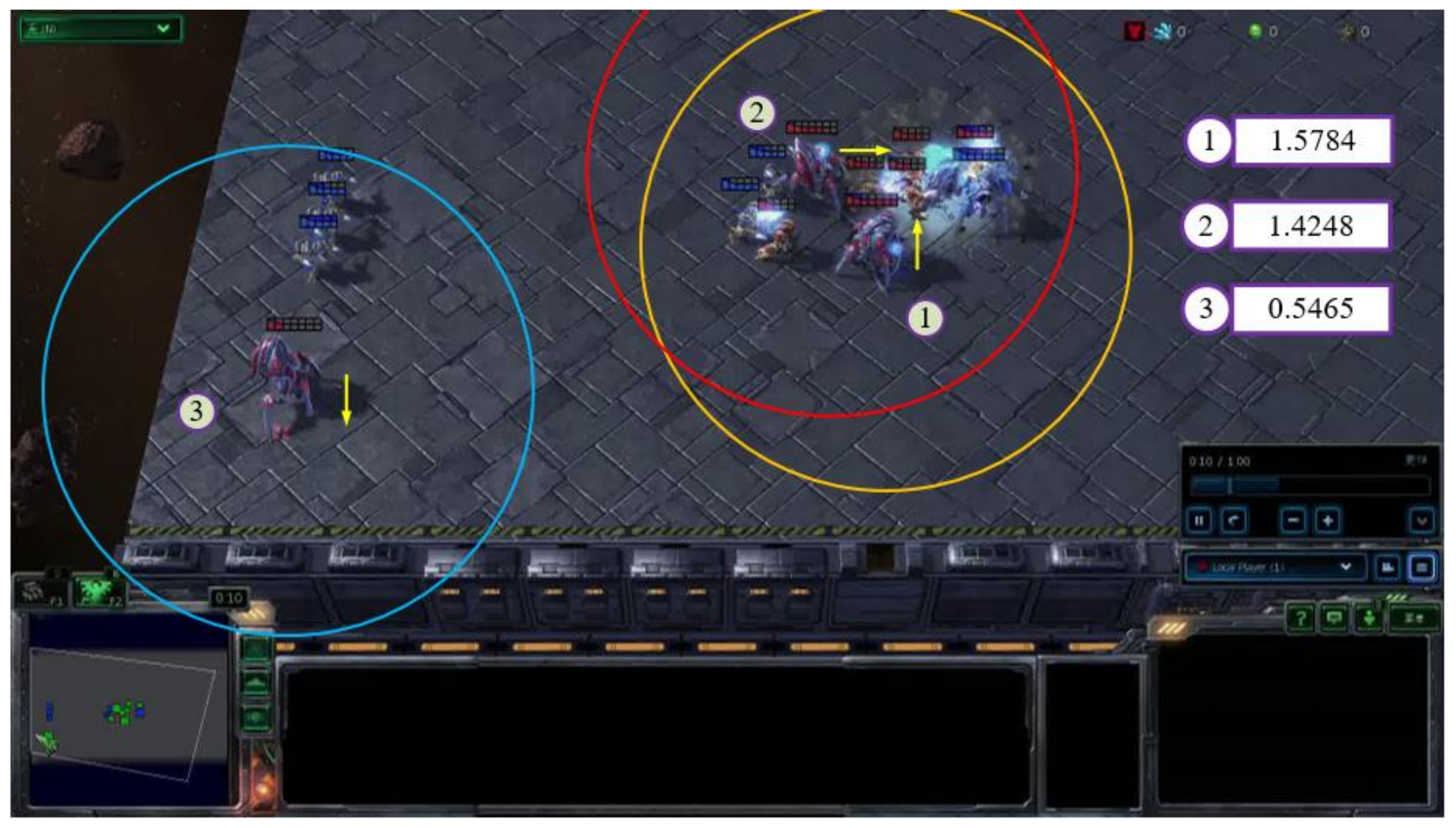}
        	    \caption{VDN: greedy.}
        	\label{fig:vdn_greedy_appendix}
            \end{subfigure}
            ~
            \begin{subfigure}[b]{0.23\textwidth}
        	    \centering        	    \includegraphics[width=\textwidth]{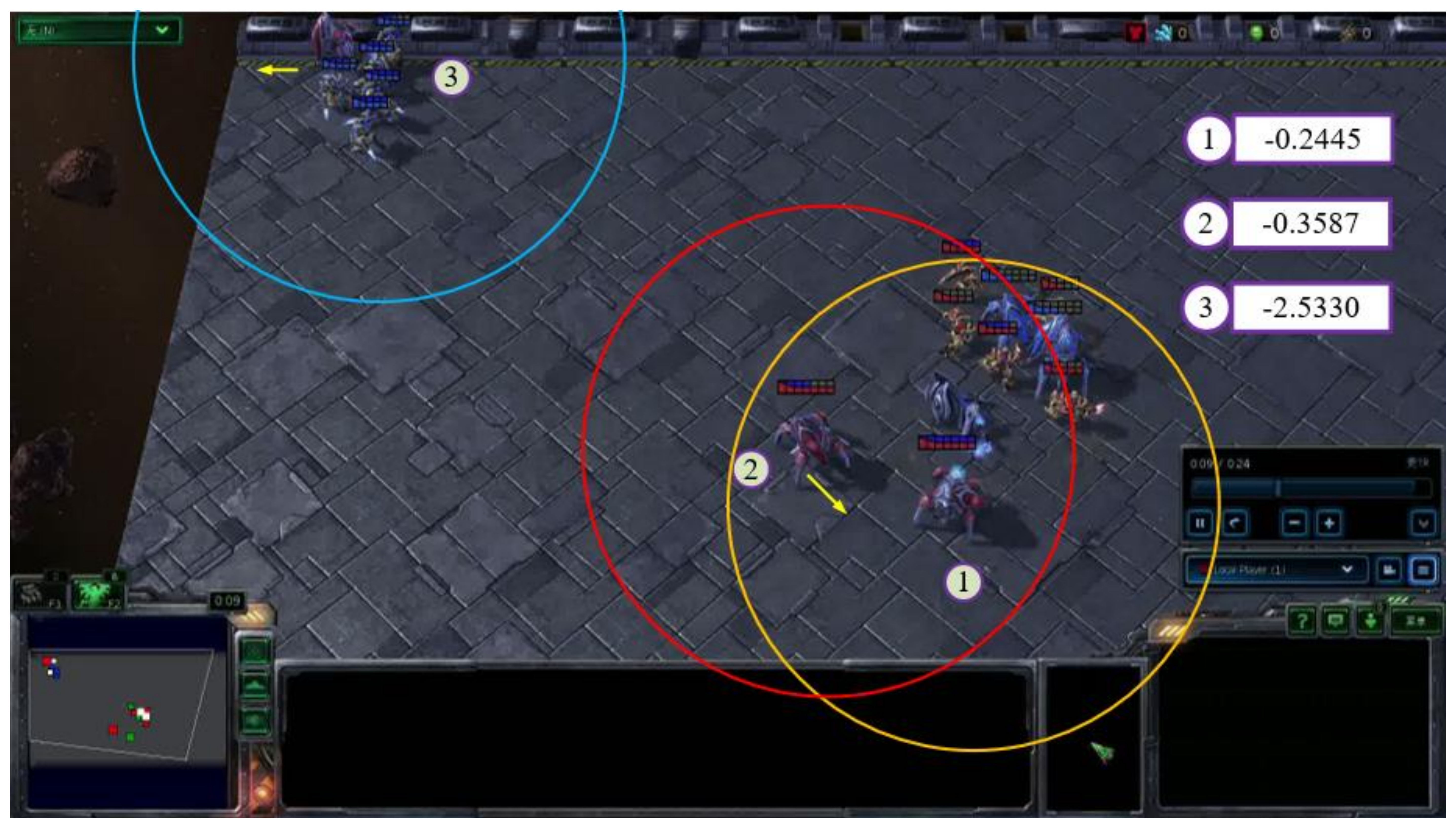}
        	    \caption{QMIX: greedy.}
        	\label{fig:qmix_greedy_appendix}
            \end{subfigure}
            ~
            \begin{subfigure}[b]{0.23\textwidth}
        	    \centering        	    \includegraphics[width=\textwidth]{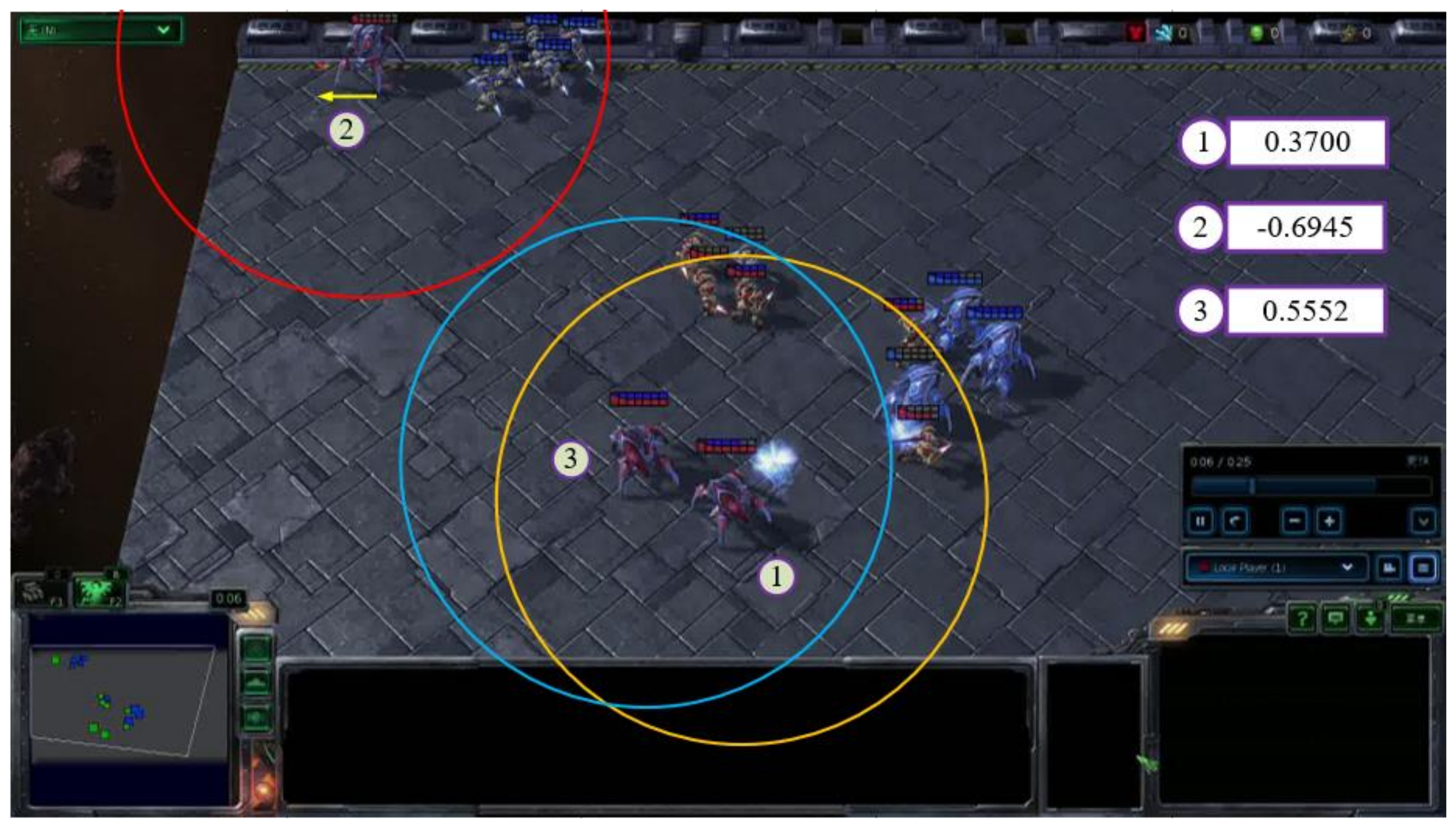}
        	    \caption{QPLEX: greedy.}
        	\label{fig:qplex_greedy_appendix}
            \end{subfigure}
            \caption{Visualisation of the evaluation for SHAQ and baselines on 3s5z\_vs\_3s6z in SMAC: each colored circle is the centered attacking range of a controllable agent (in red), and each agent's factorised Q-value is reported on the right. We mark the direction that each moving agent face by an arrow.}
        \label{fig:study_on_factorised_q_values_appendix}
        \end{figure*}
        To verify our theoretical results more firmly, we show the Q-values on a more complicated scenario in SMAC, i.e. 3s5z\_vs\_3s6z during test in Figure \ref{fig:study_on_factorised_q_values_appendix}. First, we take a look into the optimal actions. SHAQ can still demonstrate the equal credit assignment as we claimed before. Unfortunately, VDN does not explicitly show equal credit assignment. The possible reason is that part of parameters of Q-value are shared between optimal actions and suboptimal actions. Therefore, the parametric effects of the mistakes conducted on suboptimal actions to the optimal actions by VDN during learning may be exaggerated when the number of agents increases. About QMIX and QPLEX, the Q-values of optimal actions are difficult to be interpreted in this complicated scenario. For both algorithms, the agent who is responsible for kiting \footnote{\url{https://en.wikipedia.org/wiki/Glossary_of_video_game_terms}.} (i.e. Agent 3 for QMIX and Agent 2 for QPLEX) receives the lowest credit, however, it is an important role to the team in a combat tactic. Next, we focus on the demonstration of the suboptimal actions. As for SHAQ, Agent 1 and Agent 3 are participating into the battle, so deserving almost the equal credit assignment. However, Agent 2 drops teammates and escapes from the center of battle, so it contributes almost nothing to the team. As a result, it can be seen as a dummy agent and thus obtains the credit near 0. This again agrees with our theoretical analysis. About VDN, it coincidentally receives near 0 for the dummy agent (i.e. Agent 3) in this scenario. Nevertheless, the low credit assignments to the other 2 agents who participate in the battle are difficult to be interpreted. About QMIX, the agents who participate in the battle (i.e. Agent 2 and Agent 3) receive the lowest credits, while the agent (i.e. Agent 1) who escapes from the battle receives the highest credit. For QPLEX, the agents' behaviours are difficult to be interpreted.
        
    \subsection{Extra Experimental Results of Predator-Prey}
    \label{subsec:extra_experimental_results_for_predator_prey}
        In Figure \ref{fig:sqddpg_wqmix_pp_b} and Figure \ref{fig:sqddpg_wqmix_pp_c}, we show the results of W-QMIX with the annealing steps as 50k to support that the poor performance of W-QMIX on Predator-Prey is due to its poor robustness to the increased explorations.
        
        \begin{figure*}[ht!]
            \centering
            \begin{subfigure}[b]{0.45\linewidth}
                \centering
                \includegraphics[width=\textwidth]{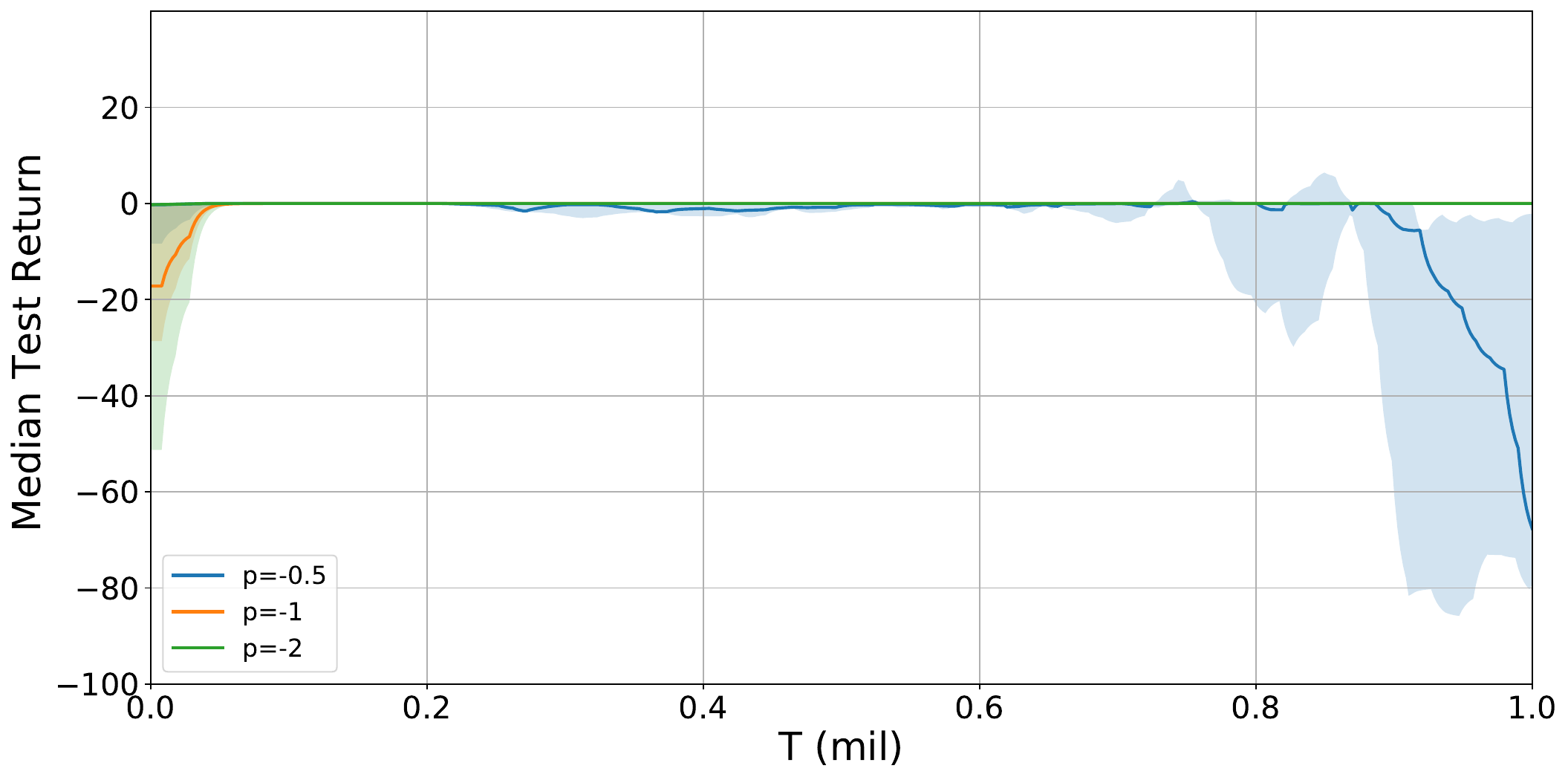}
                \caption{SQDDPG.}
            \label{fig:sqddpg_wqmix_pp_a}
            \end{subfigure}
            ~
            \begin{subfigure}[b]{0.45\linewidth}
                \centering
                \includegraphics[width=\textwidth]{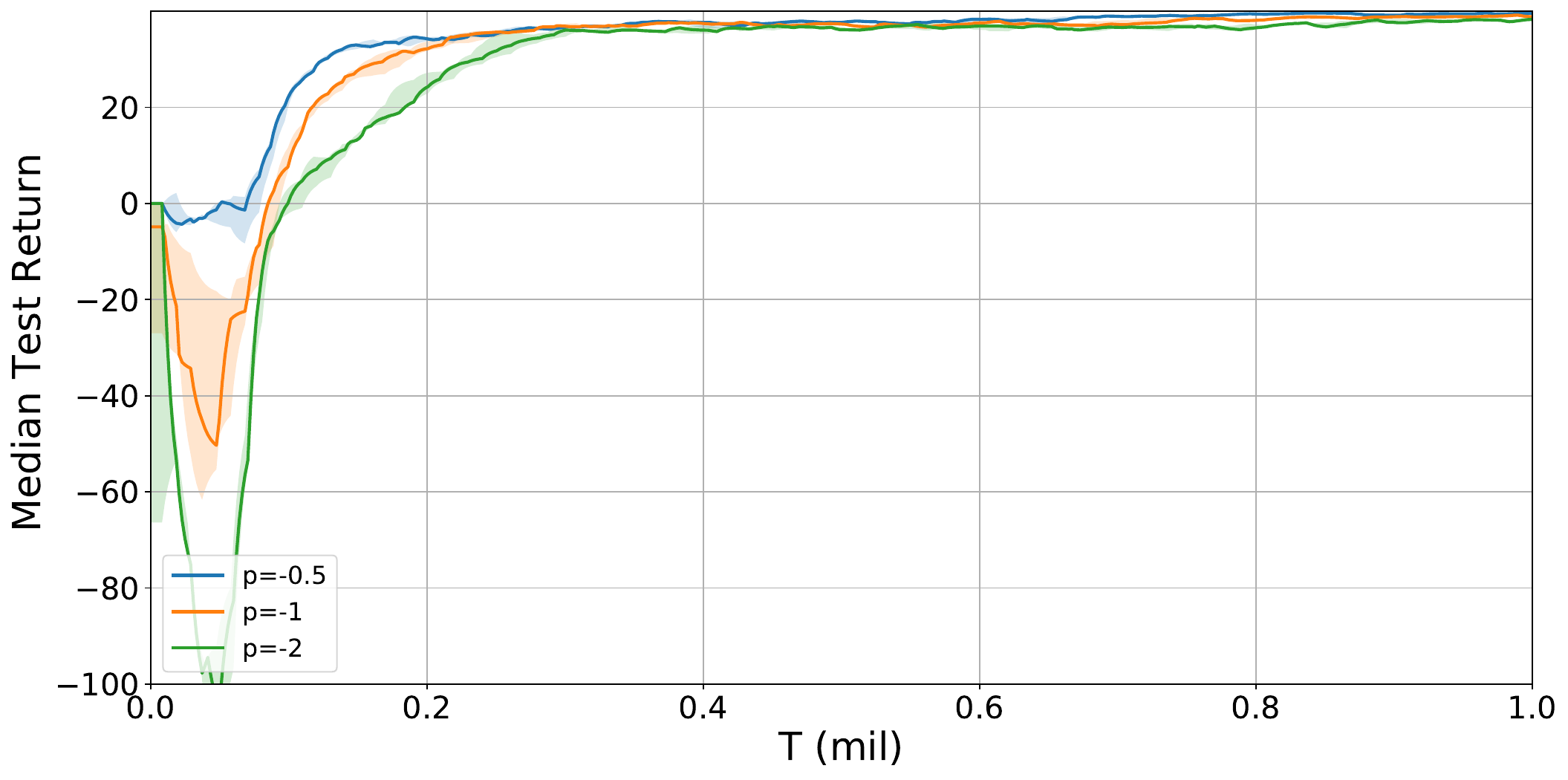}
                \caption{CW-QMIX.}
            \label{fig:sqddpg_wqmix_pp_b}
            \end{subfigure}
            ~
            \begin{subfigure}[b]{0.45\linewidth}
                \centering
                \includegraphics[width=\textwidth]{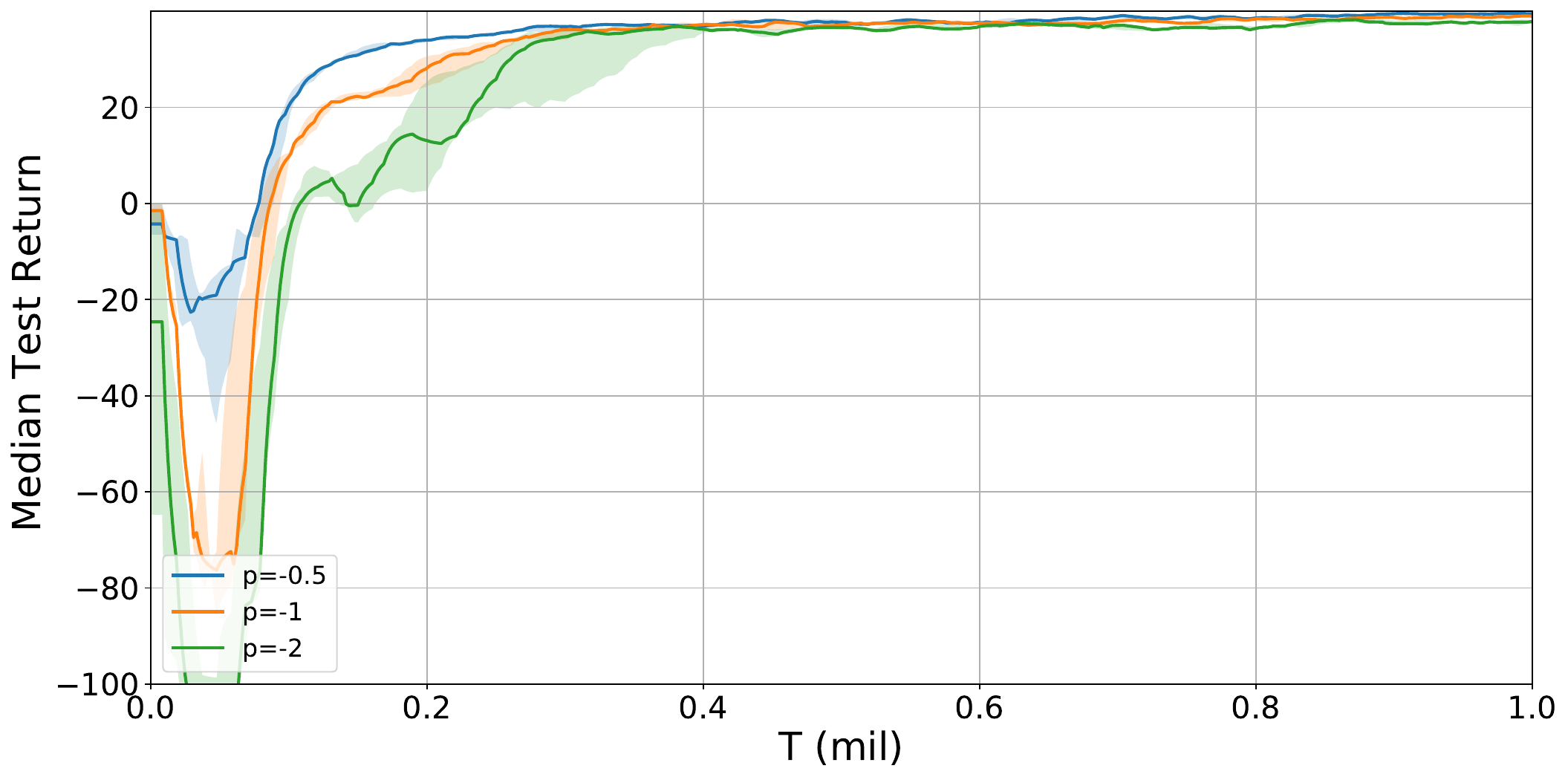}
                \caption{OW-QMIX.}
            \label{fig:sqddpg_wqmix_pp_c}
            \end{subfigure}
            \caption{Median test return for SQDDPG and W-QMIX (including OW-QMIX and CW-QMIX) on Predator-Prey.}
        \label{fig:sqddpg_wqmix_pp}
        \end{figure*}
        
\section{Additional Background}
\label{sec:additional background}
    \subsection{Value Factorisation in MARL}
    \label{subsec:value_factorisation_in_marl}
        Although there are lots of works on value factorisation in MARL, most of them are based on an assumption called Individual-Global-Max (IGM) \citep{SonKKHY19} that is defined in Definition \ref{def:igm}.
        \begin{definition}
            \label{def:igm}
            For a joint Q-value $Q^{\pi}(\mathbf{s}, \mathbf{a})$ with a deterministic policy, if the following equation is assumed to hold such that
            \begin{equation}
                \arg \max_{\mathbf{a}} Q^{\pi}(\mathbf{s}, \mathbf{a}) = \left(\arg\max_{a_{i}} Q_{i}(\mathbf{s}, a_{i}) \right)_{i=1, 2, ..., |\mathcal{N}|},
            \label{eq:igm}
            \end{equation}
            then we say that $\big(\mathit{Q}_{i}(\mathbf{s}, a_{i})\big)_{i=1, 2, ..., |\mathcal{N}|}$ satisfies Individual-Global-Max (IGM) and $\mathit{Q}^{\pi}(\mathbf{s}, \mathbf{a})$ can be factorised by $\big(\mathit{Q}_{i}(\mathbf{s}, a_{i})\big)_{i=1, 2, ..., |\mathcal{N}|}$.
        \end{definition}
        
        There are 3 popular frameworks that are followed by most of works implementing the IGM, called VDN \citep{SunehagLGCZJLSL18}, QMIX \citep{RashidSWFFW18} and QTRAN \citep{SonKKHY19}. 
        
        \textbf{VDN. } VDN linearly factorises a global value function such that
        \begin{equation}
            Q^{\pi}(\mathbf{s}, \mathbf{a}) = \sum_{i \in \mathcal{N}} Q_{i}(\mathbf{s}, a_{i}),
        \label{eq:vdn}
        \end{equation}
        so that Eq.\ref{eq:igm} holds. 
        
        \textbf{QMIX. } QMIX learns a monotonic mixing function $\mathit{f}_{\mathbf{s}}: \mathlarger{\mathlarger{\times}}_{i \in \mathcal{N}} Q_{i}(\mathbf{s}, a_{i}) \times \mathbf{s} \mapsto \mathbb{R}$ to implement the factorisation such that
        \begin{equation}
            Q^{\pi}(\mathbf{s}, \mathbf{a}) = f_{\mathbf{s}} \left( Q_{1}(\mathbf{s}, a_{1}), ..., Q_{\scriptscriptstyle|\mathcal{N}|}(\mathbf{s}, a_{\scriptscriptstyle|\mathcal{N}|}) \right),
        \label{eq:qmix}
        \end{equation}
        so that Eq.\ref{eq:igm} holds. Although QMIX has a richer functional class of factorisation than that of VDN, it meets a problem that $\max_{\mathbf{a}} Q^{\pi}(\mathbf{s}, \mathbf{a}) = \sum_{i \in \mathcal{N}} \max_{a_{i}} Q_{i}(\mathbf{s}, a_{i})$ does not necessarily hold, which may lead to the bias on Q-value estimation \citep{SonKKHY19} and affect the learning process to achieve the optimal joint policy. Theoretically, VDN does not possess the problem discussed above, however, the functional class of the simply additive factorisation is so restrictive \citep{RashidSWFFW18}.
        
        \textbf{QTRAN. } QTRAN gives a sufficient condition for value factorisation that satisfies IGM such that
        \begin{equation}
            \sum_{i \in \mathcal{N}} Q_{i}(\mathbf{s}, a_{i}) - Q^{\pi}(\mathbf{s}, \mathbf{a}) + V^{\pi}(\mathbf{s}) = 
            \begin{cases}
                 0 & \mathbf{a} = \mathbf{\bar{a}}, \\
                 \geq 0 & \mathbf{a} \neq \mathbf{\bar{a}},
            \end{cases}
        \label{eq:qtran}
        \end{equation}
        wherein
        \begin{equation*}
            V^{\pi}(\mathbf{s}) = \max_{\mathbf{a}} Q^{\pi}(\mathbf{s}, \mathbf{a}) - \sum_{i \in \mathcal{N}} Q_{i}(\mathbf{s}, \bar{a}_{i}).
        \end{equation*}
        In Eq.\ref{eq:qtran}, $\mathbf{a} = \mathlarger{\mathlarger{\times}}_{\scriptscriptstyle i \in \mathcal{N}} a_{i}$; and $\mathbf{\bar{a}} = \mathlarger{\mathlarger{\times}}_{\scriptscriptstyle i \in \mathcal{N}} \bar{a}_{i}$ where $\bar{a}_{i} = \arg\max_{a_{i}} Q_{i}(\mathbf{s}, a_{i})$ because of IGM. Additionally, \citet{SonKKHY19} showed that the above condition also holds for affine transformation on $Q_{i}, \forall i \in \mathcal{N}$ such that $w_{i} Q_{i} + b_{i}$. For this reason, an additional transformed global Q-value such that $Q^{\pi'}(\mathbf{s}, \mathbf{a}) = \sum_{i \in \mathcal{N}} Q_{i}(\mathbf{s}, a_{i})$ by setting $w_{i} = 1$ and $\sum_{i \in \mathcal{N}} b_{i} = 0$ is used to represent the value factorisation. It is forced to fit the above condition with a learned global Q-value $Q^{\pi}(\mathbf{s}, \mathbf{a})$ and $V^{\pi}(\mathbf{s})$. \citet{SonKKHY19} argued that finding the factorisation of $Q^{\pi'}(\mathbf{s}, \mathbf{a})$ is equivalent to finding $[Q_{i}]_{i \in \mathcal{N}}$ to satisfy IGM. Therefore, a value factorisation for obtaining decentralised Q-values that satisfies IGM is found.
    
    \subsection{Interpretation of Definitions in Markov Convex Game}
    \label{subsec:interpretation_of_mag_definition}
        % \textbf{Condition of Markov Convex Game. } 
        \subsubsection{Condition of Markov Convex Game}
        \label{subsubsec:condition_of_markov_convex_game}
        Eq.\ref{eq:mcg_assumption} implies a fact existing in most real-life scenarios that a larger coalition results in the greater payoff distributions (see Remark \ref{rmk:credit_assignment}) and therefore the greater optimal global value in cooperation, which directly increases the agents' incentives for joining the grand coalition. This can be seen as an insight into the global reward game with value factorisation. This interpretation for the dynamic scenario in this paper is consistent with the static scenario given by \cite{shapley1971cores}, also known as the snowball effect.
        
        \begin{remark}
        \label{rmk:credit_assignment}
            Suppose there are two coalitions $\mathcal{T}, \mathcal{S}$ such that $\mathcal{T} \ \mathlarger{\mathlarger{\mathlarger{\subset}}} \ \mathcal{S} \ \mathlarger{\mathlarger{\mathlarger{\subset}}} \ \mathcal{N}$ and an agent $i \in \mathcal{N} \backslash \mathcal{S}$. For convenience, we denote $\mathcal{C}_{1} = \mathcal{T} \ \mathlarger{\mathlarger{\mathlarger{\cup}}} \ \{i\}$ and $\mathcal{C}_{2} = \mathcal{S}$, and thus $\mathcal{C}_{\scriptscriptstyle\cap} = \mathcal{C}_{1} \ \mathlarger{\mathlarger{\mathlarger{\cap}}} \ \mathcal{C}_{2} = (\mathcal{T} \ \mathlarger{\mathlarger{\mathlarger{\cup}}} \ \{i\}) \ \mathlarger{\mathlarger{\mathlarger{\cap}}} \ \mathcal{S} = \mathcal{T}$ and $\mathcal{C}_{\scriptscriptstyle\cup} = \mathcal{C}_{1} \ \mathlarger{\mathlarger{\mathlarger{\cup}}} \ \mathcal{C}_{2} = (\mathcal{T} \ \mathlarger{\mathlarger{\mathlarger{\cup}}} \ \{i\}) \ \mathlarger{\mathlarger{\mathlarger{\cup}}} \ \mathcal{S} = \mathcal{S} \ \mathlarger{\mathlarger{\mathlarger{\cup}}} \ \{i\}$. By Eq.\ref{eq:mcg_assumption}, we can write the following inequalities such that
            \begin{equation}
                \begin{split}
                    \max_{\pi_{\mathcal{S} \cup \{i\}}} V^{\pi_{\mathcal{S} \cup \{i\}}}(\mathbf{s}) - \max_{\pi_{\mathcal{S}}} V^{\pi_{\mathcal{S}}}(\mathbf{s}) &= \max_{\pi_{\mathcal{\mathcal{C}_{\cup}}}} V^{\pi_{\mathcal{C}_{\cup}}}(\mathbf{s}) - \max_{\pi_{\mathcal{C}_{2}}} V^{\pi_{\mathcal{C}_{2}}}(\mathbf{s}) \\
                    &\geq \max_{\pi_{\mathcal{C}_{1}}} V^{\pi_{\mathcal{C}_{1}}}(\mathbf{s}) - \max_{\pi_{\mathcal{\mathcal{C}_{\cap}}}} V^{\pi_{\mathcal{C}_{\cap}}}(\mathbf{s}) \\
                    &= \max_{\pi_{\mathcal{T} \cup \{i\}}} V^{\pi_{\mathcal{T} \cup \{i\}}}(\mathbf{s}) - \max_{\pi_{\mathcal{T}}} V^{\pi_{\mathcal{T}}}(\mathbf{s}).
                \end{split}
            \end{equation}
        It is intuitive to see that each agent can gain more payoffs if the size of the coalition grows.
        \end{remark}
        
        \subsubsection{Insight into Markov Core}
        \label{subsubsec:insight_into_the_core_appendix}
        
            In Eq.\ref{eq:epsilon_core}, $\big( \max_{\pi_{i}} x_{i}(\mathbf{s}) \big)_{i \in \mathcal{N}}$ indicates the payoff distribution scheme for the grand coalition. $\max_{\pi_{\mathcal{C}}} x(\mathbf{s}|\mathcal{C}) = \sum_{i \in \mathcal{C}} \max_{\pi_{i}} x_{i}(\mathbf{s})$ indicates the sum of payoff distributions (for the grand coalition) of the agents who is under evaluation within coalition $\mathcal{C}$. By Remark \ref{rmk:optimal_global_value} and \ref{rmk:example_core}, it is obvious that Eq.\ref{eq:epsilon_core} indicates that the optimal global value obtained by the payoff distribution scheme in the Markov core (under the grand coalition) is no less than that they can achieve with other coalition structures, which is called the maximal social welfare in the prior work \citep{Wang_2020}. It can be regarded as an intuitive interpretation of Markov core (under the grand coalition). 

            \begin{remark}
            \label{rmk:optimal_global_value}
                Suppose that a coalition structure is written as $\mathcal{CS} = \{ \mathcal{C}_{1}, \mathcal{C}_{2}, ..., \mathcal{C}_{n}\}$, \text{where} $\bigcup_{k = 1}^{n} \mathcal{C}_{k} = \mathcal{N}$ \text{and each $\mathcal{C}_{k}$ is mutually exclusive (i.e., $\mathcal{C}_{m} \ \mathlarger{\mathlarger{\mathlarger{\cap}}} \ \mathcal{C}_{n} = \emptyset, \text{if } m \neq n$)}, the optimal global value with respect to $\mathcal{CS}$ is represented as $\max_{\pi} V^{\pi}(\mathbf{s}) = \sum_{k = 1}^{n} \max_{\pi_{\scriptscriptstyle\mathcal{C}_{k}}} V^{\pi_{\scriptscriptstyle\mathcal{C}_{k}}}(\textbf{s})$.
            \end{remark}
            
            \begin{remark}
            \label{rmk:example_core}
                Suppose that the condition of Markov core holds for the grand coalition (i.e., $\mathcal{N}$) with some payoff distribution scheme $\big( \max_{\pi_{i}} x_{i}(\mathbf{s}) \big)_{i \in \mathcal{N}}$. For an arbitrary coalition structure $\mathcal{CS} = \{ \mathcal{C}_{1}, \mathcal{C}_{2}, ..., \mathcal{C}_{n}\}$ other than $\{ \mathcal{N} \}$, where $\bigcup_{k = 1}^{n} \mathcal{C}_{k} = \mathcal{N}$ and each $\mathcal{C}_{k}$ is mutually exclusive, we can write down the equation such that 
                \begin{equation}
                    \max_{\pi_{\mathcal{C}_{k}}} x(\mathbf{s} | \mathcal{C}_{k}) \geq \max_{\pi_{\mathcal{C}_{k}}} V^{\pi_{\mathcal{C}_{k}}}(\mathbf{s}), \ \ \forall \mathcal{C}_{k} \in \mathcal{CS}.
                \label{eq:example_core}
                \end{equation}
                If we sum up Eq.\ref{eq:example_core} for all coalitions in $\mathcal{CS}$, we can get the following equation such that
                \begin{equation}
                    \sum_{\mathcal{C}_{k} \in \mathcal{CS}} \max_{\pi_{\mathcal{C}_{k}}} x(\mathbf{s} | \mathcal{C}_{k}) \geq \sum_{\mathcal{C}_{k} \in \mathcal{CS}} \max_{\pi_{\mathcal{C}_{k}}} V^{\pi_{\mathcal{C}_{k}}}.
                \label{eq:example_core_sum}
                \end{equation}
                Recall that $\max_{\pi_{\mathcal{C}_{k}}} x(\mathbf{s} | \mathcal{C}_{k}) = \sum_{j \in \mathcal{C}_{k}} \max_{\pi_{i}} x_{i}(\mathbf{s})$. The LHS of Eq.\ref{eq:example_core_sum} can be written as follows:
                \begin{equation}
                    \sum_{\mathcal{C}_{k} \in \mathcal{CS}} \max_{\pi_{\mathcal{C}_{k}}} x(\mathbf{s} | \mathcal{C}_{k}) = \sum_{\mathcal{C}_{k} \in \mathcal{CS}} \sum_{j \in \mathcal{C}_{k}} \max_{\pi_{j}} x_{j}(\mathbf{s}) = \sum_{j \in \mathcal{N}} \max_{\pi_{j}} x_{j}(\mathbf{s}) = \max_{\pi} \hat{V}^{\pi}(\mathbf{s}),
                \label{eq:lhs}
                \end{equation}
                wherein $\max_{\pi} \hat{V}^{\pi}(\mathbf{s})$ is denoted as the optimal global value obtained by the payoff distribution scheme in the Markov core. By the result in Remark \ref{rmk:optimal_global_value}, the RHS of Eq.\ref{eq:example_core_sum} can be written as follows:
                \begin{equation}
                    \sum_{\mathcal{C}_{k} \in \mathcal{CS}} \max_{\pi_{\mathcal{C}_{k}}} V^{\pi_{\mathcal{C}_{k}}} = \max_{\pi} V^{\pi}(\mathbf{s}),
                \label{eq:rhs}
                \end{equation}
                where $\max_{\pi} V^{\pi}(\mathbf{s})$ is the optimal global value obtained by an arbitrary coalition structure other than $\{\mathcal{N}\}$. By inserting Eq.\ref{eq:lhs} and \ref{eq:rhs} into Eq.\ref{eq:example_core_sum}, we can get that
                \begin{equation*}
                    \max_{\pi} \hat{V}^{\pi}(\mathbf{s}) \geq \max_{\pi} V^{\pi}(\mathbf{s}).
                \end{equation*}
                Therefore, we have shown that the solution in the Markov core under the grand coalition is equivalent to the optimal global value.
            \end{remark}

\section{Complete Mathematical Proofs}
\label{sec:complete_mathematical_proofs}
    
    \subsection{Assumptions}
    \label{subsec:assumptions}
        
        \begin{assumption}
        \label{assm:finite_markov_convex_game}
            In this paper, we consider a finite Markov convex game, wherein both the state space and the joint action space are finite.
        \end{assumption}
        
        \begin{assumption}
        \label{assm:agent_policy_assumption}
            For the ease of analysis, in this paper we assume that each agent's policy will not be affected by the coalition formation. In other words, each agent's policy is regarded as its inherent feature, invariant throughout the interaction with other agents (e.g. joining a coalition).
        \end{assumption}
        
        \begin{assumption}
        \label{assm:assumption_for_joint_policy_factorisation}
            Any coalition policy can be factorised to a permutation of decentralised (i.e. disjoint) policies, i.e., $\pi_{\scriptscriptstyle\mathcal{C}} = \mathlarger{\mathlarger{\times}}_{\scriptscriptstyle i \in \mathcal{C}} \pi_{i}$, where $\pi_{i}$ is agent $i$'s policy. Each $\pi_{\scriptscriptstyle\mathcal{C}}$ uniquely corresponds to a $V^{\pi_{\mathcal{C}}}(\mathbf{s})$ as a characteristic function (i.e. a set-valued function).
        \end{assumption}
        
        \begin{assumption}
        \label{assm:dummy_agent}
            If an agent $\mathit{i}$ is a dummy for an arbitrary state $\mathbf{s} \in \mathcal{S}$, it will not provide any contribution to any coalition $\mathcal{C}_{i} \ \mathlarger{\mathlarger{\mathlarger{\subseteq}}} \ \mathcal{N} \backslash \{i\}$ such that $V^{\pi_{\mathcal{C}}}(\mathbf{s}) = V^{\pi_{\mathcal{C} \cup \{i\}}}(\mathbf{s})$. Additionally, no members in coalition $\mathcal{C}_{i}$ will react in different manners after agent $\mathit{i}$ joins.
        \end{assumption}
        
        \begin{assumption}
            If agents $i$ and $j$ are symmetric for an arbitrary state $\mathbf{s} \in \mathcal{S}$, $V^{\pi_{\mathcal{C} \cup \{i\}}}(\mathbf{s}) = V^{\pi_{\mathcal{C} \cup \{j\}}}(\mathbf{s})$ to any coalitions $\mathcal{C} \ \mathlarger{\mathlarger{\mathlarger{\subseteq}}} \ \mathcal{N} \backslash \{i, j\}$. Literally, the contributions of $i$ and $j$ are equal to any coalition $\mathcal{C}$.
        \label{assm:symmetry}
        \end{assumption}
        
        \begin{assumption}
        \label{assm:max_shapley_value}
            For any agent $\mathit{i} \in \mathcal{N}$ and any $\mathbf{s} \in \mathcal{S}$, its optimal Markov Shapley value denoted as $\max_{\pi_{i}} V_{i}^{\phi}(\mathbf{s})$ satisfies the following equation such that
            \begin{equation*}
                \max_{\pi_{i}} V_{i}^{\phi}(\mathbf{s}) = \sum_{\mathcal{C}_{i} \ \mathlarger{\mathlarger{\subseteq}} \ \mathcal{N} \backslash \{i\} } \frac{|\mathcal{C}_{i}|!(|\mathcal{N}|-|\mathcal{C}_{i}|-1)!}{|\mathcal{N}|!} \cdot \max_{\pi_{i}} \Phi_{i}(\mathbf{s} | \mathcal{C}_{i}),
            \end{equation*}
            where $\pi_{i}$ is agent $\mathit{i}$'s policy.
        \end{assumption}
        
        Assumption \ref{assm:finite_markov_convex_game} is the common assumption in the Markov decision process for the ease of analysis. Assumption \ref{assm:agent_policy_assumption} is a technical assumption for the ease of analysis. Assumption \ref{assm:assumption_for_joint_policy_factorisation} is natural to hold given the chain rule in probability theory, the independence of each agent's policy and the definition of value function in reinforcement learning. Assumption \ref{assm:dummy_agent} and \ref{assm:symmetry} directly inherit the definitions from cooperative game theory \citep{chalkiadakis2011computational}. Assumption \ref{assm:max_shapley_value} inherits the definition from Shapley value \citep{shapley1953value} with extra consideration of agent $i$'s policy, an underlying condition of which is that the maximizer (i.e., $\pi_{i}$) of each $\Phi_{i}(\mathbf{s} \mid \mathcal{C}_{i}) \in \left\{ \Phi_{i}(\mathbf{s} | \mathcal{C}_{i}) | \mathcal{C}_{i} \ \mathlarger{\mathlarger{\mathlarger{\subseteq}}} \ \mathcal{N} \backslash \{i\} \right\}$ needs to be identical, for any $\mathbf{s} \in \mathcal{S}$. In other words, it implies that different permutations correspond to different long-term rewards probably encoding some unexpected events (i.e., each permutation maps to a marginal contribution of agent $i$), but with the same optimal policy as solutions, which is a sufficient condition for Assumption \ref{assm:agent_policy_assumption}. Thereby, learning through Markov Shapley value is primarily for fair credit assignments, with no changes to each agent's optimal policy. We would argue for the existence of this condition by Example \ref{exp:max_shapley_value}.
        
        \begin{example}
        \label{exp:max_shapley_value}
            Suppose that there are two agents in total (i.e., $|\mathcal{N}|=2$), and we consider an arbitrary agent $i$ belonging to $\mathcal{N}$ whose action set is defined as $\mathcal{A}_{i} = \{ 0, 0.15, 0.25 \}$. Therefore, there are only two intermediate coalitions for agent $i$ to join and therefore two marginal contributions. To ease life, we only discuss a two-stage scenario and the result can be naturally extended to long-horizon scenarios. Agent $i$'s policy can be expressed as a sequence of actions such that $\pi_{i} = \left\langle a_{i}^{0}, a_{i}^{1} \right\rangle$. The set of marginal contributions of agent $i$ is supposed to be $\left\{ \Phi_{i}(\mathbf{s} | \{-i\}) := - (a_{i}^{0} + a_{i}^{1} - 0.5)^{2} + 1 + ||\mathbf{s}||_{2}, \Phi_{i}(\mathbf{s} | \emptyset) := \sin (a_{i}^{0} + a_{i}^{1}) + ||\mathbf{s}||_{2} \right\}$. Since $V_{i}^{\phi}(\mathbf{s}) = \frac{1}{2} \left( \Phi_{i}(\mathbf{s} | \{-i\}) + \Phi_{i}(\mathbf{s} | \emptyset) \right)$, it is easy to observe that Assumption \ref{assm:max_shapley_value} holds.
        \end{example}
        
    \subsection{Mathematical Proofs of The Marginal Contribution}
    \label{subsec:preliminary_theoretical_results}
        \setcounter{proposition}{3}
        \begin{proposition}
        \label{prop:condition_coalition_marginal_contribution}
            $\forall \mathcal{C}_{i} \ \mathlarger{\mathlarger{\mathlarger{\subseteq}}} \ \mathcal{N}$ and $\forall \mathbf{s} \in \mathcal{S}$, Eq.\ref{eq:mcg_assumption} is satisfied if and only if $\max_{\pi_{i}} \Phi_{i}(\mathbf{s}|\mathcal{C}_{i}) \geq 0$.
        \end{proposition}
        
        \begin{proof}
            $\forall \mathcal{C}_{i} \ \mathlarger{\mathlarger{\mathlarger{\subseteq}}} \ \mathcal{N}$ and $\forall \mathbf{s} \in \mathcal{S}$, given that Eq.\ref{eq:mcg_assumption} is satisfied, with the fact that $\mathcal{C}_{i} \ \mathlarger{\mathlarger{\mathlarger{\cap}}} \ \{i\} = \emptyset$ we can get the equation such that
            \begin{equation}
                \begin{split}
                    \max_{\pi_{\mathcal{C}_{i} \cup \{i\}}} V^{\pi_{\mathcal{C}_{i} \cup \{i\}}}(\mathbf{s}) \geq \max_{\pi_{\mathcal{C}_{i}}} V^{\pi_{\mathcal{C}_{i}}}(\mathbf{s})
                    + \max_{\pi_{i}} V^{\pi_{i}}(\mathbf{s}).
                \end{split}
            \label{eq:condition_1}
            \end{equation}
            Since $\max_{\pi_{i}} V^{\pi_{i}}(\mathbf{s}) \geq 0$ by the definition in Markov convex game, we can easily get the equation such that
            \begin{equation}
                \begin{split}
                    \max_{\pi_{\mathcal{C}_{i} \cup \{i\}}} V^{\pi_{\mathcal{C}_{i} \cup \{i\}}}(\mathbf{s}) - \max_{\pi_{\mathcal{C}_{i}}} V^{\pi_{\mathcal{C}_{i}}}(\mathbf{s}) \geq 0.
                \end{split}
            \label{eq:condition_2}
            \end{equation}
            Therefore, we can get the equation such that
            \begin{equation}
                \begin{split}
                    \max_{\pi_{i}} \Phi_{i}(\mathbf{s}|\mathcal{C}_{i}) \geq 0.
                \end{split}
            \label{eq:condition_3}
            \end{equation}
            With the same conditions, the reverse direction of proof apparently holds by going through from Eq.\ref{eq:condition_3} to \ref{eq:condition_1}. By Definition \ref{def:shapley_value}, Eq.\ref{eq:condition_3} determines the range of Markov Shapley value, which is consistent with the range of the coalition value defined in Section \ref{sec:markov_convex_game}.
        \end{proof}
        
        \setcounter{proposition}{4}
        \begin{proposition}
        \label{prop:marginal_contribution_equal_value_factorisation}
            In Markov convex game with the grand coalition, marginal contribution satisfies the efficiency property: $\max_{\pi} V^{\pi}(\mathbf{s}) = \sum_{i \in \mathcal{N}} \max_{\pi_{i}} \Phi_{i}(\mathbf{s}|\mathcal{C}_{i})$.
        \end{proposition}
        
        \begin{proof}
            For any $\mathcal{C}_{i} \ \mathlarger{\mathlarger{\mathlarger{\subseteq}}} \ \mathcal{N} \backslash \{i\}$ and $\mathit{i} \in \mathcal{N}$, according to Eq.\ref{eq:marginal_contribution_v} we can get the equation such that
            \begin{equation}
                \begin{split}
                    \max_{\pi_{i}} \Phi_{i}(\mathbf{s} | \mathcal{C}_{i}) = \max_{\pi_{\mathcal{C}_{i} \cup \{i\}}} V^{\pi_{\mathcal{C}_{i} \cup \{i\}}}(\mathbf{s})
                    - \max_{\pi_{\mathcal{C}_{i}}} V^{\pi_{\mathcal{C}_{i}}}(\mathbf{s}),
                \end{split}
            \label{eq:coalition_marginal_contribution_max}
            \end{equation}
            where $\max_{\pi_{\mathcal{C}_{i} \cup \{i\}}} V^{\pi_{\mathcal{C}_{i}}}(\mathbf{s})=\max_{\pi_{\mathcal{C}_{i}}} V^{\pi_{\mathcal{C}_{i}}}(\mathbf{s})$, since the decision of agent $\mathit{i}$ will not affect the value of $\mathcal{C}_{i}$ (i.e., the coalition excluding agent $\mathit{i}$).
            Given the definition that $V^{\pi_{\emptyset}}(\mathbf{s})=0$ and the result from Eq.\ref{eq:coalition_marginal_contribution_max}, by Assumption \ref{assm:assumption_for_joint_policy_factorisation} we can get the equations such that
            \begin{align}
                &\quad \ \max_{\pi} V^{\pi}(\mathbf{s}) \nonumber \\ 
                &= \max_{\pi_{\{j_{1}\}}} V^{\pi_{\{j_{1}\}}}(\mathbf{s}) - \max_{\pi_{\emptyset}} V^{\pi_{\emptyset}}(\mathbf{s}) \nonumber \\
                &+ \max_{\pi_{\{j_{1}, j_{2}\}}} V^{\pi_{\{j_{1}\}}}(\mathbf{s}) - \max_{\pi_{\{j_{1}\}}} V^{\pi_{\{j_{1}\}}}(\mathbf{s}) \nonumber \\
                &+ \qquad \qquad \qquad \qquad \vdots \nonumber \\
                &+ \max_{\pi} V^{\pi}(\mathbf{s}) - \max_{\pi_{\mathcal{N} \backslash \{j_{n}\}}} V^{\pi_{\mathcal{N} \backslash \{j_{n}\}}}(\mathbf{s}) \nonumber \\
                &= \sum_{i \in \mathcal{N}} \max_{\pi_{i}} \Phi_{i}(\mathbf{s}|\mathcal{C}_{i}).
            \end{align}
        \end{proof}

        \begin{lemma}
        \label{lemm:condition_coalition_marginal_contribution}
            The optimal marginal contribution is a solution in the Markov core under Markov convex game with the grand coalition.
        \end{lemma}
        
        \begin{proof}
            The complete proof is as follows.
            
            Firstly, if we would like to prove that the optimal marginal contribution is a payoff distribution scheme in the Markov core (with the grand coalition), we just need to prove that for any intermediate coalition $\mathcal{C} \ \mathlarger{\mathlarger{\mathlarger{\subseteq}}} \ \mathcal{N}$, the following condition is satisfied such that
            \begin{equation}
                \max_{\pi_{\mathcal{C}}} \Phi(\mathbf{s}|\mathcal{C}) \geq \max_{\pi_{\mathcal{C}}} V^{\pi_{\mathcal{C}}}(\mathbf{s}), \ \forall \mathbf{s} \in \mathcal{S},
            \label{eq:appendix_core}
            \end{equation}
            where $\max_{\pi_{\mathcal{C}}} \Phi(\mathbf{s}|\mathcal{C}) = \sum_{i \in \mathcal{C}} \max_{\pi_{i}} \Phi_{i}(\mathbf{s}|\mathcal{C}_{i})$. 
            
            Suppose for the sake of contradiction that we have $\max_{\pi_{\mathcal{C}}} \Phi(\mathbf{s}|\mathcal{C}) < \max_{\pi_{\mathcal{C}}} V^{\pi_{\mathcal{C}}}(\mathbf{s})$ for some $\mathbf{s} \in \mathcal{S}$ and some coalition $\mathcal{C} = \{ j_{1}, j_{2}, ..., j_{|\mathcal{C}|} \} \ \mathlarger{\mathlarger{\mathlarger{\subseteq}}} \ \mathcal{N}$, where $\mathit{j}_{n} \in \mathcal{C}$ and $n \in \{1, 2, ..., |\mathcal{C}|\}$. 
            We can assume without the loss of generality that the coalition $\mathcal{C}$ is generated by the permutation $\langle j_{1}, j_{2}, ..., j_{|\mathcal{C}|} \rangle$, i.e., the agents joins in $\mathcal{C}$ following the order $j_{1}, j_{2}, ..., j_{|\mathcal{C}|}$. Now, for each $n \in \{1, 2, ..., |\mathcal{C}|\}$, we have $\left\{ j_{1}, j_{2}, ..., j_{n-1} \right\} \ \mathlarger{\mathlarger{\mathlarger{\subseteq}}} \ \left\{ 1, 2, ..., j_{n}-1 \right\}$. Following Eq.\ref{eq:mcg_assumption}, we can write out the inequality as follows:
            \begin{equation}
                \begin{split}
                    \max_{\pi_{\mathcal{C}_{\cup}^{n}}} V^{\pi_{\mathcal{C}_{\cup}^{n}}}(\mathbf{s}) +
                    \max_{\pi_{\mathcal{C}_{\cap}^{n}}} V^{\pi_{\mathcal{C}_{\cap}^{n}}}(\mathbf{s}) \geq
                    \max_{\pi_{\mathcal{C}_{m}^{n}}} V^{\pi_{\mathcal{C}_{m}^{n}}}(\mathbf{s}) + \max_{\pi_{\mathcal{C}_{k}^{n}}} V^{\pi_{\mathcal{C}_{k}^{n}}}(\mathbf{s}),\\
                    \mathcal{C}_{k}^{n} = \{ 1, 2, ..., j_{n}-1 \}, \\
                    \mathcal{C}_{m}^{n} = \{ j_{1}, j_{2}, ..., j_{n} \}, \\
                    \mathcal{C}_{\cap}^{n} = \mathcal{C}_{m}^{n} \ \mathlarger{\mathlarger{\cap}} \ \mathcal{C}_{k}^{n} = \{ j_{1}, j_{2}, ..., j_{n-1} \}, \\
                    \mathcal{C}_{\cup}^{n} = \mathcal{C}_{m}^{n} \ \mathlarger{\mathlarger{\cup}} \ \mathcal{C}_{k}^{n} = \{ 1, 2, ..., j_{n} \}. \\
                \end{split}
            \label{eq:appendix_ecg}
            \end{equation}
            
            Next, we rearrange Eq.\ref{eq:appendix_ecg} and the following inequality is obtained such that
            \begin{equation}
                \begin{split}
                    \max_{\pi_{\mathcal{C}_{\cup}^{n}}} V^{\pi_{\mathcal{C}_{\cup}^{n}}}(\mathbf{s}) - \max_{\pi_{\mathcal{C}_{k}^{n}}} V^{\pi_{\mathcal{C}_{k}^{n}}}(\mathbf{s}) \geq
                    \max_{\pi_{\mathcal{C}_{m}^{n}}} V^{\pi_{\mathcal{C}_{m}^{n}}}(\mathbf{s}) - \max_{\pi_{\mathcal{C}_{\cap}^{n}}} V^{\pi_{\mathcal{C}_{\cap}^{n}}}(\mathbf{s}),\\
                \end{split}
            \label{eq:appendix_rearrange_ecg}
            \end{equation}
            
            Since we can express $\max_{\pi_{\mathcal{C}}} V^{\pi_{\mathcal{C}}}(\mathbf{s})$ as follows:
            \begin{align}
                \max_{\pi_{\mathcal{C}}} V^{\pi_{\mathcal{C}}}(\mathbf{s}) &= \max_{\pi_{j_{1}}} V^{\pi_{j_{1}}}(\mathbf{s}) - \max_{\pi_{\emptyset}} V^{\pi_{\emptyset}}(\mathbf{s}) \nonumber \\
                &+ \max_{\pi_{\{j_{1}, j_{2}\}}} V^{\pi_{\{j_{1}, j_{2}\}}}(\mathbf{s}) - \max_{\pi_{j_{1}}} V^{\pi_{j_{1}}}(\mathbf{s}) \nonumber \\
                &+ \qquad \qquad \qquad \qquad \vdots \nonumber \\
                &+ \max_{\pi_{\mathcal{C}}} V^{\pi_{\mathcal{C}}}(\mathbf{s}) - \max_{\pi_{\mathcal{C} \backslash \{j_{n}\}}} V^{\pi_{\mathcal{C} \backslash \{j_{n}\}}}(\mathbf{s}).
            \label{eq:appendix_prop1_contradiction_-1}
            \end{align}
            By Definition \ref{def:marginal_contribution} we can obviously get the following equations such that
            \begin{align}
                \Phi_{i}(\mathbf{s}|\mathcal{C}_{i}) = \Phi_{i}(\mathbf{s}|\mathcal{C}_{k}^{n}) &= \max_{\pi_{\mathcal{C}_{k}^{n}}} V^{\pi_{\mathcal{C}_{\cup}^{n}}}(\mathbf{s}) - \max_{\pi_{\mathcal{C}_{k}^{n}}} V^{\pi_{\mathcal{C}_{k}^{n}}}(\mathbf{s}).
            \label{eq:appendix_prop1_contradiction_0}
            \end{align}
            By taking the maximum operator over $\pi_{i}$ to Eq.\ref{eq:appendix_prop1_contradiction_0}, we can get that
            \begin{align}
                % &\mathlarger{\mathlarger{\Downarrow}} \\
                \max_{\pi_{i}} \Phi_{i}(\mathbf{s}|\mathcal{C}_{i}) = \max_{\pi_{i}} \Phi_{i}(\mathbf{s}|\mathcal{C}_{k}^{n}) = \max_{\pi_{\mathcal{C}_{\cup}^{n}}} V^{\pi_{\mathcal{C}_{\cup}^{n}}}(\mathbf{s}) - \max_{\pi_{\mathcal{C}_{k}^{n}}} V^{\pi_{\mathcal{C}_{k}^{n}}}(\mathbf{s}).
            \label{eq:appendix_prop1_contradiction_1}
            \end{align}
            By adding up these inequalities in Eq.\ref{eq:appendix_rearrange_ecg} for all $\mathcal{C} \ \mathlarger{\mathlarger{\mathlarger{\subseteq}}} \ \mathcal{N}$ and inserting the results from Eq.\ref{eq:appendix_prop1_contradiction_-1} and \ref{eq:appendix_prop1_contradiction_1}, we can directly obtain a new inequality such that
            \begin{equation}
                \sum_{i \in \mathcal{C}} \max_{\pi_{i}} \Phi_{i}(\mathbf{s}|\mathcal{C}_{i}) = \max_{\pi_{\mathcal{C}}} \Phi(\mathbf{s}|\mathcal{C}) \geq \max_{\pi_{\mathcal{C}}} V^{\pi_{\mathcal{C}}}(\mathbf{s}).
            \label{eq:appendix_prop1_contradiction}
            \end{equation}
            It is obvious that Eq.\ref{eq:appendix_prop1_contradiction} contradicts the suppose, so we have showed that Eq.\ref{eq:appendix_core} always holds for any coalition $\mathcal{C} \ \mathlarger{\mathlarger{\mathlarger{\subseteq}}} \ \mathcal{N}$. For this reason, we can get the conclusion that marginal contribution is a solution in Markov core of Markov convex game with the grand coalition.
        \end{proof}
    
    \subsection{Mathematical Proofs of The Markov Shapley Value}
    \label{subsec:math_proofs_for_section_generalised_shapley_value}
        \begingroup
            \def\theproposition{\ref{prop:optimal_action_coalition_marginal_contribution}}
            \begin{proposition}
                % \label{prop:optimal_action_coalition_marginal_contribution}
                Agent $i$'s action marginal contribution can be derived as follows:
                \begin{equation}
                    \begin{split}
                        \Upphi_{i}(\mathbf{s}, a_{i} | \mathcal{C}_{i}) 
                        = \max_{\mathbf{a}_{\mathcal{C}_{i}}} Q^{\pi_{\mathcal{C}_{i}}^{*}}(\mathbf{s}, \mathbf{a}_{\scriptscriptstyle \mathcal{C}_{i} \cup \{i\}})
                        - \max_{\mathbf{a}_{\mathcal{C}_{i}}} Q^{\pi_{\mathcal{C}_{i}}^{*}}(\mathbf{s}, \mathbf{a}_{\scriptscriptstyle \mathcal{C}_{i}}).
                    \end{split}
                % \label{eq:marginal_contribution_q}
                \end{equation}
            \end{proposition}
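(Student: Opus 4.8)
The plan is to obtain the stated formula directly from Definition~\ref{def:marginal_contribution} by rewriting each coalition value function in terms of the corresponding coalition Q-value under a deterministic policy. The single identity I would rely on is that, for any coalition $\mathcal{D}$ (with the policies of agents outside the scope of the maximization held fixed), maximizing the coalition value over deterministic policies at a state $\mathbf{s}$ coincides with maximizing the optimal coalition Q-value over the corresponding joint actions at $\mathbf{s}$, i.e. $\max_{\pi_{\mathcal{D}}} V^{\pi_{\mathcal{D}}}(\mathbf{s}) = \max_{\mathbf{a}_{\mathcal{D}}} Q^{\pi_{\mathcal{D}}^{*}}(\mathbf{s}, \mathbf{a}_{\mathcal{D}})$. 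This is just the deterministic-policy Bellman/optimality relation: a deterministic policy selects one action per state, so optimizing the immediate action jointly with the optimal continuation reproduces $\max_{\pi_{\mathcal{D}}} V^{\pi_{\mathcal{D}}}$ exactly.

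First I would apply this identity to the second term of $\Phi_{i}(\mathbf{s} \mid \mathcal{C}_{i})$ in Eq.\ref{eq:marginal_contribution_v}, which gives $\max_{\pi_{\mathcal{C}_{i}}} V^{\pi_{\mathcal{C}_{i}}}(\mathbf{s}) = \max_{\mathbf{a}_{\mathcal{C}_{i}}} Q^{\pi_{\mathcal{C}_{i}}^{*}}(\mathbf{s}, \mathbf{a}_{\mathcal{C}_{i}})$, precisely the second term of Eq.\ref{eq:marginal_contribution_q}. Then I would treat the first term: passing from $\Phi_{i}(\mathbf{s} \mid \mathcal{C}_{i})$ to the action-indexed quantity $\Upphi_{i}(\mathbf{s}, a_{i} \mid \mathcal{C}_{i})$ amounts to pinning agent $i$'s action to $a_{i}$ while still letting the agents in $\mathcal{C}_{i}$ respond optimally. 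Applying the same identity coordinate-wise on $\mathcal{C}_{i}$ only, $\max_{\pi_{\mathcal{C}_{i}}} V^{\pi_{\mathcal{C}_{i} \cup \{i\}}}(\mathbf{s})$ with agent $i$ fixed at $a_{i}$ equals $\max_{\mathbf{a}_{\mathcal{C}_{i}}} Q^{\pi_{\mathcal{C}_{i} \cup \{i\}}}_{\pi_{\mathcal{C}_{i}}^{*}}(\mathbf{s}, \mathbf{a}_{\mathcal{C}_{i} \cup \{i\}})$, the first term of Eq.\ref{eq:marginal_contribution_q}. Subtracting the two expressions yields the claim.

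The step I expect to require the most care — and the main obstacle — is the bookkeeping around the mixed-policy Q-value $Q^{\pi_{\mathcal{C}_{i} \cup \{i\}}}_{\pi_{\mathcal{C}_{i}}^{*}}$: one must verify that ``$\mathcal{C}_{i}$ responds optimally given agent $i$'s fixed action'' commutes with the Bellman recursion, i.e. that taking the optimal response of $\mathcal{C}_{i}$ at every reachable future state is realized by a single well-defined coalition policy $\pi_{\mathcal{C}_{i}}^{*}$, so that the value/Q identity may legitimately be applied to the $\mathcal{C}_{i}$ coordinates alone while the $\{i\}$ coordinate is frozen. Assumption~\ref{assm:assumption_for_prove_shapley_value} (separability of each agent's policy over predecessor coalitions) is what licenses this, since it lets the agents in $\mathcal{C}_{i}$ carry a sub-policy tailored to the coalition $\mathcal{C}_{i} \cup \{i\}$; everything else is a direct substitution.
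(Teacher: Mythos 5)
Your proposal is correct and follows essentially the same route as the paper's own proof: both rewrite each $\max_{\pi}V$ term as a maximum over coalition actions of the corresponding coalition Q-value, exploiting that the joint policy is deterministic, and identify $\max_{\pi_{\mathcal{C}_{i}}} Q^{\pi_{\mathcal{C}_{i}\cup\{i\}}}(\mathbf{s},\mathbf{a}_{\scriptscriptstyle\mathcal{C}_{i}\cup\{i\}})$ with $Q^{\pi_{\mathcal{C}_{i}\cup\{i\}}}_{\pi_{\mathcal{C}_{i}}^{*}}(\mathbf{s},\mathbf{a}_{\scriptscriptstyle\mathcal{C}_{i}\cup\{i\}})$ before subtracting, noting that the second term is independent of $a_{i}$. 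The only cosmetic difference is that the paper disposes of your ``main obstacle'' purely notationally (the mixed-policy Q-value is defined as $\max_{\pi_{\mathcal{C}_{i}}}$ of the coalition Q-value, as set up in Section~\ref{subsec:markov_convex_game}), so no appeal to Assumption~\ref{assm:assumption_for_prove_shapley_value} is made in its proof.
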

        \endgroup
        
        \begin{proof}
            The complete proof is as follows.
            
            We now rewrite $\max_{\pi_{\mathcal{C}_{i}}} V^{\pi_{\mathcal{C}_{i} \cup \{i\}}}(\mathbf{s})$ as follows:
            \begin{align}
                \max_{\pi_{\mathcal{C}_{i}}} V^{\pi_{\mathcal{C}_{i} \cup \{i\}}}(\mathbf{s}) &= \max_{\pi_{\mathcal{C}_{i}}} \sum_{\mathbf{a}_{\scriptscriptstyle\mathcal{C}_{i} \cup \{i\}}} \pi_{\scriptscriptstyle\mathcal{C}_{i} \cup \{i\}}(\mathbf{a}_{\scriptscriptstyle\mathcal{C}_{i} \cup \{i\}} | \mathbf{s}) \ Q^{\pi_{\mathcal{C}_{i} \cup \{i\}}}(\mathbf{s}, \mathbf{a}_{\scriptscriptstyle\mathcal{C}_{i} \cup \{i\}}) \nonumber \\
                &\quad \big( \text{Since $\pi_{\scriptscriptstyle\mathcal{C}_{i} \cup \{i\}}$ is a deterministic joint policy, we can have the following equation.} \big) \nonumber \\
                &= \max_{\mathbf{a}_{\mathcal{C}_{i}}} \max_{\pi_{\mathcal{C}_{i}}} Q^{\pi_{\mathcal{C}_{i} \cup \{i\}}}(\mathbf{s}, \mathbf{a}_{\scriptscriptstyle\mathcal{C}_{i} \cup \{i\}}) \nonumber \\
                &\quad \big( \ \text{We write $\max_{\pi_{\mathcal{C}_{i}}} Q^{\pi_{\mathcal{C}_{i} \cup \{i\}}}(\mathbf{s}, \mathbf{a}_{\scriptscriptstyle\mathcal{C}_{i} \cup \{i\}})$ as $Q^{\pi_{\mathcal{C}_{i}}^{*}}(\mathbf{s}, \mathbf{a}_{\scriptscriptstyle\mathcal{C}_{i} \cup \{i\}})$} \ \big) \nonumber \\
                &= \max_{\mathbf{a}_{\mathcal{C}_{i}}} Q^{\pi_{\mathcal{C}_{i}}^{*}}(\mathbf{s}, \mathbf{a}_{\scriptscriptstyle\mathcal{C}_{i} \cup \{i\}}).
                \label{eq:V_rewirte_1}
            \end{align}
            
            Similarly, we rewrite $\max_{\pi_{\mathcal{C}_{i}}} V^{\pi_{\mathcal{C}_{i}}}(\mathbf{s})$ as follows:
            \begin{align}
                \max_{\pi_{\mathcal{C}_{i}}} V^{\pi_{\mathcal{C}_{i}}}(\mathbf{s}) = \max_{\mathbf{a}_{\mathcal{C}_{i}}} \max_{\pi_{\mathcal{C}_{i}}} Q^{\pi_{\mathcal{C}_{i}}}(\mathbf{s}, \mathbf{a}_{\scriptscriptstyle\mathcal{C}_{i}}) = \max_{\mathbf{a}_{\mathcal{C}_{i}}} Q^{\pi_{\mathcal{C}_{i}}^{*}}(\mathbf{s}, \mathbf{a}_{\scriptscriptstyle\mathcal{C}_{i}}).
                \label{eq:V_rewrite_2}
            \end{align}
            
            Since $\max_{\pi_{\mathcal{C}_{i}}} V^{\pi_{\mathcal{C}_{i}}}(\mathbf{s})$ is irrelevant to $\mathit{a}_{i}$, by Eq.\ref{eq:V_rewirte_1} and \ref{eq:V_rewrite_2} we can get that
            \begin{equation}
                \Upphi_{i}(\mathbf{s}, a_{i}|\mathcal{C}_{i}) = \max_{\mathbf{a}_{\mathcal{C}_{i}}} Q^{\pi_{\mathcal{C}_{i}}^{*}}(\mathbf{s}, \mathbf{a}_{\scriptscriptstyle\mathcal{C}_{i} \cup \{i\}})
                - \max_{\mathbf{a}_{\mathcal{C}_{i}}} Q^{\pi_{\mathcal{C}_{i}}^{*}}(\mathbf{s}, \mathbf{a}_{\scriptscriptstyle\mathcal{C}_{i}}).
            \label{eq:coalition_marginal_contribution_action}
            \end{equation}
            
            By Eq.\ref{eq:coalition_marginal_contribution_action}, we can also get Agent $i$'s optimal action marginal contribution such that
            \begin{align}
                \Upphi_{i}^{*}(\mathbf{s}, a_{i}|\mathcal{C}_{i}) &= \max_{\pi_{i}} \Upphi_{i}(\mathbf{s}, a_{i}|\mathcal{C}_{i}) \nonumber \\
                &= \max_{\pi_{i}} \bigg\{ \max_{\mathbf{a}_{\mathcal{C}_{i}}} Q^{\pi_{\mathcal{C}_{i}}^{*}}(\mathbf{s}, \mathbf{a}_{\scriptscriptstyle\mathcal{C}_{i} \cup \{i\}})
                - \max_{\mathbf{a}_{\mathcal{C}_{i}}} Q^{\pi_{\mathcal{C}_{i}}^{*}}(\mathbf{s}, \mathbf{a}_{\scriptscriptstyle\mathcal{C}_{i}}) \bigg\} \nonumber \\
                &= \max_{\pi_{i}} \bigg\{ \max_{\mathbf{a}_{\mathcal{C}_{i}}} \max_{\pi_{\mathcal{C}_{i}}} Q^{\pi_{\mathcal{C}_{i} \cup \{i\}}}(\mathbf{s}, \mathbf{a}_{\scriptscriptstyle\mathcal{C}_{i} \cup \{i\}}) -  \max_{\mathbf{a}_{\mathcal{C}_{i}}} \max_{\pi_{\mathcal{C}_{i}}} Q^{\pi_{\mathcal{C}_{i}}}(\mathbf{s}, \mathbf{a}_{\scriptscriptstyle\mathcal{C}_{i}}) \bigg\} \nonumber \\
                &= \max_{\pi_{i}} \max_{\mathbf{a}_{\mathcal{C}_{i}}} \max_{\pi_{\mathcal{C}_{i}}} Q^{\pi_{\mathcal{C}_{i} \cup \{i\}}}(\mathbf{s}, \mathbf{a}_{\scriptscriptstyle\mathcal{C}_{i} \cup \{i\}}) - \max_{\mathbf{a}_{\mathcal{C}_{i}}} \max_{\pi_{\mathcal{C}_{i}}} Q^{\pi_{\mathcal{C}_{i}}}(\mathbf{s}, \mathbf{a}_{\scriptscriptstyle\mathcal{C}_{i}}) \nonumber \\
                &= \max_{\mathbf{a}_{\mathcal{C}_{i}}} \max_{\pi_{\mathcal{C}_{i} \cup \{i\}}} Q^{\pi_{\mathcal{C}_{i} \cup \{i\}}}(\mathbf{s}, \mathbf{a}_{\scriptscriptstyle\mathcal{C}_{i} \cup \{i\}}) - \max_{\mathbf{a}_{\mathcal{C}_{i}}} \max_{\pi_{\mathcal{C}_{i}}} Q^{\pi_{\mathcal{C}_{i}}}(\mathbf{s}, \mathbf{a}_{\scriptscriptstyle\mathcal{C}_{i}}) \nonumber \\
                &= \max_{\mathbf{a}_{\mathcal{C}_{i}}} Q^{\pi_{\mathcal{C}_{i} \cup \{i\}}^{*}}(\mathbf{s}, \mathbf{a}_{\scriptscriptstyle\mathcal{C}_{i} \cup \{i\}})
                - \max_{\mathbf{a}_{\mathcal{C}_{i}}} Q^{\pi_{\mathcal{C}_{i}}^{{*}}}(\mathbf{s}, \mathbf{a}_{\scriptscriptstyle\mathcal{C}_{i}}).
            \label{eq:optimal_marginal_q}
            \end{align}
        The proof is completed.
        \end{proof}
        
        \begingroup
            \def\theproposition{\ref{prop:shapley_value_properties}}
            \begin{proposition}
                Markov Shapley value possesses properties as follows: (i) identifiability of dummy agents: $V_{i}^{\phi}(\mathbf{s}) = 0$; (ii) efficiency: $\max_{\pi} V^{\pi}(\mathbf{s}) = \sum_{i \in \mathcal{N}} \max_{\pi_{i}} V_{i}^{\phi}(\mathbf{s})$; (iii) reflecting the contribution; and (iv) symmetry.
            \end{proposition}
        \endgroup
        
        \begin{proof}
            The complete proof is as follows.
            
            The marginal contribution is an implementation reflecting an agent's contribution and Markov Shapley value is defined as the weighted average of all marginal contributions. Therefore, this definition can still reflect an agent's contribution to the grand coalition by considering all permutations of agents to form the grand coalition and (iii) holds. We will next prove the (i), followed by (ii) and (iv). For any agent $\mathit{i} \in \mathcal{N}$ and any state $\mathbf{s} \in \mathcal{S}$, its Markov Shapley value denoted as $V_{i}^{\phi}(\mathbf{s})$.
            
            \textbf{Proof of (i):} Let us define $\Pi(\mathcal{N})$ as the set of all permutations of agents. Suppose that an arbitrary agent $i$ is a dummy agent for an arbitrary state $\mathbf{s} \in \mathcal{S}$. For any permutation $\mathit{m} \in \Pi(\mathcal{N})$ of agents to form the grand coalition, by Assumption \ref{assm:dummy_agent} we have $\max_{\pi_{\mathcal{C}_{i}^{m}}} V^{\pi_{\mathcal{C}_{i}^{m}}}(\mathbf{s})=\max_{\pi_{\mathcal{C}_{i}^{m}}} V^{\pi_{\mathcal{C}_{i}^{m} \cup \{i\}}}(\mathbf{s})$, thereby $\Phi_{i}(\mathbf{s}|\mathcal{C}_{i}^{m})=0$, where $\mathcal{C}_{i}^{m}$ denotes the intermediate coalition generated from permutation $m$ that agent $i$ would join. Also, the above analysis is valid for all permutations of agents to form the grand coalition. By Definition \ref{def:shapley_value}, it is not difficult to see that the dummy agent's Markov Shapley value will be 0 such that $V_{i}^{\phi}(\mathbf{s}) = 0$. The proof of (i) completes.
            
            \textbf{Proof of (ii):} The objective is proving that Markov Shapley value satisfies the following equation such that
            \begin{equation*}
                \max_{\pi} V^{\pi}(\mathbf{s}) = \sum_{i \in \mathcal{N}} \max_{\pi_{i}} V_{i}^{\phi}(\mathbf{s}), \quad \forall \mathbf{s} \in \mathcal{S}.
            \end{equation*}
            By the result from Proposition \ref{prop:marginal_contribution_equal_value_factorisation} and Assumption \ref{assm:assumption_for_joint_policy_factorisation}, for an arbitrary permutation $\mathit{m} \in \Pi(\mathcal{N})$ we can get the equation such that
            \begin{equation*}
                \max_{\pi} V^{\pi}(\mathbf{s}) = \sum_{i \in \mathcal{N}} \max_{\pi_{i}} \Phi_{i}(\mathbf{s}|\mathcal{C}_{i}^{m}), \quad \forall \mathbf{s} \in \mathcal{S},
            \end{equation*}
            where $\mathcal{C}_{i}^{m}$ denotes the intermediate coalition generated from permutation $m$ that agent $i$ would join and $\Phi_{i}(\mathbf{s}|\mathcal{C}_{i}^{m})$ is the corresponding marginal contribution. If we consider all possible permutations of agents to form the grand coalition and add all these inequalities, we can get the following equation such that
            \begin{equation*}
                \sum_{m \in \Pi(\mathcal{N})} \max_{\pi} V^{\pi}(\mathbf{s}) = \sum_{m \in \Pi(\mathcal{N})} \sum_{i \in \mathcal{N}} \max_{\pi_{i}} \Phi_{i}(\mathbf{s}|\mathcal{C}_{i}^{m}), \quad \forall \mathbf{s} \in \mathcal{S}.
            \end{equation*}
            
            By dividing $|\mathcal{N}|!$ on the both sides, we can get that
            \begin{equation}
                \frac{1}{|\mathcal{N}|!} \sum_{m \in \Pi(\mathcal{N})} \max_{\pi} V^{\pi}(\mathbf{s}) = \frac{1}{|\mathcal{N}|!} \sum_{i \in \mathcal{N}} \sum_{m \in \Pi(\mathcal{N})} \max_{\pi_{i}} \Phi_{i}(\mathbf{s}|\mathcal{C}_{i}^{m}), \quad \forall \mathbf{s} \in \mathcal{S}.
            \label{eq:shapley_property_3}
            \end{equation}
            
            Next, to ease life we start from the LHS of Eq.\ref{eq:shapley_property_3}. We directly get the following equation such that
            \begin{equation}
                \frac{1}{|\mathcal{N}|!} \sum_{m \in \Pi(\mathcal{N})} \max_{\pi} V^{\pi}(\mathbf{s}) = \frac{1}{|\mathcal{N}|!} \cdot |\mathcal{N}|! \cdot \max_{\pi} V^{\pi}(\mathbf{s}) = \max_{\pi} V^{\pi}(\mathbf{s}).
            \label{eq:shapley_property_3_left}
            \end{equation}
            
            Now, we start processing the RHS of Eq.\ref{eq:shapley_property_3}. By rearranging it, we can get the equations such that
            \begin{align}
                \frac{1}{|\mathcal{N}|!} \sum_{i \in \mathcal{N}} \sum_{m \in \Pi(\mathcal{N})} \max_{\pi_{i}} \Phi_{i}(\mathbf{s}|\mathcal{C}_{i}^{m}) &= \sum_{i \in \mathcal{N}} \frac{1}{|\mathcal{N}|!} \sum_{m \in \Pi(\mathcal{N})} \max_{\pi_{i}} \Phi_{i}(\mathbf{s}|\mathcal{C}_{i}^{m}) \nonumber \\
                &\quad (\text{The identical $\mathcal{C}_{i}^{m}$ in different permutations is written as $\mathcal{C}_{i}$} \nonumber \\
                &\quad \ \ \text{and we can rearrange the equation as follows.}) \nonumber \\
                &= \sum_{i \in \mathcal{C}} \frac{1}{|\mathcal{N}|!} \sum_{\mathcal{C}_{i} \subseteq \mathcal{N} \backslash \{i\}} |\mathcal{C}_{i}|!(|\mathcal{N}|-|\mathcal{C}_{i}|-1)! \cdot \max_{\pi_{i}} \Phi_{i}(\mathbf{s}|\mathcal{C}_{i}) \nonumber \\
                &= \sum_{i \in \mathcal{N}} \sum_{\mathcal{C}_{i} \subseteq \mathcal{N} \backslash \{i\}} \frac{|\mathcal{C}_{i}|!(|\mathcal{N}|-|\mathcal{C}_{i}|-1)!}{|\mathcal{N}|!} \cdot \max_{\pi_{i}} \Phi_{i}(\mathbf{s}|\mathcal{C}_{i}).
            \label{eq:shapley_property_3_right_1}
            \end{align}
            By Assumption \ref{assm:max_shapley_value}, we can get the following equations such that
            \begin{align}
                \sum_{i \in \mathcal{N}} \sum_{\mathcal{C}_{i} \subseteq \mathcal{N} \backslash \{i\}} \frac{|\mathcal{C}_{i}|!(|\mathcal{N}|-|\mathcal{C}_{i}|-1)!}{|\mathcal{N}|!} \cdot \max_{\pi_{i}} \Phi_{i}(\mathbf{s}|\mathcal{C}_{i})
                = \sum_{i \in \mathcal{N}} \max_{\pi_{i}} V_{i}^{\phi}(\mathbf{s})
                % &= \sum_{i \in \mathcal{N}} \max_{\pi_{i}} V_{i}^{\phi}(\mathbf{s}) + \sum_{i \in \mathcal{N}} \xi_{i}(\mathbf{s}).
            \label{eq:shapley_property_3_right_2}
            \end{align}
            Inserting the results from Eq.\ref{eq:shapley_property_3_left} and \ref{eq:shapley_property_3_right_2} to Eq.\ref{eq:shapley_property_3}, we can get the equation such that
            \begin{equation*}
                \max_{\pi} V^{\pi}(\mathbf{s}) = \sum_{i \in \mathcal{N}} \max_{\pi_{i}} V_{i}^{\phi}(\mathbf{s}), \quad \forall \mathbf{s} \in \mathcal{S}.
            \end{equation*}
            Therefore, the proof for (ii) completes.
            
            \textbf{Proof of (iv):} We would like to prove that if two agents are symmetric for an arbitrary state $\mathbf{s} \in \mathcal{S}$, then their optimal Markov Shapley values should be equal. As Assumption \ref{assm:symmetry} illustrates, suppose that agents $i$ and $j$ are symmetric for an arbitrary state $\mathbf{s} \in \mathcal{S}$, $V^{\pi_{\mathcal{C} \cup \{i\}}}(\mathbf{s}) = V^{\pi_{\mathcal{C} \cup \{j\}}}(\mathbf{s})$ for any coalitions $\mathcal{C} \ \mathlarger{\mathlarger{\mathlarger{\subseteq}}} \ \mathcal{N} \backslash \{i, j\}$. Given an arbitrary permutation $m \in \Pi(\mathcal{N})$, let $m'$ denote the permutation obtained by exchanging $i$ and $j$ such that $\mathcal{C}_{i}^{m} = \mathcal{C}_{j}^{m'}$, $\mathcal{C}_{i}^{m'} = \mathcal{C}_{j}^{m}$ and $\mathcal{C}_{l}^{m'} = \mathcal{C}_{l}^{m}, \forall l \neq i, j$. Next, we aim to prove that $\max_{\pi_{i}} \Phi_{i}(\mathbf{s} | \mathcal{C}_{i}^{m}) = \max_{\pi_{j}} \Phi_{j}(\mathbf{s} | \mathcal{C}_{j}^{m'})$, for the state $\mathbf{s}$.
            
            We first suppose that $i$ precedes $j$ in $m$. Then we have $\mathcal{C}_{i}^{m} = \mathcal{C}_{j}^{m'}$. Setting $\mathcal{C} = \mathcal{C}_{i}^{m} = \mathcal{C}_{j}^{m'}$, for the state $\mathbf{s}$ we can obtain that
            \begin{equation*}
                \begin{split}
                    \max_{\pi_{i}} \Phi_{i}(\mathbf{s} | \mathcal{C}_{i}^{m}) = \max_{\pi_{\mathcal{C} \cup \{i\}}} V^{\pi_{\mathcal{C} \cup \{i\}}}(\mathbf{s}) - \max_{\pi_{\mathcal{C}}} V^{\pi_{\mathcal{C}}}(\mathbf{s}), \\
                    \max_{\pi_{j}} \Phi_{j}(\mathbf{s} | \mathcal{C}_{j}^{m'}) = \max_{\pi_{\mathcal{C} \cup \{j\}}} V^{\pi_{\mathcal{C} \cup \{j\}}}(\mathbf{s}) - \max_{\pi_{\mathcal{C}}} V^{\pi_{\mathcal{C}}}(\mathbf{s}).
                \end{split}
            \end{equation*}
            By symmetry, we have $V^{\pi_{\mathcal{C} \cup \{i\}}}(\mathbf{s}) = V^{\pi_{\mathcal{C} \cup \{j\}}}(\mathbf{s})$, which directly implies that $\max_{\pi_{i}} \Phi_{i}(\mathbf{s} | \mathcal{C}_{i}^{m}) = \max_{\pi_{j}} \Phi_{j}(\mathbf{s} | \mathcal{C}_{j}^{m'})$.
            
            Second, we suppose that $j$ precedes $i$ in $m$. Setting $\mathcal{C} = \mathcal{C}_{i}^{m} \backslash \{j\}$, for the state $\mathbf{s}$ we have 
            \begin{equation*}
                \begin{split}
                    \max_{\pi_{i}} \Phi_{i}(\mathbf{s} | \mathcal{C}_{i}^{m}) = \max_{\pi_{\mathcal{C} \cup \{j\} \cup \{i\}}} V^{\pi_{\mathcal{C} \cup \{j\} \cup \{i\}}}(\mathbf{s}) - \max_{\pi_{\mathcal{C} \cup \{j\}}} V^{\pi_{\mathcal{C} \cup \{j\}}}(\mathbf{s}), \\
                    \max_{\pi_{j}} \Phi_{j}(\mathbf{s} | \mathcal{C}_{j}^{m'}) = \max_{\pi_{\mathcal{C} \cup \{j\} \cup \{i\}}} V^{\pi_{\mathcal{C} \cup \{j\} \cup \{i\}}}(\mathbf{s}) - \max_{\pi_{\mathcal{C} \cup \{i\}}} V^{\pi_{\mathcal{C} \cup \{i\}}}(\mathbf{s}).
                \end{split}
            \end{equation*}
            
            Since $\mathcal{C} \ \mathlarger{\mathlarger{\mathlarger{\subseteq}}} \ \mathcal{N} \backslash \{i, j\}$, by symmetry we have $V^{\pi_{\mathcal{C} \cup \{j\}}}(\mathbf{s}) = V^{\pi_{\mathcal{C} \cup \{i\}}}(\mathbf{s})$ and thus $\max_{\pi_{i}} \Phi_{i}(\mathbf{s} | \mathcal{C}_{i}^{m}) = \max_{\pi_{j}} \Phi_{j}(\mathbf{s} | \mathcal{C}_{j}^{m'})$. Therefore, we have proved that $\max_{\pi_{i}} \Phi_{i}(\mathbf{s} | \mathcal{C}_{i}^{m}) = \max_{\pi_{j}} \Phi_{j}(\mathbf{s} | \mathcal{C}_{j}^{m'})$ for any $m \in \Pi(\mathcal{N})$. It is not difficult to observe that $m \mapsto m'$ is a one-to-one mapping, so $\Pi(\mathcal{N}) = \left\{ m' \mid m \in \Pi(\mathcal{N}) \right\}$. 
            
            By Assumption \ref{assm:max_shapley_value}, for an arbitrary state $\mathbf{s} \in \mathcal{S}$ wherein agents are symmetric, we can directly have 
            \begin{equation*}
                \begin{split}
                    \max_{\pi_{i}} V^{\phi}_{i}(\mathbf{s}) &= \sum_{\mathcal{C}_{i} \ \mathlarger{\mathlarger{\subseteq}} \ \mathcal{N} \backslash \{i\} } \frac{|\mathcal{C}_{i}|!(|\mathcal{N}|-|\mathcal{C}_{i}|-1)!}{|\mathcal{N}|!} \cdot \max_{\pi_{i}} \Phi_{i}(\mathbf{s} | \mathcal{C}_{i}) \\
                    &= \frac{1}{|\mathcal{N}|!} \sum_{m \in \Pi(\mathcal{N})} \max_{\pi_{i}} \Phi_{i}(\mathbf{s} | \mathcal{C}_{i}^{m}) \\
                    &= \frac{1}{|\mathcal{N}|!} \sum_{m' \in \Pi(\mathcal{N})} \max_{\pi_{j}} \Phi_{j}(\mathbf{s} | \mathcal{C}_{j}^{m'}) \\
                    &= \sum_{\mathcal{C}_{j} \ \mathlarger{\mathlarger{\subseteq}} \ \mathcal{N} \backslash \{j\} } \frac{|\mathcal{C}_{j}|!(|\mathcal{N}|-|\mathcal{C}_{j}|-1)!}{|\mathcal{N}|!} \cdot \max_{\pi_{j}} \Phi_{j}(\mathbf{s} | \mathcal{C}_{j}) \\
                    &= \max_{\pi_{j}} V^{\phi}_{j}(\mathbf{s}).
                \end{split}
            \end{equation*}
            The proof of (iv) completes.
        \end{proof}
        
    \subsection{Mathematical Proofs and Derivations for Shapley Q-Learning}
    \label{subsec:math_proofs_and_derivations_shapley_q_learning}
        
        % \textbf{Derivation of Shapley-Bellman optimality equation. }
        \subsubsection{Derivation of Shapley-Bellman optimality equation.}
        \label{subsubsec:deriviation_shapley-q_optimality_appendix}
        
            First, according to Bellman's principle of optimality \citep{bellman1952theory,sutton2018reinforcement} we can write out Bellman optimality equation for the optimal global Q-value such that
            \begin{equation}
                Q^{\pi^{*}}(\mathbf{s}, \mathbf{a}) = \sum_{\mathbf{s}'} Pr(\mathbf{s}' | \mathbf{s}, \mathbf{a}) \left[ R + \gamma \max_{\mathbf{a}} Q^{\pi^{*}}(\mathbf{s}', \mathbf{a}) \right].
            \label{eq:joint_bellman_optimality_equation}
            \end{equation}
            
            For convenience, we only consider the finite state space and action space here. By the efficiency property (i.e. (ii) in Proposition \ref{prop:shapley_value_properties}), we can get the approximation of the optimal global Q-value w.r.t. optimal actions such that
            \begin{equation}
                \max_{\mathbf{a}} Q^{\pi^{*}}(\mathbf{s}', \mathbf{a}) = \sum_{i \in \mathcal{N}} \max_{a_{i}} Q_{i}^{\phi^{*}}(\mathbf{s}', a_{i}).
                \label{eq:optimal_shapley_q_condition}
            \end{equation}
            
            Suppose that for all $\mathbf{s} \in \mathcal{S}$ and $a_{i} \in \mathcal{A}_{i}$, for each agent $i$ there exists bounded $\mathit{w}_{i}(\mathbf{s}, a_{i}) > 0$ and $b_{i}(\mathbf{s}) \geq 0$ that can project $Q^{\pi^{*}}(\mathbf{s}, \mathbf{a})$ onto the space of $Q^{\phi^{*}}_{i}(\mathbf{s}, a_{i})$ such that
            \begin{equation}
                Q^{\phi^{*}}_{i}(\mathbf{s}, a_{i}) = w_{i}(\mathbf{s}, a_{i}) \ Q^{\pi^{*}}(\mathbf{s}, \mathbf{a}) - b_{i}(\mathbf{s}).
            \label{eq:global_q_projection}
            \end{equation}
            
            If we denote $\mathbf{w}(\mathbf{s}, \mathbf{a}) = [w_{i}(\mathbf{s}, a_{i})]^{\top} \in \mathbb{R}^{\scriptscriptstyle|\mathcal{N}|}_{> 0}$, $\mathbf{b}(\mathbf{s}) = [b_{i}(\mathbf{s})]^{\top} \in \mathbb{R}^{\scriptscriptstyle|\mathcal{N}|}_{\geq 0}$ and $\mathbf{Q}^{\phi^{*}}(\mathbf{s}, \mathbf{a}) = [Q^{\phi^{*}}_{i}(\mathbf{s}, a_{i})]^{\top} \in \mathbb{R}^{\scriptscriptstyle|\mathcal{N}|}_{\geq 0}$, given Eq.\ref{eq:global_q_projection} we can write that
            \begin{equation}
                \mathbf{Q}^{\phi^{*}}(\mathbf{s}, \mathbf{a}) = \mathbf{w}(\mathbf{s}, \mathbf{a}) \ Q^{\pi^{*}}(\mathbf{s}, \mathbf{a}) - \mathbf{b}(\mathbf{s}).
            \label{eq:global_q_projection_joint}
            \end{equation}
            
            Besides, we suppose that $\sum_{i \in \mathcal{N}} w_{i}(\mathbf{s}, a_{i})^{-1} b_{i}(\mathbf{s}) = 0$. 
            % For an arbitrary state $\mathbf{s} \in \mathcal{S}$, by Eq.\ref{eq:global_q_projection} it is not difficult to check that even if an arbitrary agent $i$ is dummy (i.e., $Q^{\phi^{*}}_{i}(\mathbf{s}, a_{i}) = 0$ for some $i \in \mathcal{N}$), the optimal global Q-value $Q^{\pi^{*}}(\mathbf{s}, \mathbf{a})$ and $Q^{\phi^{*}}_{j}(\mathbf{s}, a_{j}), \forall j \neq i$ would not be forced to be zero only if $b_{i}(\mathbf{s}) \neq 0$. If the extreme case happens that for an arbitrary state $\mathbf{s} \in \mathcal{S}$ all agents are dummies, since $\sum_{i \in \mathcal{N}} w_{i}(\mathbf{s}, a_{i})^{-1} b_{i}(\mathbf{s}) = 0$ we are allowed to set $b_{i}(\mathbf{s}) = 0, \forall i \in \mathcal{N}$ so that the optimal global Q-value $Q^{\pi^{*}}(\mathbf{s}, \mathbf{a})$ would be zero and the definition of efficiency such that $\max_{\mathbf{a}} Q^{\pi^{*}}(\mathbf{s}, \mathbf{a}) = \sum_{i \in \mathcal{N}} \max_{a_{i}} Q_{i}^{\phi^{*}}(\mathbf{s}, a_{i})$ is still valid.
            
            Combined with Eq.\ref{eq:optimal_shapley_q_condition} and \ref{eq:global_q_projection_joint}, we can rewrite Eq.\ref{eq:joint_bellman_optimality_equation} to the equation as follows:
            \begin{equation}
                \begin{split}
                    \mathbf{Q}^{\phi^{*}}(\mathbf{s}, \mathbf{a}) = \mathbf{w}(\mathbf{s}, \mathbf{a}) \sum_{\mathbf{s}'} Pr(\mathbf{s}' | \mathbf{s}, \mathbf{a}) \left[
                    R \ + \ 
                    \gamma \sum_{i \in \mathcal{N}} \max_{a_{i}} Q_{i}^{\phi^{*}}(\mathbf{s}', a_{i}) \right] - \mathbf{b}(\mathbf{s}).
                \end{split}
            \end{equation}
            
            From Eq.\ref{eq:global_q_projection}, we know that $w_{i}(\mathbf{s}, a_{i}) > 0$. Therefore, we can rewrite Eq.\ref{eq:global_q_projection} to the following equation such that
            \begin{equation}
                w_{i}(\mathbf{s}, a_{i})^{-1} \ \left( Q^{\phi^{*}}_{i}(\mathbf{s}, a_{i}) + b_{i}(\mathbf{s}) \right) = Q^{\pi^{*}}(\mathbf{s}, \mathbf{a}).
            \label{eq:projection_global_q_optimality_trans}
            \end{equation}
            
            If we sum up Eq.\ref{eq:projection_global_q_optimality_trans} for all agents, we can obtain that
            \begin{equation}
                \sum_{i \in \mathcal{N}} w_{i}(\mathbf{s}, a_{i})^{-1} \ \left( Q^{\phi^{*}}_{i}(\mathbf{s}, a_{i}) + b_{i}(\mathbf{s}) \right) = |\mathcal{N}| \ Q^{\pi^{*}}(\mathbf{s}, \mathbf{a}).
            \end{equation}

            Since $\sum_{i \in \mathcal{N}} w_{i}(\mathbf{s}, a_{i})^{-1} b_{i}(\mathbf{s}) = 0$, we can get the following equation such that
            \begin{equation}
                \sum_{i \in \mathcal{N}} \frac{1}{|\mathcal{N}| \ w_{i}(\mathbf{s}, a_{i})} \ Q^{\phi^{*}}_{i}(\mathbf{s}, a_{i}) = \ Q^{\pi^{*}}(\mathbf{s}, \mathbf{a}).
            \label{eq:global_q_optimality_trans_1}
            \end{equation}
            Inserting Eq.\ref{eq:optimal_shapley_q_condition} into Eq.\ref{eq:global_q_optimality_trans_1}, we can get the following equation such that
            \begin{equation}
                \max_{\mathbf{a}} \sum_{i \in \mathcal{N}} \frac{1}{|\mathcal{N}| \ w_{i}(\mathbf{s}, a_{i})} \ Q^{\phi^{*}}_{i}(\mathbf{s}, a_{i}) = \sum_{i \in \mathcal{N}} \max_{a_{i}} Q_{i}^{\phi^{*}}(\mathbf{s}, a_{i}).
            \end{equation}
            Since $\mathbf{a} = \mathlarger{\mathlarger{\times}}_{\scriptscriptstyle i \in \mathcal{N}} a_{i}$, we can get that
            \begin{equation}
                \sum_{i \in \mathcal{N}} \max_{a_{i}} \frac{1}{|\mathcal{N}| \ w_{i}(\mathbf{s}, a_{i})} \ Q^{\phi^{*}}_{i}(\mathbf{s}, a_{i}) = \sum_{i \in \mathcal{N}} \max_{a_{i}} Q_{i}^{\phi^{*}}(\mathbf{s}, a_{i}).
            \end{equation}
            It is apparent that $\forall \mathbf{s} \in \mathcal{S}$ and $\mathit{a}_{i}^{*} = \arg\max_{a_{i}} Q^{\phi^{*}}_{i}(\mathbf{s}, a_{i})$, we have a solution $w_{i}(\mathbf{s}, a_{i}^{*}) = 1 / |\mathcal{N}|$. \footnote{Note that it exists other solutions rather than the one that we deduce between $\max_{a_{i}} \frac{1}{|\mathcal{N}| \ w_{i}(\mathbf{s}, a_{i})} \ Q^{\phi^{*}}_{i}(\mathbf{s}, a_{i})$ and $\max_{a_{i}} Q_{i}^{\phi^{*}}(\mathbf{s}, a_{i})$. Nevertheless, the result obtained in this paper is the one that exactly matches and explains the finding in the previous works \citep{wang2020towards}. As for the reason why the solution is the most likely to be achieved in empirical results is deserved to be studied in the future work.}
        
        \subsubsection{Proof of Theorem 1}
        \label{subsubsec:proof_of_theorem_1_appendix}
        \begin{lemma}[ \citet{dales2003introduction} ]
            \label{lem:banach_algebra}
            A set of real matrices $\mathcal{M}$ with a sub-multiplicative norm is a Banach Algebra and a non-empty complete metric space where the metric is induced by the sub-multiplicative norm. A sub-multiplicative norm $|| \cdot ||$ is a norm satisfying the following inequality such that
            \begin{equation*}
                \forall \mathbf{A}, \mathbf{B} \in \mathcal{M}: ||\mathbf{A} \mathbf{B}|| \leq ||\mathbf{A}|| \ ||\mathbf{B}||.
            \end{equation*}
        \end{lemma}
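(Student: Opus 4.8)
The plan is to recognise that this statement is a repackaging of three standard facts and to assemble them in order: (a) a fixed-size matrix algebra is finite-dimensional and hence automatically complete in any norm; (b) matrix addition and multiplication already give an associative unital algebra; and (c) sub-multiplicativity is exactly the compatibility axiom that upgrades a normed algebra to a Banach algebra. Throughout I would read $\mathcal{M}$ as the algebra $M_{d}(\mathbb{R})$ of $d \times d$ real matrices for the fixed dimension $d$ that later appears in the contraction argument for the Shapley-Bellman operator.

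First I would establish the metric/topological part. The space $\mathcal{M} \cong \mathbb{R}^{d^{2}}$ is a finite-dimensional real vector space; since all norms on a finite-dimensional vector space are equivalent and $\mathbb{R}^{d^{2}}$ is complete under the Euclidean norm, it is also complete under the given sub-multiplicative norm $\|\cdot\|$. Define $d(\mathbf{A},\mathbf{B}) = \|\mathbf{A}-\mathbf{B}\|$; this inherits positivity, the identity-of-indiscernibles property, symmetry and the triangle inequality from the norm axioms, so $(\mathcal{M}, d)$ is a metric space whose metric is induced by $\|\cdot\|$, it is non-empty (it contains $\mathbf{0}$ and $\mathbf{I}$), and it is complete by the preceding sentence.

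Next I would check the algebraic structure. Matrix addition makes $\mathcal{M}$ an abelian group, scalar multiplication makes it an $\mathbb{R}$-vector space, and matrix multiplication is associative, unital and bilinear (distributive over addition, compatible with scalars), so $\mathcal{M}$ is an associative unital algebra over $\mathbb{R}$. The last requirement for a Banach algebra is that multiplication be jointly continuous in the norm topology, and this is implied directly by the hypothesis $\|\mathbf{A}\mathbf{B}\| \leq \|\mathbf{A}\|\,\|\mathbf{B}\|$, which bounds the bilinear product by the product of the operands' norms. Combining this with the completeness established above yields that $\mathcal{M}$ is a Banach algebra, completing the proof.

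There is no serious obstacle here: the statement is essentially a citation of textbook material, which the authors attribute to \citet{dales2003introduction}. The only subtlety worth flagging is the implicit assumption of a fixed finite matrix dimension — completeness fails for an arbitrary (non-closed) set of matrices, e.g. finite-rank operators in infinite dimensions — but in the setting of the Shapley-Bellman operator the ambient matrix dimension is finite, so this hypothesis is automatically satisfied and the argument is unobstructed.
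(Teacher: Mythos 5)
The paper gives no proof of this lemma at all: it is stated as imported textbook material with the citation to \citet{dales2003introduction}, and the only ingredient actually used downstream (in Lemma \ref{lemm:shapley_q_contraction_mapping}) is that the matrix space with the metric induced by $\|\cdot\|_{1}$ is a non-empty complete metric space. Your proposal therefore does something the paper does not attempt, namely supply a self-contained argument, and the argument is sound for the setting at hand: finite-dimensionality gives completeness via norm equivalence with the Euclidean norm, the norm induces the metric in the usual way, the algebraic axioms of $M_{d}(\mathbb{R})$ are standard, and sub-multiplicativity is exactly the compatibility condition making the normed algebra a Banach algebra. What this buys is a proof that does not depend on the cited reference and makes explicit why the result is trivial here (fixed finite dimension), whereas the paper's citation buys brevity and full generality. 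One mismatch worth noting: you read $\mathcal{M}$ as the square-matrix algebra $M_{d}(\mathbb{R})$, but the paper later applies the lemma to the rectangular space $\mathbb{R}^{|\mathcal{N}|\times|\mathcal{S}||\mathcal{A}|}$, which is not closed under matrix multiplication and hence is not literally an algebra; the ``Banach algebra'' clause of the lemma is thus vacuous (or at best loosely stated) in the paper's application, and the sub-multiplicativity proved in Lemma \ref{lem:matrices_norm_1_metric_space} concerns products of matrices of compatible, generally different, shapes. Your completeness argument, however, applies verbatim to any finite-dimensional space of real matrices, square or not, so the portion of the lemma that the contraction-mapping proof actually relies on is fully covered by your proposal, and the caveat you flag about infinite-dimensional or non-closed sets is apt but harmless here.
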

        
        \begin{lemma}
            \label{lem:matrices_norm_1_metric_space}
            For a set of real matrices $\mathcal{M}$, given an arbitrary matrix $\mathbf{A} = [a_{ij}] \in \mathbb{R}^{m \times n}$, $||\mathbf{A}||_{1} = \max_{1 \leq j \leq n} \sum_{1 \leq i \leq m} |a_{ij}|$ is a sub-multiplicative norm.
        \end{lemma}
        
        \begin{proof}
            The complete proof is as follows.
            
            First, we select two arbitrary matrices belonging to $\mathcal{M}$, i.e. $\mathbf{A}=[a_{ik}] \in \mathbb{R}^{m\times r}$ and $\mathbf{B}=[b_{kj}] \in \mathbb{R}^{r\times n}$. Then, we start proving that $||\cdot||_{1}$ is a sub-multiplicative norm as follows:
            \begin{align*}
                ||\mathbf{A} \mathbf{B}||_{1} &= \bigg|\bigg| \bigg[ \sum_{1 \leq k \leq r} a_{ik} b_{kj} \bigg] \bigg| \bigg|_{1} \\
                &= \max_{1 \leq j \leq n} \sum_{1 \leq i \leq m} \bigg| \sum_{1 \leq k \leq r} a_{ik} b_{kj} \bigg| \\
                &\quad (\text{By triangle inequality, we can obtain the following inequality.}) \\
                &\leq \max_{1 \leq j \leq n} \sum_{1 \leq i \leq m} \sum_{1 \leq k \leq r} \big| a_{ik} b_{kj} \big| \\
                &= \max_{1 \leq j \leq n} \sum_{1 \leq i \leq m} \sum_{1 \leq k \leq r} \big| a_{ik} \big| \ \big| b_{kj} \big| \\
                % &= \max_{1 \leq j \leq n} \sum_{1 \leq k \leq r} \sum_{1 \leq i \leq m} \big| a_{ik} \big| \ \big| b_{kj} \big| \\
                &= \max_{1 \leq j \leq n} \sum_{1 \leq k \leq r} \big| b_{kj} \big| \sum_{1 \leq i \leq m} \big| a_{ik} \big| \\
                &\leq \big|\big| \mathbf{B} \big| \big|_{1} \max_{1 \leq k \leq r} \sum_{1 \leq i \leq m} \big| a_{ik} \big| \\
                &= \big|\big| \mathbf{B} \big| \big|_{1} \big|\big| \mathbf{A} \big| \big|_{1} \\
                &= \big|\big| \mathbf{A} \big| \big|_{1} \big|\big| \mathbf{B} \big| \big|_{1}.
            \end{align*}
            Therefore, we have proven that given an arbitrary real matrix $\mathbf{A} = [a_{ij}] \in \mathbb{R}^{m \times n}$, $||\mathbf{A}||_{1} = \max_{1 \leq j \leq n} \sum_{1 \leq i \leq m} |a_{ij}|$ is a sub-multiplicative norm.
        \end{proof}
        
        \begin{lemma}
            \label{lemm:shapley_q_contraction_mapping}
            For all $\mathbf{s} \in \mathcal{S}$ and $\mathbf{a} \in \mathcal{A}$, Shapley-Bellman operator is a contraction mapping in a non-empty complete metric space when $\max_{\mathbf{s}} \big\{ \sum_{i \in \mathcal{N}} \max_{a_{i}} w_{i}(\mathbf{s}, a_{i}) \big\} < \frac{1}{\gamma}$.
        \end{lemma}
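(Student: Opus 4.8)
The plan is to put the collection of Shapley Q-values into the matrix framework of Lemmas~\ref{lem:banach_algebra} and \ref{lem:matrices_norm_1_metric_space} and then bound the one-step difference of $\Upsilon$ by the factor $\gamma\,\max_{\mathbf{s}}\{\sum_{i\in\mathcal{N}}\max_{a_i}w_i(\mathbf{s},a_i)\}$, which the hypothesis forces to be strictly below $1$. Since the excerpt restricts to finite $\mathcal{S}$ and $\mathcal{A}$, I would identify a table $\times_{i\in\mathcal{N}}Q_i^{\phi}(\mathbf{s},a_i)$ with the matrix $\mathbf{M}_{\mathbf{Q}}\in\mathbb{R}^{|\mathcal{N}|\times|\mathcal{S}||\mathcal{A}|}$ whose column indexed by $(\mathbf{s},\mathbf{a})$ is the vector $\mathbf{Q}^{\phi}(\mathbf{s},\mathbf{a})=[Q_i^{\phi}(\mathbf{s},a_i)]^{\top}$. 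Equipping this matrix space with the sub-multiplicative $1$-norm of Lemma~\ref{lem:matrices_norm_1_metric_space}, Lemma~\ref{lem:banach_algebra} makes it a non-empty complete metric space, and the induced metric is $d(\mathbf{Q},\bar{\mathbf{Q}}) = \|\mathbf{M}_{\mathbf{Q}}-\mathbf{M}_{\bar{\mathbf{Q}}}\|_1 = \max_{\mathbf{s},\mathbf{a}}\|\mathbf{Q}^{\phi}(\mathbf{s},\mathbf{a})-\bar{\mathbf{Q}}^{\phi}(\mathbf{s},\mathbf{a})\|_1$.

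Next I would rewrite $\Upsilon$ in matrix form. Setting $\bar R(\mathbf{s},\mathbf{a})=\sum_{\mathbf{s}'}Pr(\mathbf{s}'|\mathbf{s},\mathbf{a})R$, $\mathbf{m}_{\mathbf{Q}}(\mathbf{s}')=[\max_{a_i}Q_i^{\phi}(\mathbf{s}',a_i)]^{\top}$, and introducing the rank-one matrix $\mathbf{W}(\mathbf{s},\mathbf{a})=\mathbf{w}(\mathbf{s},\mathbf{a})\mathbf{1}^{\top}\in\mathbb{R}^{|\mathcal{N}|\times|\mathcal{N}|}$, Eq.~\ref{eq:shapley_q_operator} becomes
\[
\Upsilon(\mathbf{Q})(\mathbf{s},\mathbf{a}) = \mathbf{w}(\mathbf{s},\mathbf{a})\,\bar R(\mathbf{s},\mathbf{a}) + \gamma\,\mathbf{W}(\mathbf{s},\mathbf{a})\sum_{\mathbf{s}'}Pr(\mathbf{s}'|\mathbf{s},\mathbf{a})\,\mathbf{m}_{\mathbf{Q}}(\mathbf{s}') - \mathbf{b}(\mathbf{s}),
\]
using $\sum_{i}\max_{a_i}Q_i^{\phi}(\mathbf{s}',a_i)=\mathbf{1}^{\top}\mathbf{m}_{\mathbf{Q}}(\mathbf{s}')$. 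Treating $\mathbf{w},\mathbf{b}$ as the fixed bounded data of the operator, the $\bar R$- and $\mathbf{b}$-terms are independent of the iterate, so they cancel in the difference of $\Upsilon(\mathbf{Q})$ and $\Upsilon(\bar{\mathbf{Q}})$, leaving column $(\mathbf{s},\mathbf{a})$ of $\mathbf{M}_{\Upsilon\mathbf{Q}}-\mathbf{M}_{\Upsilon\bar{\mathbf{Q}}}$ equal to $\gamma\,\mathbf{W}(\mathbf{s},\mathbf{a})\sum_{\mathbf{s}'}Pr(\mathbf{s}'|\mathbf{s},\mathbf{a})\big(\mathbf{m}_{\mathbf{Q}}(\mathbf{s}')-\mathbf{m}_{\bar{\mathbf{Q}}}(\mathbf{s}')\big)$.

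I would then chain the standard estimates. The matrix $1$-norm is the operator norm of the vector $1$-norm (Lemma~\ref{lem:matrices_norm_1_metric_space}), so $\|\mathbf{W}(\mathbf{s},\mathbf{a})\mathbf{x}\|_1\le\|\mathbf{W}(\mathbf{s},\mathbf{a})\|_1\|\mathbf{x}\|_1$; each column of $\mathbf{W}(\mathbf{s},\mathbf{a})$ is $\mathbf{w}(\mathbf{s},\mathbf{a})$, hence $\|\mathbf{W}(\mathbf{s},\mathbf{a})\|_1=\sum_{i}w_i(\mathbf{s},a_i)$. Combining with $\sum_{\mathbf{s}'}Pr(\mathbf{s}'|\mathbf{s},\mathbf{a})=1$, the nonexpansiveness of the max $|\max_{a_i}Q_i^{\phi}-\max_{a_i}\bar Q_i^{\phi}|\le\max_{a_i}|Q_i^{\phi}-\bar Q_i^{\phi}|$, and the separability identity $\sum_{i}\max_{a_i}f_i(a_i)=\max_{\mathbf{a}}\sum_{i}f_i(a_i)$ (applied to $\mathbf{m}$, which gives $\|\mathbf{m}_{\mathbf{Q}}(\mathbf{s}')-\mathbf{m}_{\bar{\mathbf{Q}}}(\mathbf{s}')\|_1\le d(\mathbf{Q},\bar{\mathbf{Q}})$, and applied to $w$, which turns $\max_{\mathbf{s},\mathbf{a}}\sum_i w_i(\mathbf{s},a_i)$ into $\max_{\mathbf{s}}\{\sum_i\max_{a_i}w_i(\mathbf{s},a_i)\}$), taking $\max_{\mathbf{s},\mathbf{a}}$ over columns yields $d(\Upsilon\mathbf{Q},\Upsilon\bar{\mathbf{Q}}) \le \gamma\,\max_{\mathbf{s}}\{\sum_{i}\max_{a_i}w_i(\mathbf{s},a_i)\}\,d(\mathbf{Q},\bar{\mathbf{Q}})$. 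By hypothesis the coefficient is strictly less than $1$, so $\Upsilon$ is a contraction on a non-empty complete metric space.

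\textbf{Main obstacle.} The norm-telescoping is routine; the delicate point is matching the contraction constant \emph{exactly} to the quantity in the hypothesis, which rests on (i) recognizing $\mathbf{w}(\mathbf{s},\mathbf{a})\mathbf{1}^{\top}$ as the right rank-one object so that its $1$-norm is precisely $\sum_i w_i(\mathbf{s},a_i)$, (ii) invoking the separability of $\max$ over joint actions twice — once to control the future-value term and once to convert $\max_{\mathbf{s},\mathbf{a}}\sum_i w_i$ into $\max_{\mathbf{s}}\sum_i\max_{a_i}w_i$ — and (iii) checking that the reward- and bias-terms really drop out of the difference, which is where one must be explicit that $\mathbf{w}$ and $\mathbf{b}$ are regarded as the fixed data of $\Upsilon$ rather than iterate-dependent.
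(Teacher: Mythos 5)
Your proposal is correct and follows essentially the same route as the paper: the same identification of $\times_{i\in\mathcal{N}}Q_i^{\phi}$ with a matrix in $\mathbb{R}^{|\mathcal{N}|\times|\mathcal{S}||\mathcal{A}|}$ equipped with the sub-multiplicative column-sum norm (via the paper's two auxiliary lemmas), the same cancellation of the reward and bias terms, and the same chain of estimates (probability sum, non-expansiveness of the max, separability of the joint action) yielding the contraction modulus $\gamma\max_{\mathbf{s}}\{\sum_{i\in\mathcal{N}}\max_{a_i}w_i(\mathbf{s},a_i)\}<1$. Your rank-one reformulation $\mathbf{W}(\mathbf{s},\mathbf{a})=\mathbf{w}(\mathbf{s},\mathbf{a})\mathbf{1}^{\top}$ is only a cosmetic repackaging of the paper's step of pulling out $\max_{\mathbf{s},\mathbf{a}}\mathbf{1}^{\top}|\mathbf{w}(\mathbf{s},\mathbf{a})|$, and if anything states the intermediate inequalities more cleanly.
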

        
        \begin{proof}
            The complete proof is as follows.
            
            To ease life, we firstly define some variables that will be used for proof such that 
            \begin{align*}
                &\mathbf{Q}^{\phi} = \times_{i \in \mathcal{N}} Q_{i}^{\phi} \in \mathbb{R}^{|\mathcal{N}|\times|\mathcal{S}||\mathcal{A}|}, \\
                &\mathbf{w} \in \mathbb{R}^{|\mathcal{N}|\times|\mathcal{S}||\mathcal{A}|}, \\
                &Pr \in \mathbb{R}^{|\mathcal{S}||\mathcal{A}|\times|\mathcal{S}|}, \\
                &\mathbf{1} = [1,1,...,1]^{\top},
            \end{align*}
            where $\mathcal{A}=\mathlarger{\mathlarger{\times}}_{i \in \mathcal{N}} \mathcal{A}_{i}$. Then, for an arbitrary matrix $\mathbf{A} \in \mathbb{R}^{m \times n}$, we define the $||\cdot||_{1}$ for the induced matrix norm such that
            \begin{equation*}
                ||\mathbf{A}||_{1} = \max_{1 \leq j \leq n} \sum_{1 \leq i \leq m} |a_{ij}|,
            \end{equation*}
            where $\mathit{a}_{ij}$ is an arbitrary element in $\mathbf{A}$. By Lemma \ref{lem:matrices_norm_1_metric_space}, $||\cdot||_{1}$ defined here is a sub-multiplicative norm. By Lemma \ref{lem:banach_algebra}, the set of real matrices $\mathbb{R}^{|\mathcal{N}|\times |\mathcal{S}| |\mathcal{A}|}$ with the norm $||\cdot||_{1}$ is a Banach algebra and a non-empty complete metric space with the metric induced by $||\cdot||_{1}$.
            
            To show that the operator $\mathlarger{\Upsilon}$ is a contraction mapping in the supremum norm, we just need to show that for any $\mathbf{Q}^{\phi}_{1} = \times_{i \in \mathcal{N}} \big( Q_{i}^{\phi} \big)_{1} \in \mathbb{R}^{|\mathcal{N}|\times |\mathcal{S}| |\mathcal{A}|}$ and $\mathbf{Q}^{\phi}_{2} = \times_{i \in \mathcal{N}} \big( Q_{i}^{\phi} \big)_{2} \in \mathbb{R}^{|\mathcal{N}|\times |\mathcal{S}| |\mathcal{A}|}$, we have $|| \mathlarger{\Upsilon} \mathbf{Q}^{\phi}_{1}  - \mathlarger{\Upsilon} \mathbf{Q}^{\phi}_{2} ||_{1} \leq \delta ||\mathbf{Q}^{\phi}_{1} - \mathbf{Q}^{\phi}_{2}||_{1}$, where $\delta \in (0, 1)$.
            \begin{align*}
                &\quad || \mathlarger{\Upsilon} \mathbf{Q}^{\phi}_{1}  - \mathlarger{\Upsilon} \mathbf{Q}^{\phi}_{2} ||_{1} \\
                &= \max_{\mathbf{s}, \mathbf{a}} \mathbf{1}^{\top} \bigg| \mathbf{w}(\mathbf{s}, \mathbf{a}) \ \sum_{\mathbf{s}' \in \mathcal{S}} Pr(\mathbf{s}'|\mathbf{s}, \mathbf{a}) \Big[ R(\mathbf{s}, \mathbf{a}) + \gamma \sum_{i \in \mathcal{N}} \max_{a_{i}} \big( Q_{i}^{\phi} \big)_{1}(\mathbf{s}', a_{i}) \Big] - \mathbf{b}(\mathbf{s})\\
                &- \mathbf{w}(\mathbf{s}, \mathbf{a}) \ \sum_{\mathbf{s}' \in \mathcal{S}} Pr(\mathbf{s}'|\mathbf{s}, \mathbf{a}) \Big[ R(\mathbf{s}, \mathbf{a}) + \gamma \sum_{i \in \mathcal{N}} \max_{a_{i}} \big( Q_{i}^{\phi} \big)_{2}(\mathbf{s}', a_{i}) \Big] + \mathbf{b}(\mathbf{s}) \bigg|\\
                &= \gamma \max_{\mathbf{s}, \mathbf{a}} \mathbf{1}^{\top} \bigg| \mathbf{w}(\mathbf{s}, \mathbf{a}) \ \sum_{\mathbf{s}' \in \mathcal{S}} Pr(\mathbf{s}'|\mathbf{s}, \mathbf{a}) \Big[ \sum_{i \in \mathcal{N}} \max_{a_{i}} \big( Q_{i}^{\phi} \big)_{1}(\mathbf{s}', a_{i}) - \sum_{i \in \mathcal{N}} \max_{a_{i}} \big( Q_{i}^{\phi} \big)_{2}(\mathbf{s}', a_{i}) \Big] \bigg| \\
                &\leq \gamma \max_{\mathbf{s}, \mathbf{a}} \mathbf{1}^{\top} \bigg| \mathbf{w}(\mathbf{s}, \mathbf{a}) \bigg| \max_{\mathbf{s}, \mathbf{a}} \bigg| \sum_{\mathbf{s}' \in \mathcal{S}} Pr(\mathbf{s}'|\mathbf{s}, \mathbf{a}) \Big[ \sum_{i \in \mathcal{N}} \max_{a_{i}} \big( Q_{i}^{\phi} \big)_{1}(\mathbf{s}', a_{i}) - \sum_{i \in \mathcal{N}} \max_{a_{i}} \big( Q_{i}^{\phi} \big)_{2}(\mathbf{s}', a_{i}) \Big] \bigg| \\
                &\quad \left(\text{If we write $\delta = \gamma \max_{\mathbf{s}, \mathbf{a}} \mathbf{1}^{\top} \big| \mathbf{w}(\mathbf{s}, \mathbf{a}) \big|$, we can have the following equation.} \right) \\
                &= \delta \max_{\mathbf{s}, \mathbf{a}} \bigg| \sum_{\mathbf{s}' \in \mathcal{S}} Pr(\mathbf{s}'|\mathbf{s}, \mathbf{a}) \Big[ \sum_{i \in \mathcal{N}} \max_{a_{i}} \big( Q_{i}^{\phi} \big)_{1}(\mathbf{s}', a_{i}) - \sum_{i \in \mathcal{N}} \max_{a_{i}} \big( Q_{i}^{\phi} \big)_{2}(\mathbf{s}', a_{i}) \Big] \bigg| \\
                &\leq \delta \max_{\mathbf{s}, \mathbf{a}} \sum_{\mathbf{s}' \in \mathcal{S}} Pr(\mathbf{s}'|\mathbf{s}, \mathbf{a}) \bigg| \sum_{i \in \mathcal{N}} \max_{a_{i}} \big( Q_{i}^{\phi} \big)_{1}(\mathbf{s}', a_{i}) - \sum_{i \in \mathcal{N}} \max_{a_{i}} \big( Q_{i}^{\phi} \big)_{2}(\mathbf{s}', a_{i}) \bigg| \\
                &= \delta \bigg| \sum_{i \in \mathcal{N}} \Big[ \max_{a_{i}} \big( Q_{i}^{\phi} \big)_{1}(\mathbf{s}', a_{i}) - \max_{a_{i}} \big( Q_{i}^{\phi} \big)_{2}(\mathbf{s}', a_{i}) \Big] \bigg| \\
                &\quad \left( \text{By triangle inequality, we can obtain the following inequality.} \right) \\
                &\leq \delta \sum_{i \in \mathcal{N}} \bigg| \max_{a_{i}} \big( Q_{i}^{\phi} \big)_{1}(\mathbf{s}', a_{i}) - \max_{a_{i}} \big( Q_{i}^{\phi} \big)_{2}(\mathbf{s}', a_{i}) \bigg| \\
                &\leq \delta \sum_{i \in \mathcal{N}} \max_{a_{i}} \bigg| \big( Q_{i}^{\phi} \big)_{1}(\mathbf{s}', a_{i}) - \big( Q_{i}^{\phi} \big)_{2}(\mathbf{s}', a_{i}) \bigg| \\
                &\quad \left( \text{Since $\mathbf{a} = \mathlarger{\mathlarger{\times}}_{\scriptscriptstyle{i \in \mathcal{N}}} a_{i}$, we have the following equation.} \right) \\
                &= \delta \max_{\mathbf{a}} \sum_{i \in \mathcal{N}} \bigg| \big( Q_{i}^{\phi} \big)_{1}(\mathbf{s}', a_{i}) - \big( Q_{i}^{\phi} \big)_{2}(\mathbf{s}', a_{i}) \bigg| \\
                &\leq \delta \max_{\mathbf{z}, \mathbf{a}} \sum_{i \in \mathcal{N}} \bigg| \big( Q_{i}^{\phi} \big)_{1}(\mathbf{z}, a_{i}) - \big( Q_{i}^{\phi} \big)_{2}(\mathbf{z}, a_{i}) \bigg| 
                = \delta || \mathbf{Q}^{\phi}_{1}  - \mathbf{Q}^{\phi}_{2} ||_{1}.
            \end{align*}
            
            Now, we need to discuss the condition to $\delta \in (0, 1)$. Apparently, $\delta > 0$, so we just need to discuss the condition to guarantee that $\delta < 1$. We now have the following discussions such that
            \begin{align*}
                &\quad \ \ \delta = \gamma \max_{\mathbf{s}, \mathbf{a}} \mathbf{1}^{\top} \big| \mathbf{w}(\mathbf{s}, \mathbf{a}) \big| < 1 \ (\text{Since $w_{i}(\mathbf{s}, a_{i}) > 0$.})\\
                % &\mathlarger{\Downarrow} \\
                &\Rightarrow \gamma \max_{\mathbf{s}, \mathbf{a}} \sum_{i \in \mathcal{N}} w_{i}(\mathbf{s}, a_{i}) < 1 \\
                &(\text{When $\gamma \neq 0$, we can have the following inequality.}) \\
                % &\mathlarger{\Downarrow} \\
                &\Rightarrow \max_{\mathbf{s}, \mathbf{a}} \sum_{i \in \mathcal{N}} w_{i}(\mathbf{s}, a_{i}) < \frac{1}{\gamma} \\
                &(\text{Since $\mathbf{a} = \mathlarger{\mathlarger{\times}}_{\scriptscriptstyle i \in \mathcal{N}} a_{i}$, we have the following equation.}) \\
                % &\mathlarger{\Downarrow} \\
                &\Rightarrow \max_{\mathbf{s}} \Big\{ \sum_{i \in \mathcal{N}} \max_{a_{i}} w_{i}(\mathbf{s}, a_{i}) \Big\} < \frac{1}{\gamma}.
            \end{align*}
            Therefore, we show that Shapley-Bellman operator $\mathlarger{\Upsilon}$ is a contraction mapping in the non-empty complete metric space generated by $\mathbb{R}^{|\mathcal{N}| \times |\mathcal{S}| |\mathcal{A}|}$ with the metric induced by $||\cdot||_{1}$, when $\max_{\mathbf{s}} \Big\{ \sum_{i \in \mathcal{N}} \max_{a_{i}} w_{i}(\mathbf{s}, a_{i}) \Big\} < \frac{1}{\gamma}$. Finally, it is apparent that $w_{i}(\mathbf{s}, a_{i}) = 1 / |\mathcal{N}|$ when $\mathit{a}_{i} = \arg\max_{a_{i}} Q^{\phi}_{i}(\mathbf{s}, a_{i})$ satisfies the above condition.
        \end{proof}
    
        \begin{corollary}
        \label{coro:shapley_q_fixed_point}
            According to Banach fixed-point theorem \citep{banach1922operations}, Shapley-Bellman operator admits a unique fixed point. Moreover, starting by an arbitrary start point, the sequence recursively generated by Shapley-Bellman operator can finally converge to that fixed point.
        \end{corollary}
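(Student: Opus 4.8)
The plan is to obtain this statement as an immediate consequence of the Banach fixed-point theorem, with all the substantive work already carried out in the preceding lemmas. First I would fix the ambient space: let $\mathcal{M}=\mathbb{R}^{|\mathcal{N}|\times|\mathcal{S}||\mathcal{A}|}$ be the set of real matrices whose rows are indexed by agents and whose columns are indexed by state--action pairs, and equip it with the metric induced by the induced matrix norm $\|\mathbf{A}\|_{1}=\max_{j}\sum_{i}|a_{ij}|$. By Lemma~\ref{lem:matrices_norm_1_metric_space} this norm is sub-multiplicative, so Lemma~\ref{lem:banach_algebra} guarantees that $(\mathcal{M},\|\cdot\|_{1})$ is a non-empty complete metric space (indeed a Banach algebra).

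Next I would check the three hypotheses required by Banach's theorem. Completeness holds by the previous step. The Shapley--Bellman operator $\Upsilon$ is a self-map of $\mathcal{M}$, since the right-hand side of Eq.~\ref{eq:shapley_q_operator} is again a stack of $|\mathcal{N}|$ real functions on $\mathcal{S}\times\mathcal{A}$, hence an element of $\mathcal{M}$. Finally, under the hypothesis $\max_{\mathbf{s}}\big\{\sum_{i\in\mathcal{N}}\max_{a_{i}}w_{i}(\mathbf{s},a_{i})\big\}<1/\gamma$, Lemma~\ref{lemm:shapley_q_contraction_mapping} shows that $\Upsilon$ is a $\delta$-contraction on $(\mathcal{M},\|\cdot\|_{1})$ with modulus $\delta=\gamma\max_{\mathbf{s},\mathbf{a}}\mathbf{1}^{\top}|\mathbf{w}(\mathbf{s},\mathbf{a})|\in(0,1)$; in particular the canonical choice $w_{i}(\mathbf{s},a_{i})=1/|\mathcal{N}|$ on greedy actions already meets the condition. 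Invoking the Banach fixed-point theorem \citep{banach1922operations} then yields a unique $\mathbf{Q}^{\phi}\in\mathcal{M}$ with $\Upsilon(\mathbf{Q}^{\phi})=\mathbf{Q}^{\phi}$, and, for an arbitrary start point $\mathbf{Q}^{\phi}_{0}$, the recursively generated sequence $\mathbf{Q}^{\phi}_{k+1}=\Upsilon(\mathbf{Q}^{\phi}_{k})$ satisfies $\|\mathbf{Q}^{\phi}_{k}-\mathbf{Q}^{\phi}\|_{1}\le\delta^{k}\|\mathbf{Q}^{\phi}_{0}-\mathbf{Q}^{\phi}\|_{1}\to 0$, which is precisely the asserted convergence (at a geometric rate).

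I do not anticipate a genuine obstacle at this stage: the delicate ingredient is the contraction estimate, which is already furnished by Lemma~\ref{lemm:shapley_q_contraction_mapping}, so the only point needing care is confirming that each hypothesis of Banach's theorem (completeness, the self-map property, and modulus strictly below $1$) is in place, and the three cited lemmas collectively handle this. The separate task of identifying this fixed point with the \emph{optimal} Markov Shapley Q-value and reading off the corresponding optimal joint deterministic policy is not part of the corollary; that is settled in Theorem~\ref{thm:shapley_q_optimal} by matching the fixed-point equation of $\Upsilon$ against the Bellman optimality equation together with conditions C.1 and C.2.
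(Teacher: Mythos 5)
Your proposal is correct and follows essentially the same route as the paper's proof: establish that $\langle \mathbb{R}^{|\mathcal{N}|\times|\mathcal{S}||\mathcal{A}|}, \|\cdot\|_{1}\rangle$ is a non-empty complete metric space via Lemmas \ref{lem:banach_algebra} and \ref{lem:matrices_norm_1_metric_space}, invoke the contraction property from Lemma \ref{lemm:shapley_q_contraction_mapping} under the stated condition on $w_{i}$, and apply the Banach fixed-point theorem to obtain the unique fixed point and convergence of the iterates. Your additional remarks (the self-map check, the explicit geometric rate, and deferring the identification of the fixed point with the optimal Markov Shapley Q-value to Theorem \ref{thm:shapley_q_optimal}) are consistent with, and slightly more explicit than, the paper's argument.
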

        
        \begin{proof}
            Since $\langle \mathbb{R}^{|\mathcal{N}|\times |\mathcal{S}| |\mathcal{A}|}, ||\cdot||_{1} \rangle$ is a non-empty complete metric space and Shapley-Bellman operator $\mathlarger{\Upsilon}$ is shown as a contraction mapping in Lemma \ref{lemm:shapley_q_contraction_mapping}, by Banach fixed-point theorem \citep{banach1922operations} we can directly conclude that Shapley-Bellman operator $\mathlarger{\Upsilon}$ admits a unique fixed point. Furthermore, starting by an arbitrary start point, the sequence recursively generated by Shapley-Bellman operator $\mathlarger{\Upsilon}$ can finally converge to that fixed point.
        \end{proof}
        
        \begingroup
            \def\thetheorem{\ref{thm:shapley_q_optimal}}
            \begin{theorem}
                Shapley-Bellman operator can converge to the optimal Markov Shapley Q-value and the corresponding optimal joint deterministic policy when $\max_{\mathbf{s}} \big\{ \sum_{i \in \mathcal{N}} \max_{a_{i}} w_{i}(\mathbf{s}, a_{i}) \big\} < \frac{1}{\gamma}$.
            \end{theorem}
        \endgroup
        
        \begin{proof}
            By Corollary \ref{coro:shapley_q_fixed_point}, we get that Shapley-Bellman operator admits a unique fixed point. Since Shapley-Bellman optimality equation (i.e., Eq.\ref{eq:shapley_q_optimality_equation}) is obviously a fixed point for Shapley-Bellman operator, it is not difficult to get the conclusion that the optimal Markov Shapley Q-value is achieved. Since the sum of optimal Markov Shapley Q-values is equal to the optimal global Q-value and the optimal global Q-value corresponds to the optimal joint deterministic policy, we show that the optimal joint deterministic policy is achieved. Besides, it is obvious that Shapley-Bellman optimality equation can be transformed back to the Bellman optimality equation w.r.t. the optimal global Q-value, given the efficiency property of Markov Shapley value.
        \end{proof}
        
        % \textbf{Stochastic Approximation of Shapley-Bellman operator.} 
        \subsubsection{Stochastic Approximation of Shapley-Bellman operator}
        \label{subsubsec:stochastic_approximation_of_shapley-q_operator_appendix}
        
            We now derive the stochastic approximation of Shapley-Bellman operator over the value space, i.e. a form of Q-learning derived from Shapley-Bellman operator. By sampling from $Pr(\mathbf{s}'|\mathbf{s}, \mathbf{a})$ via Monte Carlo method, the Q-learning algorithm can be expressed as follows:
            \begin{equation}
                \mathbf{Q}^{\phi}_{t+1}(\mathbf{s}, \mathbf{a}) \leftarrow \mathbf{Q}^{\phi}_{t}(\mathbf{s}, \mathbf{a}) + \alpha_{t}(\mathbf{s}, \mathbf{a}) \big[ \mathbf{w}(\mathbf{s}, \mathbf{a}) \big( R_{t} + \gamma \sum_{i \in \mathcal{N}} \max_{a_{i}} (Q_{i}^{\phi})_{t}(\mathbf{s}', a_{i}) \big) - \mathbf{b}(\mathbf{s}) - \mathbf{Q}^{\phi}_{t}(\mathbf{s}, \mathbf{a}) \big].
            \label{eq:shapley_q_learning_primal}
            \end{equation}
            
            \begin{lemma}[\citet{jaakkola1994convergence}]
                The random process $\{\Delta_{t}\}$ taking values $\mathbb{R}^{n}$ defined as $$\Delta_{t+1}(x) = (1 - \alpha_{t}(x)) \Delta_{t}(x) + \alpha_{t}(x) F_{t}(x)$$ converges to 0 w.p.1 under the following assumptions:
                \begin{itemize}
                    \item $0 \leq \alpha_{t} \leq 1$, $\sum_{t} \alpha_{t}(x) = \infty$ and $\sum_{t} \alpha_{t}^{2} \leq \infty$;
                    \item $|| \mathbb{E}[F_{t}(x) | \mathcal{F}_{t}] ||_{W} \leq \delta || \Delta_{t} ||_{W}$, with $0 \leq \delta < 1$;
                    \item $\textbf{var} [F_{t}(x) | \mathcal{F}_{t}] \leq C ( 1 + ||\Delta_{t}||_{W}^{2} )$, for $C > 0$.
                \end{itemize}
            \label{lemm:stochastic_process}
            \end{lemma}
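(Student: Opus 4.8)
The plan is to run the classical stochastic-approximation argument --- a weighted-max-norm generalisation of Dvoretzky's theorem --- by decomposing the driving term $F_t$ into a contractive conditional-mean piece and a martingale-difference noise piece, and then controlling each of them separately along the recursion.

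First I would set $r_t(x) := \mathbb{E}[F_t(x)\mid\mathcal{F}_t]$ and $v_t(x) := F_t(x)-r_t(x)$, so that $\{v_t(x)\}$ is a martingale-difference sequence with $\textbf{var}[v_t(x)\mid\mathcal{F}_t]\le C(1+\|\Delta_t\|_W^2)$, while $\|r_t\|_W\le\delta\|\Delta_t\|_W$ with $\delta<1$ by hypothesis. I would then split the iteration by introducing the noise accumulator $e_{t+1}(x)=(1-\alpha_t(x))\,e_t(x)+\alpha_t(x)\,v_t(x)$ with $e_0\equiv 0$, together with the residual $Y_t:=\Delta_t-e_t$, which satisfies $Y_{t+1}(x)=(1-\alpha_t(x))\,Y_t(x)+\alpha_t(x)\,r_t(x)$. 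The strategy is to show $\|e_t\|_W\to 0$ and $\|Y_t\|_W\to 0$ w.p.1.

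For the noise part, squaring the recursion for $e_t$ and conditioning (using $\mathbb{E}[v_t(x)\mid\mathcal{F}_t]=0$ and $(1-\alpha_t)^2\le 1-\alpha_t$) gives $\mathbb{E}[e_{t+1}(x)^2\mid\mathcal{F}_t]\le(1-\alpha_t(x))\,e_t(x)^2+\alpha_t(x)^2\,C(1+\|\Delta_t\|_W^2)$; provided $\{\Delta_t\}$ is already known to be a.s. bounded, the conditions $\sum_t\alpha_t(x)=\infty$, $\sum_t\alpha_t(x)^2<\infty$ and the Robbins--Siegmund almost-supermartingale convergence theorem yield $e_t(x)\to 0$ coordinatewise, hence $\|e_t\|_W\to 0$. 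The a.s. boundedness of $\{\Delta_t\}$ is the delicate input and I would establish it by a truncation/rescaling argument: stop the iteration the first time $\|\Delta_t\|_W$ crosses a level $D_m$, observe that between the last time the process was below $D_{m-1}$ and the present it behaves as a $\delta$-contraction plus accumulated noise of size $O\!\big(D_m\sqrt{\sum\alpha^2}\big)$, conclude that for $m$ large it cannot actually reach $D_m$, and let $D_m\uparrow\infty$.

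Finally, with $\|e_t\|_W\to 0$ in hand, I would close the argument on $Y_t$ by a shrinking-bound (``ladder'') estimate: from $\|r_t\|_W\le\delta\|\Delta_t\|_W\le\delta(\|Y_t\|_W+\|e_t\|_W)$ and the nonexpansiveness of the $(1-\alpha_t)$-averaging, one shows that for every $D>\limsup_t\|\Delta_t\|_W$ and every $\epsilon>0$ one eventually has $\|\Delta_t\|_W\le(\delta+\epsilon)D$; iterating this contraction of the bound gives $\limsup_t\|\Delta_t\|_W\le(\delta+\epsilon)\limsup_t\|\Delta_t\|_W$, so $\limsup_t\|\Delta_t\|_W=0$ and $\Delta_t\to0$ w.p.1. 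I expect the main obstacle to be exactly the interplay between the $\|\Delta_t\|_W^2$-growth allowed in the variance bound and the need for a.s. boundedness of $\{\Delta_t\}$ before the noise can be tamed --- breaking that apparent circularity through the stopping-time construction is the technically heaviest step; a secondary nuisance is that $\alpha_t(x)$ and the hypotheses are stated coordinatewise while the contraction lives in the weighted norm $\|\cdot\|_W$, so the coordinatewise noise estimates and the norm-level contraction estimate have to be carefully reconciled.
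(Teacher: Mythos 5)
The paper offers no proof of this lemma at all --- it is imported verbatim from \citet{jaakkola1994convergence} and used as a black box in the convergence proof of Theorem 4 --- so there is no in-paper argument to compare against. Your sketch reproduces the standard proof from that reference (and from the Bertsekas--Tsitsiklis treatment): splitting $F_t$ into its conditional mean and a martingale-difference remainder, killing the noise accumulator via an almost-supermartingale (Robbins--Siegmund) argument, securing almost-sure boundedness of $\{\Delta_t\}$ by a rescaling/stopping-time construction, and iterating the shrinking-bound contraction; this is essentially the same route as the cited source and the outline is sound.
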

            
            \setcounter{theorem}{3}
            \begin{theorem}
                For a finite Markov convex game, the Q-learning algorithm derived by Shapley-Bellman operator given by the update rule such that
                \begin{equation*}
                    \mathbf{Q}^{\phi}_{t+1}(\mathbf{s}, \mathbf{a}) \leftarrow \mathbf{Q}^{\phi}_{t}(\mathbf{s}, \mathbf{a}) + \alpha_{t}(\mathbf{s}, \mathbf{a}) \left[ \mathbf{w}(\mathbf{s}, \mathbf{a}) \left( R_{t} + \gamma \sum_{i \in \mathcal{N}} \max_{a_{i}} (Q_{i}^{\phi})_{t}(\mathbf{s}', a_{i}) \right) - \mathbf{b}(\mathbf{s}) - \mathbf{Q}^{\phi}_{t}(\mathbf{s}, \mathbf{a}) \right],
                \end{equation*}
                converges w.p.1 to the optimal Markov Shapley Q-value if 
                \begin{equation}
                    \sum_{t} \alpha_{t}(\mathbf{s}, \mathbf{a}) = \infty \ \ \ \ \ \ \ \ \sum_{t} \alpha^{2}_{t}(\mathbf{s}, \mathbf{a}) \leq \infty
                \label{eq:alpha_condition}
                \end{equation}
                for all $\mathbf{s} \in \mathcal{S}$ and $\mathbf{a} \in \mathcal{A}$ as well as $\max_{\mathbf{s}} \left\{ \sum_{i \in \mathcal{N}} \max_{a_{i}} w_{i}(\mathbf{s}, a_{i}) \right\} < \frac{1}{\gamma}$.
                \begin{proof}
                    The proof follows the sketch of proving the convergence of Q-learning given by \citet{melo2001convergence}. First, we rewrite Eq.\ref{eq:shapley_q_learning_primal} to 
                    $$\mathbf{Q}^{\phi}_{t}(\mathbf{s}, \mathbf{a}) = \left( 1 - \alpha_{t}(\mathbf{s}, \mathbf{a}) \right) \mathbf{Q}^{\phi}_{t}(\mathbf{s}, \mathbf{a}) + \alpha_{t}(\mathbf{s}, \mathbf{a}) \left[ \mathbf{w}(\mathbf{s}, \mathbf{a}) \left( R_{t} + \gamma \sum_{i \in \mathcal{N}} \max_{a_{i}} (Q_{i}^{\phi})_{t}(\mathbf{s}', a_{i}) \right) - \mathbf{b}(\mathbf{s}) \right].$$
                    By subtracting $\mathbf{Q}^{\phi^{*}}(\mathbf{s}, \mathbf{a})$ and letting 
                    $$\Delta_{t}(\mathbf{s}, \mathbf{a}) = \mathbf{Q}^{\phi}_{t}(\mathbf{s}, \mathbf{a}) - \mathbf{Q}^{\phi^{*}}(\mathbf{s}, \mathbf{a}),$$ we can transform Eq.\ref{eq:shapley_q_learning_primal} to
                    
                    $$\Delta_{t+1}(\mathbf{s}, \mathbf{a}) = (1 - \alpha_{t}(\mathbf{s}, \mathbf{a})) \Delta_{t}(\mathbf{s}, \mathbf{a}) + \alpha_{t}(\mathbf{s}, \mathbf{a}) F_{t}(\mathbf{s}, \mathbf{a}),$$
                    
                    where $$F_{t}(\mathbf{s}, \mathbf{a}) = \mathbf{w}(\mathbf{s}, \mathbf{a}) \left( R_{t} + \gamma \sum_{i \in \mathcal{N}} \max_{a_{i}} (Q_{i}^{\phi})_{t}(\mathbf{s}', a_{i}) \right) - \mathbf{b}(\mathbf{s}) - \mathbf{Q}^{\phi^{*}}(\mathbf{s}, \mathbf{a}).$$
                    Since $\mathbf{s}' \in \mathcal{S}$ is a random sample from Markov Chain, so we can get that
                    \begin{align*}
                        \mathbb{E}[ F_{t}(\mathbf{s}, \mathbf{a}) | \mathcal{F}_{t} ] &= \sum_{\mathbf{s}' \in \mathcal{S}} Pr(\mathbf{s}'|\mathbf{s}, \mathbf{a}) \left[ \mathbf{w}(\mathbf{s}, \mathbf{a}) \left( R_{t} + \gamma \sum_{i \in \mathcal{N}} \max_{a_{i}} (Q_{i}^{\phi})_{t}(\mathbf{s}', a_{i}) \right) - \mathbf{b}(\mathbf{s}) - \mathbf{Q}^{\phi^{*}}(\mathbf{s}, \mathbf{a}) \right] \\
                        &= \mathbf{w}(\mathbf{s}, \mathbf{a}) \sum_{\mathbf{s}' \in \mathcal{S}} Pr(\mathbf{s}'|\mathbf{s}, \mathbf{a}) \left( R_{t} + \gamma \sum_{i \in \mathcal{N}} \max_{a_{i}} (Q_{i}^{\phi})_{t}(\mathbf{s}', a_{i}) \right) - \mathbf{b}(\mathbf{s}) - \mathbf{Q}^{\phi^{*}}(\mathbf{s}, \mathbf{a}) \\
                        & \quad \left(\text{Since $\max_{\mathbf{s}} \big\{ \sum_{i \in \mathcal{N}} \max_{a_{i}} w_{i}(\mathbf{s}, a_{i}) \big\} < \frac{1}{\gamma}$.}\right) \\
                        &= \mathlarger{\Upsilon} \mathbf{Q}^{\phi}_{t}(\mathbf{s}, \mathbf{a}) - \mathlarger{\Upsilon} \mathbf{Q}^{\phi^{*}}(\mathbf{s}, \mathbf{a}).
                    \end{align*}
                    
                    By the results from Theorem \ref{lemm:shapley_q_contraction_mapping}, we can get that
                    \begin{equation*}
                        ||\mathbb{E}[ F_{t}(\mathbf{s}, \mathbf{a}) | \mathcal{F}_{t} ]||_{1} \leq \delta ||\mathbf{Q}^{\phi}_{t}(\mathbf{s}, \mathbf{a}) - \mathbf{Q}^{\phi^{*}}(\mathbf{s}, \mathbf{a})||_{1} = \delta ||\Delta_{t}(\mathbf{s}, \mathbf{a})||_{1},
                    \end{equation*}
                    where $\delta \in (0, 1)$.
                    
                    Next, we get that
                    \begin{align*}
                        \textbf{var}[F_{t}(\mathbf{s}, \mathbf{a})| \mathcal{F}_{t}] &= \mathlarger{\mathbb{E}} \big[ \big( \mathbf{w}(\mathbf{s}, \mathbf{a}) \big( R_{t} + \gamma \sum_{i \in \mathcal{N}} \max_{a_{i}} (Q_{i}^{\phi})_{t}(\mathbf{s}', a_{i}) \big) - \mathbf{b}(\mathbf{s}) - \mathbf{Q}^{\phi^{*}}(\mathbf{s}, \mathbf{a}) \\
                        &- \mathlarger{\Upsilon} \mathbf{Q}^{\phi}_{t}(\mathbf{s}, \mathbf{a}) + \mathbf{Q}^{\phi^{*}}(\mathbf{s}, \mathbf{a}) \big)^{2} \big] \\
                        &= \mathlarger{\mathbb{E}} \big[ \big( \mathbf{w}(\mathbf{s}, \mathbf{a}) \big( R_{t} + \gamma \sum_{i \in \mathcal{N}} \max_{a_{i}} (Q_{i}^{\phi})_{t}(\mathbf{s}', a_{i}) \big) - \mathbf{b}(\mathbf{s}) - \mathlarger{\Upsilon} \mathbf{Q}^{\phi}_{t}(\mathbf{s}, \mathbf{a}) \big)^{2} \big] \\
                        &= \textbf{var} \big[ \mathbf{w}(\mathbf{s}, \mathbf{a}) \big( R_{t} + \gamma \sum_{i \in \mathcal{N}} \max_{a_{i}} (Q_{i}^{\phi})_{t}(\mathbf{s}', a_{i}) \big) - \mathbf{b}(\mathbf{s}) \mathlarger{|} \mathcal{F}_{t} \big].
                    \end{align*}
                    
                    Since $R_{t}$, $\mathbf{w}(\mathbf{s}, \mathbf{a})$ and $\mathbf{b}(\mathbf{s})$ are bounded, it clearly verifies that 
                    \begin{equation*}
                        \textbf{var}[F_{t}(\mathbf{s}, \mathbf{a})| \mathcal{F}_{t}] \leq C (1 + ||\Delta_{t}(\mathbf{s}, \mathbf{a})||_{1}^{2})
                    \end{equation*}
                    for some constant $C$.
                    
                    Finally, by Lemma \ref{lemm:stochastic_process} it is easy to see that $\Delta_{t}$ converges to 0 w.p.1, i.e., $ \mathbf{Q}^{\phi}_{t}(\mathbf{s}, \mathbf{a})$ converges to $ \mathbf{Q}^{\phi^{*}}(\mathbf{s}, \mathbf{a})$ w.p.1, given the condition in Eq.\ref{eq:alpha_condition}.
                \end{proof}
            \label{thm:proof_of_shapley_q_learning}
            \end{theorem}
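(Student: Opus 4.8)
The plan is to apply the stochastic-approximation lemma of \citet{jaakkola1994convergence} (Lemma \ref{lemm:stochastic_process}), following the template that \citet{melo2001convergence} used for ordinary $Q$-learning. First I would recast the update rule Eq.\ref{eq:shapley_q_learning_primal} into the canonical form required by that lemma: writing it as $\mathbf{Q}^{\phi}_{t}(\mathbf{s},\mathbf{a}) = (1-\alpha_{t}(\mathbf{s},\mathbf{a}))\mathbf{Q}^{\phi}_{t}(\mathbf{s},\mathbf{a}) + \alpha_{t}(\mathbf{s},\mathbf{a})\big[\mathbf{w}(\mathbf{s},\mathbf{a})(R_{t}+\gamma\sum_{i}\max_{a_{i}}(Q_{i}^{\phi})_{t}(\mathbf{s}',a_{i})) - \mathbf{b}(\mathbf{s})\big]$, subtracting the fixed point $\mathbf{Q}^{\phi^{*}}(\mathbf{s},\mathbf{a})$ from both sides, and defining the error process $\Delta_{t}(\mathbf{s},\mathbf{a}) = \mathbf{Q}^{\phi}_{t}(\mathbf{s},\mathbf{a}) - \mathbf{Q}^{\phi^{*}}(\mathbf{s},\mathbf{a})$. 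This yields $\Delta_{t+1}(\mathbf{s},\mathbf{a}) = (1-\alpha_{t}(\mathbf{s},\mathbf{a}))\Delta_{t}(\mathbf{s},\mathbf{a}) + \alpha_{t}(\mathbf{s},\mathbf{a})F_{t}(\mathbf{s},\mathbf{a})$ with $F_{t}(\mathbf{s},\mathbf{a}) = \mathbf{w}(\mathbf{s},\mathbf{a})(R_{t}+\gamma\sum_{i\in\mathcal{N}}\max_{a_{i}}(Q_{i}^{\phi})_{t}(\mathbf{s}',a_{i})) - \mathbf{b}(\mathbf{s}) - \mathbf{Q}^{\phi^{*}}(\mathbf{s},\mathbf{a})$. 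Since the MCG is finite I would work in the space $\mathbb{R}^{|\mathcal{N}|\times|\mathcal{S}||\mathcal{A}|}$ with the sub-multiplicative norm $\|\cdot\|_{1}$ of Lemma \ref{lem:matrices_norm_1_metric_space}, which plays the role of the weighted norm $\|\cdot\|_{W}$ in Lemma \ref{lemm:stochastic_process}, and then check its three hypotheses one at a time.

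The step-size hypothesis $0\le\alpha_{t}\le 1$, $\sum_{t}\alpha_{t}=\infty$, $\sum_{t}\alpha_{t}^{2}<\infty$ is exactly the assumption Eq.\ref{eq:alpha_condition}. For the contraction-in-expectation hypothesis I would take the conditional expectation of $F_{t}$ over the random successor state $\mathbf{s}'\sim Pr(\cdot\mid\mathbf{s},\mathbf{a})$; because $\mathbf{w}(\mathbf{s},\mathbf{a})$ and $\mathbf{b}(\mathbf{s})$ are $\mathcal{F}_{t}$-measurable they can be pulled through the expectation, so $\mathbb{E}[F_{t}(\mathbf{s},\mathbf{a})\mid\mathcal{F}_{t}] = \Upsilon\mathbf{Q}^{\phi}_{t}(\mathbf{s},\mathbf{a}) - \mathbf{Q}^{\phi^{*}}(\mathbf{s},\mathbf{a})$, and since $\mathbf{Q}^{\phi^{*}}$ satisfies the Shapley-Bellman optimality equation Eq.\ref{eq:shapley_q_optimality_equation} it is the (unique) fixed point of $\Upsilon$ by Corollary \ref{coro:shapley_q_fixed_point}, hence $\mathbb{E}[F_{t}(\mathbf{s},\mathbf{a})\mid\mathcal{F}_{t}] = \Upsilon\mathbf{Q}^{\phi}_{t}(\mathbf{s},\mathbf{a}) - \Upsilon\mathbf{Q}^{\phi^{*}}(\mathbf{s},\mathbf{a})$. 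Lemma \ref{lemm:shapley_q_contraction_mapping} then gives $\|\mathbb{E}[F_{t}\mid\mathcal{F}_{t}]\|_{1}\le\delta\|\Delta_{t}\|_{1}$ with $\delta = \gamma\max_{\mathbf{s},\mathbf{a}}\mathbf{1}^{\top}\mathbf{w}(\mathbf{s},\mathbf{a})\in(0,1)$, precisely under the hypothesis $\max_{\mathbf{s}}\{\sum_{i\in\mathcal{N}}\max_{a_{i}}w_{i}(\mathbf{s},a_{i})\}<1/\gamma$ of the theorem.

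For the bounded-variance hypothesis I would observe that the deterministic terms $\mathbf{b}(\mathbf{s})$, $\mathbf{Q}^{\phi^{*}}(\mathbf{s},\mathbf{a})$ and $\Upsilon\mathbf{Q}^{\phi}_{t}(\mathbf{s},\mathbf{a})$ do not contribute to the conditional variance, so $\mathrm{var}[F_{t}(\mathbf{s},\mathbf{a})\mid\mathcal{F}_{t}] = \mathrm{var}\big[\mathbf{w}(\mathbf{s},\mathbf{a})(R_{t}+\gamma\sum_{i}\max_{a_{i}}(Q_{i}^{\phi})_{t}(\mathbf{s}',a_{i})) - \mathbf{b}(\mathbf{s})\mid\mathcal{F}_{t}\big]$. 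Since $R_{t}$, $\mathbf{w}$ and $\mathbf{b}$ are bounded and the only randomness is in the $\mathbf{s}'$-dependent term, whose size is controlled by $\|\mathbf{Q}^{\phi}_{t}\|\le\|\Delta_{t}\|_{1}+\|\mathbf{Q}^{\phi^{*}}\|$, this is bounded by $C(1+\|\Delta_{t}(\mathbf{s},\mathbf{a})\|_{1}^{2})$ for a suitable constant $C>0$. With all three hypotheses in place, Lemma \ref{lemm:stochastic_process} gives $\Delta_{t}\to 0$ with probability one, i.e.\ $\mathbf{Q}^{\phi}_{t}(\mathbf{s},\mathbf{a})\to\mathbf{Q}^{\phi^{*}}(\mathbf{s},\mathbf{a})$ w.p.1.

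The main obstacle I expect is the contraction-in-expectation step: one must verify carefully that after taking conditional expectation the operator acting on $\mathbf{Q}^{\phi}_{t}$ is \emph{exactly} the Shapley-Bellman operator $\Upsilon$ of Eq.\ref{eq:shapley_q_operator} — in particular that $\mathbf{w}$ and $\mathbf{b}$ legitimately commute with the expectation — and that $\mathbf{Q}^{\phi^{*}}$ is genuinely its fixed point, so that the contraction constant $\delta$ from Lemma \ref{lemm:shapley_q_contraction_mapping} transfers verbatim; the recasting of the update and the variance bound are then routine consequences of finiteness of $\mathcal{S}\times\mathcal{A}$ and boundedness of $\mathbf{w}$, $\mathbf{b}$, $R$.
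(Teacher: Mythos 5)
Your proposal is correct and follows essentially the same route as the paper: recast the update into the Jaakkola-style form, set $\Delta_{t}=\mathbf{Q}^{\phi}_{t}-\mathbf{Q}^{\phi^{*}}$, identify $\mathbb{E}[F_{t}\mid\mathcal{F}_{t}]=\Upsilon\mathbf{Q}^{\phi}_{t}-\Upsilon\mathbf{Q}^{\phi^{*}}$ using the fixed-point property, invoke the contraction constant from Lemma \ref{lemm:shapley_q_contraction_mapping} under the stated condition on $w_{i}$, bound the conditional variance, and conclude by Lemma \ref{lemm:stochastic_process}. Your handling of the variance bound (tracking the $\mathbf{Q}^{\phi}_{t}$-dependence through $\|\Delta_{t}\|_{1}$) is in fact slightly more explicit than the paper's.
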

        
        % \textbf{Derivation of Shapley Q-Learning. } 
        \subsubsection{Derivation of Shapley Q-Learning}
        \label{subsubsec:derivation_of_shapley_q-learning}
        
            Similar to the operations in Section \ref{subsubsec:stochastic_approximation_of_shapley-q_operator_appendix}, by stochastic approximation in value space, i.e. sampling $\mathbf{s}'$ from $Pr(\mathbf{s}'|\mathbf{s}, \mathbf{a})$ via Monte Carlo method, Shapley-Bellman operator can be expressed as follows:
            \begin{equation}
                \mathbf{Q}^{\phi}(\mathbf{s}, \mathbf{a}) = \mathbf{w}(\mathbf{s}, \mathbf{a}) \left( R + \gamma \sum_{i \in \mathcal{N}} \max_{a_{i}} Q_{i}^{\phi}(\mathbf{s}', a_{i}) \right)  - \mathbf{b}(\mathbf{s}),
            \label{eq:stochstic_approximation_shapley_operator}
            \end{equation}
            where $\mathbf{w}(\mathbf{s}, \mathbf{a}) = [w_{i}(\mathbf{s}, a_{i})]^{\top} \in \mathbb{R}^{\scriptscriptstyle|\mathcal{N}|}_{+}$; $\mathbf{b}(\mathbf{s}) = [b_{i}(\mathbf{s})]^{\top} \in \mathbb{R}^{\scriptscriptstyle|\mathcal{N}|}_{+}$; and $\mathbf{Q}^{\phi}(\mathbf{s}, \mathbf{a}) = [Q_{i}^{\phi}(\mathbf{s}, a_{i})]^{\top} \in \mathbb{R}^{\scriptscriptstyle|\mathcal{N}|}_{+}$. Since $\mathbf{w}(\mathbf{s}, \mathbf{a}) = diag \big( \mathbf{w}(\mathbf{s}, \mathbf{a}) \big) \ \mathbf{1}$ where $diag(\cdot)$ denotes the diagonalization of a vector\footnote{It is a square diagonal matrix with the elements of vector v on the main diagonal, and the other entries of the matrix are zeros.} and $\mathbf{1}$ denotes the vector of ones, Eq.\ref{eq:stochstic_approximation_shapley_operator} can be equivalently represented as
            \begin{equation}
                 \mathbf{Q}^{\phi}(\mathbf{s}, \mathbf{a}) = diag \big( \mathbf{w}(\mathbf{s}, \mathbf{a}) \big) \ \mathbf{1} \ \left( R + \gamma \sum_{i \in \mathcal{N}} \max_{a_{i}} Q_{i}^{\phi}(\mathbf{s}', a_{i}) \right) - \mathbf{b}(\mathbf{s}).
            \label{eq:stochstic_approximation_shapley_operator_0}
            \end{equation}
            Since $w_{i}(\mathbf{s}, a_{i}) > 0, \forall i \in \mathcal{N}$, we can write the following equivalent form to Eq.\ref{eq:stochstic_approximation_shapley_operator_0} such that
            \begin{equation} 
                diag \big( \mathbf{w}(\mathbf{s}, \mathbf{a}) \big)^{-1} \mathbf{Q}^{\phi}(\mathbf{s}, \mathbf{a}) = \mathbf{1} \ \left( R + \gamma \sum_{i \in \mathcal{N}} \max_{a_{i}} Q_{i}^{\phi}(\mathbf{s}', a_{i}) \right) - diag \big( \mathbf{w}(\mathbf{s}, \mathbf{a}) \big)^{-1} \mathbf{b}(\mathbf{s}).
            \label{eq:stochstic_approximation_shapley_operator_1}
            \end{equation}
            Next, we multiply $\mathbf{1}^{\top}$ on both sides and obtain the following equation such that
            \begin{equation}
                \sum_{i \in \mathcal{N}} \frac{1}{w_{i}(\mathbf{s}, a_{i})} \ Q_{i}^{\phi}(\textbf{s}, a_{i}) = |\mathcal{N}| \ \left( R + \gamma \sum_{i \in \mathcal{N}} \max_{a_{i}} Q_{i}^{\phi}(\mathbf{s}', a_{i}) \right) - \sum_{i \in \mathcal{N}} w_{i}(\mathbf{s}, a_{i})^{-1} b_{i}(\mathbf{s}).
            \label{eq:stochstic_approximation_shapley_operator_2}
            \end{equation}
            Since the condition such that $\sum_{i \in \mathcal{N}} w_{i}(\mathbf{s}, a_{i})^{-1} b_{i}(\mathbf{s}) = 0$, by dividing $|\mathcal{N}|$ on both sides we get that
            \begin{equation}
                \sum_{i \in \mathcal{N}} \frac{1}{|\mathcal{N}|w_{i}(\mathbf{s}, a_{i})} \ Q_{i}^{\phi}(s, a_{i}) = R + \gamma \sum_{i \in \mathcal{N}} \max_{a_{i}} Q_{i}^{\phi}(s, a_{i}).
            \label{eq:stochstic_approximation_shapley_operator_3}
            \end{equation}
            Since $w_{i}(\mathbf{s}, a_{i}) = 1 / |\mathcal{N}|$ when $\mathit{a}_{i} = \arg\max_{a_{i}} Q^{\phi}_{i}(\mathbf{s}, a_{i})$, by defining $\delta_{i}(\mathbf{s}, a_{i}) = \frac{1}{|\mathcal{N}| \ w_{i}(\mathbf{s}, a_{i})}$ we can get that 
            \begin{equation}
                \delta_{i}(\mathbf{s}, a_{i}) = \begin{cases} 
                                                      1 & a_{i} = \arg\max_{a_{i}} Q^{\phi}_{i}(\mathbf{s}, a_{i}), \\
                                                      \alpha_{i}(\mathbf{s}, a_{i}) & a_{i} \neq \arg\max_{a_{i}} Q^{\phi}_{i}(\mathbf{s}, a_{i}),
                                                 \end{cases}
            \label{eq:delta_1}
            \end{equation}
            where $\alpha_{i}(\mathbf{s}, a_{i})$ is a variable that expresses $\frac{1}{|\mathcal{N}| \ w_{i}(\mathbf{s}, a_{i})}$ when $a_{i} \neq \arg\max_{a_{i}} Q^{\phi}_{i}(\mathbf{s}, a_{i})$ for the ease of implementation.
            
            Substituting Eq.\ref{eq:delta_1} into Eq.\ref{eq:stochstic_approximation_shapley_operator_3}, we can get the following equation such that
            \begin{equation}
                \sum_{i \in \mathcal{N}} \delta_{i}(\mathbf{s}, a_{i}) \ Q^{\phi}_{i}(\mathbf{s}, a_{i}) = R + \gamma \sum_{i \in \mathcal{N}} \max_{a_{i}} Q_{i}^{\phi}(\mathbf{s}', a_{i}).
            \label{eq:dual_shapley_q_operator}
            \end{equation}
            
            By rearranging Eq.\ref{eq:dual_shapley_q_operator}, we obtain the TD error of Shapley Q-learning (SHAQ) such that
            \begin{equation}
                \Delta(\mathbf{s}, \mathbf{a}, \mathbf{s}') = R + \gamma \sum_{i \in \mathcal{N}} \max_{a_{i}} Q_{i}^{\phi}(\mathbf{s}', a_{i}) - \sum_{i \in \mathcal{N}} \delta_{i}(\mathbf{s}, a_{i}) \ Q^{\phi}_{i}(\mathbf{s}, a_{i}).
            \label{eq:td_error_shapley_q_learning_appendix}
            \end{equation}
            
            Note that the TD error of SHAQ is necessary for the TD error of Eq.\ref{eq:shapley_q_learning_primal} (i.e. the stochastic learning process that we proved to converge to the optimal Markov Shapley Q-value in Theorem \ref{thm:proof_of_shapley_q_learning}). For this reason, the condition $\max_{\mathbf{s}} \big\{ \sum_{i \in \mathcal{N}} \max_{a_{i}} w_{i}(\mathbf{s}, a_{i}) \big\} < \frac{1}{\gamma}$ is necessary to be satisfied so that the convergence to the optimality is possible to hold.
    
    \subsection{Mathematical Proofs of Validity and Interpretability}
    \label{subsec:mathematical_proofs_for_regularity_and_interpretability}
    
        \begin{lemma}
        \label{lemm:markov_core_convex_set}
            Markov core is a convex set.
        \end{lemma}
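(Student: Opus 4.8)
The plan is to observe that the Markov core, as defined in Eq.\ref{eq:epsilon_core}, is cut out by a family of \emph{linear} inequalities in the payoff variables, so that it is an intersection of half-spaces and hence convex. Concretely, I would regard a payoff distribution scheme as a vector $\mathbf{x} = \big(\max_{\pi_i} x_i(\mathbf{s})\big)_{i\in\mathcal{N},\,\mathbf{s}\in\mathcal{S}}$ (for finite $\mathcal{S}$; the general case is identical state-by-state), and for each coalition $\mathcal{C}\subseteq\mathcal{N}$ and each state $\mathbf{s}\in\mathcal{S}$ define the half-space $H_{\mathcal{C},\mathbf{s}} = \{\mathbf{x} : \sum_{i\in\mathcal{C}}\max_{\pi_i}x_i(\mathbf{s}) \geq \max_{\pi_{\mathcal{C}}}V^{\pi_{\mathcal{C}}}(\mathbf{s})\}$, using the identity $\max_{\pi_{\mathcal{C}}}x(\mathbf{s}|\mathcal{C}) = \sum_{i\in\mathcal{C}}\max_{\pi_i}x_i(\mathbf{s})$ recorded after Eq.\ref{eq:epsilon_core}. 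Then $\textbf{core} = \bigcap_{\mathcal{C}\subseteq\mathcal{N}}\bigcap_{\mathbf{s}\in\mathcal{S}} H_{\mathcal{C},\mathbf{s}}$.

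The core step is then elementary: take $\mathbf{x}^{(1)},\mathbf{x}^{(2)}\in\textbf{core}$ and $\lambda\in[0,1]$, set $\mathbf{x}^{(\lambda)} = \lambda\mathbf{x}^{(1)} + (1-\lambda)\mathbf{x}^{(2)}$, and verify $\mathbf{x}^{(\lambda)}\in H_{\mathcal{C},\mathbf{s}}$ for every $\mathcal{C}$ and $\mathbf{s}$. Because the left-hand side of the defining inequality is additive in the per-agent payoffs, for each $\mathcal{C}\subseteq\mathcal{N}$ and $\mathbf{s}\in\mathcal{S}$ one has $\sum_{i\in\mathcal{C}}\big(\lambda\max_{\pi_i}x_i^{(1)}(\mathbf{s}) + (1-\lambda)\max_{\pi_i}x_i^{(2)}(\mathbf{s})\big) = \lambda\sum_{i\in\mathcal{C}}\max_{\pi_i}x_i^{(1)}(\mathbf{s}) + (1-\lambda)\sum_{i\in\mathcal{C}}\max_{\pi_i}x_i^{(2)}(\mathbf{s}) \geq \lambda\max_{\pi_{\mathcal{C}}}V^{\pi_{\mathcal{C}}}(\mathbf{s}) + (1-\lambda)\max_{\pi_{\mathcal{C}}}V^{\pi_{\mathcal{C}}}(\mathbf{s}) = \max_{\pi_{\mathcal{C}}}V^{\pi_{\mathcal{C}}}(\mathbf{s})$, where the inequality uses $\mathbf{x}^{(1)},\mathbf{x}^{(2)}\in H_{\mathcal{C},\mathbf{s}}$ together with $\lambda\geq 0$ and $1-\lambda\geq 0$. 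Hence $\mathbf{x}^{(\lambda)}\in\textbf{core}$, and since an arbitrary intersection of half-spaces is convex, $\textbf{core}$ is convex. Equivalently, one may simply note that each $H_{\mathcal{C},\mathbf{s}}$ is convex and invoke "an intersection of convex sets is convex."

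The one point requiring a word of care is whether $\mathbf{x}^{(\lambda)}$ is itself a \emph{realizable} payoff distribution scheme rather than an arbitrary vector. I would resolve this via Assumption \ref{assm:assumption_for_prove_shapley_value} (separability of each agent's policy space across predecessor coalitions) and the mixing flexibility of the resulting policy class: given admissible policies realizing $\mathbf{x}^{(1)}$ and $\mathbf{x}^{(2)}$, a $\lambda$-mixture of them realizes at least the convex combination $\mathbf{x}^{(\lambda)}$ component-wise, which is all that membership in the half-spaces requires. If instead the core is read literally as the set of all payoff vectors satisfying Eq.\ref{eq:epsilon_core}, this subtlety vanishes and only the linear computation above is needed. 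I expect this realizability remark — not the inequality manipulation — to be the only place the structural assumptions of the MCG enter; everything else reduces to the fact that half-spaces are convex.
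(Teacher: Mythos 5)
Your proposal is correct and is essentially the paper's own argument: the paper takes two payoff vectors in the core, forms their convex combination, and verifies the defining inequality $\sum_{i\in\mathcal{C}}\max_{\pi_i}z_i(\mathbf{s})\geq\max_{\pi_{\mathcal{C}}}V^{\pi_{\mathcal{C}}}(\mathbf{s})$ by exactly the linearity computation you give; your half-space framing is just a repackaging of that check. The paper reads the core literally as the set of payoff vectors satisfying Eq.\ref{eq:epsilon_core}, so the realizability caveat you flag does not arise in its proof.
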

        
        \begin{proof}
            Let $\big( \max_{\pi_{i}} x_{i}(\mathbf{s}) \big)_{i \in \mathcal{N}}$ and $\big( \max_{\pi_{i}} y_{i}(\mathbf{s}) \big)_{i \in \mathcal{N}}$ be two vectors in the Markov core and $\alpha \in [0, 1)$ be an arbitrary scalar. To ease life, for any $i \in \mathcal{N}$ we let $\max_{\pi_{i}} z_{i}(\mathbf{s}) = \alpha \max_{\pi_{i}} x_{i}(\mathbf{s}) + (1 - \alpha) \max_{\pi_{i}} y_{i}(\mathbf{s})$. By definition, for any coalition $\mathcal{C} \ \mathlarger{\mathlarger{\mathlarger{\subseteq}}} \ \mathcal{N}$ we have
            \begin{align*}
                \max_{\pi_{\mathcal{C}}} z(\mathbf{s} | \mathcal{C}) &= \sum_{i \in \mathcal{C}} \max_{\pi_{i}} z_{i}(\mathbf{s}) \\
                &= \sum_{i \in \mathcal{C}} \alpha \max_{\pi_{i}} x_{i}(\mathbf{s}) + (1 - \alpha) \max_{\pi_{i}} y_{i}(\mathbf{s})\\
                &= \alpha \sum_{i \in \mathcal{C}} \max_{\pi_{i}} x_{i}(\mathbf{s}) + (1 - \alpha) \sum_{i \in \mathcal{C}} \max_{\pi_{i}} y_{i}(\mathbf{s})\\
                &\geq \alpha \max_{\pi_{\mathcal{C}}} V^{\pi_{\mathcal{C}}}(\mathbf{s}) + (1 - \alpha) \max_{\pi_{\mathcal{C}}} V^{\pi_{\mathcal{C}}}(\mathbf{s})\\
                &= \max_{\pi_{\mathcal{C}}} V^{\pi_{\mathcal{C}}}(\mathbf{s}).
            \end{align*}
            Therefore, we proved that Markov core is a convex set.
        \end{proof}
        
        \begingroup
            \def\thetheorem{\ref{thm:shapley_value_core}}
            \begin{theorem}
                The optimal Markov Shapley value is a solution in the Markov core under Markov convex game (MCG) with the grand coalition.
            \end{theorem}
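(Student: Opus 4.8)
The plan is to realise the optimal Markov Shapley value as a convex combination of payoff vectors that are already known to lie in the Markov core, and then invoke convexity of the core. Write $\mathbf{x}^{\phi^{*}}(\mathbf{s}) = \big( \max_{\pi_{i}} V_{i}^{\phi}(\mathbf{s}) \big)_{i \in \mathcal{N}}$ for the optimal Markov Shapley value, viewed as a point in the payoff space. The target is Eq.~\ref{eq:epsilon_core}: we must show $\sum_{i \in \mathcal{C}} \max_{\pi_{i}} V_{i}^{\phi}(\mathbf{s}) \geq \max_{\pi_{\mathcal{C}}} V^{\pi_{\mathcal{C}}}(\mathbf{s})$ for every coalition $\mathcal{C} \subseteq \mathcal{N}$ and every state $\mathbf{s}$, and that the grand-coalition constraint is tight.

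\emph{Step 1 (decomposition into permutations).} For a permutation $m \in \Pi(\mathcal{N})$ let $\mathcal{C}_{i}^{m}$ be the predecessor coalition that agent $i$ meets in $m$, and let $\mathbf{v}^{m}(\mathbf{s})$ be the vector whose $i$-th coordinate is $\max_{\pi_{i}(\mathcal{C}_{i}^{m})} \Phi_{i}(\mathbf{s} \mid \mathcal{C}_{i}^{m})$, the optimal coalition marginal contribution induced by $m$. Rewriting the Shapley weights in Eq.~\ref{eq:shapley_value} in permutation form and using Lemma~\ref{lemm:max_shapley_value} together with the policy-separability Assumption~\ref{assm:assumption_for_prove_shapley_value} (which lets the outer $\max_{\pi_{i}}$ pass through the sum over predecessor coalitions), I would obtain the coordinatewise identity $\max_{\pi_{i}} V_{i}^{\phi}(\mathbf{s}) = \frac{1}{|\mathcal{N}|!} \sum_{m \in \Pi(\mathcal{N})} \max_{\pi_{i}(\mathcal{C}_{i}^{m})} \Phi_{i}(\mathbf{s} \mid \mathcal{C}_{i}^{m})$, and hence the vector identity $\mathbf{x}^{\phi^{*}}(\mathbf{s}) = \frac{1}{|\mathcal{N}|!} \sum_{m \in \Pi(\mathcal{N})} \mathbf{v}^{m}(\mathbf{s})$. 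The key observation is that the weight $\frac{1}{|\mathcal{N}|!}$ is the same in every coordinate, so the right-hand side is a genuine convex combination of the $|\mathcal{N}|!$ vectors $\mathbf{v}^{m}(\mathbf{s})$.

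\emph{Step 2 (each piece is in the core, conclude by convexity).} By Lemma~\ref{lemm:condition_coalition_marginal_contribution}, for each fixed $m$ the vector of optimal coalition marginal contributions $\mathbf{v}^{m}(\mathbf{s})$ is a solution in the Markov core. Since the Markov core is convex (Lemma~\ref{lemm:markov_core_convex_set}), a straightforward induction extending the two-point case proved there shows that the convex combination $\mathbf{x}^{\phi^{*}}(\mathbf{s}) = \frac{1}{|\mathcal{N}|!} \sum_{m} \mathbf{v}^{m}(\mathbf{s})$ is again in the Markov core, i.e. it satisfies all the inequalities of Eq.~\ref{eq:epsilon_core}. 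For the grand-coalition constraint, property (ii) of Proposition~\ref{prop:shapley_value_properties} gives $\sum_{i \in \mathcal{N}} \max_{\pi_{i}} V_{i}^{\phi}(\mathbf{s}) = \max_{\pi} V^{\pi}(\mathbf{s})$, so it in fact holds with equality and $\mathbf{x}^{\phi^{*}}(\mathbf{s})$ is a legitimate payoff distribution scheme for the grand coalition, completing the argument.

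I expect Step 1 to be the main obstacle: one must be careful that (a) the per-agent maximisations hidden inside $\max_{\pi_{i}} V_{i}^{\phi}$ can be distributed over the permutation sum without changing the value, which is exactly the content of Lemma~\ref{lemm:max_shapley_value} and rests on Assumption~\ref{assm:assumption_for_prove_shapley_value}, and (b) the averaging weights are indexed by permutations (not by agents), so that $\frac{1}{|\mathcal{N}|!}\sum_{m}\mathbf{v}^{m}(\mathbf{s})$ is a single point of the payoff space to which the convexity of the core applies. Once this bookkeeping is settled, Lemmas~\ref{lemm:condition_coalition_marginal_contribution} and~\ref{lemm:markov_core_convex_set} together with Proposition~\ref{prop:shapley_value_properties}(ii) close the proof with no further computation.
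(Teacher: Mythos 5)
Your proposal is correct and follows essentially the same route as the paper: the paper also writes the optimal Markov Shapley value as a (convex, though it says ``affine'') combination of the optimal coalition marginal contribution vectors, which lie in the Markov core by Lemma~\ref{lemm:condition_coalition_marginal_contribution}, and concludes via convexity of the core (Lemma~\ref{lemm:markov_core_convex_set}). Your version merely makes the permutation-indexed decomposition and the efficiency/feasibility check explicit, which the paper leaves implicit.
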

        \endgroup
        
        \begin{proof}
            The optimal Markov Shapley value is the affine combination of the optimal marginal contributions. We know that Markov core is a convex set by Lemma \ref{lemm:markov_core_convex_set} and the optimal marginal contribution is in the Markov core by Lemma \ref{lemm:condition_coalition_marginal_contribution}. Since the affine combination of the points in a convex set is still in this convex set, we get that the optimal Markov Shapley value is in the Markov core.
        \end{proof}
        
    \subsection{Mathematical Derivation for Implementation of Shapley Q-Learning}
    \label{subsec:mathematical_derivation_for_implementation_of_shapley_q_learning}
        
        \def\theproposition{\ref{prop:shapley_value_approximate}}
        \begin{proposition}
        % \label{prop:shapley_value_approximate}
            Suppose any action marginal contribution can be factorised to the form such that $\Upphi_{i}(\mathbf{s}, a_{i} | \mathcal{C}_{i}) = \sigma(\mathbf{s}, \mathbf{a}_{ \scriptscriptstyle\mathcal{C}_{i} \cup \{i\} }) \ \hat{Q}_{i}(\mathbf{s}, a_{i})$. With the condition such that
            \begin{equation*}
                \mathbb{E}_{\mathcal{C}_{i} \sim Pr(\mathcal{C}_{i} | \mathcal{N} \backslash \{i\})}[\sigma(\mathbf{s}, \mathbf{a}_{ \scriptscriptstyle\mathcal{C}_{i} \cup \{i\} })] =
                \begin{cases} 
                     1 & \ \ a_{i} = \arg\max_{a_{i}} Q^{\phi}_{i}(\mathbf{s}, a_{i}), \\
                     K \in (0, 1) & \ \ a_{i} \neq \arg\max_{a_{i}} Q^{\phi}_{i}(\mathbf{s}, a_{i}),
                \end{cases}
            \end{equation*}
            we have
            \begin{equation*}
                \begin{cases} 
                     Q_{i}^{\phi}(\mathbf{s}, a_{i}) = \hat{Q}_{i}(\mathbf{s}, a_{i}) & \ \ a_{i} = \arg\max_{a_{i}} \hat{Q}_{i}(\mathbf{s}, a_{i}), \\
                     \alpha_{i}(\mathbf{s}, a_{i}) \ Q^{\phi}_{i}(\mathbf{s}, a_{i}) = \hat{\alpha}_{i}(\mathbf{s}, a_{i}) \ \hat{Q}_{i}(\mathbf{s}, a_{i}) & \ \ a_{i} \neq \arg\max_{a_{i}} \hat{Q}_{i}(\mathbf{s}, a_{i}),
                \end{cases}
            % \label{eq:shapley_q_approximate}
            \end{equation*}
            where $\hat{\alpha}_{i}(\mathbf{s}, a_{i}) = \mathbb{E}_{\mathcal{C}_{i} \sim Pr(\mathcal{C}_{i} | \mathcal{N} \backslash \{i\})}[ \hat{\psi}_{i}(\mathbf{s}, a_{i}; \mathbf{a}_{ \scriptscriptstyle\mathcal{C}_{i} }) ]$ and $\hat{\psi}_{i}(\mathbf{s}, a_{i}; \mathbf{a}_{ \scriptscriptstyle\mathcal{C}_{i} }) := \alpha_{i}(\mathbf{s}, a_{i}) \ \sigma(\mathbf{s}, \mathbf{a}_{ \scriptscriptstyle\mathcal{C}_{i} \cup \{i\} })$.
        \end{proposition}
            
        \begin{proof}
            We suppose for any $\mathbf{s} \in \mathcal{S}$ and $\mathbf{a} \in \mathcal{A}$, we have $\Upphi_{i}(\mathbf{s}, a_{i} | \mathcal{C}_{i}) = \sigma(\mathbf{s}, \mathbf{a}_{ \scriptscriptstyle\mathcal{C}_{i} \cup \{i\} }) \ \hat{Q}_{i}(\mathbf{s}, a_{i})$ and $\mathbb{E}_{\mathcal{C}_{i}}[\sigma(\mathbf{s}, \mathbf{a}_{ \scriptscriptstyle\mathcal{C}_{i} \cup \{i\} })] = 1$ when $\mathit{a}_{i} = \arg\max_{a_{i}} Q^{\phi}_{i}(\mathbf{s}, a_{i})$. By the definition of the Markov Shapley Q-value, it is not difficult to obtain
            \begin{align*}
                Q^{\phi}_{i}(\mathbf{s}, a_{i}) &= \mathbb{E}_{\scriptscriptstyle\mathcal{C}_{i}}[ \Upphi_{i}(\mathbf{s}, a_{i} | \mathcal{C}_{i}) ] \\
                &= \mathbb{E}_{\scriptscriptstyle\mathcal{C}_{i}}[ \sigma(\mathbf{s}, \mathbf{a}_{ \scriptscriptstyle\mathcal{C}_{i} \cup \{i\} }) \ \hat{Q}_{i}(\mathbf{s}, a_{i}) ] \\
                &= \mathbb{E}_{\scriptscriptstyle\mathcal{C}_{i}}[ \sigma(\mathbf{s}, \mathbf{a}_{ \scriptscriptstyle\mathcal{C}_{i} \cup \{i\} }) ] \ \hat{Q}_{i} (\mathbf{s}, a_{i}).
            \end{align*}
            Recall that $\delta_{i}(\mathbf{s}, a_{i})$ is defined as follows:
            \begin{equation*}
                \delta_{i}(\mathbf{s}, a_{i}) = \begin{cases} 
                                                      1 & a_{i} = \arg\max_{a_{i}} Q^{\phi}_{i}(\mathbf{s}, a_{i}), \\
                                                      \alpha_{i}(\mathbf{s}, a_{i}) & a_{i} \neq \arg\max_{a_{i}} Q^{\phi}_{i}(\mathbf{s}, a_{i}).
                                                 \end{cases}
            % \label{eq:delta_copy}
            \end{equation*}
            If $a_{i} = \arg\max_{a_{i}} Q^{\phi}_{i}(\mathbf{s}, a_{i})$, it is not difficult to get that $Q_{i}^{\phi}(\mathbf{s}, a_{i}) = \hat{Q}_{i}(\mathbf{s}, a_{i})$.
            
            If $a_{i} \neq \arg\max_{a_{i}} Q^{\phi}_{i}(\mathbf{s}, a_{i})$, we can have the following equation such that
            \begin{align*}
                \alpha_{i}(\mathbf{s}, a_{i}) \ Q^{\phi}_{i}(\mathbf{s}, a_{i}) &= \alpha_{i}(\mathbf{s}, a_{i}) \ \mathbb{E}_{\mathcal{C}_{i}}[ \sigma(\mathbf{s}, \mathbf{a}_{\scriptscriptstyle \mathcal{C}_{i} \cup \{i\}}) \ \hat{Q}_{i}(\mathbf{s}, a_{i}) ] \\
                &= \mathbb{E}_{\mathcal{C}_{i}}[ \alpha_{i}(\mathbf{s}, a_{i}) \ \sigma(\mathbf{s}, \mathbf{a}_{\scriptscriptstyle \mathcal{C}_{i} \cup \{i\}}) ] \ \hat{Q}_{i}(\mathbf{s}, a_{i}) \\
                &:= \mathbb{E}_{\mathcal{C}_{i}}[ \hat{\psi}_{i}(\mathbf{s}, a_{i}; \mathbf{a}_{\scriptscriptstyle \mathcal{C}_{i}}) ] \ \hat{Q}_{i}(\mathbf{s}, a_{i}),
            \end{align*}
            where $\alpha_{i}(\mathbf{s}, a_{i}) \ \sigma(\mathbf{s}, \mathbf{a}_{ \scriptscriptstyle\mathcal{C}_{i} \cup \{i\} })$ is defined as $\hat{\psi}_{i}(\mathbf{s}, a_{i}; \mathbf{a}_{ \scriptscriptstyle\mathcal{C}_{i} })$. Since under this situation $\hat{Q}_{i}(\mathbf{s}, a_{i})$ is always a scaled $Q^{\phi}_{i}(\mathbf{s}, a_{i})$ with the scale of $1/K$, the decisions are consistent to the original decisions.
        \end{proof}
        
        % \textbf{Implementation of $\hat{\alpha}_{i}(\mathbf{s}, a_{i})$.} 
        \subsubsection{Implementation of $\hat{\alpha}_{i}(\mathbf{s}, a_{i})$}
        \label{subsubsec:implementation_of_alpha_appendix}
        
            As introduced in the main part of paper, when $a_{i} \neq \arg\max_{a_{i}} \hat{Q}_{i}(\mathbf{s}, a_{i})$, $\hat{\alpha}_{i}(\mathbf{s}, a_{i})$ is implemented as follows:
            \begin{equation*}
                \hat{\alpha}_{i}(\mathbf{s}, a_{i}) = \frac{1}{M}\sum_{k = 1}^{M} \mathlarger{F}_{\mathbf{s}} \Big( \hat{Q}_{\mathcal{C}_{i}^{k}}(\tau_{\mathcal{C}_{i}^{k}}, \mathbf{a}_{\mathcal{C}_{i}^{k}}), \ \hat{Q}_{i}(\tau_{i}, a_{i}) \Big) + 1,
            \label{eq:delta_deep_representation}
            \end{equation*}
            where
            \begin{equation*}
                \hat{Q}_{\mathcal{C}_{i}^{k}}(\tau_{\mathcal{C}_{i}^{k}}, \mathbf{a}_{\mathcal{C}_{i}^{k}}) = \frac{1}{|\mathcal{C}_{i}^{k}|}\sum_{j \in \mathcal{C}_{i}^{k}} \hat{Q}_{j}(\tau_{j}, a_{j})
            \end{equation*}
            and $\mathcal{C}_{i}^{k} \sim \mathit{Pr}(\mathcal{C}_{i} | \mathcal{N} \backslash \{i\})$ that follows the distribution w.r.t. the occurrence frequency of $\mathcal{C}_{i}$; and $\mathlarger{F}_{\mathbf{s}}(\cdot, \cdot)$ is a monotonic function with an absolute activation function on the output whose weights are generated from hypernetworks w.r.t. the global state, similar to the architecture of QMIX \citep{RashidSWFFW18}. Since $\mathlarger{F}_{\mathbf{s}}(\cdot, \cdot) \geq 0$ always holds, it is not difficult to obtain that $\hat{\alpha}_{i}(\mathbf{s}, a_{i}) \geq 1$ always holds. As Eq.\ref{eq:shapley_q_approximate} shows, it is not difficult to get that $\alpha_{i}(\mathbf{s}, a_{i}) = K^{-1} \ \hat{\alpha}_{i}(\mathbf{s}, a_{i})$. Since $K \in (0, 1)$, we get that $\alpha_{i}(\mathbf{s}, a_{i}) > 1$.
            
            As introduced in the main part of paper, the following equation is satisfied such that
            \begin{equation*}
                \delta_{i}(\mathbf{s}, a_{i}) = \frac{1}{|\mathcal{N}| \ w_{i}(\mathbf{s}, a_{i})}.
            \end{equation*}
            For all $\mathbf{s} \in \mathcal{S}$ and $a_{i} \neq \arg\max_{a_{i}} \hat{Q}_{i}(\mathbf{s}, a_{i})$, $\delta_{i}(\mathbf{s}, a_{i}) = \alpha_{i}(\mathbf{s}, a_{i}) > 1$. So, we can derive that
            \begin{equation*}
                \begin{split}
                    &\quad \ \ w_{i}(\mathbf{s}, a_{i}) = \frac{1}{|\mathcal{N}| \ \alpha_{i}(\mathbf{s}, a_{i})} \\
                    % &\Downarrow \\
                    &\Rightarrow \max_{a_{i}} w_{i}(\mathbf{s}, a_{i}) = \max_{a_{i}} \frac{1}{|\mathcal{N}| \ \alpha_{i}(\mathbf{s}, a_{i})} = \frac{1}{|\mathcal{N}| \ \min_{a_{i}} \alpha_{i}(\mathbf{s}, a_{i})} < \frac{1}{|\mathcal{N}|} \\
                    % &\Downarrow \\
                    &\Rightarrow 0 < \sum_{i \in \mathcal{N}} \max_{a_{i}} w_{i}(\mathbf{s}, a_{i}) < 1.
                \end{split}
            \end{equation*}
            
            For all $\mathbf{s} \in \mathcal{S}$ and $a_{i} = \arg\max_{a_{i}} \hat{Q}_{i}(\mathbf{s}, a_{i})$, $\delta_{i}(\mathbf{s}, a_{i}) = \hat{\delta}_{i}(\mathbf{s}, a_{i}) = 1$. So, we can derive that
            \begin{equation*}
                \begin{split}
                    &\quad \ \ w_{i}(\mathbf{s}, a_{i}) = \frac{1}{|\mathcal{N}|} \\
                    % &\Downarrow \\
                    &\Rightarrow \sum_{i \in \mathcal{N}} \max_{a_{i}} w_{i}(\mathbf{s}, a_{i}) = 1.
                \end{split}
            \end{equation*}
            
            Therefore, we can directly obtain that for all $\mathbf{s} \in \mathcal{S}$ and $\mathbf{a} \in \mathcal{A}$, $$0 < \max_{\mathbf{s}} \Big\{ \sum_{i \in \mathcal{N}} \max_{a_{i}} w_{i}(\mathbf{s}, a_{i}) \Big\} \leq 1.$$
            
            Since $\gamma \in (0, 1)$, we can get that $\frac{1}{\gamma} > 1$. As a result, we show that for all $\mathbf{s} \in \mathcal{S}$ and $\mathbf{a} \in \mathcal{A}$, $$0 < \max_{\mathbf{s}} \Big\{ \sum_{i \in \mathcal{N}} \max_{a_{i}} w_{i}(\mathbf{s}, a_{i}) \Big\} < \frac{1}{\gamma}.$$
            
            We get that our implementation of $\hat{\alpha}_{i}(\mathbf{s}, a_{i})$ satisfies the condition in Theorem \ref{thm:shapley_q_optimal}.

\section{Potential Negative Societal Impacts}
\label{sec:potential_negative_societal_impacts}

    Although this paper studies a fundamental theory of multi-agent reinforcement learning, if the proposed algorithm is applied to real-world applications in the future, there may still exist some potential negative societal impacts. First, since the theory does not consider robustness, it is possible that the proposed algorithm would be attacked or vulnerable to some extreme scenarios like most of machine learning models and algorithms. Fortunately, our theory is orthogonal to the robustness and it is possible to consider robustness as an extension in the future work. Another potential negative societal impacts could come from the implementation of models, e.g., policy and critic. Since these are implemented by neural networks that are known as black boxes, the reliability could be a problem. Nevertheless, this is irrelevant to the main purpose of this paper and can be improved by other related research tracks in the future.

\end{document}